\Crefname{equation}{Eq.}{Eqs.}
\Crefname{figure}{Fig.}{Figs.}
\Crefname{tabular}{Tab.}{Tabs.}
\Crefname{section}{Sec.}{Secs.}
\Crefname{appendix}{App.}{Apps.}
\let\etoolboxforlistloop\forlistloop 
\let\forlistloop\etoolboxforlistloop 
\def\rmB{\mathrm{B}}
\def\ellim{\ell^{\mathrm{im}}}
\def\piinv{p_{\textup{ref}}}
\def\piinv{p_{\textup{ref}}}
\def\pizero{p_0}
\newcommand{\grad}{\mathrm{grad}}
\newcommand{\dive}{\mathrm{div}}
\newcommand{\prodM}[2]{\langle #1, #2 \rangle_\M}
\newcommand{\XM}{\mathcal{X}(\mathcal{M})}
\newcommand{\XMdeux}{\mathcal{X}^2(\mathcal{M})}
\newcommand{\Xgamma}{\mathcal{X}(\gamma)}
\newcommand{\TM}{\mathrm{T}\mathcal{M}}
\newcommand{\FM}{\mathrm{F}\mathcal{M}}
\newcommand{\OM}{\mathrm{O}\mathcal{M}}
\newcommand{\TMstar}{\mathrm{T}^\star\mathcal{M}}
\def\carrechamp{\Upsilon}
\def\carrechampb{\bar{\Upsilon}}
\def\contspace{\mathcal{C}}
\def\pdata{p_{\textup{data}}}
\def\Pens{\mathcal{P}}
\newcommand{\tta}{\mathtt{A}}
\newcommand{\Capprox}{\tta}
\newcommandx\ctun[1][1=T]{\Capprox_{#1,1}}
\newcommand{\rref}[1]{\tup{\Cref{#1}}}
\newcommandx{\expec}[2]{{\mathbb E}\left[#1 \middle \vert #2  \right]} 
\newcommand{\rme}{\mathrm{e}}
\newcommand{\Lip}{\mathtt{L}}
\newcommand{\Mtt}{\mathtt{M}}
\newcommand{\Ktt}{\mathtt{K}}
\newcommand{\SDE}{\mathrm{SDE}}
\newcommandx{\norm}[2][1=]{\ifthenelse{\equal{#1}{}}{\left\Vert #2 \right\Vert}{\left\Vert #2 \right\Vert^{#1}}}
\newcommandx{\normLigne}[2][1=]{\ifthenelse{\equal{#1}{}}{\Vert #2 \Vert}{\Vert #2\Vert^{#1}}}
\def\bfc{\mathbf{c}}
\def\bfY{\mathbf{Y}}
\def\bfhY{\hat{\mathbf{Y}}}
\def\bfX{\mathbf{X}}
\def\bfhX{\hat{\mathbf{X}}}
\def\bfU{\mathbf{U}}
\def\bfE{\mathbf{E}}
\def\bfZ{\mathbf{Z}}
\def\bfZ{\mathbf{Z}}
\def\barbfX{\bar{\mathbf{X}}}
\def\bfM{\mathbf{M}}
\def\bfB{\mathbf{B}}
\def\msa{\mathsf{A}}
\def\msf{\mathsf{F}}
\def\msu{\mathsf{U}}
\def\msv{\mathsf{V}}
\def\msx{\mathsf{X}}
\newcommand{\mcb}[1]{\mathcal{B}(#1)}
\def\mcx{\mathcal{X}}
\def\mcf{\mathcal{F}}
\def\Qbb{\mathbb{Q}}
\def\Pbb{\mathbb{P}}
\def\rset{\mathbb{R}}
\def\zset{\mathbb{Z}}
\def\tset{\mathbb{T}}
\def\nset{\mathbb{N}}
\def\rmA{\mathrm{A}}
\def\rmP{\mathrm{P}}
\def\rmd{\mathrm{d}}
\def\rme{\mathrm{e}}
\def\rmc{\mathrm{C}}
\newcommand{\R}{\mathbb R}
\newcommand{\M}{\mathcal M}
\newcommandx{\functionspace}[2][1=+]{\mathbb{F}_{#1}(#2)}
\newcommand{\argmin}{\operatorname*{arg\,min}}
\newcommandx{\VarDeux}[3][3=]{\operatorname{Var}^{#3}_{#1}\left\{#2 \right\}}
\newcommand{\1}{\mathbbm{1}}
\newcommand{\LeftEqNo}{\let\veqno\@@leqno}
\newcommand{\floor}[1]{\left\lfloor #1 \right\rfloor}
\newcommand{\ceil}[1]{\left\lceil #1 \right\rceil}
\newcommand{\N}{\ensuremath{\mathbb{N}}}
\newcommand{\PE}{\mathbb{E}}
\newcommand{\abs}[1]{\left\vert #1 \right\vert}
\newcommand{\absLigne}[1]{\vert #1 \vert}
\newcommand{\tvnorm}[1]{\| #1 \|_{\mathrm{TV}}}
\newcommandx{\Vnorm}[2][1=V]{\| #2 \|_{#1}}
\newcommandx{\VnormEq}[2][1=V]{\left\| #2 \right\|_{#1}}
\newcommandx\probaMarkovTilde[2][2=]
\newcommand{\expe}[1]{\PE \left[ #1 \right]}
\newcommand{\expeLigne}[1]{\PE [ #1 ]}
\def\ie{i.e.}
\def\eqsp{\;}
\newcommand{\coint}[1]{\left[#1\right)}
\newcommand{\ocint}[1]{\left(#1\right]}
\newcommand{\ooint}[1]{\left(#1\right)}
\newcommand{\ccint}[1]{\left[#1\right]}
\newcommandx{\weight}[2][2=n]{\omega_{#1,#2}^N}
\newcommand{\cball}[2]{\bar{\operatorname{B}}(#1,#2)}
\newcommandx\sequence[3][2=,3=]
\newcommandx\sequenceD[3][2=,3=]
\newcommandx{\sequencen}[2][2=n\in\N]{\ensuremath{\{ #1_n, \eqsp #2 \}}}
\newcommandx\sequenceDouble[4][3=,4=]
\newcommandx{\sequencenDouble}[3][3=n\in\N]{\ensuremath{\{ (#1_{n},#2_{n}), \eqsp #3 \}}}
\newcommand{\opnorm}[1]{{\left\vert\kern-0.25ex\left\vert\kern-0.25ex\left\vert #1
    \right\vert\kern-0.25ex\right\vert\kern-0.25ex\right\vert}}
\def\Lip{\operatorname{Lip}}
\def\Ltt{\mathtt{L}}
\def\generator{\mathcal{A}}
\def\generatorb{\bar{\mathcal{A}}}
\def\generatort{\tilde{\mathcal{A}}}
\def\Id{\operatorname{Id}}
\newcommandx{\CPE}[3][1=]{{\mathbb E}_{#1}\left[#2 \middle \vert #3  \right]} 
\newcommandx{\CPELigne}[3][1=]{{\mathbb E}_{#1}[#2  \vert #3  ]} 
\newcommandx{\CPEsq}[3][1=]{{\mathbb{E}^{1/2}}_{#1}\left[#2 \middle \vert #3  \right]} 
\newcommandx{\CPVar}[3][1=]{\mathrm{Var}^{#3}_{#1}\left\{ #2 \right\}}
\newcommand{\CPP}[3][]
{\ifthenelse{\equal{#1}{}}{{\mathbb P}\left(\left. #2 \, \right| #3 \right)}{{\mathbb P}_{#1}\left(\left. #2 \, \right | #3 \right)}}
\newcommandx{\osc}[2][1=]{\mathrm{osc}_{#1}(#2)}
\def\Id{\operatorname{Id}}
\def\dom{\mathrm{dom}}
\newcommand{\ensembleLigne}[2]{\{#1\,:\eqsp #2\}}
\newcommand\coupling[2]{\Gamma(\mu,\nu)}
\newcommand{\complementary}{\mathrm{c}}
\def\interior{\mathrm{int}}
\def\vareps{\varepsilon}
\def\Phibf{\mathbf{\Phi}}
\newcommandx{\KL}[2]{\operatorname{KL}\left( #1 | #2 \right)}
\newcommandx{\KLsqrt}[2]{\operatorname{KL}^{1/2}\left( #1 | #2 \right)}
\newcommandx{\Jef}[2]{\operatorname{J}\left( #1 , #2 \right)}
\newcommandx{\JefLigne}[2]{\operatorname{J}( #1 , #2 )}
\newcommandx{\KLLigne}[2]{\operatorname{KL}( #1 | #2 )}
\newcommandx{\KLLignesqrt}[2]{\operatorname{KL}^{1/2}( #1 | #2 )}
\def\gaStep
\def\QKer{Q}
\def\distance{\mathbf{d}}
\newcommandx{\wasserstein}[3][1=\distance,3=]{\mathbf{W}_{#1}^{#3}\left(#2\right)}
\newcommandx{\wassersteinLigne}[3][1=\distance,3=]{\mathbf{W}_{#1}^{#3}(#2)}
\newcommandx{\wassersteinD}[1][1=\distance]{\mathbf{W}_{#1}}
\newcommandx{\wassersteinDLigne}[1][1=\distance]{\mathbf{W}_{#1}}
\def\Rcoupling{\mathrm{R}}
\def\sigmaD{\sigma^2}
\newcommandx{\phibfs}[1][1=]{\pmb{\varphi}_{\sigmaD_{#1}}}
\def\E{\mathbb{E}}
\newcommandx\sequenceg[3][2=,3=]
\def\Rker{\Rcoupling}
\def\Pker{\mathrm{P}}
\def\Qker{\mathrm{Q}}
\def\rmL{\mathrm{L}}
\newcommandx{\distV}[1][1=\bfc]{\mathbf{W}_{#1}}
\newcommandx{\distVdeux}[1][1=W_2]{\mathbf{d}_{#1}}
\newcommand{\tup}[1]{\textup{#1}}
\renewcommand\AB@affilsepx{, \protect\Affilfont}
\newcommand{\appendixhead}{
  \centerline{\textbf{\LARGE Supplementary to: }\vspace{0.15in}}
  \centerline{\textbf{\LARGE Riemannian Score-Based Generative Modelling}\vspace{0.25in}}
}
\let\origappendix\appendix 
\renewcommand\appendix{\pagenumbering{arabic}\origappendix}
\colorlet{linkcolor}{blue!70!black}
\definecolor{pearDark}{HTML}{2980B9}
\newcommand*\samethanks[1][\value{footnote}]{\footnotemark[#1]}
\definecolor{commentcolor}{rgb}{0., 0.5, 0.}
\newcommand{\red}[1]{{#1}}
\newcommand{\cmark}{\ding{51}}%
\newcommand{\xmark}{\ding{55}}
\title{Riemannian Score-Based Generative Modelling}
\author{
    \textbf{Valentin De Bortoli}\thanks{equal contribution.}\;\;\thanks{Dept. of Computer Science ENS, CNRS, PSL University Paris, France.}\ ,
    \textbf{\'Emile Mathieu}\samethanks[1]\;\;\thanks{Dept. of Statistics, University of Oxford, Oxford, UK.}\ , 
    \textbf{Michael Hutchinson}\samethanks[1]\;\;\samethanks[3], \\
    \textbf{James Thornton}\samethanks[3], 
    \textbf{Yee Whye Teh}\samethanks[3], 
    \textbf{Arnaud Doucet}\samethanks[3]
}
\begin{document}

\maketitle

\begin{abstract}
Score-based generative models (SGMs) are a powerful class of generative models that exhibit remarkable empirical performance. Score-based generative modelling (SGM) consists of a ``noising'' stage, whereby a diffusion is used to gradually add Gaussian noise to data, and a generative model, which entails a ``denoising'' process defined by approximating the time-reversal of the diffusion. Existing SGMs assume that data is supported on a Euclidean space, i.e.\ a manifold with flat geometry.  In many domains such as robotics, geoscience or protein modelling,  data is often naturally described by distributions living on Riemannian manifolds and current SGM techniques are not appropriate. We introduce here \emph{Riemannian Score-based Generative Models} (RSGMs), a class of generative models extending SGMs to Riemannian manifolds.  We demonstrate our approach on a variety of manifolds, and in particular with earth and climate science spherical data.

\end{abstract}

\section{Introduction}
\label{sec:introduction}
Score-based Generative Models (SGMs) also called diffusion models 
\citep{song2019generative,song2020score,ho2020denoising,nichol2021beatgans}
formulate generative modelling as a denoising process. Noise is incrementally
added to data using a diffusion process until it becomes approximately Gaussian. The
generative model is then obtained by simulating an approximation of the
corresponding time-reversal process, which progressively denoises a Gaussian
sample to obtain a data sample. This process is also a diffusion whose drift
depends on the logarithmic gradients of the noised data densities, i.e. the
Stein scores, estimated using a neural network via score matching
\citep{hyvarinen2005estimation,vincent2011connection}.

SGMs have been primarily applied to data living on Euclidean spaces, i.e. manifolds with flat geometry.
However, in a large number of scientific domains the distributions of interest
are supported on Riemannian manifolds. These include, to name a few, protein
modelling
\citep{shapovalov2011smoothed}, 
cell development \citep{klimovskaia2020poincare}, image recognition
\citep{lui2012advances}, geological sciences
\citep{karpatne2018machine,peel2001fitting}, graph-structured and hierarchical
data \citep{roy2007learning,steyvers2005large}, robotics
\citep{feiten2013rigid,senanayake2018directional} and high-energy physics
\citep{brehmer2020flows}.

We introduce in this work \emph{Riemannian Score-based Generative Models} (RSGMs), an
extension of SGMs to Riemannian manifolds which incorporate the geometry
of the data by defining the forward diffusion process directly on the Riemannian
manifold, inducing a manifold-valued reverse process. This requires constructing a noising process on the manifold that
converges to an easy-to-sample reference distribution. We establish that,
as in the Euclidean case, the corresponding time-reversal process is also a
diffusion whose drift includes the Stein score which is intractable but can similarly be estimated
via score matching.
Methodological extensions are required as in most cases the
transition kernel of the noising process cannot be sampled exactly. For example on compact manifolds it is
typically only available as an infinite sum through the Sturm--Liouville
decomposition \citep{chavel1984eigenvalues}. To
this end, we develop non-standard techniques for score estimation and rely on
the use of Geodesic Random Walks for sampling \citep{jorgensen1975central}.
We provide theoretical convergence bounds for RSGMs on compact manifolds and demonstrate our approach on 
a range of manifolds and tasks, including 
modelling a number of natural disaster
occurrence datasets collected by \textcite{mathieu2020riemannian}. We show that
RGSMs achieve better performance than recent baselines
\citep{mathieu2020riemannian,rozen2021moser} and scale better to
high-dimensional manifolds.

\section{Euclidean Score-based Generative Modelling}
\label{sec:eucl-sgm-riem}
We recall here briefly the key concepts behind SGMs on the Euclidean space $\rset^d$ and refer the readers to \textcite{song2020score} for a more detailed introduction.
We consider a forward \emph{noising} process $(\bfX_t)_{t \geq 0}$ defined by the following Stochastic
Differential Equation (SDE)
\begin{equation}\label{eq:forward_SDE}
  \rmd \bfX_t = -\bfX_t \rmd t + \sqrt{2} \rmd \bfB_t,\quad \bfX_0 \sim p_0 ,
\end{equation}
where $(\bfB_t)_{t \geq 0}$ is a $d$-dimensional Brownian motion and $p_0$ is the data distribution. The available data gives us an empirical approximation of $p_0$. The 
process $(\bfX_t)_{t \geq 0}$ is simply an Ornstein--Ulhenbeck (OU) process which converges with geometric rate to $\mathrm{N}(0,\Id)$. Under mild conditions on $p_0$, the time-reversed process
$(\bfY_t)_{t \geq 0} = (\bfX_{T-t})_{t \in \ccint{0,T}}$ also satisfies an SDE
\citep{cattiaux2021time,haussmann1986time} given by
\begin{equation} \label{eq:backward_SDE}
  \rmd \bfY_t = \{ \bfY_t + 2 \nabla \log p_{T-t}(\bfY_t)\} \rmd t + \sqrt{2} \rmd \bfB_t,\quad \bfY_0 \sim p_T ,
\end{equation}
where $p_t$ denotes the density of $\bfX_t$. By construction, the law of
$\bfY_{T-t}$ is equal to the law of $\bfX_t$ for $t \in \ccint{0,T}$ and in
particular $\bfY_{T}\sim p_0$. Hence, if one could sample from
$(\bfY_t)_{t \in \ccint{0,T}}$ then its final distribution would be the data distribution $p_0$.  Unfortunately we cannot sample exactly from (\ref{eq:backward_SDE}) as $p_T$ and the scores $(\nabla \log p_t(x))_{t\in[0,T]}$ are intractable. Hence SGMs rely on a few approximations.
First, $p_T$ is replaced by the reference distribution $\mathrm{N}(0,\Id)$ as we know that $p_T$ converges geometrically towards it.
Second, the following denoising score matching identity is exploited to estimate the scores 
\begin{equation}\label{eq:scoreidentity}
  \textstyle{\nabla_{x_t} \log p_t(x_t) = \int_{\rset^d} \nabla_{x_t} \log p_{t|0}(x_t|x_0)~p_{0|t}(x_0|x_t) \rmd x_0,}
\end{equation}
where $p_{t|0}(x_t|x_0)$ is the transition density of the OU process
(\ref{eq:forward_SDE}) which is available in closed-form. It follows directly
that $\nabla \log p_t$ is the minimizer of
$\ell_t(\mathbf{s}) = \expeLigne{\normLigne{\mathbf{s}(\bfX_t) - \nabla_{x_t}
    \log p_{t|0}(\bfX_t|\bfX_0)}^2}$ over functions $\mathbf{s}$ where the
expectation is over the joint distribution of $\bfX_0,\bfX_t$. This result can
be leveraged by considering a neural network
$\mathbf{s}_\theta: \ccint{0,T} \times \rset^d \to \rset^d$ trained by
minimizing the loss function
$\ell(\theta)=\int_0^T \lambda_t \ell_{t}(\mathbf{s}_\theta(t,\cdot))\rmd t$ for
some weighting function $\lambda_t>0$.
Finally, an Euler--Maruyama
discretization of (\ref{eq:backward_SDE}) is performed using a discretization step $\gamma$
such that $T=\gamma N$ for $N \in
\nset$
\begin{equation}\label{eq:backward_discrete_final}
  Y_{n+1} = Y_n + \gamma \{Y_n + 2  \mathbf{s}_\theta(T -n \gamma, Y_n) \} + \sqrt{2 \gamma} Z_{n+1},\quad Y_0\sim \mathrm{N}(0,\Id), \quad Z_n \overset{\textup{i.i.d.}}{\sim} \mathrm{N}(0,\Id).
\end{equation}
The above showcases the basics of SGMs but we highlight that many improvements have been proposed; see e.g.~\textcite{song2020improved,jolicoeur2020adversarial,nichol2021beatgans}. In
particular, selecting an adaptive stepsize
$(\gamma_n)_{n \in \nset}$ \citep{bao2022analyticdpm,watson2021learning}  and using a predictor-corrector scheme
\citep{song2020score} instead of a simple Euler--Maruyama discretization drastically improves performance. 


\section{Riemannian Score-based Generative Modelling}
\label{sec:score-appr-manif}
We now move to the Riemannian manifold setting, and more specifically assume
that $\M$ is a \red{complete, orientable connected and boundaryless Riemannian
  manifold}, endowed with a Riemannian metric $g$~\footnote{
Metrics $g$ are sections of $\mathrm{T}^* \M \otimes \mathrm{T}^* \M$, the rank 2 tensor bundle of the dual tangent space, i.e.\ smooth varying bilinear maps on $\mathrm{T} \M$, verifying symmetry and positive semi-definiteness.}.
Four components are required to extend SGMs to this setting: 
\begin{enumerate*}[label=\roman*)]
\item a forward \emph{noising} process on $\M$  which converges to an easy-to-sample reference distribution, 
\item a time-reversal formula on $\M$  which defines a backward generative process,
\item a method for approximating samples of SDEs on manifolds,
\item a method to efficiently approximate the drift of the time-reversal process.
\end{enumerate*}
Notation are gathered in \Cref{sec:notation-1}.

\subsection{Noising processes on manifolds}

\label{sec:non-compact}

The first necessary component is a suitable generic noising process on manifolds that will converge to a convenient stationary distribution.
A simple choice is to use Langevin dynamics 
described by
\begin{equation}
  \label{eq:langevin}
 \rmd \bfX_t = - \tfrac{1}{2}~\nabla_{\bfX_t} U(\bfX_t) \rmd t + \rmd \bfB_t^\M,
\end{equation}
which admits the invariant density (w.r.t.\ the volume form) given by $\rmd \piinv/ \rmd \textrm{Vol}_\M(x) \propto \rme^{-U(x)}$ \parencite[Section 2.4]{durmus2016high}, where $\nabla$ is the Riemannian gradient\footnote{The (Riemannian) gradient $\nabla$ is defined s.t.\ for any $f:\M \rightarrow \R$, $x \in \M, v \in \mathrm{T}_x\M$, $\langle \nabla f, v \rangle_g = \rmd f (v)$.}.

Two simple choices for $U(x)$ present themselves. Firstly, setting $U(x) = d_\M(x, \mu)^2/(2 \gamma^2)$, where $d_\M$ is the geodesic distance 
and $\mu \in \M$ is an arbitrary mean location,
induces the drift $\nabla_{\bfX_t} U(\bfX_t) = -\exp^{-1}_{\bfX_t}(\mu) /\gamma^2$~\footnote{
    $\exp_x: \mathrm{T}_x \M \to \M$ denotes the exponential mapping on the
    manifold, see e.g. \textcite[Chapter 20]{lee2013smooth}.}.
This is the potential of the `Riemannian normal' \parencite{pennec2006Intrinsic} distribution.
An alternative is to target the 'exponential wrapped' Gaussian. This is the pushforward of a Gaussian distribution in the tangent space at the mean location along the exponential map. The potential is given by $U(x) = d_\M(x, \mu)^2/(2 \gamma^2) + \log |D \exp^{-1}_{\mu}(x)|$\footnote{$|\cdot|$ denotes the absolute value of the determinant, and $D f$ the Jacobian of $f$.}.
In contrast to the Riemannian normal, sampling and evaluating the density of this distribution is easy \parencite[e.g.][]{mathieu2019continuous}.

One recovers the standard Ornstein--Uhlenbeck noising process~\citep{song2020score} for both of these target distributions when $\M=\rset^d$ and $\mu = 0$ since then the drift $b(t, \bfX_t) = \tfrac{1}{2} ~\exp^{-1}_{\bfX_t}(0) = -\tfrac{1}{2}~\bfX_t$.
On compact manifolds, the invariant measure $\textrm{Vol}_\M$ has finite volume,
thus a natural choice is to target the uniform distribution which is given by
$\textrm{Vol}_\M / |\M|$. In this case, $\nabla_{\bfX_t} U(\bfX_t) = 0$ and the
noising process is simply a Brownian motion on $\M$.

\subsection{\red{Time-reversal on Riemannian manifolds}}
\label{sec:brown-moti-comp}

\red{In order to use these noising processes we prove the time-reversal formula for manifolds, a generalisation of the results in the Euclidean case, e.g.\ see \citet[Theorem 4.9]{cattiaux2021time}. Consider an SDE of the form  $\rmd \bfX_t = b(\bfX_t) \rmd t + \rmd \bfB_t^\M$ where $\bfB_t^\M$ is a Brownian motion on $\M$. We refer to \Cref{sec:brown-moti-manif} for an introduction to Brownian motions on manifolds.}
This result shows that if $(\bfX_t)_{t \in \ccint{0,T}}$ is a diffusion process then $(\bfX_{T-t})_{t \in \ccint{0,T}}$ is also a diffusion process w.r.t.\ the backward filtration whose coefficients can be computed, and are shown in \cref{eq:time_reversal_manifold}. The proof relies on an extension of \citet[Theorem 4.9]{cattiaux2021time} to the Riemannian manifold case and is postponed to \cref{sec:time-reversal}.


\begin{theorem}{Time-reversed diffusion}{time_reversal_manifold}
  Let $T \geq 0$ and $(\bfB_t^\M)_{t \geq 0}$ be a Brownian motion on $\M$ such
  that $\bfB_0^\M$ has distribution the volume form $\piinv$\footnote{Note that
    in the case of a non-compact manifold $\piinv$ is only a measure and not a
    probability measure.}.  Let $(\bfX_t)_{t \in \ccint{0,T}}$ be associated
  with the SDE $\rmd \bfX_t = b(\bfX_t) \rmd t + \rmd \bfB_t^\M$.  Let
  $(\bfY_t)_{t \in \ccint{0,T}} = (\bfX_{T-t})_{t \in \ccint{0,T}}$ and assume
  that $\KLLigne{\Pbb}{\Qbb} < +\infty$, where $\Qbb$ is the distribution of
  $(\bfB_t^\M)_{t \in \ccint{0,T}}$ and $\Pbb $ the distribution of
  $(\bfX_t)_{t \in \ccint{0,T}}$. In addition, assume that
  $\Pbb_t=\mathcal{L}(\bfX_t)$, the distribution of $\bfX_t$, admits a smooth
  positive density $p_t$ w.r.t.\ $\piinv$ for any $t \in \ccint{0,T}$. Then,
  $(\bfY_t)_{t \in \ccint{0,T}}$ is associated with the SDE
      \begin{equation}
        \label{eq:time_reversal_manifold}
      \rmd \bfY_t = \{-b(\bfY_t) + \nabla \log p_{T-t}(\bfY_t)\} \rmd t + \rmd \bfB_t^\M. 
      \end{equation}
\end{theorem}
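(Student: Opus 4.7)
The plan is to extend the Girsanov-based proof of Cattiaux et al.\ (their Theorem 4.9) to the Riemannian setting. All the ingredients of the Euclidean argument---Girsanov's theorem, the Fokker--Planck equation, and integration by parts---have intrinsic analogues on $\M$ once the Laplacian is replaced by the Laplace--Beltrami operator, the gradient by the Riemannian gradient, and Lebesgue measure by the volume form $\piinv$. As a sanity check on the form of the drift, the intrinsic Fokker--Planck equation $\partial_t p_t = -\dive(b\, p_t) + \tfrac{1}{2}\Delta_\M p_t$, together with the analogous equation for the candidate reversed density $p_{T-t}$ with unknown drift $\tilde b$, forces $\tilde b + b = \nabla \log p_{T-t}$, matching \eqref{eq:time_reversal_manifold}.

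For the rigorous argument I would use Girsanov's theorem for $\M$-valued processes. The hypothesis $\KLLigne{\Pbb}{\Qbb} < +\infty$ guarantees that $\rmd\Pbb/\rmd\Qbb$ exists on path space, where $\Qbb$ is the law of Brownian motion on $\M$ started at $\piinv$. Following Cattiaux et al., the key step is to factorize this Radon--Nikodym density along the \emph{backward} filtration: reversibility of $\bfB_t^\M$ with respect to the volume form implies that the time-reversal of $\Qbb$ is still a Brownian motion law on $\M$, and an intrinsic integration by parts (the divergence theorem on $\M$) rewrites the residual drift term in the density as the score $\nabla \log p_{T-t}$. L\'evy's characterization, adapted to manifolds via stochastic development in the orthonormal frame bundle as recalled in \Cref{sec:brown-moti-manif}, then identifies the martingale part of $\bfY_t$ as a Brownian motion on $\M$, and combining these pieces yields the SDE \eqref{eq:time_reversal_manifold}.

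The main obstacle will be the integration-by-parts step that produces $\nabla \log p_{T-t}$: in Cattiaux et al.\ the finite relative entropy is used to mollify the (possibly irregular) score and pass to the limit without needing strong regularity of $b$ or $p_t$. Reproducing this on a manifold requires (i) the intrinsic divergence theorem applied to compactly supported test functions together with a partition of unity, and (ii) care in the non-compact case (which the statement anticipates by observing that $\piinv$ may fail to be a probability measure), handled by exhausting $\M$ with relatively compact open sets and a stopping-time argument to preclude explosion. The reversibility of the reference Brownian motion with respect to $\piinv$ and the intrinsic integration by parts are the delicate points; once these are in hand, the remainder of the argument mirrors the Euclidean case essentially verbatim.
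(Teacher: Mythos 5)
Your plan follows the same overall skeleton as the paper's proof: finite relative entropy plus Girsanov theory gives a drift representation of $\Pbb$, stationarity/reversibility of the Brownian reference gives $R(\Qbb)=\Qbb$ and hence the same representation for the reversed law $R(\Pbb)$, and an integration-by-parts/divergence-theorem argument against the smooth positive density $p_t$ identifies the backward drift as $-b+\nabla\log p_{T-t}$. The one genuine difference is the vehicle for the Girsanov step. You propose to work \emph{intrinsically}, via stochastic development in the orthonormal frame bundle and a manifold version of L\'evy's characterization; the paper instead works \emph{extrinsically}: it invokes the Nash embedding theorem to view $\bfB_t^\M$ and $\bfX_t$ as $\rset^p$-valued semimartingales $\rmd \bfB_t^\M = \bar b(\bfB_t^\M)\rmd t + \rmP(\bfB_t^\M)\rmd \bfB_t$, and then cites the Euclidean Girsanov results of L\'eonard and the explicit likelihood ratios of Liptser--Shiryaev verbatim, before projecting the resulting drift back onto $\mathrm{T}_x\M$. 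The embedding route buys a shorter path to the literature (no need to re-derive Girsanov on $\M$) at the cost of extrinsic bookkeeping with the projection $\rmP$ and the Stratonovich correction $\bar b$; your intrinsic route is cleaner conceptually but would require you to actually establish (or carefully cite) a Girsanov theorem for $\M$-valued diffusions with the entropy identity $\KLLigne{\Pbb}{\Qbb}=\KLLigne{\Pbb_0}{\Qbb_0}+\tfrac12\int_0^T\expeLigne{\normLigne{\beta_t}^2}\rmd t$, which is the workhorse of the argument.

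Two points where your sketch is thinner than what is actually needed. First, the delicate step is not so much the integration by parts itself as verifying that the reversed measure $R(\Pbb)$ lies in the domain of the extended-generator machinery --- i.e.\ that test functions $u\in\rmc^\infty_c(\M)$ belong to $\dom(\generator_{R(\Pbb)})$ with $\generator_{R(\Pbb)}u\in\mathrm{L}^1(\Pbb)$ --- which is exactly what the finite-entropy hypothesis delivers (via $\KLLigne{R(\Pbb)}{\Qbb}=\KLLigne{\Pbb}{\Qbb}<+\infty$); the paper isolates this as a separate proposition before applying the abstract integration-by-parts theorem of Cattiaux et al. Second, be aware that the paper's rigorous Girsanov argument is carried out for \emph{compact} $\M$ (compactness is used to get the Nash embedding into a bounded region and to cut off the coefficients), so your exhaustion-plus-stopping-time treatment of the non-compact case is additional work beyond what the paper actually proves, not a reproduction of it.
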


This result can easily be extended to the case where $(\bfB_t^\M)_{t \geq 0}$ is replaced by $(g(t)\bfB_t^\M)_{t \geq 0}$.

\subsection{Approximate sampling of diffusions} \label{sec:geodesic_random_walk}

\begin{figure}
    \centering
    \begin{subfigure}[t]{0.24\textwidth}
        \includegraphics[width=\textwidth, trim={4.5em 4.5em 0em 0em},clip]{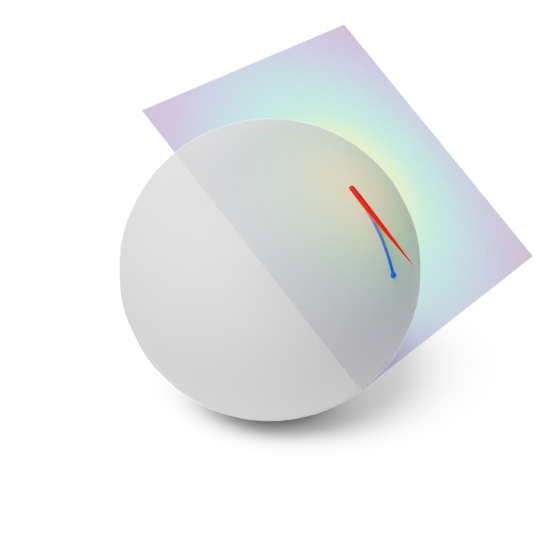}
        \caption{A single step of a Geodesic Random Walk.}
    \end{subfigure}
    \hfill
    \begin{subfigure}[t]{0.24\textwidth}
        \includegraphics[width=\textwidth, trim={2.5em 4.5em 2.0em 0em},clip]{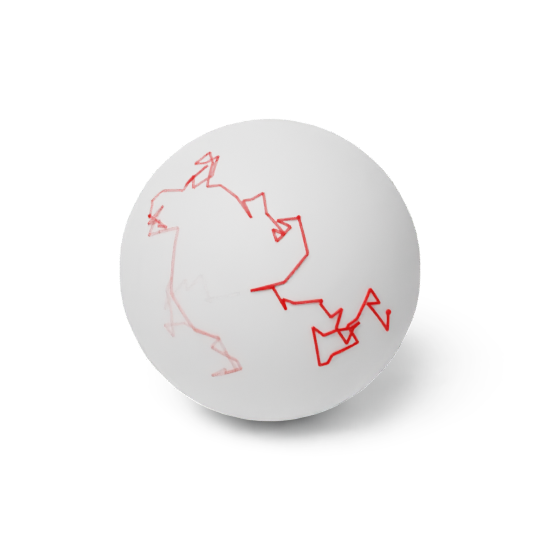}
        \caption{Many steps yield an approximate trajectory.}
    \end{subfigure}
    \hfill
        \begin{subfigure}[t]{0.48\textwidth}
        \includegraphics[width=\textwidth, trim={2.5em 2.5em 2.5em 2.5em},clip]{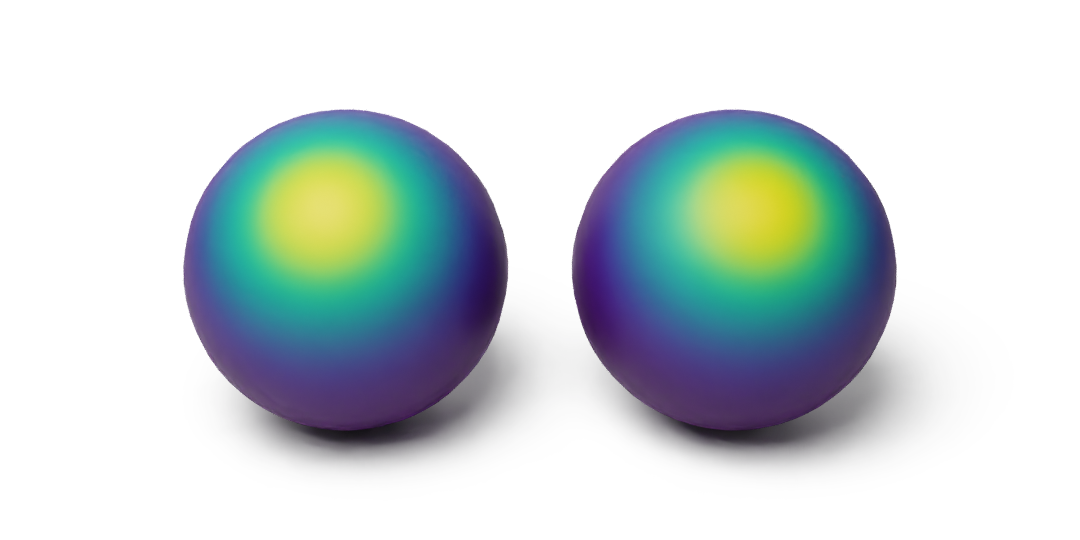}
        \caption{Gaussian Random Walk [Left] and the Brownian motion density [Right] agree well for small time steps.}
    \end{subfigure}
    \caption{Geodesic Random Walks can be used to approximate Brownian motion and more generally SDEs on manifolds. (a) At each step, tangential noise is sampled (red), which is added the drift term (not pictured). This tangent vector is then pushed through the exponential map to produce a geodesics step on the manifold (blue).
    (b) Iterating this procedure yield approximate sample paths from the process.
    }
    \label{fig:grw}
\end{figure}

Obtaining samples from SDEs on a manifold is non-trivial in
general. 
If $\M$ is isometrically embedded into $\rset^p$ (with $p \geq d$) one
can define $(\bfB_t^\M)_{t \geq 0}$ as a $\rset^p$-valued process, see
\Cref{sec:brown-moti-manif}. However, this approach is \emph{extrinsic}, as it
requires the knowledge of the projection
operator to place points back on the manifold at each step which can accumulate errors. 

Here we consider an \emph{intrisic} approach based on Geodesic Random Walks
(GRWs), see \textcite{jorgensen1975central} for a review of their properties. GRWs can
approximate \emph{any} well-behaved diffusion on $\M$.  Hence, we introduce GRWs
in a general framework and consider a discrete-time process
$(X_n^\gamma)_{n \in \nset}$ which approximates the diffusion $(\bfX_t)_{t \geq 0}$ defined by
\begin{equation}
  \label{eq:generic}
 \rmd \bfX_t = b(t, \bfX_t) \rmd t + \sigma(t, \bfX_t) \rmd \bfB_t^\M.
\end{equation}
This generalisation is key to sampling the backward diffusion process
defined in \Cref{thm:time_reversal_manifold}.

\begin{definition}{Geodesic Random Walk}{}
  Let $X^\gamma_0$ be a $\M$-valued random variable.  For any $\gamma > 0$, we
  define $(X_n^\gamma)_{n \in \nset}$ such that for any $n \in \nset$,
  $X_{n+1}^\gamma = \exp_{X_{n}^\gamma}[\gamma \{ b(X_{n}^\gamma) +
  \sqrt{\gamma} V_{n+1} \}]$, where
  $(V_n)_{n \in \nset}$ is a sequence of $\mathrm{T}\M$-valued random variables such that
  for any $n \in \nset$, $\CPELigne{V_{n+1}}{\mcf_n} = 0$ and
  $\CPELigne{V_{n+1} V_{n+1}^\top}{\mcf_n} = \sigma \sigma^\top(X_n^\gamma)$,
  where $\mcf_n$ is the filtration generated by $\{X_k^\gamma\}_{k=0}^n$.  We
  say that the $\M$-valued process $(X_n^\gamma)_{n \in \nset}$ is a Geodesic
  Random Walk.
\end{definition}

\cref{alg:grw} approximately simulates the diffusion
$(\bfX_t)_{t \in \ccint{0,T}}$ defined in \cref{eq:generic} using GRWs; see \textcite{kuwada2012convergence,cheng2022theorymanifold} for quantitative error bounds in the time-homogeneous
case and \Cref{sec:discr-bounds-grw} for a novel extentsion for the time-inhomogeneous case. \cref{fig:grw} provides a graphical illustration of this procedure.

\begin{algorithm}[!t]
\caption{\small GRW (Geodesic Random Walk)}
\label{alg:grw}
\begin{algorithmic}[1]
 \small
  \Require $T, N, X_0^\gamma, b, \sigma, \mathrm{P}$
  \State $\gamma = T / N$ \Comment Step-size
  \For{$k \in \{0, \dots, N-1\}$}
  \State $Z_{k+1} \sim \mathrm{N}(0, \Id)$ \Comment Sample a Gaussian in the tangent space of $X_{k}^\gamma$
  \State $W_{k+1} = \gamma b(k \gamma, X_k^\gamma) + \sqrt{\gamma} \sigma(k \gamma, X_k^\gamma) Z_{k+1}$ \Comment Compute the Euler--Maruyama step on tangent space 
  \State $X_{k+1}^\gamma = \exp_{X_k^\gamma}[W_{k+1}]$ \Comment Move along the geodesic defined by $W_{k+1}$ and $X_{k}^\gamma$ on $\M$
  \EndFor
  \State {\bfseries return} $\{ X_k^\gamma\}_{k=0}^{N}$
\end{algorithmic}
\end{algorithm}

\subsection{Score approximation on Riemannian manifolds}
\label{sec:riem-score-appr}
\paragraph{Score matching and loss functions.}
The reverse process from \cref{eq:time_reversal_manifold} involves the Stein score
$\nabla \log p_t$ 
which is unfortunately intractable.
To derive an approximation, we first remark that for any $s,t \in \ocint{0,T}$ with $t > s$ and $x_t \in \M$,
$p_t(x_t) = \int_{\M} p_{t|s}(x_t|x_s) \rmd \Pbb_s(x_s)$, where
$\Pbb_s = \mathcal{L}(\bfX_s)$, the distribution of $\bfX_s$. Thus, we have that for any
$s, t \in \ccint{0,T}$ with $t > s$ and $x_t \in \M$
  \begin{equation}
  \textstyle{
    \nabla_{x_t} \log p_t(x_t) = \int_{\M} \nabla_{x_t} \log p_{t|s}(x_t|x_s) \Pbb_{s|t}(x_t, \rmd x_s)  .
    }    
  \end{equation}
  Hence, for any $s, t \in \ccint{0,T}$ with $t > s$ we have that
  $\nabla \log p_t = \argmin
  \ensembleLigne{\ell_{t|s}(\mathbf{s}_t)}{\mathbf{s}_t \in \rmL^2(\Pbb_t)}$,

    where
    $\ell_{t|s}(\mathbf{s}_t) = \int_{\M^2} \normLigne{\nabla_x \log p_{t|s}(x_t|x_s) -
      \mathbf{s}_t(x_t)}^2 \rmd \Pbb_{s,t}(x_s,x_t)$,
    which is referred as the Denoising Score Matching (DSM)
    loss. It can also be written in an \emph{implicit} fashion.
    
    \begin{proposition}{}{implicit_der}
      Let $t, s \in \ocint{0,T}$ with $t>s$. Then, under sufficient regularity of $p_{t|s}(x_t|x_s)s(x_t)$, for any
      $\mathbf{s}_t \in \rmc^\infty(\M)$,
      $\ell_{t|s}(\mathbf{s}_t) = 2 \ellim_t(\mathbf{s}_t) + \int_{\M^2}
      \normLigne{\nabla_{x_t} \log p_{t|s}(x_t|x_s)}^2 \rmd
      \Pbb_{s,t}(x_s,x_t)$, where
      $\ellim_t(\mathbf{s}_t) = \int_\M \{
      \tfrac{1}{2}\normLigne{\mathbf{s}_t(x_t)}^2 + \dive(\mathbf{s}_t)(x_t) \}
      \rmd \Pbb_t(x_t)$.
    \end{proposition}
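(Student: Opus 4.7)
The plan is to verify the identity by expanding the square in $\ell_{t|s}(\mathbf{s}_t)$ and then performing an integration by parts on $\M$ to convert the cross term into a divergence.

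First I would factorise the joint law as $\rmd \Pbb_{s,t}(x_s,x_t) = p_{t|s}(x_t|x_s) \rmd \Pbb_s(x_s) \rmd \piinv(x_t)$ and expand
\begin{equation*}
  \ell_{t|s}(\mathbf{s}_t) = \int_{\M^2} \normLigne{\nabla_{x_t} \log p_{t|s}(x_t|x_s)}^2 \rmd \Pbb_{s,t} + \int_\M \normLigne{\mathbf{s}_t(x_t)}^2 \rmd \Pbb_t(x_t) - 2 \mathcal{I},
\end{equation*}
where $\mathcal{I} = \int_{\M^2} \ps{\mathbf{s}_t(x_t)}{\nabla_{x_t} \log p_{t|s}(x_t|x_s)}_g \rmd \Pbb_{s,t}(x_s,x_t)$. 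The first two terms already account for the $\normLigne{\nabla \log p_{t|s}}^2$ remainder and the $\tfrac12 \normLigne{\mathbf{s}_t}^2$ part of $\ellim_t$, so the whole proof reduces to showing $\mathcal{I} = -\int_\M \dive(\mathbf{s}_t)(x_t) \rmd \Pbb_t(x_t)$.

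Next I would rewrite $\nabla \log p_{t|s} = \nabla p_{t|s}/p_{t|s}$, absorb one factor of $p_{t|s}$ into the $\rmd\Pbb_{s,t}$ measure, and apply Fubini to obtain
\begin{equation*}
  \mathcal{I} = \int_\M \left(\int_\M \ps{\mathbf{s}_t(x_t)}{\nabla_{x_t} p_{t|s}(x_t|x_s)}_g \rmd \piinv(x_t)\right) \rmd \Pbb_s(x_s).
\end{equation*}
The inner integral is then handled by the Riemannian divergence theorem applied to the vector field $x_t \mapsto p_{t|s}(x_t|x_s)\, \mathbf{s}_t(x_t)$: using $\dive(f X) = \ps{\nabla f}{X}_g + f\,\dive(X)$ on the boundaryless manifold $\M$, Stokes' theorem gives
\begin{equation*}
  \int_\M \ps{\mathbf{s}_t(x_t)}{\nabla_{x_t} p_{t|s}(x_t|x_s)}_g \rmd \piinv(x_t) = -\int_\M \dive(\mathbf{s}_t)(x_t)\, p_{t|s}(x_t|x_s) \rmd \piinv(x_t),
\end{equation*}
provided the boundary-at-infinity term vanishes. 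Substituting back and recognising $p_{t|s}(x_t|x_s) \rmd \Pbb_s(x_s) \rmd \piinv(x_t)$ as $\rmd \Pbb_{s,t}(x_s,x_t)$ and marginalising in $x_s$ yields exactly $\mathcal{I} = -\int_\M \dive(\mathbf{s}_t) \rmd \Pbb_t$, which combined with the expansion above gives the claim.

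The main obstacle is the justification of Stokes' theorem: on a compact $\M$ smoothness of $p_{t|s}(\cdot|x_s)\mathbf{s}_t$ is enough, but in the non-compact case one needs integrability/decay so that the boundary contribution at infinity drops — this is the content of the phrase ``sufficient regularity of $p_{t|s}(x_t|x_s)\mathbf{s}_t(x_t)$'' in the statement, and in a formal write-up I would either impose compact support of $\mathbf{s}_t$ and extend by density in $\rmL^2(\Pbb_t)$, or invoke an exhaustion argument using geodesic balls together with the assumed decay of the heat kernel $p_{t|s}$. Fubini is harmless once one checks absolute integrability via Cauchy--Schwarz against $\ell_{t|s}(\mathbf{s}_t) < \infty$.
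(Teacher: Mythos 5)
Your proposal is correct and follows essentially the same route as the paper's proof: expand the square, reduce to the cross term, rewrite $\nabla\log p_{t|s}=\nabla p_{t|s}/p_{t|s}$, apply Fubini, and integrate by parts against the vector field $x_t\mapsto p_{t|s}(x_t|x_s)\mathbf{s}_t(x_t)$ via the divergence theorem (the paper invokes Gaffney's non-compact divergence theorem, which is exactly the integrability justification you flag). No gaps.
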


      The proof is postponed to \Cref{sec:implicit-losses}.
    For any $t \in \ocint{0,T}$ the minimizers of the loss $\ellim_t$ on $\XM$ (where $\XM$ is the set of vector fields on $\M$) 
    are the same as the ones for $\ell_{t|s}$. The loss $\ellim_t$ is referred to as the
\emph{implicit} score matching (ISM) loss~\citep{hyvarinen2005estimation}. These losses are direct analogous to the versions typically used in Euclidean space.
    
In the case where we have access to
$\ensembleLigne{\nabla \log p_{t|s}}{T \le t > s \ge 0}$, the forward noising process transition kernels, or an
approximation of this family, then we can use the DSM loss to learn $\ensembleLigne{\mathbf{s}_t \in \XM}{t \in \ccint{0,t}}$. If this is not the case then we turn to $\ellim_t$. Note that $\ellim_t$ requires the computation of a divergence term which requires $d$ Jacobian-vector calls. 
In high dimension, a stochastic estimator is necessary
\citep{hutchinson1989stochastic}. Following \textcite{song2020improved,
  nichol2021improved} the loss can be weighted with a term $\lambda_t > 0$.

\paragraph{Parametric family of vector fields.}
We approximate $(\nabla \log p_t)_{t \in \ccint{0,T}}$ by a family of functions
$\{\mathbf{s}_\theta\}_{\theta \in \Theta}$ where $\Theta$ is a set of
parameters and 
$\mathbf{s}_\theta: \ \ccint{0,T} \to \XM$.  In a Euclidean space, vector fields
are simply functions $\mathbf{s}_\theta: \R^d \rightarrow \R^d$.  In manifolds,
although for any $x \in \M$, $\mathrm{T}_x\M \cong \R^d$, there does not
necessarily exist a set of $d$ 
smooth vector fields $\{E_i\}_{i=1}^d$ such that
$\text{span}\left( \{E_i(x)\}_{i=1}^d \right) =
\mathrm{T}_x\M$~\citep[Chapter 8, page 179,][]{lee2006riemannian}~\footnote{Manifolds for which there exists such a \emph{global
    frame} $\{E_i(x)\}_{i=1}^d$ are referred as
  \emph{parallelizable}. $\mathbb{S}^2$ is a well-known example of
  \emph{non-parallelizable} manifold as per the \emph{Hairy ball theorem}.}.
Fortunately, one can rely on a larger set of smooth vector fields
$\{E_i(x)\}_{i=1}^n$ with $n > d$ that \emph{does} span the tangent bundle.
Then it suffices to construct a neural network
$\mathbf{s}_\theta: [0, T] \times \M \rightarrow \R^n$ to parametrise the score
network as
$\mathbf{s}_\theta(t,x) = \sum_{i=1}^n \mathbf{s}^i_\theta(t,x) E_i(x)$.
See \cref{sec:vector_field} for a discussion on the different choices of generating sets
$\{E_i(x)\}_{i=1}^n$.

Combining this parameterization with the score matching losses, the
time-reversal formula of \Cref{thm:time_reversal_manifold} and the sampling of
forward and backward processes described in \cref{sec:geodesic_random_walk}, we define our RGSM
algorithm in \cref{alg:rsgm}. This algorithm can also benefit from a predictor-corrector scheme as in \textcite{song2020score}, see
\Cref{sec:predictor-corrector}.

\begin{algorithm}[!t]
\caption{\small RSGM (Riemannian Score-Based Generative Model)}
\label{alg:rsgm}
\begin{algorithmic}[1]
  \small
  \Require $\vareps, T, N, \{X_0^m\}_{m=1}^M, \mathrm{loss}, \mathbf{s}, \theta_0, N_{\textrm{iter}}, \piinv, b_{\text{fwd}}, \mathrm{P}$
\State{/// TRAINING ///}

\For{$n \in \{0, \dots, N_{\textrm{iter}}-1\}$}  
\State $X_0 \sim (1/M) \sum_{m=1}^M \updelta_{X_0^m}$ \Comment Random mini-batch from dataset
\State $t \sim U(\ccint{\vareps, T})$ \Comment Uniform sampling between $\vareps$ and $T$
\State $\bfX_t = \textrm{GRW}(t, N, X_0, b, \Id, \mathrm{P})$ \Comment Approximate forward diffusion with \cref{alg:grw}
\State $\ell(\theta_n) = \ell_t(T, N, X_0, \bfX_t, \mathrm{loss}, \mathbf{s}_{\theta_n})$ \Comment Compute score matching loss from \cref{tab:sm_losses}
\State $\theta_{n+1} = \verb|optimizer_update|(\theta_n, \ell(\theta_n))$ \Comment ADAM optimizer step 
\EndFor
\State $\theta^\star = \theta_{N_{\textrm{epoch}}}$
\State{/// SAMPLING ///}
\State $Y_0 \sim \piinv$ \Comment Sample from uniform distribution
\State $b_\theta^\star(t, x) = -b(T-t, x) + \mathbf{s}_{\theta^\star}(T-t, x)$ for any $t \in \ccint{0,T}$, $x \in \M$ \Comment Reverse process drift
\State $\{Y_k\}_{k=0}^{N} = \textrm{GRW}(T, N, Y_0, b_{\theta^\star}, \Id, \mathrm{P})$ \Comment Approximate reverse diffusion with \Cref{alg:grw}
\State {\bfseries return} $\theta^\star, \{Y_k\}_{k=0}^{N}$
\end{algorithmic}
\end{algorithm}

\section{RSGMs on compact manifolds}
\label{sec:compact}

Assuming compactness of the manifold $\M$, we can leverage a number of
  special properties to implement a specific case of our algorithm.  In
  particular we benefit from the fact that on compact manifolds we have a proper
  \textit{uniform} distribution over the manifold, and have access to a variety
  of approximations of the heat kernel. As highlighted in
\cref{sec:non-compact}, in the compact setting we use Brownian motion as the
noising SDE, which targets the uniform distribution as the stationary
distribution. \cref{tab:difference} highlights the main differences between
RSGMs on compact manifolds, generic manifolds and Euclidean score-based models.

\begin{table}[tbh]
\small
\centering
\renewcommand*{\arraystretch}{1.2}
\begin{tabular}{lccc}
  \toprule 
  Ingredient \textbackslash ~Space  &       Euclidean                & `Generic' Manifold & Compact Manifold \\ \hline
  Forward process $\rmd \bfX_t=$ & $- \tfrac{1}{2} \bfX_t \rmd t + \rmd \bfB_t^\M$ & $ - \tfrac{1}{2} \nabla_{\bfX_t} U(\bfX_t) \rmd t + \rmd \bfB_t^\M$ & $\rmd \bfB_t^\M$ \\
  Easy-to-sample distribution & Gaussian & Wrapped Gaussian & Uniform \\
Time reversal  &  \textcite{cattiaux2021time} & \multicolumn{2}{c}{\Cref{thm:time_reversal_manifold}}   \\   
  Sampling forward process & Direct & \multicolumn{2}{c}{Geodesic Random Walk (\Cref{alg:grw})}\\
  Sampling backward process & Euler--Maruyama & \multicolumn{2}{c}{Geodesic Random Walk (\Cref{alg:grw})} \\
  \bottomrule
\end{tabular}
\vspace{.2cm}
\caption{\small Differences between SGM on Euclidean spaces and RSGM on Riemannian manifolds.}
\label{tab:difference}
\end{table}

\paragraph{Heat kernel on compact Riemannian manifolds.}
For any $x_0 \in \M$ and $t \geq s \geq 0$, the heat kernel
$p_{t|s}(\cdot|x_s)$ is defined as the density of $\bfB_t^\M$ w.r.t. the uniform measure on the manifold.

Contrary to the Gaussian transition density of the
OU process (or the Brownian motion) in the Euclidean setting,
it is typically only available as an infinite series. In order to circumvent
this issue we consider two techniques: \begin{enumerate*}[label=\roman*)]
\item a truncation approach, 
\item a Taylor expansion around $t=0$ called a Varadhan asymptotics.
\end{enumerate*}    
First, we recall that in the case of compact manifolds the heat kernel is given by the Sturm--Liouville decomposition \citep{chavel1984eigenvalues} given for any
$t > 0$ and $x_0, x_t \in \M$ by 
\begin{equation}
  \label{eq:infinite_sum1}
  \textstyle{p_{t|0}(x_t|x_0) = \sum_{j \in \nset} \rme^{-\lambda_j t} \phi_j(x_0)\phi_j(x_t),}
\end{equation}
where the convergence occurs in $\mathrm{L}^2(\piinv \otimes \piinv)$,
$(\lambda_j)_{j \in \nset}$ and $(\phi_j)_{j \in \nset}$ are the
eigenvalues, respectively the eigenvectors, of $-\Delta_\M$, the Laplace-Beltrami operator in the manifold, in
$\mathrm{L}^2(\piinv)$ \cite[Section 2]{saloff1994precise}.
When the eigenvalues and eigenvectors are known, we rely on an approximation of the
logarithmic gradient of $p_{t|0}$ by truncating the sum in
\cref{eq:infinite_sum1} with $J \in \nset$ terms to obtain for any
$t > 0$ and $x_0,x_t \in \M$
\begin{equation} \label{eq:heat_kernel_trunc}
  \nabla_{x_t} \log p_{t|0}(x_t|x_0) \approx \textstyle{S_{J,t}(x_0,x_t) \triangleq 
  \nabla_{x_t} \log {\sum_{j=0}^J  \rme^{-\lambda_j t} \phi_j(x_0)\phi_j(x_t)}
  . }
\end{equation}    

Under regularity conditions on $\M$ it can be shown that for any $x,y \in
\M$ and $t \geq 0$, $\lim_{J \to +\infty} S_{J,t}(x_0,x_t) = \nabla_{x_t} \log
p_{t|0}(x_t|x_0)$ \cite[Lemma 1]{jones2008Manifold}. In the case of the
$d$-dimensional torus or sphere the eigenvalues and eigenvectors are computable
\cite[Section 2]{saloff1994precise} and we can apply this method to
approximate $p_{t|0}$ for any $t >
0$, see 
\cref{sec:eigenf-eigenv-lapl}

When the eigenvalues and eigenvectors are unknown or not tractable, we
can still derive an approximation of the heat kernel for small times $t$. Using
Varadhan's asymptotics---see \citet[Theorem 3.8]{bismut1984large} or
\citet[Theorem 2.1]{chen2021logarithmic}---for any $x, y \in \M$ with
$y \notin \mathrm{Cut}(x)$ (where $\mathrm{Cut}(x)$ is the cut-locus of $x$
in $\M$ \cite[Chapter 10]{lee2018introduction}) we have that 
\begin{equation}
  \label{eq:varadhan}
  \textstyle{\lim_{t \to 0} t \nabla_{x_t} \log p_{t|0}(x_t|x_0) = \exp^{-1}_{x_t}(x_0) . }
\end{equation}

Using the previously defined score-matching losses and the approximations to the heat kernel above, we highlight three methods to compute $\nabla \log p_t$ in \cref{tab:sm_losses}.
\begin{table}[tbh]
\centering
\small
\renewcommand*{\arraystretch}{1.4}
\renewcommand*{\tabcolsep}{0.3em}
\begin{tabular}{lc|c|cc|c}
\toprule
\multirow{2}{5em}{Loss} &\multirow{2}{6em}{Approximation} & \multirow{2}{6em}{Loss function} &  \multicolumn{2}{c|}{Requirements} & \multirow{2}{5em}{Complexity} \\
& & & $p_{t|0}$ & $\exp^{-1}_{\bfX_t}$ &  \\
\midrule
\multirow{3}{4.5em}{$\ell_{t|0}$ (DSM)} & None & $\frac{1}{2} \E \left[ \| \mathbf{s}(\bfX_t) - \nabla \log  p_{t|0}(\bfX_t | \bfX_0) \|^2 \right]$ & \cmark & \xmark & $\mathcal{O}(1)$ \\
& Truncation~\eqref{eq:heat_kernel_trunc}  &  $\frac{1}{2} \E \left[ \| \mathbf{s}(\bfX_t) - S_{J,t}(\bfX_0,\bfX_t) \|^2 \right]$ &  \begin{tabular}{@{}c@{}}
\def\arraystretch{0.1}
asymptotic\vspace{-0.5em}\\expansion\end{tabular} & \xmark & $\mathcal{O}(1)$ \\
 & Varhadan~\eqref{eq:varadhan}  &  $\frac{1}{2} \E \left[ \| \mathbf{s}(\bfX_t) - \exp_{\bfX_t}^{-1}(\bfX_0) / t \|^2 \right]$ & \xmark & \cmark & $\mathcal{O}(1)$ \\ \midrule
$\ell_{t|s}$ (DSM) & Varhadan~\eqref{eq:varadhan}  &  $\frac{1}{2} \E \left[ \| \mathbf{s}(\bfX_t) - \exp^{-1}_{\bfX_t}(\bfX_s) / (t-s) \|^2 \right]$ & \xmark & \cmark & $\mathcal{O}(1)$  \\ \midrule
\multirow{2}{4.5em}{$\ellim_t$ (ISM)}  & Deterministic & $\E \left[\frac{1}{2} \| \mathbf{s}(\bfX_t) \|^2 + \dive( \mathbf{s})(\bfX_t)  \right]$  &  \xmark & \xmark & $\mathcal{O}(d)$ \\
& Stochastic & $\E \left[\frac{1}{2} \| \mathbf{s}(\bfX_t) \|^2 + \vareps^\top \partial \mathbf{s}(\bfX_t) \vareps  \right]$ &  \xmark & \xmark & $\mathcal{O}(1)$  \\ \bottomrule
\end{tabular} 
\vspace{-0.2cm}
\caption{\small
Computational complexity of score matching losses w.r.t. score network forward and backward passes. $\vareps$ is a random variable on $\mathrm{T}_{\bfX_t}\M$ such that $\expeLigne{\vareps}=0$ and $\expeLigne{\vareps\vareps^\top}=\Id$.
}
\label{tab:sm_losses}
\end{table}

\paragraph{Convergence results in the compact setting}
\label{sec:convergence-results}
We now provide a theoretical analysis of RSGM \red{under the assumption that $\M$ is compact.}
The following result ensures that RSGM generates samples whose distribution is close
to the data distribution $p_0$.
Let us denote $\{Y_k\}_{n \in \{0, \dots, N\}}$ the sequence generated by \Cref{alg:rsgm}. 
This result relies on the following assumption, which is satisfied for a large class of manifolds $\M$
such as the $d$-dimensional sphere and torus, compact matrix groups and
products of these manifolds.
\begin{assumption}{}{manifold}
    There exist $C, \alpha >0$ such that for any $t \in \ocint{0,1}$ and $x \in \M$,
    $p_{t|0}(x|x) \leq C t^{-\alpha/2}$, where $p_{t|0}(\cdot|x_0)$ is the  density of the heat kernel, i.e. the density of $\bfB_t^\M$ with initial condition $x_0$
    \footnote{The diagonal upper-bound is implied by  Sobolev inequalities which control of the growth of some functions by the growth of their gradient. \Cref{assum:manifold} is satisfied in our experiments, see  \textcite{saloff1994precise,gross1992logarithmic}.}.
\end{assumption}

\begin{theorem}{}{weak_qualitative}
    Assume \cref{assum:manifold}, that $p_0$ is smooth and positive and that
    there exists $\Mtt \geq 0$ such that for any $t \in \ccint{0,T}$ and
    $x \in \M$, $\norm{\mathbf{s}_{\theta^{\star}}(t,x) - \nabla \log p_{t}(x)} \leq \Mtt$,  with $\mathbf{s}_{\theta^\star} \in \rmc(\ccint{0,T}, \mathcal{X}(\M))$. Then if $T > 1/2$, there exists $C \geq 0$ independent on $T$ such that 
    \begin{equation}
        \textstyle{
         \wassersteinD[1](\mathcal{L}(Y_N), p_0) = C (  \rme^{-\lambda_1 T} + \sqrt{T/2}   \mathtt{M} + \rme^T \gamma^{1/2})  ,
        }
    \end{equation}
    where $\wassersteinD[1]$ is the Wasserstein distance of order one on the probability measures on $\M$.
\end{theorem}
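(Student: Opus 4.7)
The plan is to bound $\wassersteinD[1](\mathcal{L}(Y_N), p_0)$ via a triangle-inequality decomposition into the three terms appearing in the statement, each arising from a distinct source of error. Introduce three auxiliary continuous processes: (i) $(\bfY_t^{(1)})_{t \in \ccint{0,T}}$, the exact time-reversal of $(\bfB_t^\M)$ started from $p_T$, whose marginal at $T$ is exactly $p_0$ by \Cref{thm:time_reversal_manifold}; (ii) $(\bfY_t^{(2)})_{t \in \ccint{0,T}}$, the same SDE but started from the uniform distribution $\piinv$ (the volume form normalised); (iii) $(\bfY_t^{(3)})_{t \in \ccint{0,T}}$, the SDE with drift $-b + \mathbf{s}_{\theta^\star}(T-t,\cdot)$ started from $\piinv$. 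Then
\begin{equation*}
\wassersteinD[1](\mathcal{L}(Y_N), p_0) \le \wassersteinD[1](\mathcal{L}(\bfY_T^{(2)}), p_0) + \wassersteinD[1](\mathcal{L}(\bfY_T^{(3)}), \mathcal{L}(\bfY_T^{(2)})) + \wassersteinD[1](\mathcal{L}(Y_N), \mathcal{L}(\bfY_T^{(3)})).
\end{equation*}

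The first term is controlled by the convergence of the forward Brownian motion to its stationary measure. Since $\mathcal{L}(\bfY_0^{(1)}) = p_T$ and the Sturm--Liouville decomposition \eqref{eq:infinite_sum1} gives $\|p_T - 1\|_{\mathrm{L}^2(\piinv)} \lesssim \rme^{-\lambda_1 T}$ (where $\lambda_1$ is the spectral gap of $-\Delta_\M$, using positivity and smoothness of $p_0$ and \cref{assum:manifold} to handle the diagonal bound at initial time $1$), a coupling argument combined with compactness of $\M$ (so $\wassersteinD[1]$ is bounded by diameter times total variation) yields the $\rme^{-\lambda_1 T}$ contribution.

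The second term captures the score approximation error. I would use Girsanov's theorem on the manifold (lifted via the orthonormal frame bundle as in \Cref{sec:brown-moti-manif}) to write the Radon--Nikodym derivative between the laws of $(\bfY^{(2)}_t)$ and $(\bfY^{(3)}_t)$. The resulting KL divergence is $\tfrac{1}{2}\int_0^T \expeMarkovLigne{}{\|\mathbf{s}_{\theta^\star}(T-t,\bfY^{(2)}_t) - \nabla \log p_{T-t}(\bfY^{(2)}_t)\|^2}\rmd t \le T\Mtt^2/2$ by hypothesis. Pinsker's inequality and diameter-boundedness give a $\wassersteinD[1]$ bound of order $\sqrt{T/2}\,\Mtt$.

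The third term is the GRW discretization error. Here I would invoke the time-inhomogeneous discretization bound announced in \Cref{sec:discr-bounds-grw}: under a Lipschitz hypothesis on the drift $b_{\theta^\star}$ (which follows from $\mathbf{s}_{\theta^\star} \in \rmc(\ccint{0,T}, \XM)$, compactness of $\M$, and boundedness of $b$), one gets strong error of order $\gamma^{1/2}$, with prefactor scaling like $\exp(\int_0^T \Lip(b_{\theta^\star}(t,\cdot))\rmd t)$, which produces the $\rme^T\gamma^{1/2}$ factor. Converting strong error to $\wassersteinD[1]$ is immediate by Kantorovich--Rubinstein.

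The main obstacle will be the Girsanov step in the second term, since on a manifold the natural construction passes through the frame bundle and one must verify Novikov's condition and handle the intrinsic noise carefully; the assumption $\KLLigne{\Pbb}{\Qbb} < \infty$ from \Cref{thm:time_reversal_manifold} (applied to both processes) is what makes this tractable. The other delicate point is securing uniform Lipschitz control of $\nabla \log p_t$ (hence of the learned drift) on $\ccint{0,T}$, which is needed for the GRW bound; compactness of $\M$ together with smoothness and strict positivity of $p_0$ yield smoothness of $p_t$ for $t>0$, and the constant $C$ in the statement absorbs these manifold-dependent regularity constants.
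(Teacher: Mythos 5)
Your proposal is correct and follows essentially the same route as the paper's proof: the same three-way decomposition into (i) ergodicity of the forward Brownian motion pushed through the exact backward kernel (data processing / coupling, then total variation times diameter), (ii) Girsanov plus Pinsker for the score error giving $\sqrt{T/2}\,\Mtt$, and (iii) the time-inhomogeneous GRW discretization bound giving $\rme^{T}\gamma^{1/2}$, with the final passage to $\wassersteinD[1]$ by Kantorovich--Rubinstein duality over $1$-Lipschitz test functions. Your closing remarks correctly identify the two delicate points the paper also has to handle (Girsanov on the embedded manifold via Nash, and Lipschitz control of the drift for the discretization lemma).
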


The proof is postponed to \Cref{sec:convergence-rsgm}.  In particular, for any $\vareps > 0$, choosing $T > 0$ large enough, $\Mtt$ small enough (which can be achieved using the universal property of neural networks) and $\gamma$ small enough, we get that $\wassersteinD[1](\mathcal{L}(Y_N), p_0) \leq \vareps$. This result might seem weaker than the result obtained for Moser flows in \textcite[Theorem 3]{rozen2021moser}, but we emphasize that our bound takes into account the time-discretization contrary to \textcite{rozen2021moser} which considers the continuous-time flow. If we consider the time-reversed continuous-time SDE then we recover a bound in total variation distance, see \Cref{sec:convergence-rsgm}.  \red{Note that the upper bound $\Mtt$ encompasses both the bias introduced by the use of a neural network and the bias introduced by the use of an approximation of the score.}


\section{Related work}
\label{sec:related-works}

In this section we discuss previous work on parametrizing family of distributions for
manifold-valued data. 
Here, the manifold structure is considered to be prescribed, in contrast with methods that jointly learn the manifold structure and density~\parencite[e.g.][]{brehmer2020flows,caterini2021Rectangular}.



\textbf{Push-forward of Euclidean normalizing flows.}
More recently, approaches leveraging the flexibility of normalizing flows
\citep{papamakarios2019normalizing} have been proposed.
Following the wrapping method described above, these methods 
parametrize a normalizing flow in $\mathbb{R}^n$ before being pushed along an
invertible map $\psi: \mathbb{R}^n \rightarrow \M$.  However, to globally
represent the manifold, the map $\psi$ needs to be a homeomorphism, which can
only happen if $\M$ is topologically equivalent to $\mathbb{R}^n$, hence
limiting the scope of that approach.  One natural choice for this map is the
exponential map $\exp_x: \mathrm{T}_x \M \cong
\mathbb{R}^d$. 
This approach has been taken, for instance, by
\textcite{falorsi2019reparameterizing} and \textcite{bose2020latent}, respectively
parametrizing distributions on Lie groups and hyperbolic space.

\textbf{Neural ODE on manifolds.}
To avoid artifacts or numerical instabilities due to the manifold embedding, another line
of work uses tools from Riemannian geometry to define flows directly on the
manifold of interest
\citep{falorsi2020neural,mathieu2020riemannian,falorsi2021Continuous}.
Since these methods do not require a specific embedding mapping, they 
are referred as \emph{Riemannian}.
They extend continuous normalizing flows (CNFs) \citep{grathwohl2019Scalable} to the manifold setting, by implicity parametrizing flows as solutions of Ordinary Differential Equations (ODEs).
As such, the parametric flow is a \emph{continuous} function of time.
This approach has recently been extended by \textcite{rozen2021moser} introducing
Moser flows, whose main appeal being that it circumvents the need to solve an
ODE in the training process.  We refer to \Cref{sec:comp-with-moser} for an
in-depth discussion on the links between our work and Moser flows.

\textbf{Optimal transport on manifolds.}
Another line of work has developed flows on manifolds 
using tools from optimal transport. 
\cite{sei2013jacobian} introduced a flow that is given by $f_\theta: x \mapsto \exp_x(\nabla \psi^c_\theta)$ 
with $\psi^c_\theta$ a $c$-convex function and $c=d^2_\M$ the squared
geodesic distance.  This approach is motivated by the fact that the
optimal transport map takes such an expression
\citep{ambrosio2003Optimal}.  These methods operate directly on the manifold,
similarly to CNFs, yet in contrast they are \emph{discrete} in time.  The
benefits of this approach depend on the specific choice of parametric family of
$c$-convex functions \citep{rezende2021Implicit,cohen2021riemannian},
trading-off expressivity with scalability.


\section{Experiments}
\label{sec:experiments}
\begin{table}[tb]
\centering
\small
\setlength{\tabcolsep}{0.5em}
\renewcommand*{\arraystretch}{1.4}
\begin{tabular}{l|cccc}
\toprule
Method & Training & Likelihood evaluation & Sampling \\
\midrule
RCNF & Solving ODE $\mathcal{O}(dN)$ & Solving augmented ODE $\mathcal{O}(dN)$ & Solving ODE $\mathcal{O}(N)$ \\ 
Moser flow & Computing $\text{div}$ $\mathcal{O}(dk)$ or $\mathcal{O}(k)$ & Solving augmented ODE $\mathcal{O}(dN)$ & Solving ODE $\mathcal{O}(N)$ \\ 
RSGM & Score matching $\mathcal{O}(d)$ or $\mathcal{O}(1)$ & Solving augmented ODE $\mathcal{O}(dN)$ & Solving SDE $\mathcal{O}(N^*)$ \\ \bottomrule
\end{tabular} 

\caption{ \small Summary of computational complexity (w.r.t.\ neural network
  forward and backward passes) for different methods.  $d$ is the manifold
  dimension, $k$ the number of Monte Carlo batches in Moser flow's regularizer,
  $N$ is the number of steps in the (adaptive) ODE solver, whereas $N^*$ is the
  number of steps in the SDE Euler-Maruyama solver--which can usually be lower
  than $N$.  Moser flow and RSGM training complexity varies if the Hutchinson
  stochastic estimator is used.
See \cref{tab:sm_losses} for score matching losses complexity.
}
\label{tab:comparison_methods}
\end{table}
In this section we benchmark the empirical performance of RSGMs along with other manifold-valued methods introduced in \cref{sec:related-works}. \red{We also compare to a `Stereographic` score-based model, introduced in \cref{sec:stereo_exp}}.
First, we assess their modelling capacity on earth and climate science spherical data.
Then, we test the methods scalability with respect to manifold dimensions with a synthetic experiment on the torus $\mathbb{T}^d$.
Eventually, we evaluate the models' regularity and time complexity with a synthetic $\mathrm{SO}_3(\rset)$ target.
Experimental details are provided in \Cref{sec:exp_detail}. 
The code used to run the experiments can be found at
\href{https://github.com/oxcsml/riemannian-score-sde}{github.com/oxcsml/riemannian-score-sde}.


\subsection{Earth and climate science datasets on the sphere}
\label{sec:exp_sphere}
\begin{table}[tb]
    \centering
    \small
    \begin{tabular}{lrrrrr}
    \toprule
     Method & {Volcano} & {Earthquake} & {Flood} & {Fire} \\
    \midrule
    Mixture of Kent & $-0.80_{\pm 0.47}$ & $0.33_{\pm 0.05}$ & $0.73_{\pm 0.07}$ & $-1.18_{\pm 0.06}$ \\
Riemannian CNF            &                      $\bm{-6.05_{\pm 0.61}}$ &                             ${0.14_{\pm 0.23}}$ &                            ${1.11_{\pm 0.19}}$ &                         $\bm{-0.80_{\pm 0.54}}$ \\
Moser Flow                &                         ${-4.21_{\pm 0.17}}$ &                         $\bm{-0.16_{\pm 0.06}}$ &                         $\bm{0.57_{\pm 0.10}}$ &                         $\bm{-1.28_{\pm 0.05}}$ \\
Stereographic Score-Based &  \cellcolor{pearDark!20}${-3.80_{\pm 0.27}}$ &  \cellcolor{pearDark!20}$\bm{-0.19_{\pm 0.05}}$ &  \cellcolor{pearDark!20}$\bm{0.59_{\pm 0.07}}$ &  \cellcolor{pearDark!20}$\bm{-1.28_{\pm 0.12}}$ \\
Riemannian Score-Based    &  \cellcolor{pearDark!20}${-4.92_{\pm 0.25}}$ &  \cellcolor{pearDark!20}$\bm{-0.19_{\pm 0.07}}$ &  \cellcolor{pearDark!20}$\bm{0.45_{\pm 0.17}}$ &  \cellcolor{pearDark!20}$\bm{-1.33_{\pm 0.06}}$ \\
    \midrule 
    Dataset size & 827 & 6120 & 4875 & 12809 \\
    \bottomrule
    \end{tabular}
    \caption{
    Negative log-likelihood scores for each method on the earth and climate science datasets.
    Bold indicates best results (up to statistical significance).
    Means and confidence intervals are computed over 5 different runs.
    Novel methods are shown with blue shading.
    }
    \label{tab:geoscience}
\end{table}

We start by evaluating RSGMs on a collection of simple datasets, each containing an empirical
distribution of occurrences of earth and climate science events on the surface
of the earth. These events are: volcanic eruptions \cite{volcanoe_dataset},
earthquakes \cite{earthquake_dataset}, floods \citep{flood_dataset} and wild
fires \citep{fire_dataset}. We compare to previous baseline methods:
Riemannian Continuous Normalizing Flows \citep{mathieu2020riemannian}, Moser
Flows \citep{rozen2021moser} and a mixture of Kent distributions
\citep{peel2001fitting}.  Additionally, we consider a standard SGM on the 2D plane followed by the
inverse stereographic projection which induces a density on the sphere
\citep{gemici2016normalizing}.
We evaluate the log-likelihood of each model, extending to
the manifold setting the likelihood computation techniques of SGMs, see \Cref{sec:likel-comp}. 
We observe from \cref{tab:geoscience}, 
that all benchmarked methods have comparable performance when evaluated on these simple tasks with RSGM performing marginally better on most datasets. 
However, we empirically notice that Moser flows are slow to train and additionally that both Moser flows and stereographic SGMs are computationally expensive to evaluate.

\begin{figure}[bt]
    \centering
    \begin{subfigure}{0.22\textwidth}
        \includegraphics[width=\textwidth]{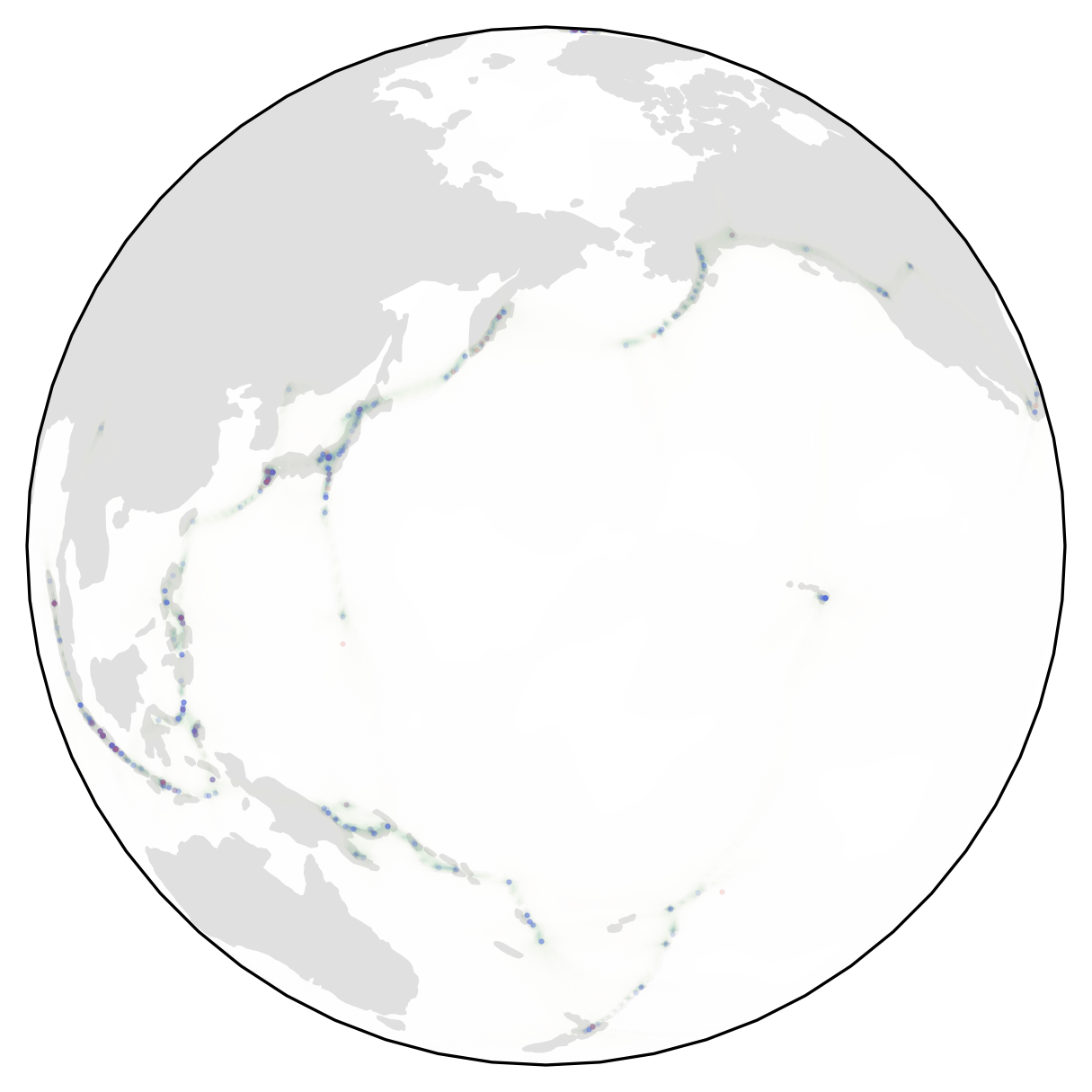}
        \caption{Volcano}
    \end{subfigure}
    \hfill
    \begin{subfigure}{0.22\textwidth}
        \includegraphics[width=\textwidth]{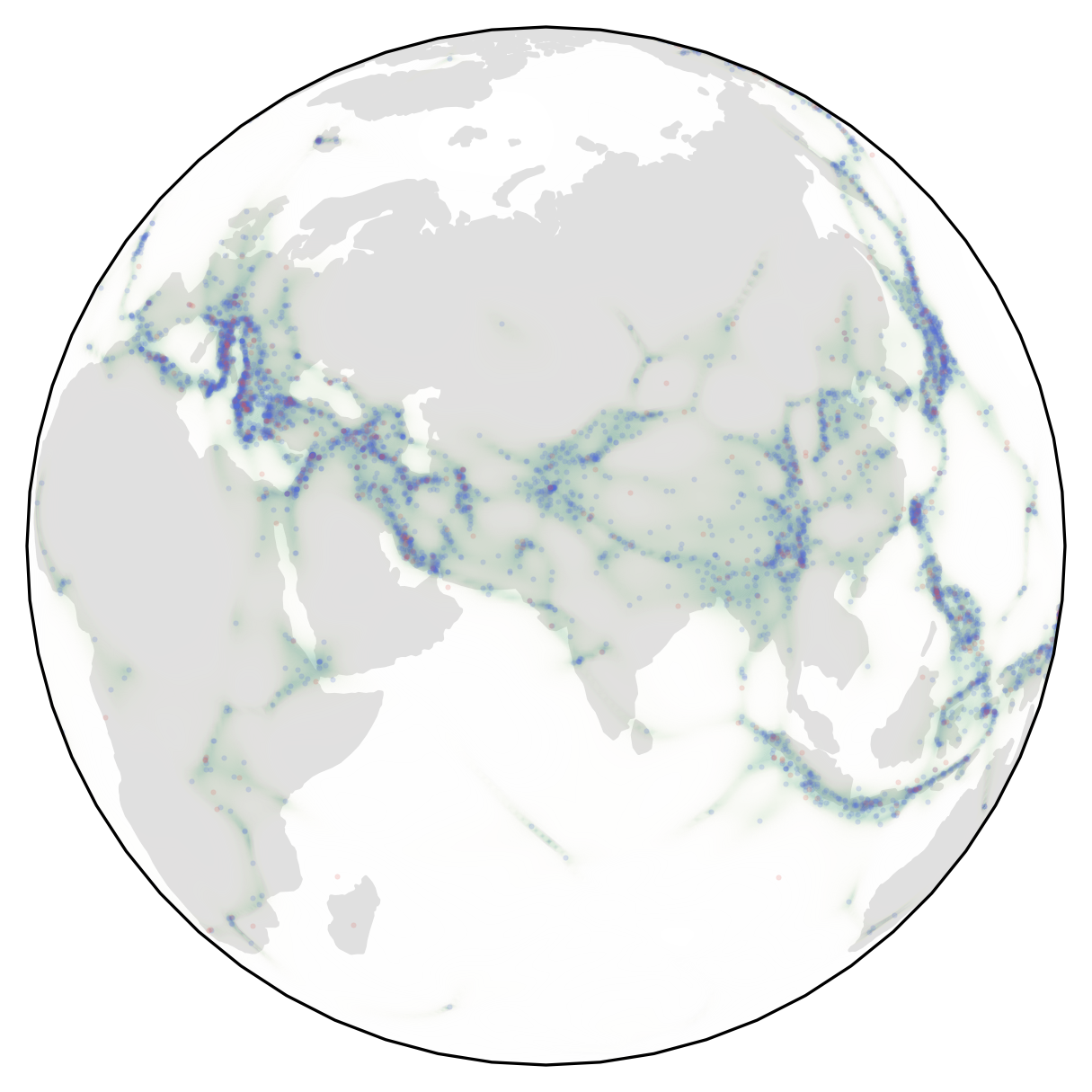}
        \caption{Earthquake}
    \end{subfigure}
    \hfill
    \begin{subfigure}{0.22\textwidth}
        \includegraphics[width=\textwidth]{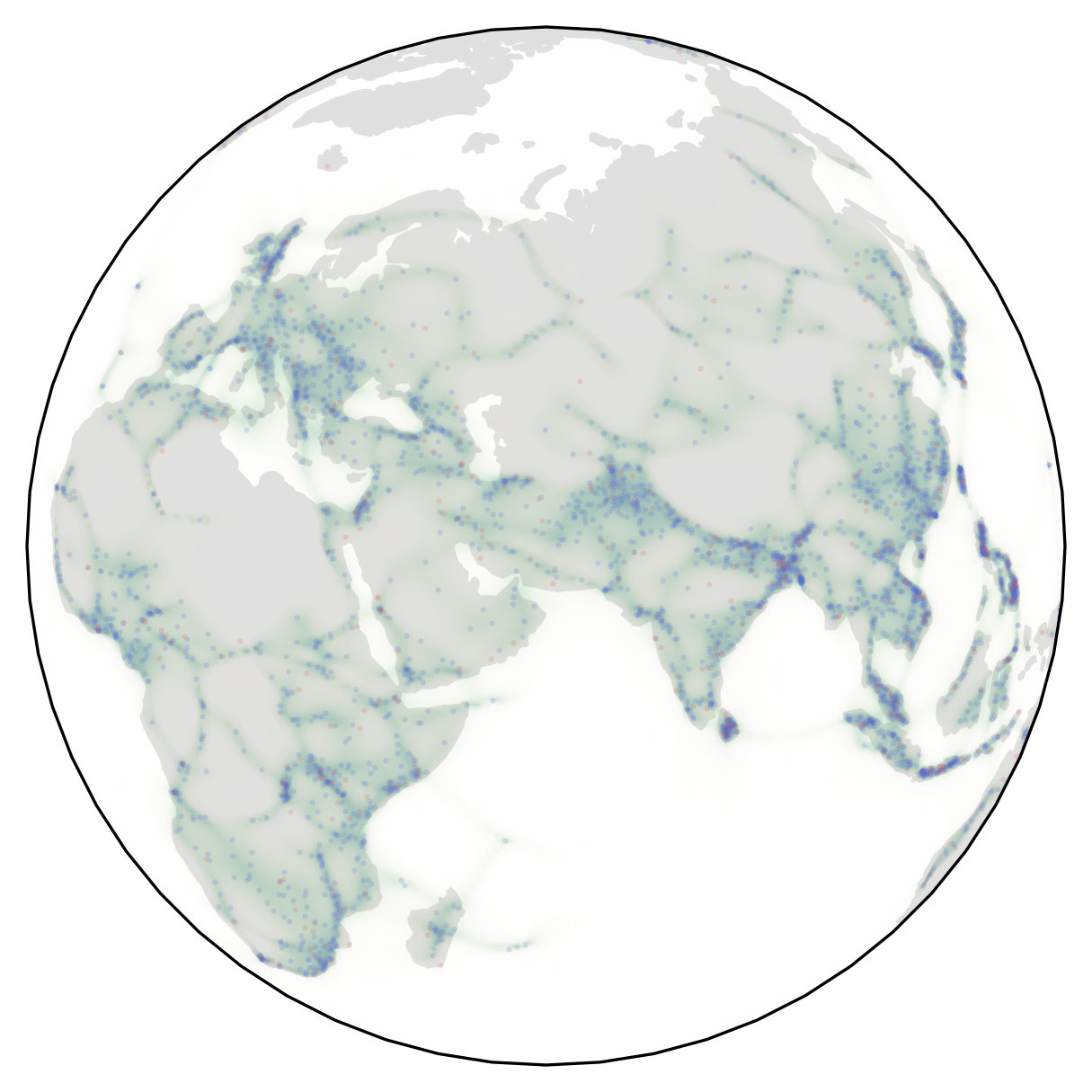}
        \caption{Flood}
    \end{subfigure}
    \hfill
    \begin{subfigure}{0.22\textwidth}
        \includegraphics[width=\textwidth]{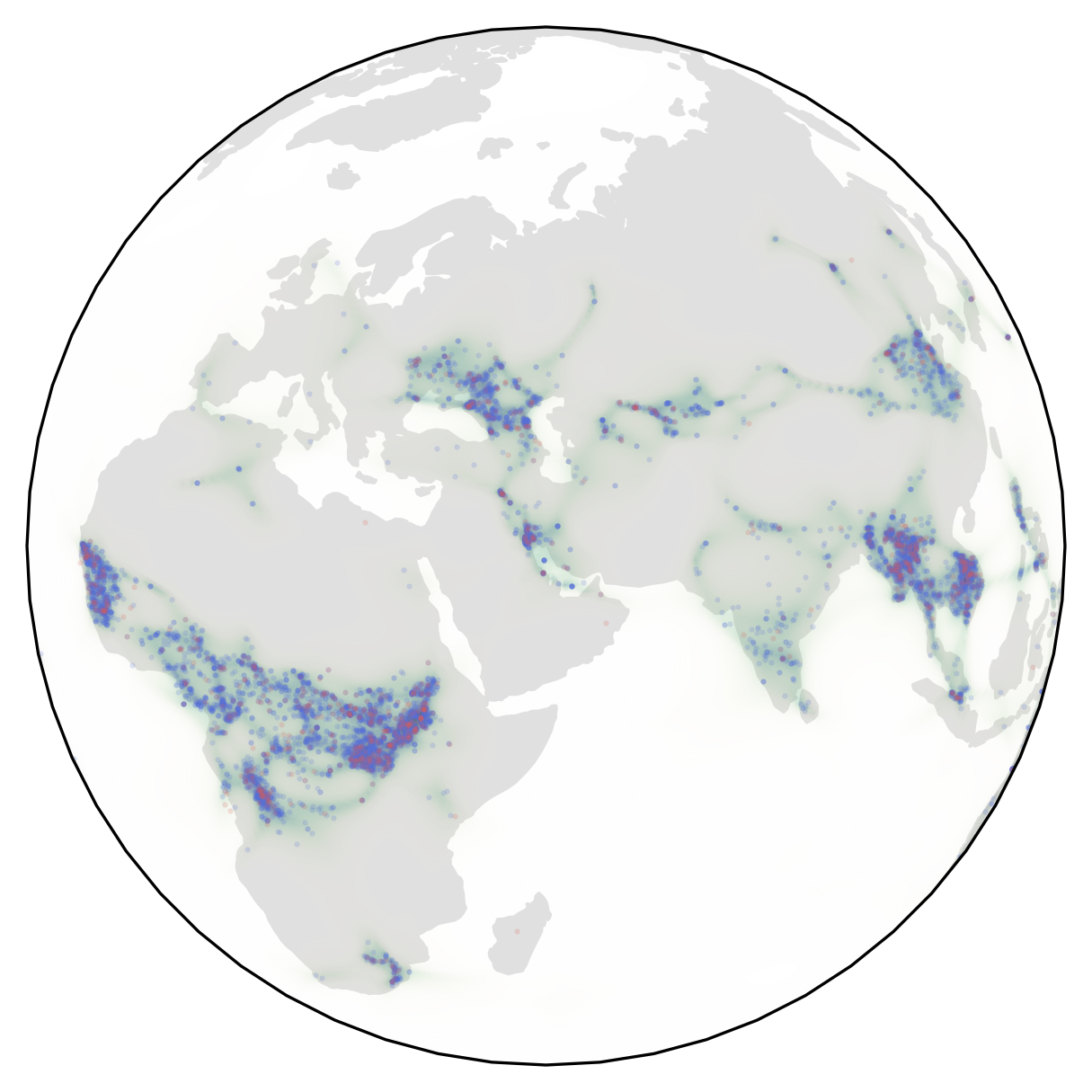}
        \caption{Fire}
    \end{subfigure}
    \caption{
        Trained score-based generative models on earth sciences data.
        The learned density is colored green-blue.
        Blue and red dots represent training and testing datapoints, respectively.
    }
    \label{fig:geoscience}
\end{figure}
\subsection{Synthetic data on \red{tori}}
\label{sec:exp_torus}
\begin{figure}[tb]
    \centering
    \includegraphics[width=\textwidth]{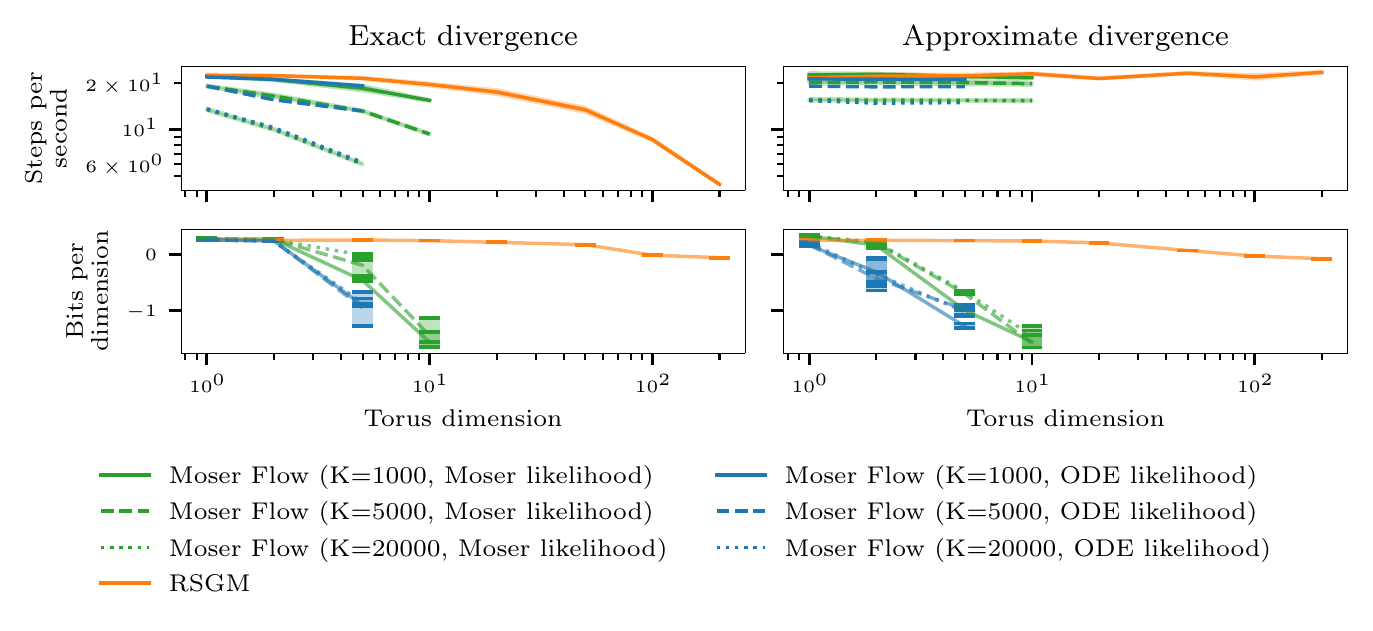}
    \caption{Comparison of Moser flows and RSGMs training speed and performance on the synthetic high-dimension torus task. Moser flows trained with $\lambda_{\min}=1$. We report two likelihoods, the `Moser' closed form density---not guaranteed to be normalized---and the `ODE' likelihood given by solving an augmented ODE (as in CNFs) with the vector field induced by the Moser flow density---which is guaranteed to have unit volume. 
    }
    \label{fig:high-dim}
\end{figure}

We now move to another manifold, that is the torus 
$\mathbb{T}^d = {\mathbb{S}^1 \times \dots \times \mathbb{S}^1}$, so as to
assess the scalability of the different methods with respect to the dimension
$d$. We consider a wrapped Gaussian target distribution on $\mathbb{T}^d$ with a random
mean and unit variance.  Moser flows'~\citep{rozen2021moser} loss involves a
regularization term which involves an integral over the manifold, approximated
by a Monte Carlo (MC) estimator with uniform proposal.
This term regularizes Moser flows towards probability measures, i.e.\ with unit volume.
We thus expect Moser flows to fail in high-dimension as the number of samples $K$ required for the MC estimator to be accurate will
grows as $\mathcal{O}(\rme^d)$, and the memory required to compute this estimator
grows either in $\mathcal{O}(Kd)$ for exact divergences or
$\mathcal{O}(K)$ for approximated divergences (see
\cref{tab:comparison_methods}).

In \Cref{fig:high-dim}, we observe that RSGMs are able to fit well the target distribution even in high dimension, with a linear or constant computational cost---depending on the divergence estimator.
In contrast, Moser flows scale poorly with the dimension, to the extent that we are unable to train them for $d \ge 10$.
This is due to the combination of the complexity which grows linearly with both the dimension $d$ and the number of MC samples $K$, which itself ought to grow exponentially with $d$---as discussed in the previous paragraph.
This is illustrated by the gap between the `Moser' and `ODE' likelihoods which increases with the manifold dimension (see left \Cref{fig:high-dim}).
%
%
%

\subsection{Synthetic data on the Special Orthogonal group}
\label{sec:exp_so3}
\vspace{-.1cm}
In order to demonstrate the broad range of applicability of our model we now
turn to the task of density estimation on the special orthogonal group
$\mathrm{SO}_d(\rset) = \ensembleLigne{\mathrm{Q} \in
  \mathrm{M}_d(\rset)}{\mathrm{Q} \mathrm{Q}^\top = \Id, \
  \det(\mathrm{Q})=1}$.
%
%
We consider the synthetic dataset consisting of samples in
$\mathrm{SO}_3(\rset)$ from a mixture of wrapped normal distributions with $M$ components.

%
%
We compare RSGMs against Moser flows and a wrapped-exponential baseline inspired by \textcite{falorsi2019reparameterizing}---where we parametrize a standard Euclidean SGM on $\mathfrak{so}(3)$ that is then pushed-forward on $\mathrm{SO}_3(\rset)$.
RSGMs are trained using the $\ell_{t|0}$ (DSM) loss with the Varadhan approximation (see \cref{tab:sm_losses}).
%
%
From \cref{tab:so3} we observe that, 
RSGMs perform consistently, whether the target distribution has few or many mixture components $M$, as opposed to Exp-wrapped SGMs and Moser flows which only perform well in some range of $M$.
Similarly to \cref{sec:exp_torus}, we find Moser
flows to be much slower to train due to the large number of Monte Carlo
samples needed in the reguralizer ($K=10^4$).  We also note from \cref{tab:so3} that the number of
score network evaluations (NFE) is significantly lower for RSGMs, and is
particularly detrimental for Moser flows ($\gg 10^3$).

%
%
\begin{figure}[tb]
    \centering
    \begin{subfigure}{0.49\textwidth}
        \includegraphics[width=\textwidth]{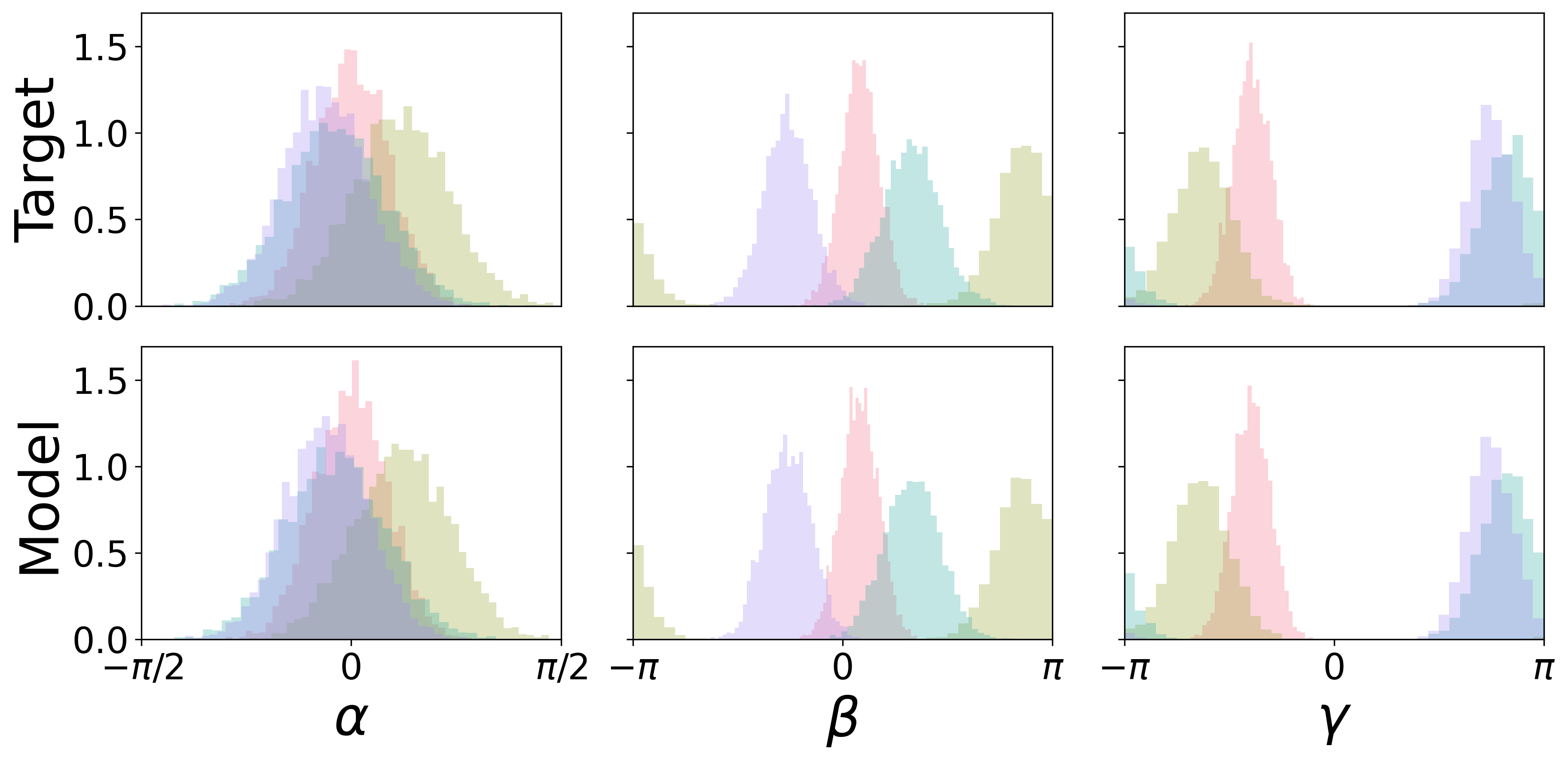}
        \caption{Histograms of $\mathrm{SO}_3(\rset)$ samples from a target mixture distribution with $M=4$ components, represented via their Euler angles.}
        \label{fig:so3_conditional}
    \end{subfigure}
    \hfill
    \begin{subfigure}{0.50\textwidth}
        \includegraphics[width=\textwidth]{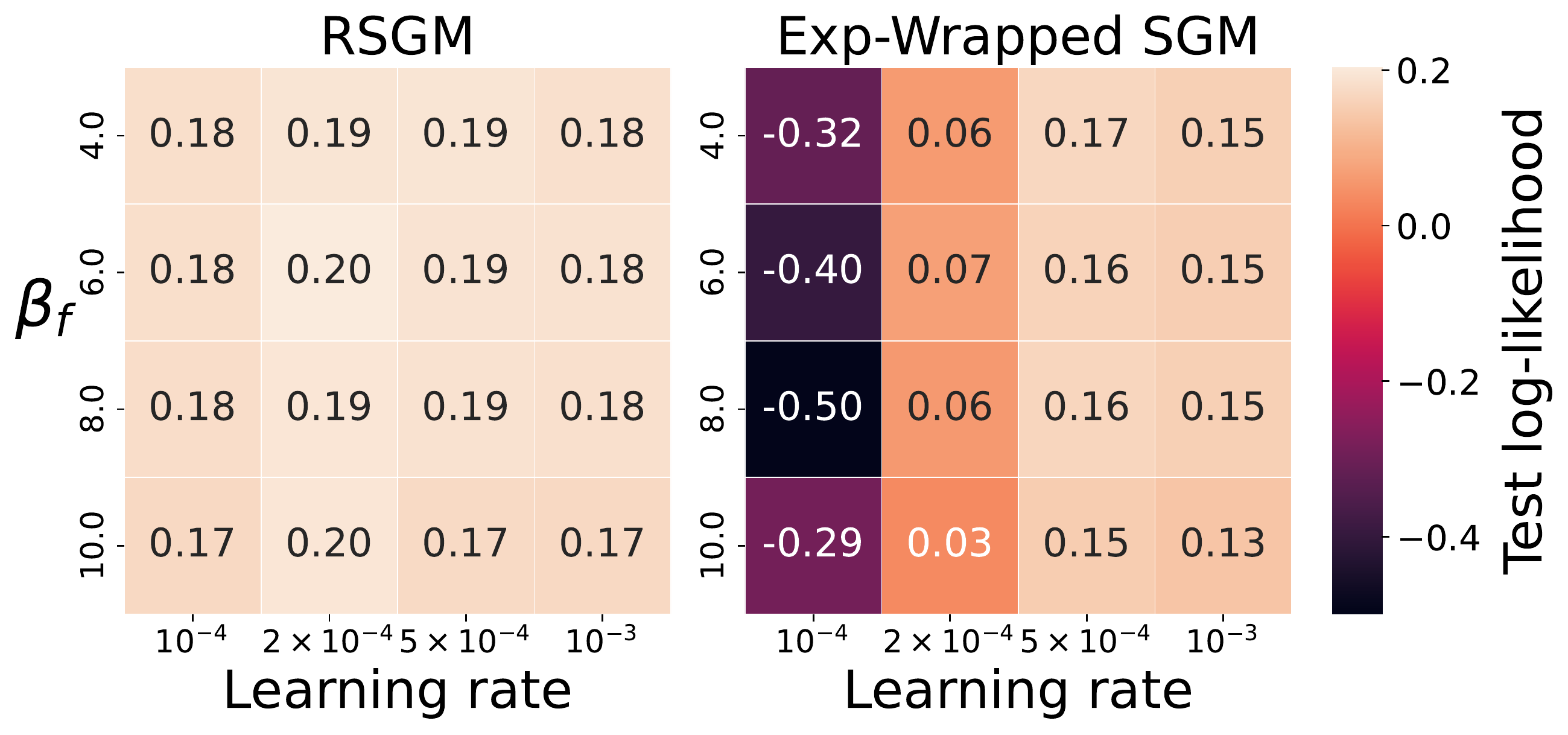}
        \caption{
        RSGMs are much more robust to hyperparameters than Exp-wrapped SGMs.
        The diffusion coefficient is given by $\sigma(t, \bfX_t) = \sqrt{\beta(t)}$, $\beta(t) = \beta_{0} + (\beta_{f} - \beta_{0})t$.
        }
        \label{fig:so3_heamtap}
    \end{subfigure}
    \vspace{-0.5em}
    \caption{
        Trained score-based generative models on synthetic $\mathrm{SO}_3(\rset)$ data.
    }
    \label{fig:so3}
\end{figure}
\begin{table}[htb]
    \setlength\tabcolsep{5pt}
    \centering
    \small
    \begin{tabular}{lrrrrrrrr}
    \toprule
    \multirow{2}{4.5em}{Method} & \multicolumn{2}{c}{$M=16$} & \multicolumn{2}{c}{$M=32$} &\multicolumn{2}{c}{$M=64$} \\ \cmidrule(lr){2-3} \cmidrule(lr){4-5} \cmidrule(lr){6-7}
    & log-likelihood & NFE & log-likelihood & NFE & log-likelihood & NFE\\
     
    \midrule

Moser Flow & ${0.85_{\pm 0.03}}$ & ${2.3_{\pm 0.5}}$ & ${0.17_{\pm 0.03}}$ & ${2.3_{\pm 0.9}}$ & $\bm{-0.49_{\pm 0.02}}$ & ${7.3_{\pm 1.4}}$ \\
Exp-wrapped SGM & \cellcolor{pearDark!20} $\bm{0.87_{\pm 0.04}}$ & \cellcolor{pearDark!20} ${0.5_{\pm 0.1}}$ & \cellcolor{pearDark!20} ${0.16_{\pm 0.03}}$ & \cellcolor{pearDark!20} ${0.5_{\pm 0.0}}$ & \cellcolor{pearDark!20} ${-0.58_{\pm 0.04}}$ & \cellcolor{pearDark!20} ${0.5_{\pm 0.0}}$ \\
RSGM & \cellcolor{pearDark!20} $\bm{0.89_{\pm 0.03}}$ & \cellcolor{pearDark!20} $\bm{0.1_{\pm 0.0}}$ & \cellcolor{pearDark!20} $\bm{0.20_{\pm 0.03}}$ & \cellcolor{pearDark!20} $\bm{0.1_{\pm 0.0}}$ & \cellcolor{pearDark!20} $\bm{-0.49_{\pm 0.02}}$ & \cellcolor{pearDark!20} $\bm{0.1_{\pm 0.0}}$ \\

    \bottomrule
    \end{tabular}
    \caption{
    Test log-likelihood and associated number of function evaluations (NFE) in $10^3$ on the synthetic mixture distribution with $M$ components on $\mathrm{SO}_3(\rset)$.
    Bold indicates best results (up to statistical significance).
    Means and standard deviations are computed over 5 different runs.
    Novel methods are shown with blue shading.
    }
    \label{tab:so3}
\end{table}
%

%


\subsection{Synthetic data on hyperbolic space}

Finally we demonstrate RSGM on a non-compact manifold: the two dimensional hyperbolic space $\mathbb{H}^2$, which is defined as the simply connected space of constant negative curvature.
We use Langevin dynamics as the noising process (\cref{eq:langevin}) and target a wrapped Gaussian as the invariant distribution.
We again consider a synthetic dataset of samples from a mixture of exp-wrapped normal distribution.
From \cref{fig:hyperbolic}, we can qualitatively see that both score-based models are able to fit the target distribution.

\begin{figure}[tb]
    \centering
    \hfill
    \begin{subfigure}[t]{0.28\textwidth}
        \includegraphics[width=\textwidth, trim={0.0em 0.0em 0.0em 5em},clip]{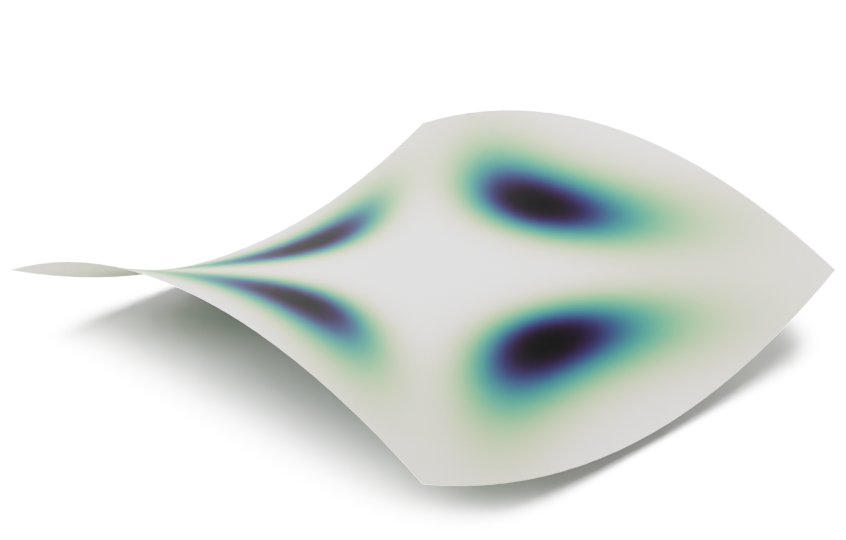}
        \caption{Target distribution.}
    \end{subfigure}
    \hfill
    \begin{subfigure}[t]{0.28\textwidth}
        \includegraphics[width=\textwidth, trim={0.0em 0.0em 0.0em 5em},clip]{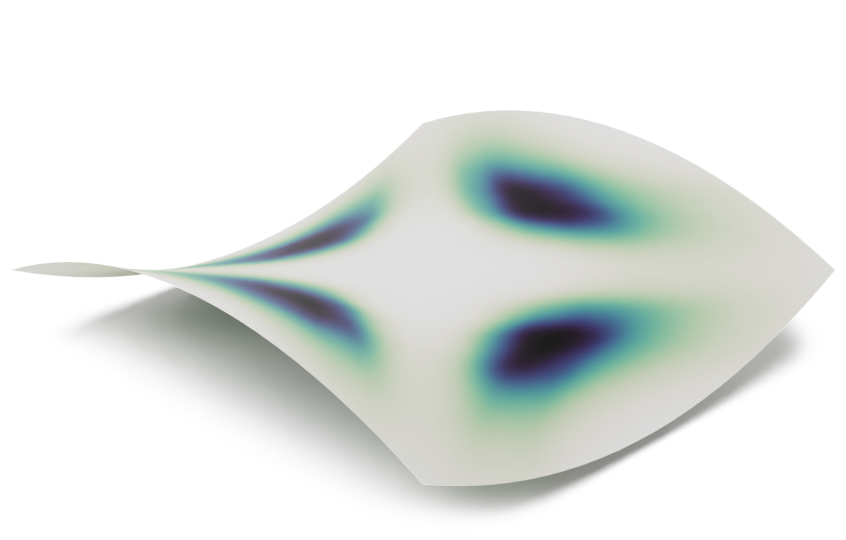}
        \caption{Exp-wrapped SGM.}
    \end{subfigure}
    \hfill
    \begin{subfigure}[t]{0.28\textwidth}
        \includegraphics[width=\textwidth, trim={0.0em 0.0em 0.0em 5em},clip]{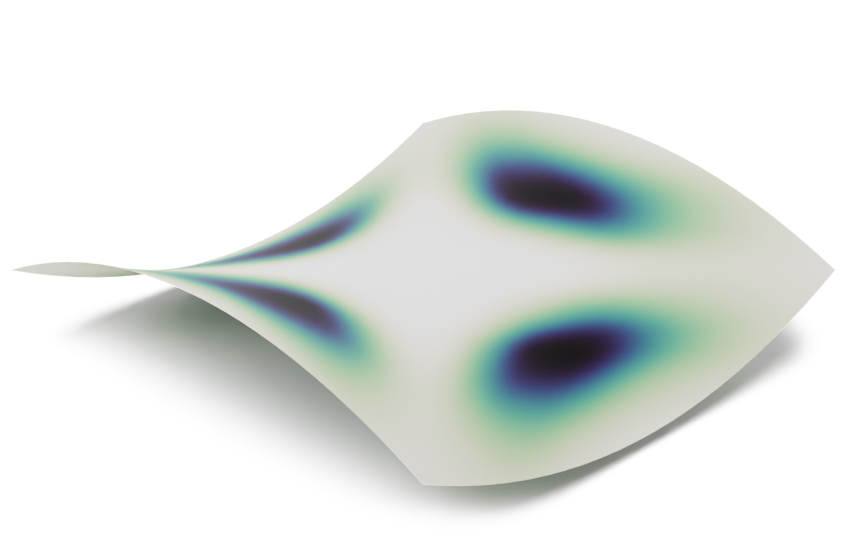}
        \caption{RSGM.}
    \end{subfigure}
    \hfill
     \caption{Samples from different probability distributions on $\mathbb{H}^2$ coloured w.r.t\ their density.}
    \label{fig:hyperbolic}
\end{figure}
\section{Discussion and limitations}
\label{sec:conclusion}

In this paper we introduced Riemannian Score-Based Generative Models (RSGMs), a class of deep generative models that represent target densities supported on manifolds, as the time-reversal of Langevin dynamics.
The main benefits of our method stems from its scalability to high dimensions, its applicability to a broad class of manifolds due to the diversity of available loss functions, its robustness and crucially its capacity to model complex datasets.
We also provided theoretical guarantees on the convergence of RSGMs.
In future work, we would like explore more generic classes of manifolds, such a ones with a boundary, along with alternative noising processes.
Another promising extension concerns stochastic control on manifolds and more precisely, deriving efficient algorithms to solve Schr\"odinger bridges \citep{thornton2022riemannian} in the same spirit as \textcite{debortoli2021neurips} on Euclidean state spaces.

\section*{Acknowledgements}
\label{sec:acknowledgments}

We are grateful to the anonymous reviewers for their insightful comments and the for fruitful discussion more generally.
We thank the $\texttt{hydra}$~\cite{Yadan2019Hydra}, $\texttt{jax}$~\cite{jax2018github} and $\texttt{geomstats}$~\cite{geomstats2020} teams, as our library is built on these great libraries.
EM research leading to these results received funding from the European Research Council under the European Union's Seventh Framework Programme (FP7/2007- 2013) ERC grant agreement no. 617071 and he acknowledges Microsoft Research and EPSRC for funding EM's studentship. MH is funded through the StatML CDT through grant EP/S023151/1. JT is funded through the OxWaSP CDT through grant EP/L016710/1. AD acknowledges support of the UK Defence Science and Technology Laboratory (Dstl) and
and Engineering and Physical Research Council (EPSRC) under grant EP/R013616/1. This is part of the collaboration between US DOD, UK MOD and UK EPSRC under the Multidisciplinary University Research Initiative. AD is also partially supported by the EPSRC grant EP/R034710/1 CoSines.


\printbibliography

\section*{Checklist}

\begin{enumerate}

\item For all authors...
\begin{enumerate}
  \item Do the main claims made in the abstract and introduction accurately reflect the paper's contributions and scope?
    \answerYes{Our main contribution is the extension of diffusion models on Riemannian manifolds.}
  \item Did you describe the limitations of your work?
    \answerYes{See \Cref{sec:conclusion}.}
  \item Did you discuss any potential negative societal impacts of your work?
    \answerNo{The work presented in this paper focuses on the learning of
      score-based models on manifold. We do not foresee any immediate societal
      impact of such a study.}
  \item Have you read the ethics review guidelines and ensured that your paper conforms to them?
    \answerYes{We have read the ethics review guidelines and our paper conforms to them.}
\end{enumerate}

\item If you are including theoretical results...
\begin{enumerate}
\item Did you state the full set of assumptions of all theoretical results?
  \answerYes{Yes, see \Cref{assum:manifold}.}
        \item Did you include complete proofs of all theoretical results?
    \answerYes{Yes, proofs are postponed to the supplementary material.}
\end{enumerate}

\item If you ran experiments...
\begin{enumerate}
  \item Did you include the code, data, and instructions needed to reproduce the main experimental results (either in the supplemental material or as a URL)?
    \answerYes{Experimental details are given in \cref{sec:exp_detail}.}
  \item Did you specify all the training details (e.g., data splits, hyperparameters, how they were chosen)?
    \answerYes{Experimental details are given in \cref{sec:exp_detail}.}
  \item Did you report error bars (e.g., with respect to the random seed after
    running experiments multiple times)?  \answerYes{Error bars are reported for
      each experiment.}
        \item Did you include the total amount of compute and the type of resources used (e.g., type of GPUs, internal cluster, or cloud provider)?
    \answerYes{Experimental details are given in \cref{sec:exp_detail}.}
\end{enumerate}

\item If you are using existing assets (e.g., code, data, models) or curating/releasing new assets...
\begin{enumerate}
  \item If your work uses existing assets, did you cite the creators?
    \answerYes{See \Cref{sec:exp_sphere}.}
  \item Did you mention the license of the assets?
    \answerYes{See \Cref{sec:exp_detail}.}
  \item Did you include any new assets either in the supplemental material or as a URL?
    \answerNo{Not applicable.}
  \item Did you discuss whether and how consent was obtained from people whose data you're using/curating?
    \answerNo{Not applicable.}
  \item Did you discuss whether the data you are using/curating contains personally identifiable information or offensive content?
    \answerNo{Not applicable.}
\end{enumerate}

\item If you used crowdsourcing or conducted research with human subjects...
\begin{enumerate}
  \item Did you include the full text of instructions given to participants and screenshots, if applicable?
    \answerNo{Not applicable.}
  \item Did you describe any potential participant risks, with links to Institutional Review Board (IRB) approvals, if applicable?
    \answerNo{Not applicable.}
  \item Did you include the estimated hourly wage paid to participants and the total amount spent on participant compensation?
    \answerNo{Not applicable.}
\end{enumerate}

\end{enumerate}



\setcounter{equation}{0}
\setcounter{figure}{0}
\setcounter{table}{0}
\setcounter{page}{1}
\makeatletter
\setcounter{tocdepth}{1}

\newpage
\appendixhead
\appendix
\section{Organization of the supplementary}
\label{sec:organ-suppl}

In this supplementary we first introduce notation in \Cref{sec:notation-1}. We
gather the proof of \Cref{thm:time_reversal_manifold} as well as additional
derivations on score-based generative models and Riemannian manifolds. In
\cref{sec:prel-stoch-riem}, we recall basics on stochastic Riemannian geometry
following \textcite{hsu2002stochastic}.
In \cref{sec:likel-comp}, we introduce an extension to the Riemannian setting of
the likelihood computation techniques in diffusion models. Details about
parametric vector fields are given in \Cref{sec:vector_field}.  In
\cref{sec:eigenf-eigenv-lapl}, we recall some basic facts about eigenvalues and
eigenfunctions of the Laplace--Beltrami operator on the $d$-dimensional sphere
and torus.  We present an extension of \Cref{alg:rsgm} using predictor-corrector
schemes in \Cref{sec:predictor-corrector}.  In \cref{sec:time-reversal}, we
prove the extension of the time-reversal formula to manifold in \Cref{thm:time_reversal_manifold}. We prove the convergence of RSGM,
i.e. \Cref{thm:weak_qualitative}, in \Cref{sec:convergence-rsgm}. The proof of
\cref{prop:implicit_der} drawing links between the denoising score matching loss
and the implicit score matching loss is presented \cref{sec:implicit-losses}. We
provide a thorough comparison between our approach and the one of
\textcite{rozen2021moser} in \Cref{sec:comp-with-moser}. We show how our method can
be adapted to perform density estimation in
\Cref{sec:dens-estim-with}. 
Extensions to conditional SGM and Schr\"odinger Bridges are discussed in \Cref{sec:extensions}. In \Cref{sec:non-compact}, we briefly discuss the non compact setting. 
Details on the stereographic SGM are given in \Cref{sec:stereo_exp}. 
Experimental details are given in \cref{sec:exp_detail}.


\section{Notation}
\label{sec:notation-1}

We refer to \Cref{sec:prel-stoch-riem} for more details about the basic concepts
of Riemannian geometry and stochastic processes. In this section, we merely
introduce the notation used in our work. We postpone an introduction to
stochastic processes on manifolds to \Cref{sec:stoch-diff-equat}.

In this work we always consider a smooth, connected and complete manifold $\M$.
We focus on the case of Riemannian manifolds, namely manifolds equipped with a
metric $g$. Metrics $g$ are smooth scalar product on the manifold allowing us to
define the notion of \emph{distance} on a manifold. We refer to
\Cref{sec:prel-stoch-riem} for a precise definition and a discussion on
metrics. Given a smooth map $f \in \rmc^\infty(\M, \rset)$, the gradient
$\nabla f$ is defined for any $f:\M \rightarrow \R$,
$x \in \M, v \in \mathrm{T}_x\M$, $\langle \nabla f, v \rangle_g = \rmd f (v)$.
The distane $d_\M(x,y)$ is defined as the infimum of the length of all the
curves on $\M$ joining $x$ and $y$. Geodesics are path defined on $\M$ by a
second order equation (and a starting point and speed). This second order
equation corresponds to the first order minimization of an \emph{energy}
functional whose minimizers also minimize the length. In
\Cref{sec:prel-stoch-riem}, we introduce the notion of geodesics using parallel
transport. The exponential mapping $\exp_x: \msu \M \to \M$ with
$\msu \subset \mathrm{T}_x \M$ is such that $\exp_x(v) = \gamma(1)$ with
$\gamma(1)$ the geodesics with initial condition $(x,v)$ at time $t=1$. Finally
the volume form is a differentiable form of same degree as the dimension of
$\M$. Since $\M$ is an orientable Riemannian manifold there is a natural volume
form defined using the metric $g$, namely
$\omega(x) = \abs{g(x)}^{1/2} \rmd x_1 \wedge \dots \rmd x_d$. In this paper, we
abuse notation and denote by the volume form this natural volume form. 


\section{Preliminaries on stochastic Riemannian geometry}
\label{sec:prel-stoch-riem}

In this section, we recall some basic facts on Riemannian geometry and
stochastic Riemannian geometry.  We follow
\textcite{hsu2002stochastic,lee2018introduction,lee2006riemannian} and refer to
\textcite{lee2010introduction,lee2013smooth} for a general introduction to
topological and smooth manifolds. Throughout this section $\M$ is a
$d$-dimensional smooth manifold, $\TM$ its tangent bundle and $\TMstar$ it
cotangent bundle. We denote $\rmc^\infty(\M)$ the set of real-valued smooth
functions on $\M$ and $\XM$ the set of vector fields on $\M$.

\subsection{Tensor field, metric, connection and transport}
\label{sec:metr-conn-tens}

\paragraph{Tensor field and Riemannian metric}

For a vector space $V$ let
$\mathrm{T}^{k, \ell}(V) = V^{\otimes k} \otimes (V^\star)^{\otimes \ell}$ with
$k, \ell \in \nset$. For any $k, \ell \in \nset$ we define the space of
$(k,\ell)$-tensors as
$\mathrm{T}^{k,\ell} \M = \sqcup_{p \in \M}
\mathrm{T}^{k,\ell}(\mathrm{T}_p\M)$. Note that
$\Gamma(\M, \mathrm{T}^{0,0}\M) = \mathrm{C}^\infty(\M)$,
$\XM = \Gamma(\M, \mathrm{T}^{1,0} \M)$ and that the space of $1$-form on $\M$
is given by $\Gamma(\M, \mathrm{T}^{0,1} \M)$, where $\Gamma(\M, V(\M))$ is a
section of a vector bundle $V(\M)$ \citep[see][Chapter 10]{lee2013smooth}.  For
any $k \in \nset$, we denote
$\mathrm{T}^{\abs{k}} \M = \sqcup_{j=0}^k \mathrm{T}^{j,k-j} \M$.
$\M$ is said to be
a Riemannian manifold if there exists $g \in \Gamma(\M, \mathrm{T}^{0,2} \M)$ such that for
any $x \in \M$, $g(x)$ is positive definite. $g$ is called the Riemannian metric
of $\M$. Every smooth manifold can be equipped with a Riemannian metric
\cite[see][Proposition 2.4]{lee2018introduction}. In local coordinates we define
$G = \{g_{i,j}\}_{1 \leq i,j \leq d} = \{g(X_i, X_j)\}_{1 \leq i,j \leq d}$,
where $\{X_i\}_{i=1}^d$ is a basis of the tangent space. In what follows we
consider that $\M$ is equipped with a metric $g$ and for any $X, Y \in \XM$ we
denote $\langle X,Y \rangle_{\M} = g(X,Y)$.

\paragraph{Connection}
A connection $\nabla$ is a mapping which allows one to differentiate vector
fields w.r.t other vector fields. $\nabla$ is a linear map
$\nabla: \ \XM \times \XM \to \XM$. In addition, we assume that
\begin{enumerate*}[label=\roman*)]
\item for any $f \in \rmc^\infty(\M)$, $X, Y \in \XM$, $\nabla_{f X}(Y) = f \nabla_X Y$, 
\item for any $f \in \rmc^\infty(\M)$, $X, Y \in \XM$, $\nabla_{X}(fY) = f \nabla_X Y + X(f) Y$.
\end{enumerate*}
Given a system of local coordinates, the Christoffel symbols
$\{\Gamma_{i,j}^k\}_{1 \leq i,j,k\leq d}$ are given for any
$i,j \in \{1, \dots, d\}$ by
$\nabla_{X_i}X_j = \sum_{k=1}^d \Gamma_{i,j}^k X_k$. We
also define the Levi--Civita connection $\nabla$ by considering the additional
two conditions: 
\begin{enumerate*}[label=\roman*)]
\item $\nabla$ is torsion-free, \ie \ for any $X, Y \in \XM$ we have
  $\nabla_X Y - \nabla_Y X = [X,Y]$, where $[X,Y]$ is the Lie bracket between
  $X$ and $Y$,
\item $\nabla$ is compatible with the metric $g$, \ie \ for any $X,Y,Z \in \XM$,
  $X (\langle Y,Z \rangle_\M) = \langle\nabla_X Y, Z\rangle_\M + \langle Y, \nabla_X Z \rangle_\M$.
\end{enumerate*}
We recall that the Levi--Civita connection is uniquely defined since for any
$X,Y,Z \in \XM$ we have
\begin{align}
  2 \prodM{\nabla_X Y}{Z} &= X(\prodM{Y}{Z}) + Y(\prodM{Z}{X}) - Z(\prodM{X}{Y}) \\
  & \qquad \qquad + \prodM{[X,Y]}{Z} - \prodM{[Z,X]}{Y} - \prodM{[Y,Z]}{X}  . 
\end{align}
In this case, the Christoffel symbols are given for any
$i,j,k \in \{1, \dots, d\}$ by
\begin{equation}
  \textstyle{\Gamma_{i,j}^k = \tfrac{1}{2} \sum_{m=1}^d g^{km} (\partial_j g_{m,i} + \partial_i g_{m,j} - \partial_m g_{i,j}) ,}
\end{equation}
where $\{g^{i,j}\}_{1 \leq i,j \leq d} = G^{-1}$. Note that if $\M$ is Euclidean
then for any $i,j,k \in \{1, \dots, d\}$, $\Gamma_{i,j}^k = 0$. We also extend
the connection so that for any $X \in \XM$ and $f \in \rmc^\infty(M)$ we have
$\nabla_X f = X(f)$. In particular, we have that
$\nabla_X f \in \rmc^\infty(\M)$. In addition, we extend the connection such
that for any $\alpha \in \Gamma(\M, \mathrm{T}^{0,1} \M)$, $X,Y \in \XM$ we have
$\nabla_X \alpha (Y) = \alpha(\nabla_X Y) - X(\alpha(Y))$. In particular, we
have that $\nabla_X \alpha \in \Gamma(\M, \mathrm{T}^{1,0} \M)$. Note that for any
$X \in \XM$ and $\alpha, \beta \in \mathrm{T}^{\abs{1}} \M$ we have
$\nabla_X (\alpha \otimes \beta) = \nabla_X \alpha \otimes \beta + \alpha
\otimes \nabla_X \beta$. Similarly, we can define recursively $\nabla_X \alpha$
for any $\alpha \in \Gamma(\M, \mathrm{T}^{k,\ell}\M)$ with $k, \ell \in \nset$. Such an
extension is called a covariant derivative.

\paragraph{Parallel transport, geodesics and exponential mapping} Given a
connection, we can define the notion of parallel transport, which transports
vector fields along a curve. Let $\gamma: \ \ccint{0,1} \to \M$ be a smooth
curve. We define the covariant derivative along the curve $\gamma$ by
$D_{\dot{\gamma}}: \ \Xgamma \to \Xgamma$ similarly to the connection, where
$\Xgamma = \Gamma(\gamma(\ccint{0,1}), \TM)$. In particular if $\dot{\gamma}$
and $X \in \Xgamma$ can be extended to $\XM$ then we define
$D_{\dot{\gamma}}(X) = \nabla_{\dot{\gamma}}X \in \XM$. In what follows, we
denote $D = \nabla$ for simplicity. We say that $X \in \Xgamma$ is parallel to
$\gamma$ if for any $t \in \ccint{0,1}$, $\nabla_{\dot{\gamma}}X(t) = 0$. In
local coordinates, let $X \in \Xgamma$ be given for any $t \in \ccint{0,1}$ by 
$X = \sum_{i=1}^d a_i(t) E_i(t)$ (assuming that $\gamma([0,1])$ is entirely
contained in a local chart), then we have that for any $t \in \ccint{0,1}$ and
$k \in \{1, \dots, d\}$
\begin{equation}
  \label{eq:parallel_transport}
  \textstyle{\dot{a}_k(t) + \sum_{i,j=1}^d \Gamma_{i,j}^k(x(t)) \dot{x}_i(t) a_j(t) = 0  .}
\end{equation}
A curve $\gamma$ on $\M$ is said to be a geodesics if $\dot{\gamma}$ is parallel
to $\gamma$. Using \cref{eq:parallel_transport} we get that
\begin{equation}
  \label{eq:geodesics}
  \textstyle{\ddot{x}_k(t) + \sum_{i,j=1}^d \Gamma_{i,j}^k(x(t)) \dot{x}_i(t) \dot{x}_j(t) = 0  .}
\end{equation}
For more details on geodesics and parallel transport, we refer to \citet[Chapter
4]{lee2018introduction}. 
In addition, we have that parallel transport provides a linear isomorphism
between tangent spaces. Indeed, let $v \in \mathrm{T}_x \M$ and
$\gamma: \ \ccint{0,1} \to \M$ with $\gamma(0) = x$ a smooth curve. Then, there
exists a unique vector field $X^v \in \Xgamma$ such that $X^v(x) = v$ and $X^v$
is parallel to $\gamma$. For any $t \in \ccint{0,1}$, we denote
$\Gamma_0^t: \mathrm{T}_{x} \M \to \mathrm{T}_{\gamma(t)} \M$ the linear
isomorphism such that $\Gamma_0^t(v) = X^v(\gamma(t))$.

For any $x \in \M$ and $v \in \mathrm{T}_x \M$ we denote
$\gamma^{x,v}: \ \ccint{0,\vareps^{x,v}}$ the geodesics (defined on the maximal
interval $\ccint{0, \vareps^{x,v}}$) on $\M$ such that $\gamma(0) = x$ and
$\dot \gamma(0) = v$. We denote
$\msu^x = \ensembleLigne{v \in \mathrm{T}_x \M}{\vareps^{x,v} \geq 1}$. Note
that $0 \in \msu^x$. For any $x \in \M$, we define the exponential mapping
$\exp_x: \ \msu^x \to \M$ such that for any $v \in \msu^x$,
$\exp_x(v) = \gamma^{x,v}(1)$. If for any $x \in \M$,
$\msu^x = \mathrm{T}_x \M$, the manifold is called \emph{geodesically
  complete}. As any connected compact manifold is geodesically complete, there exists a geodesic between any two
points $x, y \in \M$ \cite[see][Lemma 6.18]{lee2018introduction}. For any
$x, y \in \M$, we denote $\mathrm{Geo}_{x,y}$ the sets of geodesics $\gamma$
such that $\gamma(0) = x$ and $\gamma(y) = 1$. For any $x, y \in \M$ we denote
$\Gamma_x^y(\gamma) : \ \mathrm{T}_x \M \to \mathrm{T}_y \M$ the linear
isomorphism such that for any $v \in \mathrm{T}_x \M$,
$\Gamma_x^y(v) = X^v(\gamma(1))$, where $\gamma \in \mathrm{Geo}_{x,y}$. Note
that for any $x \in \M$ there exists $\msv^x \subset \M$ such that
$x \in \msv^x$ and for any $y \in \msv^x$ we have that
$\absLigne{\mathrm{Geo}_{x,y}}=1$.  In this case, we denote
$\Gamma_x^y = \Gamma_x^y(\gamma)$ with $\gamma \in \mathrm{Geo}_{x,y}$.

\paragraph{Orthogonal projection} We will make repeated use of orthonormal
projections on manifolds. Recall that since $\M$ is a closed Riemannian manifold
we can use the Nash embedding theorem \citep{gunther1991isometric}. In the rest
of this paragraph, we assume that $\M$ is a Riemannian submanifold of $\rset^p$
for some $p \in \nset$ such that its metric is induced by the Euclidean
metric. In order to define the projection we introduce
\begin{equation}  
  \mathrm{unpp}(\M) = \ensembleLigne{x \in \rset^d}{\text{there exists a unique $\xi_x$ such that $\normLigne{x - \xi_x} = d(x, \M)$}}  . 
\end{equation}
Let $\mathcal{E}(\M) = \interior(\mathrm{unpp}(\M))$. By \citet[Theorem
1]{leobacher2021existence}, we have $\M \subset \mathcal{E}(\M)$. We define
$\tilde{p}: \ \mathcal{E}(\M) \to \M$ such that for any $x \in \mathcal{E}(\M)$,
$\tilde{p}(x) = \xi_x$. Using \citet[Theorem 2]{leobacher2021existence}, we have
$\tilde{p} \in \rmc^\infty(\rset^p, \M)$ and for any $x \in \M$,
$\tilde{P}(x) = \rmd \tilde{p}(x)$ is the orthogonal projection on
$\mathrm{T}_x\M$. Since $\rset^p$ is normal and $\M$ and
$\mathcal{E}(\M)^\complementary$ are closed, there exists $\msf$ open such that
$\M \subset \msf \subset \mathcal{E}(\M)$. Let
$p \in \rmc^\infty(\rset^p, \rset^p)$ such that for any $x \in \msf$,
$p(x) = \tilde{p}(x)$ (given by Whitney extension theorem for
instance). Finally, we define $P: \ \rset^p \to \rset^p$ such that for any
$x \in \rset^p$, $P(x) = \rmd p(x)$. Note that for any $x \in \M$, $P(x)$ is the
orthogonal projection $\mathrm{T}_x \M$ and that
$P \in \rmc^\infty(\rset^p, \rset^p)$.

\subsection{Stochastic Differential Equations on manifolds}
\label{sec:stoch-diff-equat}

\paragraph{Stratanovitch integral} For reasons that will become clear in the
next paragraph, it is easier to define Stochastic Differential Equations (SDEs)
on manifolds w.r.t the Stratanovitch integral \cite[Part II, Chapter
3]{kloeden:platen:2011}. We consider a filtered probability space
$(\Omega, (\mcf_t)_{t \geq 0}, \Pbb)$. Let $(\bfX_t)_{t \geq 0}$ and
$(\bfY_t)_{t \geq 0}$ be two real continuous semimartingales. We define the
quadratic covariation $([\bfX,\bfY]_t)_{t \geq 0}$ such that for any $t \geq 0$
\begin{equation}
  \textstyle{[\bfX,\bfY]_t = \bfX_t \bfY_t - \bfX_0\bfY_0 - \int_0^t \bfX_s \rmd \bfY_s - \int_0^t \bfY_s \rmd \bfX_s  . }
\end{equation}
We refer to \citet[Chapter IV]{revuz1999continuous} for more details on
semimartingales and quadratic variations. We denote $[\bfX] = [\bfX, \bfX]$. In
particular, we have that $([\bfX, \bfY]_t)_{t \geq 0}$ is an adapted continuous
process with finite-variation and therefore $[[\bfX, \bfY]] = 0$. Let
$(\bfX_t)_{t \geq 0}$ and $(\bfY_t)_{t \geq 0}$ be two real continuous
semimartingales, then we define the Stratanovitch integral as follows for any
$t \geq 0$
\begin{equation}
  \textstyle{ \int_0^t \bfX_s \circ \rmd \bfY_s = \int_0^t \bfX_s \rmd \bfY_s + \tfrac{1}{2} [\bfX, \bfY]_t  . }
\end{equation}
In particular, denoting $(\bfZ_t^1)_{t \geq 0}$ and $(\bfZ_t^2)_{t \geq 0}$ the
processes such that for any $t \geq 0$,
$\bfZ_t^1 = \int_0^t \bfX_s \circ \rmd \bfY_s$ and
$\bfZ_t^2 = \int_0^t \bfX_s \rmd \bfY_s$, we have that $[\bfZ^1] = [\bfZ^2]$. We
refer to \textcite{kurtz1995stratonovich} for more details on Stratanovitch
integrals. Note that if for any $t \geq 0$,
$\bfX_t = \int_0^t f(\bfX_s) \circ \rmd \bfY_s$ with $\rmc^1(\rset, \rset)$,
then $[\bfX, \bfY]_t = \int_0^t f(\bfX_s) f'(\bfX_s) \rmd \bfY_s$. Assuming that
$f \in \rmc^3(\rset, \rset)$ we have that \cite[Chapter IV, Exercise
3.15]{revuz1999continuous}
\begin{equation}
  \label{eq:stratanovitch_lemma}
  \textstyle{ f(\bfX_t) = f(\bfX_0) + \int_0^t f'(\bfX_s) \circ \rmd \bfX_s  .}
\end{equation}
The proof relies on the fact that for any $t \geq 0$,
$\rmd [\bfX, f'(\bfX)]_t = f''(\bfX_t) \rmd [\bfX]_t$.  This result should be
compared with It\^o's lemma. In particular, Stratanovitch calculus satisfies the
ordinary chain rule making it a useful tool in differential geometry which makes
a heavy use of diffeomorphism.  Finally, we have the following correspondence
between Stratanovitch and It\^o SDEs. Assume that $(\bfX_t)_{t \in \ccint{0,T}}$
is a strong solution to
$\rmd \bfX_t = b(t, \bfX_t) \rmd t + \sigma(t, \bfX_t) \circ \rmd \bfB_t$, with
$b \in \rmc^\infty(\rset^d, \rset^d)$ and
$\sigma \in \rmc^\infty(\rset^d, \rset^{d \times d})$. Then, we have that
\begin{equation}
  \label{eq:strata_ito_transfo_sde}
 \rmd \bfX_t = \{ b(t, \bfX_t) + \bar{b}(\bfX_t)\}\rmd t + \sigma(t, \bfX_t)
\rmd \bfB_t  , \qquad \bar{b} = (-1/2) [\dive(\sigma \sigma^\top) - \sigma \dive(\sigma^\top)] .
\end{equation}
where for any $\rmA \in \rmc^\infty(\rset^d, \rset^{d \times d})$ we have that
$\dive(\rmA) \in \rmc^\infty(\rset^d, \rset^d)$ and for any
$i \in \{1, \dots, d\}$ and $x \in \rset^d$,
$\dive(\rmA)_i(x) = \sum_{j=1}^d \partial_j \rmA_{i,j}(x)$. In particular, note
that if for $x_0 \in \rset^d$, $\sigma(x_0)$ is an orthogonal projection, then
$\sigma(x_0) \bar{b}(x_0) = 0$.

\paragraph{SDEs on manifolds}
We define semimartingales and SDEs on manifold through the lens of their actions
on functions. A continuous $\M$-valued stochastic process $(\bfX_t)_{t \geq 0}$
is called a $\M$-valued semimartingale if for any $f \in \rmc^\infty(\M)$ we
have that $(f(\bfX_t))_{t \geq 0}$ is a real valued semimartingale. Let
$\ell \in \nset$, $V^{1:\ell} = \{ V_i\}_{i=1}^\ell \in \XM^\ell$ and
$Z^{1:\ell} = \{Z^i\}_{i=1}^\ell$ a collection of $\ell$ real-valued
semimartingales. A $\M$-valued semimartingale $(\bfX_t)_{t \geq 0}$ is said to
be the solution of $\SDE(V^{1:\ell}, Z^{1:\ell}, \bfX_0)$ up to a stopping
$\tau$ with $\bfX_0$ a $\M$-valued random variable if for all
$f \in \rmc^\infty(\M)$ and $t \in \ccint{0, \tau}$ we have 
\begin{equation}
  \textstyle{f(\bfX_t) = f(\bfX_0) + \sum_{i=1}^\ell \int_0^t V_i(f)(\bfX_s) \circ \rmd \bfZ^i_s  . } 
\end{equation}
Since the previous SDE is defined w.r.t the Stratanovitch integral we have that
if $(\bfX_t)_{t \geq 0}$ is a solution of $\SDE(V^{1:\ell}, Z^{1:\ell}, \bfX_0)$
and $\Phibf: \M \to \mathcal{N}$ is a diffeomorphism then $(\Phibf(\bfX_t))_{t \geq 0}$
is a solution of $\SDE(\Phibf_\star V^{1:\ell}, Z^{1:\ell}, \Phibf(\bfX_0))$,
where $\Phibf_\star$ is the pushforward operation \cite[see][Proposition
1.2.4]{hsu2002stochastic}. Because the vector fields $\{V_i\}_{i=1}^\ell$ are
smooth we have that for any $\ell \in \nset$,
$V^{1:\ell} = \{ V_i\}_{i=1}^\ell \in \XM^\ell$ and
$Z^{1:\ell} = \{Z^i\}_{i=1}^\ell$ a collection of $\ell$ real-valued
semimartingales, there exists a unique solution to
$\SDE(V^{1:\ell}, Z^{1:\ell}, \bfX_0)$ \cite[see][Theorem
1.2.9]{hsu2002stochastic}.

\subsection{Brownian motion on manifolds}
\label{sec:brown-moti-manif}

In this section, we introduce the notion of Brownian motion on manifolds. We
derive some of its basic convergence properties and provide alternative
definitions (stochastic development, isometric embedding, random walk
limit). These alternative definitions are the basis for our alternative
methodologies to sample from the time-reversal. To simplify our discussion, we
assume that $\M$ is a connected compact orientable Riemannian manifold equipped
with the Levi--Civita connection $\nabla$. We denote $p_{\textup{ref}}^m$ the
Haussdorff measure of the manifold (which coincides with the measure associated
with the Riemannian volume form \citep[see][Theorem
2.10.10]{federer2014geometric} and
$p_{\textup{ref}} = p_{\textup{ref}}^m / p_{\textup{ref}}(\M)$ the associated
probability measure.

\paragraph{Gradient, divergence and Laplace operators}
Let $f \in \rmc^{\infty}(\M)$. We define $\nabla f \in \XM$ such that for any
$X \in \XM$ we have $\langle X, \nabla f \rangle_{\M} = X(f)$. Let
$\{X_i\}_{i=1}^d \in \XM^d$ such that for any $x \in \M$, $\{X_i(x)\}_{i=1}^d$
is an orthonormal basis of $\mathrm{T}_x \M$. Then, we define
$\dive: \ \XM \to \rmc^\infty(\M)$ (linear) 
such that for any $X \in \XM$,
$\dive(X) = \sum_{i=1}^d \prodM{\nabla_{X_i}X}{X_i}$. The following Stokes
formula (also called divergence theorem, see \citet[p.51]{lee2018introduction})
holds for any $f \in \rmc^\infty(\M)$ and $X \in \XM$,
$\int_{M} \dive(X)(x) f(x) \rmd p_{\textup{ref}}(x) = - \int_M X(f)(x) \rmd
p_{\textup{ref}}(x)$. Let $X = \sum_{i=1}^d a_i X_i$ in local coordinates.  Using the
Stokes formula and the definition of the gradient we get that in local
coordinates
\begin{equation}
\textstyle{  \nabla f = \sum_{i,j=1}^d g^{i,j} \partial_i f X_j  ,  \qquad \dive(X) = \det(G)^{-1/2} \sum_{i=1}^d \partial_i(\det(G)^{1/2} a_i)  . }
\end{equation}
The Laplace--Beltrami operator is given by
$\Delta_{\M} : \ \rmc^\infty(M) \to \rmc^\infty(M)$ and for any
$f \in \rmc^\infty(M)$ by $\Delta_{\M}(f) = \dive(\grad(f))$. In local
coordinates we obtain
$\Delta_{\M}(f) = \det(G)^{-1/2} \sum_{i=1}^d \partial_i (\det(G)^{1/2}
\sum_{j=1}^d g^{i,j} \partial_j f)$. Using the Nash isometric embedding theorem
\citep{gunther1991isometric} we will see that $\Delta_{\M}$ can always be
written as a sum of squared operators. However, this result requires an
\emph{extrinsic} point of view as it relies on the existence of projection
operators. In contrast, if we consider the orthonormal bundle $\OM$, see
\cite[Chapter 2]{hsu2002stochastic}, we can define the Laplace--Bochner operator
$\Delta_{\OM}: \ \rmc^\infty(\OM) \to \rmc^\infty(\OM)$ as
$\Delta_{\OM} = \sum_{i=1}^d H_i^2$, where we recall that for any
$i \in \{1, \dots, d\}$, $H_i$ is the horizontal lift of $e_i$. In this case,
$\Delta_{\OM}$ is a sum of squared operators and we have that for any
$f \in \rmc^\infty(\M)$, $\Delta_{\OM}(f \circ \pi) = \Delta_{\M}(f)$
\cite[see][Proposition 3.1.2]{hsu2002stochastic}. Being able to express the
various Laplace operators as a sum of squared operators is key to express the
associated diffusion process as the solution of an SDE.

\paragraph{Alternatives definitions of Brownian motion}

We are now ready to define a Brownian motion on the manifold $\M$. Using the
Laplace--Beltrami operator, we can introduce the Brownian motion through the lens
of diffusion processes.

\begin{definition}{Brownian motion}{}
  Let $(\bfB_t^\M)_{t \geq 0}$ be a $\M$-valued semimartingale.
  $(\bfB_t^\M)_{t \geq 0}$ is a Brownian motion on $\M$ if for any
  $f \in \rmc^\infty(\M)$, $(\bfM_t^f)_{t \geq 0}$ is a local martingale where
  for any $t \geq 0$
  \begin{equation}
    \textstyle{\bfM_t^f = f(\bfB_t^\M) - f(\bfB_0^\M) - \tfrac{1}{2}\int_0^t \Delta_{\M}f(\bfB_s^\M) \rmd s  .}
  \end{equation}
\end{definition}

Note that this definition is in accordance with the definition of the Brownian
motion as a diffusion process in the Euclidean space $\rset^d$, since in this
case $\Delta_{\M} = \Delta$. A key property of frame bundles and orthonormal
bundles is that any semimartingale on $\M$ can be associated to a process on
$\FM$ (or $\OM$) and a process on $\rset^d$. The proof of the following result
can be found in \citet[Propositions 3.2.1 and 3.2.2]{hsu2002stochastic}.

\begin{proposition}{Intrinsic view of Brownian motion}{intrinsic_brownian}
  Let $(\bfB_t^\M)_{t \geq 0}$ be a $\M$-valued semimartingales. Then
  $(\bfB_t^\M)_{t \geq 0}$ is a Brownian motion on $\M$ if and only on the
  following conditions hold:
  \begin{enumerate}[label=\alph*)]
  \item The horizontal lift $(\bfU_t)_{t \geq 0}$ is a $\Delta_{\OM}/2$
    diffusion process, \ie \ for any $f \in \rmc^\infty(\OM)$, we have that
    $(\bfM_t^f)_{t \geq 0}$ is a local martingale where for any $t \geq 0$
  \begin{equation}
    \textstyle{\bfM_t^f = f(\bfU_t) - f(\bfU_0) - \tfrac{1}{2}\int_0^t \Delta_{\OM}f(\bfU_s) \rmd s  .}
  \end{equation}    
\item The stochastic antidevelopment of $(\bfB_t^\M)_{t \geq 0}$ is a
  $\rset^d$-valued Brownian motion $(\bfB_t)_{t \geq 0}$.
  \end{enumerate}
\end{proposition}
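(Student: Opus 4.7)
The plan is to prove the two directions of the equivalence by exploiting the intertwining identity $\Delta_{\OM}(f \circ \pi) = \Delta_{\M}(f)$ stated earlier (which holds for any $f \in \rmc^\infty(\M)$, with $\pi : \OM \to \M$ the bundle projection), together with the SDE representation of the horizontal lift on the orthonormal frame bundle. The key observation is that the Laplace--Bochner operator decomposes as a sum of squares $\Delta_{\OM} = \sum_{i=1}^d H_i^2$ of the canonical horizontal vector fields, so that condition (a) is naturally encoded by a Stratonovich SDE on $\OM$, which in turn induces both the $\M$-valued process via projection and the $\rset^d$-valued process via antidevelopment.

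For the direction $(a) \Rightarrow$ Brownian motion on $\M$: assume $(\bfU_t)_{t\geq 0}$ is a $\Delta_{\OM}/2$-diffusion. For any $f \in \rmc^\infty(\M)$, the function $f \circ \pi$ belongs to $\rmc^\infty(\OM)$, so applying the local martingale hypothesis to $f \circ \pi$ and using $\Delta_{\OM}(f\circ\pi) = \Delta_{\M}(f)$ together with $\bfB_t^\M = \pi(\bfU_t)$ immediately yields that $f(\bfB_t^\M) - f(\bfB_0^\M) - \tfrac{1}{2}\int_0^t \Delta_\M f(\bfB_s^\M)\rmd s$ is a local martingale. Hence $(\bfB_t^\M)_{t\geq 0}$ is a Brownian motion.

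For the converse $\M$-Brownian motion $\Rightarrow (a)$ and $(b)$: I would start from the horizontal lift SDE on $\OM$, namely $\rmd \bfU_t = \sum_{i=1}^d H_i(\bfU_t) \circ \rmd \bfW_t^i$ driven by some $\rset^d$-valued semimartingale $(\bfW_t)_{t\geq 0}$ (the stochastic antidevelopment of $\bfB^\M$, well-defined by \Cref{sec:stoch-diff-equat} since the $H_i$ are smooth vector fields on $\OM$). By the Stratonovich chain rule \eqref{eq:stratanovitch_lemma} applied to $g \in \rmc^\infty(\OM)$, we get $g(\bfU_t) = g(\bfU_0) + \sum_i \int_0^t H_i(g)(\bfU_s)\circ \rmd \bfW_s^i$, and converting to It\^o form produces the generator $\tfrac12 \sum_i H_i^2 = \tfrac12 \Delta_{\OM}$ precisely when $[\bfW^i, \bfW^j]_t = \delta_{ij}t$. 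Taking $g = f \circ \pi$ for $f \in \rmc^\infty(\M)$ and comparing with the local martingale condition defining $\bfB^\M$ as a Brownian motion forces $[\bfW^i, \bfW^j]_t = \delta_{ij}t$, so by L\'evy's characterization $(\bfW_t)_{t\geq 0}$ is a $\rset^d$-valued Brownian motion, giving (b); and this quadratic covariation structure in turn yields condition (a) via the same generator computation.

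The main obstacle I anticipate is the careful verification that the antidevelopment $(\bfW_t)_{t\geq 0}$ is well-defined as a continuous semimartingale and that the horizontal lift SDE admits a unique solution with the prescribed initial frame; once those technical constructions are in place, the proof reduces to matching generators via the identity $\Delta_{\OM}(f \circ \pi) = \Delta_{\M}(f)$ and invoking L\'evy's characterization. A secondary subtlety is handling the Stratonovich-to-It\^o conversion on $\OM$, where the correction term vanishes thanks to the fact that the $H_i$ form a horizontal frame orthonormal with respect to the natural metric, so that $\sum_i \nabla_{H_i} H_i = 0$ along the fibers.
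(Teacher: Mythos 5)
The paper offers no proof of this proposition at all—it defers entirely to \textcite[Propositions 3.2.1 and 3.2.2]{hsu2002stochastic}—and your sketch reproduces exactly that standard argument: the intertwining $\Delta_{\OM}(f\circ\pi)=(\Delta_\M f)\circ\pi$ for the direction (a) $\Rightarrow$ Brownian motion, and the horizontal-lift Stratonovich SDE together with L\'evy's characterization of the antidevelopment for the converse and for (b). Your outline is correct; the only blemish is the closing remark about $\sum_i \nabla_{H_i}H_i=0$ "along the fibers," which is unnecessary: the It\^o drift of $\sum_i\int_0^t H_i g(\bfU_s)\circ\rmd \bfW_s^i$ is already $\tfrac{1}{2}\sum_{i,j}\int_0^t H_jH_ig(\bfU_s)\,\rmd[\bfW^i,\bfW^j]_s$, which equals $\tfrac{1}{2}\int_0^t\Delta_{\OM}g(\bfU_s)\,\rmd s$ by the very definition of the Bochner Laplacian once $[\bfW^i,\bfW^j]_t=\delta_{ij}t$ is established, with no further correction term to cancel.
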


In particular the previous proposition provides us with an \emph{intrisic} way
to sample the Brownian motion on $\M$ with initial condition $\bfB_0^\M$. First
sample $(\bfU_t)_{t \geq 0}$ solution of $\SDE(H^{1:d}, \bfB^{1:d}, \bfU_0)$
with $H^{1:d} = \{H_i\}_{i=1}^d$ and $\pi(\bfU_0) = \bfB_0^\M$ and $\bfB^{1:d}$ the
Euclidean $d$-dimensional Brownian motion. Then, we recover the $\M$-valued
Brownian motion $(\bfB_t^\M)_{t \geq 0}$ upon letting
$(\bfB_t^\M)_{t \geq 0} = (\pi(\bfU_t))_{t \geq 0}$.

We now consider an \emph{extrinsic} approach to the sampling of Brownian motions
on $\M$. Using the Nash embedding theorem \citep{gunther1991isometric}, there
exists $p \in \nset$ such that without loss of generality we can assume that
$\M \subset \rset^p$. For any $x \in \M$, we denote
$\rmP(x): \ \rset^p \to \mathrm{T}_x \M$ the projection operator. In addition for
any $x \in \M$, we denote $\{\rmP_i(x)\}_{i=1}^p = \{\rmP(x) e_i\}_{i=1}^p$, where
$\{e_i\}_{i=1}^p$ is the canonical basis of $\rset^p$. For any
$i \in \{1, \dots, p\}$, we smoothly extend $\rmP_i$ to $\rset^p$. In this case, we
have the following proposition \cite[Theorem 3.1.4]{hsu2002stochastic}:

\begin{proposition}{Extrinsic view of Brownian motion}{extrinsic_brownian}
  For any $f \in \rmc^{\infty}(\M)$ we have that
  $\Delta_\M(f) = \sum_{i=1}^p \rmP_i(\rmP_i(f))$. Hence, we have that
  $(\bfB_t^\M)_{t \geq 0}$ solution of
  $\SDE(\{\rmP_i\}_{i=1}^{p}, \bfB^{1:p}, \bfB_0^\M)$ with $\bfB_0^\M$ a $\M$-valued
  random variable and $\bfB^{1:p}$ a $\rset^p$-valued Brownian motion.
\end{proposition}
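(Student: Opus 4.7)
The statement has two parts. First, the identity $\Delta_\M f = \sum_{i=1}^p \rmP_i(\rmP_i f)$ on $\M$; second, the identification of the Brownian motion as a Stratonovich SDE driven by Euclidean Brownian motions with the vector fields $\rmP_i$. I would prove them in that order, since the SDE statement follows immediately from the identity together with the general Stratonovich/Itô conversion already recorded in \eqref{eq:strata_ito_transfo_sde} and the definition of Brownian motion through the Laplace--Beltrami operator.

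To establish the identity, fix $x \in \M$ and choose a local orthonormal frame $\{X_k\}_{k=1}^d$ of tangent vector fields near $x$. The key algebraic input is that the tangential projection satisfies the frame-expansion $P e_i = \sum_k \langle X_k, e_i\rangle_{\rset^p} X_k$, so that $\rmP_i = \sum_k (X_k)_i X_k$ with $(X_k)_i$ the $i$-th Euclidean component of $X_k$. Plugging this into $\sum_i \rmP_i(\rmP_i f)$, expanding the product rule, and exploiting the orthonormality relation $\sum_i (X_\ell)_i (X_k)_i = \delta_{\ell k}$ reduces the double sum to
\begin{equation}
\textstyle \sum_{i=1}^p \rmP_i(\rmP_i f) \;=\; \sum_{k=1}^d X_k(X_k f) \;-\; \sum_k \langle X_k, H\rangle\, X_k f, \qquad H = \sum_\ell D_{X_\ell} X_\ell,
\end{equation}
where $D$ is the ambient Euclidean connection. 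Decomposing $D_{X_\ell} X_\ell = \nabla_{X_\ell} X_\ell + \mathrm{II}(X_\ell, X_\ell)$ into tangent and normal parts, the second fundamental form $\mathrm{II}$ is killed by $\langle X_k, \cdot\rangle$, leaving $\sum_k \langle X_k, H\rangle X_k f = \sum_\ell (\nabla_{X_\ell} X_\ell)(f)$. Hence
\begin{equation}
\textstyle \sum_{i=1}^p \rmP_i(\rmP_i f)(x) \;=\; \sum_k \bigl[ X_k(X_k f) - (\nabla_{X_k} X_k)(f)\bigr](x) \;=\; \mathrm{tr}_g \mathrm{Hess}_\M f(x) \;=\; \Delta_\M f(x).
\end{equation}

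For the SDE statement, let $(\bfX_t)_{t \ge 0}$ be the unique solution of $\mathrm{SDE}(\{\rmP_i\}_{i=1}^p, \bfB^{1:p}, \bfB_0^\M)$, whose existence is guaranteed by the general result \citet[Theorem 1.2.9]{hsu2002stochastic} cited just before \Cref{sec:brown-moti-manif}; note that the drift $\sigma\sigma^\top = P$ is tangent to $\M$, so the solution remains on $\M$. By definition of Stratonovich SDEs on manifolds, for any $f \in \rmc^\infty(\M)$,
\begin{equation}
\textstyle f(\bfX_t) = f(\bfX_0) + \sum_i \int_0^t \rmP_i(f)(\bfX_s)\circ \rmd \bfB_s^i.
\end{equation}
Converting each Stratonovich integral to Itô, using that $\rmd(\rmP_i(f)(\bfX_s)) = \sum_j \rmP_j(\rmP_i f)(\bfX_s)\circ \rmd \bfB_s^j$ so that the covariation with $\bfB^i$ contributes $\rmP_i(\rmP_i f)(\bfX_s)\rmd s$, we obtain
\begin{equation}
\textstyle f(\bfX_t) = f(\bfX_0) + M_t^f + \tfrac{1}{2}\sum_i \int_0^t \rmP_i(\rmP_i f)(\bfX_s)\rmd s,
\end{equation}
with $M_t^f$ a local martingale. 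Invoking the identity proved above identifies the drift as $\tfrac{1}{2}\int_0^t \Delta_\M f(\bfX_s)\rmd s$, which is the defining property of a $\M$-valued Brownian motion.

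The main obstacle is the first step: making the frame/projection bookkeeping rigorous and correctly pinning down the cancellation between the ambient second derivatives and the connection terms. The delicate point is that the naive guess $\sum_i \rmP_i \otimes \rmP_i = P$ only handles the Hessian contribution, and one must carefully show that the first-order piece $\sum_i (D_{\rmP_i}\rmP_i)\cdot \nabla \tilde f$ produces exactly the Christoffel correction $-\sum_k (\nabla_{X_k} X_k)(f)$ after the normal component of the mean-curvature-like vector $H$ is projected away by $P$.
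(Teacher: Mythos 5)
Your proof is correct. The paper does not actually prove this proposition: it cites \citet[Theorem 3.1.4]{hsu2002stochastic} for the identity $\Delta_\M f = \sum_{i=1}^p \rmP_i(\rmP_i f)$ and disposes of the SDE claim with the one-line remark that a solution of $\SDE(\{V_i\}_{i=1}^\ell,\bfB^{1:\ell},\bfX_0)$ is a diffusion with generator $\tfrac12\sum_i V_i(V_i(\cdot))$ (the paper's remark in fact drops the factor $\tfrac12$, which you correctly restore via the Stratonovich--It\^o conversion). What you supply is essentially the standard submanifold computation underlying Hsu's theorem: the frame expansion $\rmP_i=\sum_k (X_k)_i X_k$, orthonormality to recover $\sum_k X_k(X_k f)$, and the observation that the ambient first-order correction $\langle X_k, D_{X_k}X_\ell\rangle = -\langle D_{X_k}X_k, X_\ell\rangle$ loses its normal (second-fundamental-form) part under the tangential pairing, leaving exactly the Christoffel term $-\sum_k(\nabla_{X_k}X_k)f$ so that the sum is $\mathrm{tr}_g\,\mathrm{Hess}\,f=\Delta_\M f$. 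The only cosmetic caveat is that, in the paper's framework, the solution of the manifold SDE is $\M$-valued by definition (\citet[Theorem 1.2.9]{hsu2002stochastic}), so your aside about the solution ``remaining on $\M$'' because the coefficients are tangent is not needed as a separate verification.
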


The second part of this proposition, stems from the fact that any solution of
$\SDE(\{V_i\}_{i=1}^{\ell}, \bfB^{1:\ell}, \bfX_0)$, where $\bfX_0$ is a
$\M$-valued random variable and $\bfB^{1:\ell}$ a $\rset^\ell$-valued Brownian
motion is a diffusion process with generator $\generator$ such that for any
$f \in \rmc^\infty(\M)$, $\generator(f) = \sum_{i=1}^\ell V_i(V_i(f))$. The
\emph{extrinsic} approach is particularly convenient since the SDE appearing in 
\cref{prop:extrinsic_brownian} can be seen as an SDE on the Euclidean space
$\rset^p$. 

We finish this paragraph, by investigating the behaviour of the Brownian motion
in local coordinates. For simplicity, we assume here that we have access to a
system of global coordinates. In the case where the coordinates are strictly
local then we refer to \citet[Chapter 5, Theorem 1]{ikeda1989sto} for a
construction of a global solution by patching local solutions. We denote
$\{X_k, X_{i,j}\}_{1 \leq i,j,k \leq d}$ such that for any $u \in \FM$,
$\{X_k(u), X_{i,j}(u)\}_{1 \leq i,j,k \leq d}$ is a basis of $\mathrm{T}_u \FM$,
Using properties of the horizontal lift, see \cite[Chapter
2]{hsu2002stochastic}, we get that
$(\bfU_t)_{t \geq 0} = (\{\bfX^k_t, \bfE_t^{i,j}\}_{1 \leq i,j,k \leq d})$
obtained in \cref{prop:intrinsic_brownian} is given in the global coordinates
for any $i,j,k \in \{1, \dots, d\}$ by
\begin{equation}
  \textstyle{
    \rmd \bfX_t^k = \sum_{j=1}^d \bfE_t^{k,j} \circ \rmd \bfB_t^k  , \qquad \rmd \bfE_t^{i,j} = - \sum_{n=1}^d \{\sum_{\ell, m=1}^d \bfE_t^{\ell,n}\bfE_t^{m,j} \Gamma_{\ell,m}^{i}(\bfX_t)\} \circ \rmd \bfB_t^n  . 
    }
  \end{equation}
  By definition of the Stratanovitch integral we have that for any $k \in \{1, \dots, d\}$
  \begin{equation}
    \textstyle{
      \rmd \bfX_t^k = \sum_{j=1}^d \{ \bfE_t^{k,j} \rmd \bfB_t^k +\tfrac{1}{2} \rmd [\bfE_t^{k,j}, \bfB_t^j]_t \}  .
      }
    \end{equation}
    Let $(\bfM_t)_{t \geq 0} = (\{\bfM_t^k\}_{k=1}^d)_{t \geq 0}$ such that for
    any $t \geq 0$ and $k \in \{1, \dots, d\}$
    $\bfM_t^k = \sum_{j=1}^d \int_0^t \bfE_t^{k,j} \rmd \bfB_t^k$. We obtain
    that $\rmd \bfM_t = G(\bfX_t)^{-1/2} \rmd \bfB_t$ for some $d$-dimensional
    Brownian motion $(\bfB_t)_{t \geq 0}$, using L\'evy's characterization of
    Brownian motion. In addition, we have that for any
    $k, j \in \{1, \dots, d\}$
    \begin{equation}
      \textstyle{[\bfE^{k,j}, \bfB^j]_t = -\sum_{\ell, m=1}^d \int_0^t \bfE_t^{\ell, j} \bfE_t^{m,j} \Gamma_{\ell, m}^k(\bfX_t) \rmd t }
    \end{equation}
    Hence, using this result and the fact that
    $\sum_{j=1}^d \bfE_t^{\ell, j} \bfE_t^{m,j} = g^{\ell,m}(\bfX_t)$, we get
    that for any $k \in \{1, \dots, d\}$
    \begin{equation}
      \textstyle{\rmd \bfX_t^k =-  \tfrac{1}{2} \sum_{\ell, m=1}^d g^{\ell,m}(\bfX_t) \Gamma_{\ell, m}^k(\bfX_t) \rmd t + (G(\bfX_t)^{-1/2} \rmd \bfB_t)^k  . }
    \end{equation}
    Note that this result could also have been obtained using the expression of
    the Laplace--Beltrami in local coordinates.

    \paragraph{Brownian motion and random walks}

    In the previous paragraph we consider three SDEs to obtain a Brownian motion
    on $\M$ (stochastic development, isometric embedding and local
    coordinates). In this section, we summarize results from
    \textcite{jorgensen1975central} establishing the limiting behaviour of Geodesic
    Random Walks (GRWs) when the stepsize of the random walk goes to $0$. This will be
    of particular interest when considering the time-reversal process. We start
    by defining the geodesic random walk on $\M$, following \citet[Section
    2]{jorgensen1975central}.

    Let $\{ \nu_x \}_{x \in \M}$ such that for any $x \in \M$,
    $\nu_x: \mcb{\mathrm{T}_x \M} \to \ccint{0,1}$ with
    $\nu_x(\mathrm{T}_x \M) =1$, \ie \ for any $x \in \M$, $\nu_x$ is a
    probability measure on $\mathrm{T}_x \M$. Assume that for any $x \in \M$,
    $\int_{\M} \normLigne{v}^3 \rmd \nu_x(v)< +\infty$. In addition assume that
    there exists $\mu^{(1)} \in \XM$ and $\mu^{(2)} \in \XMdeux$, where
    $\XMdeux$ is the section
    $\Gamma(\M, \sqcup_{x \in \M} \mathcal{L}(\mathrm{T}_x \M))$, such that for
    any $x \in \M$, $\int_{\M} v \rmd \nu_x(v) = \mu^{(1)}(x)$ and
    $\int_{\M} v \otimes v \rmd \nu_x(v) = \mu^{(2)}(x)$. In addition, we assume
    that for any $x \in \M$,
    $\Sigma(x) = \mu^{(2)}(x) - \mu^{(1)}(x) \otimes \mu^{(1)}(x)$ is strictly
    positive definite and that there exists $\Ltt \geq$ such that for any
    $x, y \in \M$, $\tvnorm{\nu_x - \nu_y} \leq \Ltt d_\M(x,y)$. Where we have that
    for any $\nu_1 \in \Pens(\mathrm{T}_x \M)$ and $\nu_2 \in \Pens(\mathrm{T}_y \M)$,
    \begin{equation}
      \tvnorm{\nu_x - \nu_y} = \sup \ensembleLigne{\nu_1[f] - \Gamma_{x}^y(\gamma)_\# \nu_2[f]}{\gamma \in \mathrm{Geo}_{x,y}, \ f \in \rmc(\mathrm{T}_x \M)}  . 
    \end{equation}
    Note that if $d_\M(x,y) \leq \vareps$ then for some $\vareps > 0$ we have that $\abs{\mathrm{Geo}_{x,y}}=1$.

    \begin{definition}{Geodesic random walk}{}
      Let $X_0$ be a $\M$-valued random variable.  For any $\gamma > 0$, we
      define $(\bfX_t^{\gamma})_{t \geq 0}$ such that $\bfX_0^\gamma = X_0$ and
      for any $n \in \nset$ and $t \in \ccint{0, \gamma}$,
      $\bfX_{n\gamma + t} = \exp_{\bfX_{n \gamma}}[t\gamma \{ \mu_n +
      (1/\sqrt{\gamma}) (V_n - \mu_n)\}]$, where $(V_n)_{n \in \nset}$ is a sequence
      of random variables in such that for any $n \in \nset$, $V_n$
      has distribution $\nu_{\bfX_{n \gamma}}$ conditionally to $\bfX_{n \gamma}$.
    \end{definition}

    For any $\gamma > 0$, the process
    $(X_n^\gamma)_{n \in \nset} = (\bfX_{n \gamma}^\gamma)_{n \in \nset}$ is
    called a geodesic random walk. In particular, for any $\gamma>0$ we denote
    $(\Rker_n^{\gamma})_{n \in \nset}$ the sequence of Markov kernels such that
    for any $n \in \nset$, $x \in \M$ and $\msa \in \mcb{\M}$ we have that
    $\updelta_x \Rker(\msa) = \Pbb(X_n^\gamma \in \msa)$, with $X_0^\gamma =
    x$. The following theorem establishes that the limiting dynamics of a
    geodesic random walk is associated with a diffusion process on $\M$ whose
    coefficients only depends on the properties of $\nu$ \cite[see][Theorem
    2.1]{jorgensen1975central}.

    \begin{theorem}{Convergence of geodesic random walks}{thm:jorgensen_appendix}
      For any $t \geq 0$, $f \in \rmc(\M)$ and $x \in \M$ we have that
      $\lim_{\gamma \to 0} \normLigne{ \Rker_{\gamma}^{\ceil{t/\gamma}}[f] -
        \Pker_t[f]}_{\infty} = 0$, where $(\Pker_t)_{t \geq 0}$ is the
      semi-group associated with the infinitesimal generator
      $\generator: \ \rmc^\infty(\M) \to \rmc^\infty(\M)$ given for any
      $f \in \rmc^\infty(\M)$ by
      $\generator(f) = \langle \mu^{(1)}, \nabla f \rangle_{\M} + \tfrac{1}{2} \langle
      \Sigma, \nabla^2f \rangle_{\M}$.
    \end{theorem}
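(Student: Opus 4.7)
The plan is to combine a local Taylor expansion of the one-step kernel with a Trotter–Kato type semigroup convergence theorem. Since this is a restatement of a classical result of \textcite{jorgensen1975central}, the main work is to identify the structure; the proper technicalities lie in making the expansion uniform on $\M$ and verifying the hypotheses of the semigroup convergence theorem.

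First, I would compute $(\Rker_\gamma - \mathrm{I}) f / \gamma$ on $f \in \rmc^\infty(\M)$. For fixed $x \in \M$, write the one-step increment as $Y = \exp_x(w_\gamma)$ with $w_\gamma = \gamma \mu^{(1)}(x) + \sqrt{\gamma}(V - \mu^{(1)}(x))$ and $V \sim \nu_x$. Using that for any $v \in \mathrm{T}_x \M$ the map $t \mapsto \exp_x(tv)$ is a geodesic, the identity $D_t \dot{\gamma} = 0$ gives
\begin{equation}
\tfrac{\rmd^2}{\rmd t^2}\Big|_{t=0} f(\exp_x(tv)) = \nabla^2 f(x)(v,v),
\end{equation}
so Taylor's theorem applied to $f \circ \exp_x$ at $0 \in \mathrm{T}_x \M$ yields
\begin{equation}
f(\exp_x(w)) = f(x) + \langle \nabla f(x), w \rangle_\M + \tfrac{1}{2} \nabla^2 f(x)(w, w) + R(x, w),
\end{equation}
with $|R(x,w)| \leq C_f \|w\|^3$ uniformly in $x$ by compactness and smoothness of $f$.

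Next, I would take expectation over $V \sim \nu_x$. Using $\int v \rmd \nu_x(v) = \mu^{(1)}(x)$, $\int v \otimes v \rmd \nu_x(v) = \mu^{(2)}(x)$ and $\Sigma = \mu^{(2)} - \mu^{(1)} \otimes \mu^{(1)}$, the first- and second-order terms combine to
\begin{equation}
\PE[\langle \nabla f(x), w_\gamma \rangle_\M + \tfrac{1}{2} \nabla^2 f(x)(w_\gamma, w_\gamma)] = \gamma \generator f(x) + \mathrm{O}(\gamma^2),
\end{equation}
while the uniform third-moment bound on $\nu_x$ gives $\PE[|R(x,w_\gamma)|] = \mathrm{O}(\gamma^{3/2})$ uniformly in $x$. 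Hence
\begin{equation}
\sup_{x \in \M} \Big| \tfrac{1}{\gamma} (\Rker_\gamma f(x) - f(x)) - \generator f(x) \Big| \xrightarrow[\gamma \to 0]{} 0.
\end{equation}

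Finally, I would invoke a Trotter–Kato type theorem \tup{e.g.\ \cite[Chapter 1, Theorem 6.5]{ethier2009markov}} to transfer the generator convergence to semigroup convergence $\Rker_\gamma^{\lceil t/\gamma \rceil} f \to \Pker_t f$ in $\|\cdot\|_\infty$. For this one needs: \tup{i} $\generator$ generates a Feller semigroup on $\rmc(\M)$, which follows from compactness of $\M$ and standard elliptic theory once one checks that $\mu^{(1)}$ and $\Sigma$ are smooth (inherited from the Lipschitz assumption on $x \mapsto \nu_x$ together with the section regularity of $\mu^{(1)}, \mu^{(2)}$) and that $\Sigma$ is uniformly positive definite \tup{by compactness plus the pointwise hypothesis}; \tup{ii} $\rmc^\infty(\M)$ is a core for $\generator$, which is classical for second-order elliptic operators on compact manifolds; \tup{iii} the contraction property $\|\Rker_\gamma\|_\infty \leq 1$, which is immediate as $\Rker_\gamma$ is a Markov kernel.

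The main obstacle is the uniformity of the Taylor remainder in $x$, which requires some care with the exponential map: one must use compactness of $\M$ to get a positive injectivity radius and uniform bounds on the higher derivatives of $\exp_x$ along $x$, after which all the pointwise estimates above can be promoted to uniform ones. Once the uniform generator convergence is established, the rest is a direct application of Trotter–Kato.
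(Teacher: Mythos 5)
The paper does not prove this statement at all: it is recalled verbatim from the literature and attributed to \textcite[Theorem 2.1]{jorgensen1975central}, so there is no internal proof to compare against. Your outline is, however, essentially the classical argument behind that cited result: a uniform second-order Taylor expansion of $f\circ\exp_x$ to show $\sup_{x\in\M}|\gamma^{-1}(\Rker_\gamma f(x)-f(x))-\generator f(x)|\to 0$ on a core, followed by a Trotter--Kato/Ethier--Kurtz semigroup convergence theorem using the Markov contraction property. The algebra is right: the drift term contributes $\gamma\langle\mu^{(1)},\nabla f\rangle$, the centered fluctuation contributes $\tfrac{\gamma}{2}\langle\Sigma,\nabla^2 f\rangle$ after the cross terms vanish, and the cubic remainder is $\mathrm{O}(\gamma^{3/2})=o(\gamma)$.

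Two small points deserve care. First, the standing assumptions give $\int\normLigne{v}^3\,\rmd\nu_x(v)<+\infty$ pointwise in $x$; your remainder estimate needs this third moment to be bounded \emph{uniformly} over $\M$, which does not follow from compactness plus total-variation Lipschitz continuity of $x\mapsto\nu_x$ alone and should be stated as an additional (mild) hypothesis or extracted from the assumed regularity of the family $\{\nu_x\}$. Second, smoothness of $\mu^{(1)}$ and $\Sigma$ comes from their being assumed smooth sections, not from the Lipschitz condition on $\nu$; you gesture at both sources, but only the former actually delivers the elliptic regularity needed for the Feller/core claims. With those caveats made explicit, the proof is complete and matches the strategy of the reference the paper defers to.
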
   

    In particular if $\mu^{(1)} = 0$ and $\mu^{(2)} = \Id$ then the random walk
    converges towards a Brownian motion on $\M$ in the sense of the convergence
    of semi-groups. For any $x \in \M$ in local coordinates we have that
    $\Phi_\# \nu_x$ has zero mean and covariance matrix $G(x)$, where $\Phi$ is
    a local chart around $x$ and $G(x) = (g_{i,j}(x))_{1 \leq i,j \leq d}$ the
    coordinates of the metric in that chart.

\paragraph{Convergence of Brownian motion}

We finish this section with a few considerations regarding the convergence of
the Brownian motion on $\M$. Since we have assumed that $\M$ is compact we have
that there exist $(\Phi_k)_{k \in \nset}$ an orthonormal basis of $-\Delta_\M$ in
$\mathrm{L}^2(p_{\textup{ref}})$, $(\lambda_k)_{k \in \nset}$ such that for any
$i, j \in \nset$, $i \leq j$, $\lambda_i \leq \lambda_j$ and $\lambda_0 = 0$, $\Phi_0=1$ and
for any $k \in \nset$, $\Delta_\M \Phi_k = -\lambda_k \Phi_k$. For any $t \geq 0$
and $x,y \in \M$,
$p_{t|0}(y|x) = \sum_{k \in \nset} \rme^{-\lambda_k t} \Phi_k(x) \Phi_k(y)$ where for
any $f \in \rmc^\infty$ we have
\begin{equation}
  \textstyle{\expeLigne{f(\bfB_t^{\M,x})} = \int_\M p_{t|0}(x,y) f(y) \rmd p_{\textup{ref}}(y)  , }
\end{equation}
where $(\bfB_t^{\M,x})_{t \geq 0}$ is the Brownian motion on $\M$ with $\bfB_0^{\M,x} = x$
and $p_{\textup{ref}}$ is the probability measure associated with the Haussdorff measure on
$\M$. We also have the following result \cite[see][Proposition
2.6]{urakawa2006convergence}.

\begin{proposition}{Convergence of Brownian motion}{brownian_conv_repeat}
  For any $t > 0$, $\Pker_t$ admits a density $p_{t|0}$ w.r.t $p_{\textup{ref}}$ and
  $p_{\textup{ref}} \Pker_t = p_{\textup{ref}}$, \ie \ $p_{\textup{ref}}$ is an invariant measure for
  $(\Pker_t)_{t \geq 0}$. In addition, if there exists $C, \alpha \geq 0$ such
  that for any $t \in \ocint{0,1}$, $p_{t|0}(x|x) \leq C t^{-\alpha /2}$ then 
  for any $\pizero \in \Pens(\M)$ and for any $t \geq 1/2$ we have 
  \begin{equation}
    \textstyle{\tvnorm{\pizero \Pker_t - p_{\textup{ref}}} \leq C^{1/2} \rme^{\lambda_1 /2} \rme^{-\lambda_1 t}  ,}
  \end{equation}
  where $\lambda_1$ is the first non-negative eigenvalue of $-\Delta_\M$ in
  $\mathrm{L}^2(p_{\textup{ref}})$ and we recall that $(\Pker_t)_{t \geq 0}$ is the
  semi-group of the Brownian motion.
\end{proposition}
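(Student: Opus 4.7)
The plan is to prove the three claims sequentially. For existence of a density and invariance the arguments are brief: existence follows from the Sturm--Liouville expansion of the heat kernel on a compact manifold recalled just above the statement, which gives a smooth $p_{t|0}$ for every $t>0$. Invariance is the divergence theorem: for any $f \in \rmc^\infty(\M)$,
\begin{equation}
\textstyle{\int_\M \Delta_\M f \, \rmd \piinv = \int_\M \dive(\nabla f) \, \rmd \piinv = 0,}
\end{equation}
so the semigroup preserves $\int_\M f \, \rmd \piinv$, i.e.\ $\piinv \Pker_t = \piinv$.

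For the quantitative TV bound, the strategy is to control $p_{t|0}(\cdot|x)$ in $\mathrm{L}^2(\piinv)$ and then reduce to TV via Cauchy--Schwarz. Orthonormality of $(\Phi_k)_{k \in \nset}$ and the eigenfunction expansion (with $\Phi_0 \equiv 1$, $\lambda_0 = 0$) yield
\begin{equation}
\textstyle{\|p_{t|0}(\cdot|x) - 1\|_{\mathrm{L}^2(\piinv)}^2 = \sum_{k \geq 1} \rme^{-2\lambda_k t} \Phi_k(x)^2.}
\end{equation}
The decisive step is to split $2t = 1 + 2(t-1/2)$ for $t \geq 1/2$ and pull out the spectral gap: since $\lambda_k \geq \lambda_1$ for $k \geq 1$,
\begin{equation}
\textstyle{\sum_{k \geq 1} \rme^{-2\lambda_k t} \Phi_k(x)^2 \leq \rme^{-2\lambda_1(t-1/2)} \sum_{k \geq 1} \rme^{-\lambda_k} \Phi_k(x)^2 \leq \rme^{\lambda_1} \rme^{-2\lambda_1 t} p_{1|0}(x|x) \leq C\rme^{\lambda_1} \rme^{-2\lambda_1 t},}
\end{equation}
where the second inequality uses the diagonal expansion $p_{1|0}(x|x) = \sum_{k \in \nset} \rme^{-\lambda_k} \Phi_k(x)^2$ and the last one is \Cref{assum:manifold} at $t=1$.

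Taking square roots and then applying Cauchy--Schwarz (valid since $\piinv$ is a probability measure) to pass from $\mathrm{L}^2$ to $\mathrm{L}^1$ gives the bound $\tvnorm{\updelta_x \Pker_t - \piinv} \leq C^{1/2} \rme^{\lambda_1/2} \rme^{-\lambda_1 t}$ for Dirac initial conditions, and the bound for general $\pizero$ follows by convexity of the TV norm, $\tvnorm{\pizero \Pker_t - \piinv} \leq \int_\M \tvnorm{\updelta_x \Pker_t - \piinv} \, \rmd \pizero(x)$. The main conceptual point---and likely the only subtle step---is converting the diagonal short-time estimate at $t=1$ into a long-time exponential decay through the symmetric spectral expansion; this is what allows the diagonal constant $C$ and the spectral gap $\lambda_1$ to cleanly combine in the final bound. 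All the other ingredients (discrete spectrum, smoothness of the kernel, self-adjointness) are standard consequences of compactness and the self-adjoint structure of $\Delta_\M$ on $\mathrm{L}^2(\piinv)$.
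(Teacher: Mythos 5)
Your proof is correct: the spectral expansion, the split $2t = 1 + 2(t-1/2)$ combined with the on-diagonal bound $p_{1|0}(x|x)\le C$, the Cauchy--Schwarz passage from $\mathrm{L}^2(\piinv)$ to total variation, and the convexity step for general $\pizero$ all go through, and this is exactly the standard argument behind the result. The paper itself does not prove this proposition but defers to the cited reference \textcite[Proposition 2.6]{urakawa2006convergence}, whose proof proceeds along the same lines, so your write-up is essentially a faithful reconstruction of that argument rather than a new route.
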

A review on lower bounds on the first positive eigenvalue
of the Laplace--Beltrami operator can be found in \citep{he2013lower}. These lower
bounds usually depend on the Ricci curvature of the manifold or its diameter. We
conclude this section by noting that in the non-compact case \citep{li1986large}
establishes similar estimates in the case of a manifold with non-negative Ricci
curvature and maximal volume growth.


\section{Likelihood computation}
\label{sec:likel-comp}
\subsection{ODE likelihood computation}

Similarly to \textcite{song2020score}, once the score is learned we can use it
in conjunction with an Ordinary Differential Equation (ODE) solver to compute
the likelihood of the model. Let $(\Phi_t)_{t \in \ccint{0,T}}$ be a family of vector
fields. We define $(\bfX_t)_{t \in \ccint{0,T}}$ such that $\bfX_0$ has
distribution $p_0$ (the data distribution) and satisfying
$\rmd \bfX_t = \Phi_t(\bfX_t) \rmd t$. Assuming that  $p_0$ admits a density
w.r.t.\ $\piinv$ then for any $t \in \ccint{0,T}$, the distribution of $\bfX_t$
admits a density w.r.t.\ $\piinv$ and we denote $p_t$ this density.  We recall that
$\rmd \log p_t(\bfX_t) = -\dive(\Phi_t)(\bfX_t) \rmd t$, see \citet[Proposition 
2]{mathieu2020riemannian} for instance.

Recall that we consider a Brownian motion on the manifold as a forward process
$(\bfB_t^\M)_{t \in \ccint{0,T}}$ with $\{p_t\}_{t \in \ccint{0,T}}$ the associated family
of densities. Thus we have that for any $t \in \ccint{0,T}$ and $x \in \M$
\begin{equation}
  \partial_t p_t(x) = \tfrac{1}{2} \Delta_\M p_t(x) = \dive\left(\tfrac{1}{2} p_t \nabla \log p_t \right)(x)  . 
\end{equation}
Hence, we can define $(\bfX_t)_{t \in \ccint{0,T}}$ satisfying
$\rmd \bfX_t = -\tfrac{1}{2} \nabla \log p_t(\bfX_t) \rmd t$ such that $\bfX_0$ has
distribution $p_0$.
Defining
$(\bfhX_t)_{t \in \ccint{0,T}} = (\bfX_{T-t})_{t \in \ccint{0,T}}$, it follows
that $\bfhX_0$ has distribution $\mathcal{L}(\bfX_T)$ and satisfies
\begin{equation}
  \label{eq:backward_flow}
 \rmd \bfhX_t =\tfrac{1}{2} \nabla \log p_{T-t}(\bfhX_t) \rmd t  . 
\end{equation}
Finally, we introduce $(\bfY_t)_{t \in \ccint{0,T}}$ satisfying
\eqref{eq:backward_flow} but such that $\bfY_0 \sim \piinv$.  Note
that if $T \geq 0$ is large then the two processes
$(\bfY_t)_{t \in \ccint{0,T}}$ and $(\bfhX_t)_{t \in \ccint{0,T}}$ are close
since $\mathcal{L}(\bfX_T)$ is close to $\piinv$.

Therefore, using the score network and a manifold ODE solver \citep[as
in][]{mathieu2020riemannian}, we are able to approximately solve the following
ODE
\begin{equation}
  \rmd\log q_t(\bfhX_t^\theta) = -\tfrac{1}{2}\dive(\mathbf{s}_\theta(T-t,\cdot))(\bfhX_t^\theta) \rmd t ,
\end{equation}
with $q_t$ the density of $\bfY_t^\theta$ w.r.t.\ $\piinv$ and
$\log q_0(\bfY_0) = 0$ with
$\rmd \bfY_t^\theta = \tfrac{1}{2}\dive(\mathbf{s}_\theta(T-t,\bfY_t^\theta))
\rmd t$ and $\bfY_0^\theta \sim \piinv$. The likelihood approximation of the
model is then given by
$\expeLigne{\log q_T(\bfhX_T^\theta)} = \int_{\M} \log q_T(x) \rmd \pdata(x)$,
where
$(\bfhX_t^\theta)_{t \in \ccint{0,T}} = (\bfX_{T-t}^\theta)_{t \in \ccint{0,T}}$
with
$\rmd \bfX_t^\theta = -\tfrac{1}{2}\dive(\mathbf{s}_\theta(t,\bfX_t^\theta))
\rmd t$ and $\bfX_0 \sim \pdata$. In \cref{sec:diff-betw-ode}, we highlight that
this is \emph{not} the likelihood of the SDE model.


    \subsection{Difference between ODE and SDE likelihood computations}
    \label{sec:diff-betw-ode}
    
    In this section, we show that the likelihood computation from
    \textcite{song2020score} does not coincide with the likelihood computation
    obtained with the SDE model. We present our findings in the Riemannian setting
    but our results can be adapted to the Euclidean setting with arbitrary
    forward dynamics. Recall that we consider a Brownian motion on the manifold as a forward process
    $(\bfB_t^\M)_{t \in \ccint{0,T}}$ with $(p_t)_{t \in \ccint{0,T}}$ the associated family
    of densities. We have that for any $t \in \ccint{0,T}$ and $x \in \M$
    \begin{equation}
      \label{eq:forward}
      \partial_t p_t(x) = \tfrac{1}{2} \Delta_\M p_{t|0}(x) = \dive(\tfrac{1}{2}p_t \nabla \log p_t )(x)  . 
    \end{equation}

    \paragraph{ODE model.}
    In the case of the ODE model, we define $(\bfX_t)_{t \in \ccint{0,T}}$ such that
    $\bfX_0 \sim p_0$ and satisfies
    $\rmd \bfX_t = -\tfrac{1}{2}  \nabla \log p_t(\bfX_t) \rmd t$. The family of densities $(q_t)_{t \in \ccint{0,T}}$ associated with $(\bfX_t)_{t \in \ccint{0,T}}$ also
    satisfies \eqref{eq:forward}. Now consider
    $(\bfhX_t)_{t \in \ccint{0,T}} = (\bfX_{T-t})_{t \in \ccint{0,T}}$, this satisfies $\bfhX_0 \sim p_T$ with 
    \begin{equation}
      \label{eq:backward_flow_appendix}
     \rmd \bfhX_t = \tfrac{1}{2}  \nabla \log p_{T-t}(\bfhX_t) \rmd t  .
    \end{equation}
    Finally, we consider $(\bfY_t^{\mathrm{ODE}})_{t \in \ccint{0,T}}$ which also satisfies
    \cref{eq:backward_flow_appendix} and such that $\bfY_0^{\mathrm{ODE}}\sim \piinv$. Denoting
    $(q_t^{\mathrm{ODE}})_{t \in \ccint{0,T}}$ the densities of
    $(\bfY_t^{\mathrm{ODE}})_{t \in \ccint{0,T}}$ w.r.t. $\piinv$ we have for any $t \in \ccint{0,T}$ and $x \in \M$
    \begin{equation}
      \label{eq:proba_flow_ode}
     \partial_t q_t^{\mathrm{ODE}}(x) =  -\dive( \tfrac{1}{2} q_t^{\mathrm{ODE}} \nabla\log p_{T-t} )(x)  . 
    \end{equation}
    
    \paragraph{SDE model.}
    When sampling we consider a process $(\bfY^{\mathrm{SDE}}_t)_{t \in \ccint{0,T}}$ such that
    $\bfY^{\mathrm{SDE}}_0$ has distribution $\piinv$ and whose family of densities
    $(q_t^{\mathrm{SDE}})_{t \in \ccint{0,T}}$ satisfies for any $t \in \ccint{0,T}$ and $x \in \M$
    \begin{align}
      \partial_t q_t^{\mathrm{SDE}}(x) &= -\dive(\nabla \log p_{T-t} q_t^{\mathrm{SDE}}(x)) +\tfrac{1}{2}\Delta_\M q_t^{\mathrm{SDE}}(x) \nonumber \\ &=- \dive(q_t^{\mathrm{SDE}}\{\nabla\log p_{T-t} - \tfrac{1}{2}\nabla\log q_t^{\mathrm{SDE}}\})(x). \label{eq:proba_flow_sde}
    \end{align}
    Hence, \cref{eq:proba_flow_ode} and \cref{eq:proba_flow_sde} do not agree,
    except if $q_t^{\mathrm{SDE}} = q_t^{\mathrm{ODE}} = p_{T-t}$ which is the case if and only if $\bfY^{\mathrm{SDE}}_0$ and
    $\bfY_0^{\mathrm{ODE}}$ have the same distribution as $\bfX_T$. Note that it is possible to
    evaluate the likelihood of the SDE model using that
    \begin{equation}
      \partial_t \log q_t^{\mathrm{SDE}}(\bfY^{\mathrm{SDE}}_t) = \big\{\nabla\log p_{T-t}(\bfY^{\mathrm{SDE}}_t) -\tfrac{1}{2}\nabla\log q_t^{\mathrm{SDE}}(\bfY^{\mathrm{SDE}}_t) \big\}\rmd t  . 
    \end{equation}
    We can use the score approximation $\bm{s}_\theta(t,x)$ to approximate
    $\nabla \log p_t(x)$ for any $t \in \ccint{0,T}$ and $x \in \M$. In order to
    approximate $\nabla \log q_t^{\mathrm{SDE}}$, one can consider another neural network
    $\bm{t}_\theta(t,x)$ approximating $\nabla \log q_t^{\mathrm{SDE}}(x)$ for any $t \in \ccint{0,T}$
    and $x \in \M$. This approximation can be obtained using the implicit score loss
    presented in \Cref{sec:riem-score-appr}.


\section{Parametric family of vector fields}
\label{sec:vector_field}

We approximate $(\nabla \log p_t)_{t \in \ccint{0,T}}$ by a
family of functions $\{\mathbf{s}_\theta\}_{\theta \in \Theta}$ where $\Theta$ is a
set of parameters and for any $\theta \in \Theta$,
$\mathbf{s}_\theta: \ \ccint{0,T} \to \XM$. In this work, we consider several
parameterisations of vector fields:
\begin{itemize}[wide]
\item \textbf{Projected vector field}. We define
  $\mathbf{s}_\theta(t, x) = \text{proj}_{T_{x}\M}(\tilde{\mathbf{s}}_\theta(t, x)) =
  P(x) \tilde{\mathbf{s}}_\theta(t, x) $ for any $t \in \ccint{0,T}$ and $x \in \M$,
  with $\tilde{\mathbf{s}}_\theta: \ \rset^p \times \ccint{0,T} \to \rset^p$ an
  ambient vector field and $P(x)$ the orthogonal projection over
  $\mathrm{T}_x\M$ at $x \in M$.  According to \citet[Lemma 2]{rozen2021moser},
  then $\dive(\mathbf{s}_\theta)(x,t) = \dive_E(\mathbf{s}_\theta)(x,t)$ for any
  $x \in \M$, where $\dive_E$ denotes the standard Euclidean divergence.

\item \textbf{Divergence-free vector fields}: For any Lie group $G$, any basis
  of the Lie algebra $\mathfrak{g} = \mathrm{T}_e G$ yields a global
  frame. Indeed, let $v \in \mathfrak{g}$ and define the flow
  $\Phi: \ \rset \times \M \to \M$ given for any $t \in \rset$ and $x \in M$ by
  $\Phi_t^v(x) = x \exp_e(t v)$. Then defining
  $\{E_i\}_{i=1}^d = \{\partial_t \Phi_0^{v_i}\}_{i=1}^d$, where
  $\{v_i\}_{i=1}^d$ is a basis of $\mathfrak{g}$, we get that $\{E_i\}_{i=1}^d$
  is a left-invariant global frame. As a result, we have that for any
  $i \in \{1, \dots, d\}$, $\dive(E_i)=0$ (for the classical left invariant
  metric). This result simplifies the computation of $\dive(\mathbf{s}_\theta)$
  where $\mathbf{s}_\theta(t,x) = \sum_{i=1}^d {s}^i_\theta(t,x) E_i(x)$ for any
  $t \in \ccint{0,T}$ and $x \in \M$ since we have that 
  $\dive(\mathbf{s}_\theta)(t, x) 
   = \sum_{i=1}^d E_i({s}^i_\theta)(t, x) + \sum_{i=1}^d {s}^i_\theta(t,x) \dive(E_i)(x)
     = \sum_{i=1}^d d{s}^i_\theta (E_i)(t, x)
  $ \cite[see][]{falorsi2020neural}. Note that
  this approach can be extended to any homogeneous space $(G,H)$.
\item \textbf{Coordinates vector fields}. We define
  $\mathbf{s}_\theta(t, x) = \sum_{i=1}^d \mathbf{s}^i_\theta(t,x) E_i(x)$ for any
  $t \in \ccint{0,T}$ and $x \in \M$, with
  $\{E_i\}_{i=1}^d = \{\partial_i \varphi(\varphi^{-1}(x))\}_{i=1}^d$ the vector fields
  induced by a choice of local coordinates, where $\varphi$ is a local
  parameterization $\varphi: \ \msu \to \M$ and $z \in \msu \subset
  \rset^d$. Then the divergence can be computed in these local coordinates
  $\dive(\mathbf{s}_\theta)(t, \varphi(z)) =\absLigne{\det G}^{-1/2} \sum_{i=1}^d
  \partial_i \{ \absLigne{\det G}^{1/2} \mathbf{s}^i_\theta(t,
  \varphi(\cdot))\}(z)$. In the case of the sphere, one recovers the standard
  divergence in spherical coordinates using this formula. Note that $\{E_i\}_{i=1}^d$ does not span the tangent bundle except if the manifold is parallelizable.
   The sphere is a well-known example of non-parallelizable manifold, as per the \emph{hairy ball theorem}. 
\end{itemize}%
\section{Eigensystems of the Laplace--Beltrami operator and heat kernels}
\label{sec:eigenf-eigenv-lapl}

In this section, we recall the eigenfunctions and eigenvalues of the
Laplace--Beltrami operator in two specific cases: the $d$-dimensional torus and
the $d$-dimensional sphere. We also highlight that the heat kernel on compact
manifold can be written as an infinite series using the Sturm--Liouville
decomposition.

\paragraph{The case of the torus}
Let $\{b_i\}_{i=1}^d$ be a basis of $\rset^d$.  We consider the associated
lattice on $\rset^d$, i.e.
$\Gamma = \ensembleLigne{\sum_{i=1}^d \alpha_i b_i}{\{\alpha_i\}_{i=1}^d \in
  \zset^d}$. Finally, the associated $d$-dimensional torus is defined as
$\tset_\Gamma = \rset^d / \Gamma$. Denote
$\rmB = (b_1, \dots, b_d) \in \rset^{d \times d}$. Let
$\{\bar{b}_i\}_{i=1}^d \in (\rset^d)^d$ such that
$(\rmB^{-1})^\top = (\bar{b}_1, \dots, \bar{b}_d)$. We define
$\Gamma^\star = \ensembleLigne{\sum_{i=1}^d \alpha_i
  \bar{b}_i}{\{\alpha_i\}_{i=1}^d \in \zset^d}$, the dual lattice. Note that for
any $x \in \Gamma$ and $y \in \Gamma^\star$ we have that
$\langle x, y \rangle \in \zset$ and that if $\{b_i\}_{i=1}^d$ is an orthonormal
basis then $\Gamma = \Gamma^\star$. The torus $\rset^d/\Gamma$ is a (flat)
compact Riemannian manifold. The set of eigenvalues of the Laplace--Beltrami
operator is given by
$\ensembleLigne{-4 \uppi^2 \normLigne{y}^2}{y \in \Gamma^\star}$. The
eigenfunctions of the Laplace--Beltrami operator are given by
$\ensembleLigne{x \mapsto \sin(2 \uppi \langle x, y \rangle)}{y \in
  \Gamma^\star}$ and
$\ensembleLigne{x \mapsto \cos(2 \uppi \langle x, y \rangle)}{y \in
  \Gamma^\star}$.

\paragraph{The case of the sphere} Next, we investigate the case of the
$d$-dimensional sphere \citep[see][]{saloff1994precise}. The set of eigenvalues of
the Laplace--Beltrami operator is given by
$\ensembleLigne{-k(k+d-1)}{k \in \nset}$. Note that $\lambda_k = k(k+d-1)$ has
multiplicity $d_k = (k+d-2)!/\{(d-1)!k\}(2k+d-1)$. The eigenfunctions of the
Laplace--Beltrami operator are known as the spherical harmonics and can be
defined in terms of Legendre polynomials. When investigating the heat kernel on
the $d$-dimensional sphere, we are interested in the product
$(x,y) \mapsto \sum_{\phi \in \Phi_n} \phi(x)\phi(y)$, where $\Phi_n$ is the set
of eigenfunctions associated with the eigenvalue $\lambda_n$ for $n \in
\nset$. This function can be described using the Gegenbauer polynomials
\cite[see][Theorem 2.9]{atkinson2012spherical}. More precisely, we have that for any
$n \in \nset$ and $x,y \in \mathbb{S}^d$
\begin{align}
  &G_n(x,y) = \textstyle{ \sum_{\phi \in \Phi_n} \phi(x) \phi(y)} \\
  &\quad = \textstyle{n! \Gamma((d-1)/2) \sum_{k=0}^{\floor{n/2}} (-1)^k (1- \langle x,y \rangle^2)\langle x,y \rangle^{n-2k} / (4^k k! (n -2k)! \Gamma(k + (d-1)/2) ) ,}
\end{align}
where here $\Gamma: \ \rset_+ \to \rset$ is given for any $v > 0$ by
$\Gamma(v) = \int_0^{+\infty} t^{v-1} \rme^{-t} \rmd t$.  In the special case
where $d=1$, then the heat kernel coincide with the wrapped Gaussian density and
can be easily evaluated.

\paragraph{Heat kernel on compact Riemannian manifolds.}

\begin{figure}[t]
\centering
  \includegraphics[width=\linewidth]{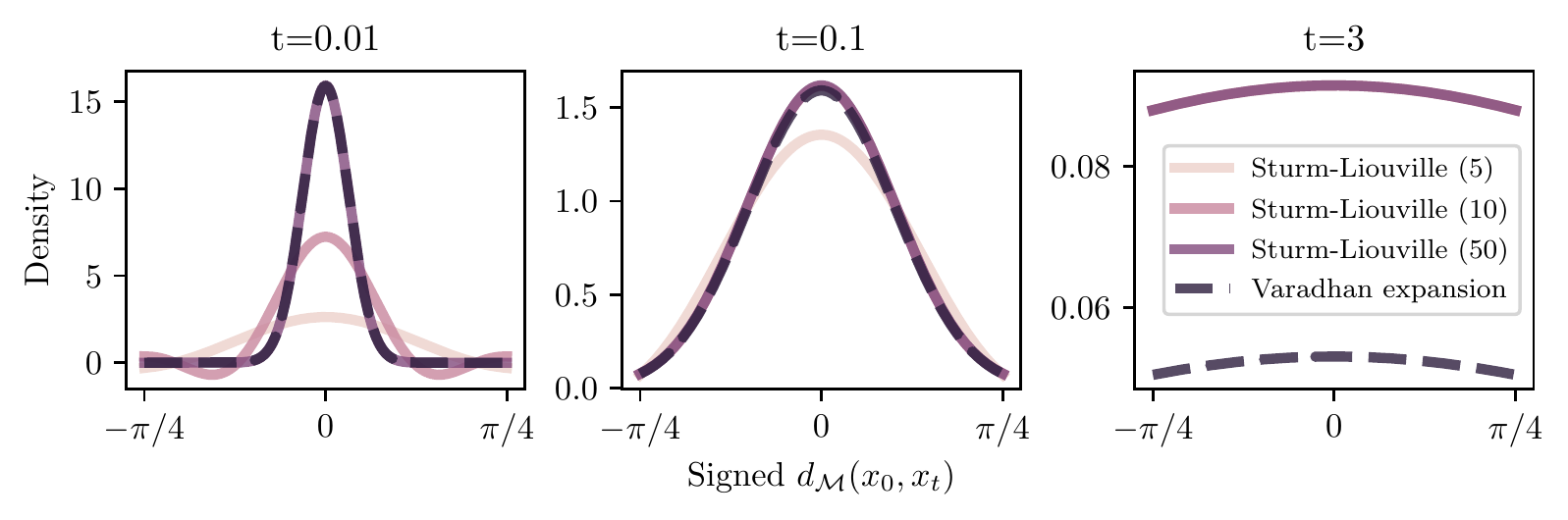}
\vspace{-1.8em}
  \caption{
    Slice of heat kernel $p_{t|0}(x_t|x_0)$ on $\mathbb{S}^2$ for different approximations.
  }
  \label{fig:heat_kernel}
\end{figure}

We recall that in the case of compact manifolds the heat kernel is given by the Sturm--Liouville decomposition \cite{chavel1984eigenvalues} given for any
$t > 0$ and $x, y \in \M$ by 
\begin{equation}
  \label{eq:infinite_sum2}
  \textstyle{p_{t|0}(y|x) = \sum_{j \in \nset} \rme^{-\lambda_j t} \phi_j(x)\phi_j(y),}
\end{equation}
where the convergence occurs in $\mathrm{L}^2(\piinv \otimes \piinv)$,
$(\lambda_j)_{j \in \nset}$ and $(\phi_j)_{j \in \nset}$ are the
eigenvalues, respectively the eigenvectors, of $-\Delta_\M$ in
$\mathrm{L}^2(\piinv)$ \cite[see][Section 2]{saloff1994precise}. When the eigenvalues and eigenvectors are known, we approximate the
logarithmic gradient of $p_{t|0}$ by truncating the sum in
\eqref{eq:infinite_sum2} with $J \in \nset$ terms.
%
Another possibility to approximate $\nabla \log p_{t|0}$ is to rely on the
so-called Varadhan approximation, see \Cref{sec:riem-score-appr}, which is valid
for small $t >0$ .  \Cref{fig:heat_kernel} illustrates these different
approximations of the heat kernel and \Cref{tab:sm_losses_bis} compares the
different loss functions.

\begin{table}[ht]
\centering
\small
\setlength{\tabcolsep}{0.3em}
\renewcommand*{\arraystretch}{1.4}
\begin{tabular}{cc|c|ccc}
Loss & Approximation & Loss function & Unbiased & Consistent & Variance \\
\midrule
\multirow{2}{4.5em}{$\ell_{t|0}$ (DSM)} & Truncation~\eqref{eq:heat_kernel_trunc}  &  $\frac{1}{2} \E \left[ \| s(\bfX_t) - S_{J,t}(\bfX_0,\bfX_t) \|^2 \right]$ & \xmark & \cmark ($J \rightarrow \infty$) & 0 \\
 & Varhadan~\eqref{eq:varadhan}  &  $\frac{1}{2} \E \left[ \| s(\bfX_t) - \log_{\bfX_t}(\bfX_0) / t \|^2 \right]$ & \xmark & \cmark ($t \rightarrow 0$) & 0 \\ \midrule
$\ell_{t|s}$ (DSM) & Varhadan~\eqref{eq:varadhan}  &  $\frac{1}{2} \E \left[ \| s(\bfX_t) - \log_{\bfX_t}(\bfX_s) / (t-s) \|^2 \right]$ & \xmark & \cmark($t \rightarrow s$) & 0  \\ \midrule
\multirow{2}{4.5em}{$\ellim_t$ (ISM)}  & Deterministic & $\E \left[\frac{1}{2} \| s(\bfX_t) \|^2 + \dive( s)(\bfX_t)  \right]$  &  \cmark & \cmark & 0 \\
& Stochastic & $\E \left[\frac{1}{2} \| s(\bfX_t) \|^2 + \vareps^\top \partial s(\bfX_t) \vareps  \right]$ & \cmark & \cmark & $2 \| \partial s \|_F$\\ \bottomrule
\end{tabular} 
\vspace{0.2cm}
\caption{\small Riemannian score matching losses.
}
\label{tab:sm_losses_bis}
\end{table}

%


\section{Predictor-corrector schemes}
\label{sec:predictor-corrector}

In this section, we present a predictor-corrector scheme, adapting the
techniques of \textcite{allgower2012numerical,song2020score} to the manifold setting. 
Changes between \Cref{alg:grw}, \Cref{alg:rsgm} and \Cref{alg:grw-c},
\Cref{alg:rsgm-c} are highlighted in red. Let $t \in \ccint{0,T}$, $\gamma >0$
and $k = \floor{t / \gamma}$. We remark that \Cref{alg:grw-c}
corresponds to the recursion associated with
$(\bfY^j_{t})_{j \in \nset}$
such that for any $j \in \nset$
\begin{equation}
  \bfY^{j+1}_{t} = \exp_{\bfY^j_{t}}[\tfrac{\gamma}{2} \nabla \log p_{T-j\gamma}(\bfY^j_{t}) + \sqrt{\gamma} \bfZ^{j+1}]  ,
\end{equation}
where $\{\bar{\bfZ}^j\}_{j \in \nset}$ is a family of i.i.d Gaussian random
variables with zero mean and identity covariances matrix in $\rset^p$ and for
any $j \in \nset$, $\bfZ^j = \mathrm{P}(\bfY^j_{t}) \bar{\bfZ}^j$. Note that here
$k \in \{0, N-1\}$ is fixed. Letting $\gamma \to 0$, we obtain that under mild
assumptions, see \cite[Theorem 3.1]{kuwada2012convergence},
$(\bfY^j_{t})_{j \in \nset}$ converges to $(\bfY^s_t)_{s \geq 0}$ such that
\begin{equation}  
  \rmd \bfY^s_t = \tfrac{1}{2} \nabla \log p_{T-t}(\bfY^s_t) \rmd s + \rmd \bfB_s^\M  . 
\end{equation}
We have that $p_{T-t}$ is the invariant measure of $(\bfY^s_t)_{s \geq
  0}$. Hence, the role of the corrector step is to project the distribution back
onto $p_{T-t}$ for all times $t \in \ccint{0,T}$, see
\Cref{fig:illustration_pred_correc}.  


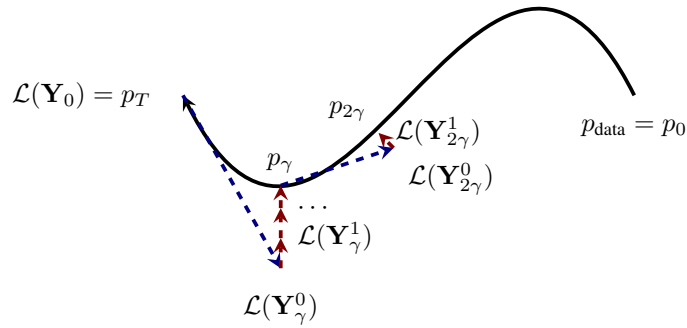
\begin{figure}[htb]
  \centering
\begin{tikzpicture}
  \draw [stealth-, line width = .05cm] (0,0) .. controls  (2,-4) and (4,4) .. (6,0);
  \draw [-stealth, line width = .05cm, blue!50!black, dashed] (0,0) -- (1.3,-2.3);
  \draw [-stealth, line width = .05cm, red!50!black, dashed] (1.3,-2.3) -- (1.3,-1.9);
  \draw [-stealth, line width = .05cm, red!50!black, dashed] (1.3,-1.9) -- (1.3,-1.5);
  \draw [-stealth, line width = .05cm, red!50!black, dashed] (1.3,-1.5) -- (1.3,-1.2);
  \draw [-stealth, line width = .05cm, blue!50!black, dashed] (1.3,-1.2) -- (2.8,-0.7);
  \draw [-stealth, line width = .05cm, red!50!black, dashed] (2.8,-0.7) -- (2.6, -0.5);
  \node [below=0.2cm] at (6,0) {$p_{\textrm{data}}=p_0$};
  \node [left=0.3cm] at (0,0) {$\mathcal{L}(\bfY_0) = p_T$};
  \node [below=0.2cm] at (1.3,-2.3) {$\mathcal{L}(\bfY_\gamma^0)$};
  \node [right=0.1cm] at (1.3,-1.9) {$\mathcal{L}(\bfY_\gamma^1)$};
  \node [right=0.1cm] at (1.3,-1.5) {$\cdots$};
  \node [above=0.05cm] at (1.3,-1.2) {$p_{\gamma}$};
  \node [below right=0.1cm] at (2.8,-0.7) {$\mathcal{L}(\bfY_{2\gamma}^0)$};
  \node [right=0.1cm] at (2.6, -0.5) {$\mathcal{L}(\bfY_{2\gamma}^1)$};
  \node [above left=0.05cm] at (2.6, -0.5) {$p_{2\gamma}$};
\end{tikzpicture}
\vspace{-.8cm}
\caption{Illustration of the effect of the corrector step on RSGM. The black
  line corresponds to the dynamics of the noising process 
  $(p_t)_{t \in \ccint{0,T}}$. The blue dashed lines correspond to the
  predictor step (going backward in time) and the red dashed lines correspond to
  the corrector step (projecting back onto the initial dynamics). Note that
  $\mathcal{L}(\bfY^s_\gamma) \approx p_{T- \gamma}$ and
  $\mathcal{L}(\bfY^s_{2\gamma}) \approx p_{T-2\gamma}$.}
\label{fig:illustration_pred_correc}
\end{figure}

\begin{algorithm}[!ht]
\caption{\small GRW-c (Geodesic Random Walk with corrector)}
\label{alg:grw-c}
\begin{algorithmic}[1]
 \small
\Require $T, N, \bfY_0, b, \sigma, \mathrm{P}$
  \State $\gamma = T / N$ \Comment Step-size
  \For{$k \in \{0, \dots, N-1\}$}
  \State \textcolor{blue!50!black}{/// PREDICTOR STEP}
  \State \textcolor{blue!50!black}{$\bar{\bfZ}_{k+1} \sim \mathrm{N}(0, \Id)$} \Comment Standard Gaussian noise in ambient space $\rset^p$
  \State \textcolor{blue!50!black}{${\bfZ}_{k+1} = \mathrm{P}(\bfY_k) \bar{\bfZ}_{k+1}$} \Comment Projection in the tangent space $\mathrm{T}_x \M$ 
  \State \textcolor{blue!50!black}{$\bfY_{k+1} = \bfY_{k} + \gamma \left[ -b(T-k \gamma, \bfY_k) + \sigma(T-k\gamma)^2 \nabla \log p_{T-k\gamma}(\bfY_{k}) \right] + \sqrt{\gamma} \sigma(T-k\gamma) \bfZ_{k+1}$} \Comment E-M step 
  \State \textcolor{red!50!black}{/// CORRECTOR STEP}
  \State \textcolor{red!50!black}{${\bfY}_{k+1}^0 = {\bfY}_{k+1}$}
  \For{\textcolor{red!50!black}{$s \in \{0, \dots, S-1\}$}}
  \State \textcolor{red!50!black}{$\bar{\bfZ}_{k+1}^s \sim \mathrm{N}(0, \Id)$} \Comment Standard Gaussian noise in ambient space $\rset^p$
  \State \textcolor{red!50!black}{${\bfZ}_{k+1}^s =  \mathrm{P}(\bfY_{k+1}^s) \bar{\bfZ}_{k+1}^s$} \Comment Projection in the tangent space $\mathrm{T}_x \M$ 
  \State \textcolor{red!50!black}{$\bfY_{k+1}^{s+1} = \bfY_{k+1}^{s} + \gamma_s \tfrac{1}{2} \nabla \log p_{T-k\gamma}(\bfY_{k+1}^{s}) + \sqrt{\gamma_s} \bfZ_{k+1}^s$} \Comment Langevin step
  \EndFor
  \State \textcolor{red!50!black}{${\bfY}_{k+1} = {\bfY}_{k+1}^S$}
  \EndFor
  \State {\bfseries return} $\{\bfY_k\}_{k=0}^{N}$
\end{algorithmic}
\end{algorithm}

\begin{algorithm}[!t]
\caption{\small RSGM-c (Riemannian Score-Based Generative Model with corrector)}
\label{alg:rsgm-c}
\begin{algorithmic}[1]
  \small
  \Require $\vareps, T, N, \{X_0^m\}_{m=1}^M, \mathrm{loss}, \mathbf{s}, \theta_0, N_{\textrm{iter}}, \piinv, \mathrm{P}$
\State{{/// TRAINING ///}}

\For{$n \in \{0, \dots, N_{\textrm{iter}}-1\}$}  
\State $X_0 \sim (1/M) \sum_{m=1}^M \updelta_{X_0^m}$ \Comment Random mini-batch from dataset
\State $t \sim U(\ccint{\vareps, T})$ \Comment Uniform sampling between $\vareps$ and $T$
\State $\bfX_t = \textrm{GRW}(t, N, X_0, 0, \Id, \mathrm{P})$ \Comment Approximate forward diffusion with \cref{alg:grw}
\State
$\ell(\theta_n) = \ell_t(T, N, X_0, \bfX_t, \mathrm{loss}, \mathbf{s}_{\theta_n})$
\Comment Compute score matching loss from \cref{tab:sm_losses} \State
$\theta_{n+1} = \verb|optimizer_update|(\theta_n, \ell(\theta_n))$ \Comment ADAM
optimizer step 
\EndFor \State $\theta^\star = \theta_{N_{\textrm{epoch}}}$
\State{{/// SAMPLING ///}} \State $Y_0 \sim \piinv$ \Comment
Sample from uniform distribution \State
$b_\theta^\star(t, x) = \mathbf{s}_{\theta^\star}(T-t, x)$ for any
$t \in \ccint{0,T}$, $x \in \M$ \Comment Reverse process drift \State
$\{Y_k\}_{k=0}^{N} = \textrm{GRW-c}(T, N, Y_0, b_{\theta^\star}, \Id,
\mathrm{P})$ \Comment Approximate reverse diffusion with \Cref{alg:grw-c} \State
{\bfseries return} $\theta^\star, \{Y_k\}_{k=0}^{N}$
\end{algorithmic}
\end{algorithm}


\section{Time-reversal formula: extension to  Riemannian manifolds}
\label{sec:time-reversal}

In this section, we provide the proof of \Cref{thm:time_reversal_manifold}.  The
proof follows the arguments of \citet[Theorem 4.9]{cattiaux2021time}. We could
have also applied the abstract results of \citet[Theorem 5.7]{cattiaux2021time}
to obtain our results. Note that the time-reversal on manifold could also be
obtained by readily extending arguments from \citet{haussmann1986time}, however
the entropic conditions found by \citet{cattiaux2021time} are more natural when
it comes to the study of the Schr\"odinger Bridge problem. For the interested
reader we provide an informal derivation of the time-reversal formula obtained
by \citet{haussmann1986time} in \cref{sec:informal-derivation}. The proof of
\Cref{thm:time_reversal_manifold} is given in
\cref{sec:proof-crefthm:t}. Finally, we emphasize that \citet{garcia2021brenier}
have developed a Girsanov theory for stochastic processes defined on compact
manifolds with boundary in order to study the Brenier-Schr\"odinger problem.

\subsection{Informal derivation}
\label{sec:informal-derivation}

In this section, we provide a non-rigorous derivation of
\Cref{thm:time_reversal_manifold} following the approach of
\citet{haussmann1986time}. Let $(\bfX_t)_{t \in \ccint{0,T}}$ be a continuous
process such that for any $f \in \rmc^2(\M)$ we have that
$(\bfM_t^{\bfX, f})_{t \in \ccint{0,T}}$ is a $\bfX$-martingale where for any
$t \in \ccint{0,T}$
  \begin{equation}
    \label{eq:martingale_forward}
    \textstyle{ \bfM_t^{\bfX, f} = f(\bfX_t) - \int_0^t \{ \langle b(\bfX_s), \nabla f(\bfX_s) \rangle + \tfrac{1}{2} \Delta_\M f(\bfX_s) \} \rmd s  . }
  \end{equation}
  Let $(\bfY_t)_{t \in \ccint{0,T}} = (\bfX_{T-t})_{t \in \ccint{0,T}}$. Our goal is to show that for any $f \in \rmc^2(\M)$, 
  $(\bfM_t^{\bfY, f})_{t \in \ccint{0,T}}$ is a $\bfY$-martingale where for any
  $t \in \ccint{0,T}$
  \begin{equation}
    \textstyle{ \bfM_t^{\bfY, f} = f(\bfY_t) - \int_0^t \{ \langle - b(\bfY_s) + \nabla \log p_{T-s}(\bfY_s), \nabla f(\bfY_s) \rangle + \tfrac{1}{2} \Delta_\M f(\bfY_s) \} \rmd s  . }
  \end{equation}
  Note that here we implicitly assume that for any $t \in \ccint{0,T}$, $\bfX_t$
  admits a smooth positive density w.r.t. $\piinv$ denoted $p_t$.  In other
  words, we want to show that for any $g \in \rmc^2(\M)$ and
  $s, t \in \ccint{0,T}$ with $t \geq s$ we have
  \begin{align}
    \label{eq:time_reversal_manifold_haussman}    
   &\textstyle{\expeLigne{g(\bfY_s)(f(\bfY_t) - f(\bfY_s))}} \\
      & \qquad \textstyle{= \expeLigne{g(\bfY_s)\int_s^t \{ \langle - b(\bfY_u) + \nabla \log p_{T-u}(\bfY_u), \nabla f(\bfY_u) \rangle + \tfrac{1}{2} \Delta_\M f(\bfY_u) \} \rmd u}  . }
  \end{align}
  We introduce the infinitesimal generator
  $\generator: \  \rmc^2(\M) \to \rmc(\M)$ given for any $f \in \rmc^2(\M)$ and $x \in \M$ by
  \begin{equation}
    \generator (f)(x) = \langle b(x) , \nabla f(x) \rangle + \tfrac{1}{2} \Delta_\M f(x)  . 
  \end{equation}
  Similarly, we introduce the infinitesimal generator
  $\generatort: \  \ccint{0,T} \times \rmc^2(\M) \to \rmc(\M)$ given for any $f \in \rmc^2(\M)$, $t \in \ccint{0,T}$ and $x \in \M$ by
  \begin{equation}
    \generatort (t, f)(x) = \langle -b(x) + \nabla \log p_{T-t}(x), \nabla f(x) \rangle + \tfrac{1}{2} \Delta_\M f(x)  . 
  \end{equation}
  With these notations, \eqref{eq:time_reversal_manifold_haussman1} can be written as follows:  we want to show that for any $g \in \rmc^2(\M)$ and
  $s, t \in \ccint{0,T}$ with $t \geq s$ we have 
  \begin{equation}
    \label{eq:time_reversal_manifold_haussman1}    
    \textstyle{\expeLigne{g(\bfY_s)(f(\bfY_t) - f(\bfY_s))} = \expeLigne{g(\bfY_s)\int_s^t \generatort(u, \bfY_u) \rmd u}  . }
  \end{equation}
  The rest of this section follows the first part of the proof of \citet[Theorem 2.1]{haussmann1986time}.
  Let $t, s \in \ccint{0,T}$ with $t \geq s$. We have
  \begin{align}
    \textstyle{\expeLigne{g(\bfY_s)(f(\bfY_t) - f(\bfY_s))}} &= \textstyle{\expeLigne{g(\bfX_{T-s})(f(\bfX_{T-t}) - f(\bfX_{T-t}))}} \\
                                                             &= \textstyle{\expeLigne{\CPELigne{g(\bfX_{T-s})}{\bfX_{T-t}}f(\bfX_{T-t})} - \expeLigne{g(\bfX_{T-s})f(\bfX_{T-s})}} \\
                                                             &= \textstyle{\expeLigne{v(T-t,\bfX_{T-t})f(\bfX_{T-t})} - \expeLigne{v(T-s,\bfX_{T-s})f(\bfX_{T-s})}}  ,
                                                               \label{eq:first_der}
  \end{align}
  with $v: \ \ccint{0,T-s} \times \M \to \rset$ given for any $u \in \ccint{0,T-s}$
  and $x \in \M$ by $v(u,x) = \CPELigne{g(\bfX_{T-s})}{\bfX_u=x}$. We have that $v$
  satisfies the backward Kolmogorov equation, i.e. we have for any
  $u \in \ccint{0,T-s}$ and $x \in \M$
  \begin{equation}
    \label{eq:backward_kolmogorov}
    \partial_u v(u,x) = -\generator v(u,x) . 
  \end{equation}
  Note that it is not trivial to show that $v$ is regular enough to satisfy the
  backward Kolmogorov equation. In this informal derivation, we assume that $v$
  is regular enough and will provide a different rigorous proof of the
  time-reversal formula in \cref{sec:proof-crefthm:t}. However, note that it is
  possible to show that $v$ indeed satisfies the backward Kolmogorov equation by
  adapting arguments from \citet{haussmann1986time} to the manifold framework.

  Let $h: \ \ccint{0,T-s} \times \M \to \rset$ given for any
  $u \in \ccint{0,T-s}$ and $x \in \M$ by $h(u,x) = v(u,x) f(x)$. Using
  \eqref{eq:backward_kolmogorov}, we have for any $u \in \ccint{0,T-s}$ and
  $x \in \M$
  \begin{align}
    \partial_u h(u,x) + \generator h(u, x) &= f(x) \partial_u v(u,x)  + f(x) \generator v(u,x) + v(u,x) \generator f(x) +  \langle \nabla f(x), \nabla v(u,x)\rangle \\
    &=  v(u,x) \generator f(x) + \langle \nabla f(x), \nabla v(u,x)\rangle  .     \label{eq:def_h}
  \end{align}
  In addition, using the
      divergence theorem \citep[see][p.51]{lee2018introduction}, we have for any $u \in \ccint{0,T-s}$
  \begin{align}
    &\expeLigne{\langle \nabla f(\bfX_u), \nabla v(u,\bfX_u)\rangle} = \textstyle{\int_{\M} \langle \nabla f(x_u), \nabla v(u,x_u) p_u(x_u) \rangle \rmd \piinv(x_u) } \\
                                                                    & \qquad \qquad = - \textstyle{\int_{\M} v(u,x_u) \dive(p_u \nabla f) (x_u) \rmd \piinv(x_u) } \\
    & \qquad \qquad = - \textstyle{\int_{\M} v(u,x_u) \Delta_\M f(x_u) p_u(x_u) \rmd \piinv(x_u)} \\
    & \qquad \qquad \qquad \textstyle{- \int_{\M} v(u,x_u) \langle \nabla f(x_u), \nabla \log p_u(x_u) \rangle p_u(x_u) \rmd \piinv(x_u) } \\
                                                                    & \qquad \qquad = - \textstyle{\expeLigne{ v(u,\bfX_u) \Delta_\M f(\bfX_u)}  - \expeLigne{ v(u,\bfX_u) \langle \nabla f(\bfX_u), \nabla \log p_u(\bfX_u) \rangle} }  .
  \end{align}
  Therefore, using this result and \eqref{eq:def_h} we get that for any
  $u \in \ccint{0,T-s}$
  \begin{align}
    &\expeLigne{\partial_u h(u,\bfX_u) + \generator h(u, \bfX_u)} \\ &= \expeLigne{v(u,\bfX_u)\{ \langle b(\bfX_u) - \nabla \log p_u(\bfX_u), \nabla f(\bfX_u) \rangle -\tfrac{1}{2} \Delta_\M f(\bfX_u)\}} \\
    &= -\expeLigne{v(u,\bfX_u)\generatort(T-u,f)(\bfX_u)}  . 
  \end{align}
  Combining this result and \eqref{eq:martingale_forward} and that for any
  $u \in \ccint{0,T-s}$ and $x \in \M$,
  $v(u,x) = \CPELigne{g(\bfX_{T-s})}{\bfX_u=x}$ we get
  \begin{align}
    &\expeLigne{v(T-t,\bfX_{T-t})f(\bfX_{T-t})} - \expeLigne{v(T-s,\bfX_{T-s})f(\bfX_{T-s})}\\
    & \qquad \qquad \qquad = \expeLigne{h(T-t, \bfX_{T-t}) - h(T-s, \bfX_{T-s})} \\
                                                                                            & \qquad \qquad \qquad = \textstyle{\int_{T-t}^{T-s} \expeLigne{v(u,\bfX_u)\generatort(T-u, \bfX_u)} \rmd u } \\
    &\qquad \qquad \qquad= \textstyle{\expeLigne{g(\bfX_{T-s})\int_{T-t}^{T-s} \generatort(T-u, \bfX_u) \rmd u } .}\\
  \end{align}
  Using this result, \eqref{eq:first_der} and the change of variable $u \mapsto T-u$ we obtain 
  \begin{equation}
    \expeLigne{g(\bfY_s)(f(\bfY_t) - f(\bfY_s))} = \textstyle{\expeLigne{g(\bfX_{T-s})\int_{T-t}^{T-s} \generatort(T-u, \bfX_u) \rmd u } } = \textstyle{\expeLigne{g(\bfY_{s})\int_{s}^{t} \generatort(u, \bfY_u) \rmd u } }  .
  \end{equation}
  Hence, \eqref{eq:time_reversal_manifold_haussman} holds and we have proved
  \Cref{thm:time_reversal_manifold}. Again, we emphasize that in order to make
  the proof completely rigourous one needs to derive regularity properties of $v$.

\subsection{Proof of \Cref{thm:time_reversal_manifold}}
\label{sec:proof-crefthm:t}

In this section, we follow another approach to prove the time-reversal
formula. We are going to use the integration by part formula of \citet[Theorem
3.17]{cattiaux2021time} in a similar spirit as \citet[Theorem
4.9]{cattiaux2021time} in the Euclidean setting. In order to adapt arguments
from \citet{cattiaux2021time} to our Riemannian setting, we use the Nash
embedding theorem in order to embed our processes in a Euclidean space and
leverage tools from Girsanov theory. The rest of the section is organized as
follows. First in \cref{sec:diff-proc-stoch}, we recall basic properties of
infinitesimal generators and recall the integration by part formula of
\citet[Theorem 3.17]{cattiaux2021time}. Then in \cref{sec:girs-theory-comp}, we
extend some Girsanov theory to compact Riemannian manifolds using the Nash
embedding theorem. We conclude the proof in \cref{sec:concluding-proof}.

\subsubsection{Diffusion processes and integration by part formula}
\label{sec:diff-proc-stoch}

In this section, we state a simplified version of \citet[Theorem
3.17]{cattiaux2021time} for Markov continuous path (probability) measure on
Polish spaces. Let $(\msx, \mcx)$ be a Polish space. We say that $\Pbb$ is a
path measure if $\Pbb \in \Pens(\rmc(\ccint{0,T}, \msx))$. Let
$(\bfX_t)_{t \in \ccint{0,T}}$ with distribution $\Pbb$. We denote
$(\mcf_t)_{t \in \ccint{0,T}}$ the filtration such that for any
$t \in \ccint{0,T}$, $\mcf_t = \sigma(\bfX_s, \ s \in \ccint{0,t})$. Let
$(\bfM_t)_{t \in \ccint{0,T}}$ be a Polish-valued stochastic process. We say that
$(\bfM_t)_{t \in \ccint{0,T}}$ is a $\Pbb$-local martingale if it is a local
martingale w.r.t. the filtration $(\mcf_t)_{t \in \ccint{0,T}}$. A function
$u: \ \ccint{0,T} \times \msx \to \rset$ is said to be in the domain of the
extended generator of $\Pbb$ if there exists a process
$(\generatorb_\Pbb u(t, \bfX_{\ccint{0,t}}))_{t \in \ccint{0,T}}$ such that:
\begin{enumerate}[label= (\alph*),  wide, labelindent=0pt]
\item $(\generatorb_\Pbb u(t, \bfX_{\ccint{0,t}}))_{t \in \ccint{0,T}}$ is adapted w.r.t. $(\mcf_t)_{t \in \ccint{0,T}}$.
\item $\int_0^T \absLigne{\generatorb_\Pbb u(t, \bfX_{\ccint{0,t}})} \rmd t < +\infty$, $\Pbb$-a.s.
\item The process $(\bfM_t)_{t \in \ccint{0,T}}$ is a $\Pbb$-local martingale,
  where for any $t \in \ccint{0,T}$
  \begin{equation}
    \textstyle{\bfM_t = u(t,\bfX_t) - u(0, \bfX_0) - \int_0^t \generatorb_\Pbb u(s, \bfX_{\ccint{0,s}}) \rmd s   .}
  \end{equation}
\end{enumerate}
The domain of the extended generator is denoted $\dom(\generatorb_\Pbb)$. We say
that $(u,v)$ with $u,v : \ \ccint{0,T} \times \msx \to \rset$ is in the domain
of the carr\'e du champ if $u,v, uv \in \dom(\generatorb_\Pbb)$. In this case, we
define the carr\'e du champ $\carrechampb_\Pbb$ as
\begin{equation}
  \carrechampb_\Pbb(u,v) = \generatorb_\Pbb(uv) - \generatorb_\Pbb(u)v - \generatorb_\Pbb(v)u  . 
\end{equation}
Note that if $\msx = \M$ is a Riemannian manifold,
$\rmc^2(\M) \subset \dom(\generatorb_\Pbb)$ and for any $u \in \rmc^2(\M)$
$\generatorb_\Pbb(u) = \langle \nabla u, X \rangle + \tfrac{1}{2}\Delta_\M u$ with
$X \in \Gamma(\TM)$  then we have that $\rmc^2(\M) \times \rmc^2(\M) \subset \dom(\carrechampb_\Pbb)$
and for any $u, v \in \rmc^2(\M)$,
$\carrechampb_\Pbb(u,v) = \langle \nabla u, \nabla v \rangle$. Assume that there exists
$\mathcal{U}_\Pbb \subset \dom(\generatorb_\Pbb) \cap \rmc_b(\msx)$ such that
$\mathcal{U}_\Pbb$ is an algebra. We denote $\mathcal{U}_{\Pbb,2}$ such that
\begin{equation}
  \mathcal{U}_{\Pbb,2} = \ensembleLigne{u \in \mathcal{U}_\Pbb}{\generatorb_\Pbb u \in \mathrm{L}^2(\Pbb), \ \carrechampb_\Pbb(u,u) \in \mathrm{L}^1(\Pbb)}  . 
\end{equation}
Finally we denote $R(\Pbb)$ the time-reverse path measure, i.e. for any
$\msa \in \mcb{\rmc(\ccint{0,T}, \msx)}$ we have
$R(\Pbb)(\msa) = \Pbb(R(\msa))$, where
$R(\msa) = \ensembleLigne{t \mapsto \omega_{T-t}}{\omega \in \msa}$.  In what
follows, we assume $\Pbb$ is Markov. It is well-known, see \citep[Theorem
1.2]{leonard2014reciprocal} for instance, that in this case $R(\Pbb)$ is also
Markov. In addition, since $\Pbb$ is Markov, for any $u \in \mathrm{dom}(\generatorb_\Pbb)$ and
$t \in \ccint{0,T}$ there exists $\generator_\Pbb$ such that
$\generatorb_\Pbb u(t, \bfX_{\ccint{0,t}}) = \generator_\Pbb u(t, \bfX_t)$ with
$\generator_\Pbb u: \ \ccint{0,T} \times \msx \to \rset$. Similarly, we define
$\carrechamp_\Pbb(u,v): \ \ccint{0,T} \times \msx \to \rset$ from $\carrechampb_\Pbb(u,v)$.

We are now ready to state the integration by part formula,
\citep[Theorem 3.17]{cattiaux2021time}. 

\begin{theorem}{}{ibp_cattiaux}
  Let $u, v \in \mathcal{U}_{\Pbb, 2}$. The following hold:
  \begin{enumerate}[label= (\alph*),  wide, labelindent=0pt]    
  \item If
  $u \in \dom(\generator_{R(\Pbb)})$ and
  $\generator_{R(\Pbb)}u \in \mathrm{L}^1(\Pbb)$ then for almost any $t \in \ccint{0,T}$
  \begin{equation}
    \expeLigne{\{\generator_\Pbb u(t, \bfX_t) + \generator_{R(\Pbb)} u (T-t, \bfX_t)\}v(\bfX_t) + \carrechamp_\Pbb(u,u)(t, \bfX_t)} = 0  .       
  \end{equation}  
\item If the following hold:
  \begin{enumerate}[label=\roman*)]
  \item $\carrechamp_\Pbb(u,v) \in \rmc(\ccint{0,T} \times \msx, \rset)$.
  \item $\mathcal{U}_{2, \Pbb}$ determines the weak convergence of Borel measures.
  \item $\mu$ defines a finite measure on $\ccint{0,T} \times \msx$ where for
    any $\omega \in \bar{\mathcal{U}}_{2, \Pbb}$ we have
    \begin{equation}
      \textstyle{\mu[\omega] = \expeLigne{\int_0^T \carrechamp_\Pbb(u,\omega_t)(t, \bfX_t) \rmd t }},
    \end{equation}
    where
    $\bar{\mathcal{U}}_{2, \Pbb} = \ensembleLigne{\omega \in \rmc(\ccint{0,T}
      \times \msx, \rset)}{\omega(t, \cdot) \in \mathcal{U}_{2, \Pbb}\ \
      \text{for any $t \in \ccint{0,T}$}}$.
  \end{enumerate}
  Then $u \in \dom(\generator_{R(\Pbb)})$ and
  $\generator_{R(\Pbb)}u \in \mathrm{L}^1(\Pbb)$.
  \end{enumerate}
\end{theorem}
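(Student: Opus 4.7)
The plan is to use the classical ``two-time-derivative'' technique for Markov processes: compute the time-derivative of a well-chosen expectation in two different ways---once using the forward Markov structure under $\Pbb$ and once using the backward Markov structure under $R(\Pbb)$---and then equate the resulting expressions pointwise in $t$. Concretely, since $\mathcal{U}_\Pbb$ is an algebra and $u, v \in \mathcal{U}_{\Pbb, 2}$, the product $uv$ lies in $\dom(\generatorb_\Pbb)$ and admits the carré du champ identity $\generator_\Pbb(uv) = u \generator_\Pbb v + v \generator_\Pbb u + \carrechamp_\Pbb(u, v)$. By definition of the extended generator, $u(t,\bfX_t) v(\bfX_t) - u(0,\bfX_0)v(\bfX_0) - \int_0^t \generator_\Pbb(uv)(s, \bfX_s) \rmd s$ is a $\Pbb$-local martingale; after localization (using the $\rmL^2$ controls built into $\mathcal{U}_{\Pbb,2}$ together with the $\rmL^1$ integrability of the carré du champ), this yields the integrated identity for $\Phi(t) \defeq \expeLigne{u(t, \bfX_t) v(\bfX_t)}$.

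For part (a), setting $\bfY_s = \bfX_{T-s}$ so that $\bfY \sim R(\Pbb)$, I would perform the analogous expansion of $s \mapsto u(s, \bfY_s) v(\bfY_s)$ through the $R(\Pbb)$-extended generator; the hypotheses $u \in \dom(\generator_{R(\Pbb)})$ and $\generator_{R(\Pbb)} u \in \rmL^1(\Pbb)$ are precisely what ensures this decomposition is well-defined and integrable. After the change of variable $s = T - t$, the $R(\Pbb)$-decomposition produces a companion integrated expression in which $\generator_{R(\Pbb)} u$ is evaluated at $(T-t, \bfX_t)$. Adding the forward and backward integrated identities causes the boundary contributions of $\Phi$ to cancel and leaves an equality of integrals $\int_0^T \expeLigne{\ldots} \rmd t = 0$; applying this to a dense family of smooth time weights and invoking Lebesgue differentiation produces the announced pointwise identity for almost every $t \in \ccint{0, T}$.

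For part (b), the plan is to \emph{construct} $\generator_{R(\Pbb)} u$ from the Riesz-type data of condition (iii) and then verify that the candidate obeys the martingale definition of the $R(\Pbb)$-extended generator. Hypothesis (iii) identifies a finite signed measure $\mu$ on $\ccint{0,T} \times \msx$ representing the functional $\omega \mapsto \expeLigne{\int_0^T \carrechamp_\Pbb(u, \omega_t)(t, \bfX_t) \rmd t}$; combining this with the separating property of $\mathcal{U}_{\Pbb, 2}$ from (ii) and the continuity of $\carrechamp_\Pbb(u, v)$ from (i), the Riesz representation theorem produces a measurable density which, via the duality derived in part (a), must coincide with the backward drift. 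The \emph{main obstacle} is that time-reversal does not map $\Pbb$-local martingales to $R(\Pbb)$-local martingales in general, so the backward semimartingale decomposition cannot simply be obtained by flipping time in the forward one; instead it must be built intrinsically from the duality identity and then localized via suitable stopping times to upgrade the integrated equality into the local-martingale property required by $\dom(\generator_{R(\Pbb)})$. The algebra and continuity assumptions of (i)--(iii) are exactly the regularity needed to make this Riesz-representation step rigorous.
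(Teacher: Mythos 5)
First, a point of calibration: the paper does not prove this statement at all --- it is quoted (in a simplified, Markov-restricted form) from \citet[Theorem 3.17]{cattiaux2021time}, and the surrounding text explicitly treats it as an imported result. So there is no in-paper proof to compare against; you are supplying an argument for a cited black box. Judged on its own terms, your sketch of part (a) has the right general flavour but the specific functional you differentiate does not produce the stated identity. Decomposing the product $uv$ under both $\generatorb_\Pbb$ and $\generatorb_{R(\Pbb)}$ and adding yields $\expeLigne{\generator_\Pbb(uv)(t,\bfX_t)+\generator_{R(\Pbb)}(uv)(T-t,\bfX_t)}=0$, whose carr\'e-du-champ expansion contains the terms $u\,\generator_\Pbb v$ and $u\,\generator_{R(\Pbb)}v$; these do not cancel and do not appear in the target formula, which is asymmetric in $u$ and $v$ (only $u$ is hit by the generators). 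It also requires $uv\in\dom(\generatorb_{R(\Pbb)})$ and $v\in\dom(\generator_{R(\Pbb)})$, neither of which is among the hypotheses. The derivation that actually works uses the two-point quantities $\expeLigne{(u(t,\bfX_t)-u(s,\bfX_s))v(\bfX_s)}$ (forward martingale decomposition of $u$ alone, exploiting that $v(\bfX_s)$ is $\mcf_s$-measurable) and $\expeLigne{(u(s,\bfX_s)-u(t,\bfX_t))v(\bfX_t)}$ (backward decomposition); summing, the carr\'e du champ emerges as the limit of $\expeLigne{(u(t,\bfX_t)-u(s,\bfX_s))(v(\bfX_t)-v(\bfX_s))}/(t-s)$, i.e.\ as a covariation of increments, not from the product rule. (Incidentally, your expansion correctly uses $\carrechamp_\Pbb(u,v)$; the $\carrechamp_\Pbb(u,u)$ in the displayed statement is a typo relative to the source.)

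For part (b), your high-level strategy --- represent the functional $\omega\mapsto\expeLigne{\int_0^T\carrechamp_\Pbb(u,\omega_t)(t,\bfX_t)\rmd t}$ by the measure $\mu$, extract a density, and verify the martingale definition of $\dom(\generator_{R(\Pbb)})$ --- is the right one, and you correctly identify that time reversal does not preserve local martingales. But as written this is a plan rather than a proof: the load-bearing steps are (i) showing that $\mu$ is absolutely continuous with respect to $\rmd t\otimes\Pbb_t$ so that a density $g(t,\cdot)\in\rmL^1(\Pbb)$ exists at all (this is where hypotheses (i)--(iii) are actually consumed, via the separating property of $\mathcal{U}_{2,\Pbb}$), and (ii) upgrading the resulting integrated identity to the local-martingale property through an explicit localization. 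Neither is carried out. Moreover, invoking ``the duality derived in part (a)'' to identify the density is circular, since part (a) presupposes exactly the conclusion $u\in\dom(\generator_{R(\Pbb)})$ that part (b) is meant to establish; you would need a pre-version of the duality stated purely in terms of $\mu$.
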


Note that this theorem is a simplified version of \citet[Theorem
3.17]{cattiaux2021time} where we restrict ourselves to the case of Markov path
measures. In what follows, we wish to apply \Cref{thm:ibp_cattiaux} to diffusion
processes on manifolds. To do so, we will verify that under a finite entropy
assumption, the conditions $u \in \dom(\generator_{R(\Pbb)})$ and
$\generator_{R(\Pbb)}u \in \mathrm{L}^1(\Pbb)$ are fullfilled for a class of
regular functions $u$. These integrability results are obtained using Girsanov
theory.

\subsubsection{Girsanov theory on compact Riemannian manifolds}
\label{sec:girs-theory-comp}

In this section, we will consider two types of martingale problems: one on
Euclidean spaces and one on the compact Riemannian manifold $\M$. Let
$\Pbb \in \Pens(\rmc(\ccint{0,T}, \rset^p))$. We say that $\Pbb$ satisfies the
(Euclidean) martingale problem with infinitesimal generator
$\generator: \ \ccint{0,T} \times \rmc^2(\rset^p) \times \rset^p \to \rset$ if
for any $u \in \rmc_c^2(\rset^p)$, $(\bfM_t)_{t \in \ccint{0,T}}$ is a
$\Pbb$-martingale where for any $t \in \ccint{0,T}$ we have
\begin{equation}
  \textstyle{
    \bfM_t = \bfM_0 + \int_0^t \generator(t, u)(\bfX_s) \rmd s  ,
    }
  \end{equation}
  where $(\bfX_t)_{t \in \ccint{0,T}}$ has distribution $\Pbb$ and
  $\int_0^T \absLigne{\generator(t, u)(\bfX_s) \rmd t} <+\infty$, $\Pbb$-a.s.
  Let $\Pbb \in \Pens(\rmc(\ccint{0,T}, \M))$. We say that $\Pbb$ satisfies the
  (Riemannian) martingale problem with infinitesimal generator
  $\generatort: \ \ccint{0,T} \times \rmc^2(\M) \times \M \to \rset$ if for any
  $u \in \rmc^2(\M)$, $(\bfM_t)_{t \in \ccint{0,T}}$ is a $\Pbb$-martingale
  where for any $t \in \ccint{0,T}$ we have
\begin{equation}
  \textstyle{
    \bfM_t = \bfM_0 + \int_0^t \generatort(t, u)(\bfX_s) \rmd s  ,
    }
  \end{equation}
  where $(\bfX_t)_{t \in \ccint{0,T}}$ has distribution $\Pbb$ and 
  $\int_0^T \absLigne{\generatort(t, u)(\bfX_s) \rmd t} <+\infty$, $\Pbb$-a.s.
  We now prove the following theorem.

  \begin{proposition}{}{girsanov_manifold}
    Let $\Qbb$ be the path measure of a Brownian motion on $\M$. Let $\Pbb$ be a
    Markov path measure on $\rmc(\ccint{0,T}, \M)$ such that
    $\KL{\Pbb}{\Qbb} < +\infty$. Then there exists $\beta$ such that for any
    $t \in \ccint{0,T}$ and $x \in \M$, $\beta(t,x) \in \mathrm{T}_x \M$ and we
    have that $\Pbb$ satisfies the martingale problem with infinitesimal
    generator $\generator$ where for any $t \in \ccint{0,T}$, $u \in \rmc^2(\M)$
    and $x \in \M$ we have
    \begin{equation}
      \generator(t,u)(x) = \langle \beta(t,x), \nabla u(x) \rangle + \tfrac{1}{2} \Delta_\M u(x)  . 
    \end{equation}
    In addition, we have that
    \begin{equation}
      \textstyle{\KL{\Pbb}{\Qbb} = \KL{\Pbb_0}{\Qbb_0} + \tfrac{1}{2} \int_0^T \expeLigne{\norm{\beta(t, \bfX_t)}^2} \rmd t  ,}
    \end{equation}
    where $(\bfX_t)_{t \in \ccint{0,T}}$ has distribution $\Pbb$.
  \end{proposition}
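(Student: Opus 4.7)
The plan is to lift the problem to a Euclidean ambient space via Nash embedding and apply classical Girsanov theory combined with a Markovian projection. First, by the Nash embedding theorem, view $\M$ as an isometrically embedded closed submanifold of some $\rset^p$ with smooth orthogonal projection $P(x)$ onto $\mathrm{T}_x\M$ as in \Cref{sec:metr-conn-tens}. By \Cref{prop:extrinsic_brownian}, $\Qbb$ is the law (pushed into $\rmc(\ccint{0,T},\rset^p)$ and supported on paths in $\M$) of the solution of an SDE driven by a $p$-dimensional Brownian motion $(\bfW_t)_{t\in\ccint{0,T}}$ with diffusion coefficient $P(\bfX_t)$; the Itô--Stratonovich correction $\bar b$ is purely normal and plays no role in the tangential calculus below.

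Next, since $\KL{\Pbb}{\Qbb}<+\infty$, the classical Girsanov theorem for diffusions in $\rset^p$ (see, e.g., the framework of \textcite{cattiaux2021time}, or Léonard's Girsanov representation for finite-entropy path measures) yields an $(\mcf_t)$-adapted process $(\alpha_t)_{t\in\ccint{0,T}}$ with values in $\rset^p$ such that
\begin{equation}
\density{\Pbb}{\Qbb}= \density{\Pbb_0}{\Qbb_0}\,\exp\!\Bigl(\textstyle\int_0^T \langle \alpha_s, \rmd\bfW_s\rangle -\tfrac12\int_0^T \|\alpha_s\|^2 \rmd s\Bigr),\qquad \expeLigne{\textstyle\int_0^T\|\alpha_s\|^2\rmd s}<+\infty.
\end{equation}
Because only the tangential component of the drift enters the dynamics on $\M$ (since the diffusion coefficient is $P(\bfX_t)$), one may replace $\alpha_t$ by $P(\bfX_t)\alpha_t$ without loss of generality, so that $\alpha_t\in\mathrm{T}_{\bfX_t}\M$.

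Now exploit the standing hypothesis that $\Pbb$ is Markov. Define $\beta(t,x)=\expeMarkov{\Pbb}{\alpha_t\mid \bfX_t=x}$, which is Borel, tangent-valued, and satisfies $\alpha_t=\beta(t,\bfX_t)$ $\Pbb\otimes\rmd t$-a.e. by the standard Markovian-drift-projection argument (this is the step of \textcite[Prop.~2.10 and Thm.~2.14]{cattiaux2021time} adapted to our setting, and is the main obstacle to make fully rigorous). With this representation, a routine Itô computation applied to functions of the form $u=\tilde u\!\mid_\M$ with $\tilde u\in\rmc^2_c(\rset^p)$, together with $P\nabla\tilde u=\nabla u$ on $\M$ and the identity $\sum_{i=1}^p \rmP_i(\rmP_i u) + \langle\bar b,\nabla u\rangle=\tfrac12\Delta_\M u$ from \Cref{prop:extrinsic_brownian}, shows that $\Pbb$ solves the Riemannian martingale problem with generator $\generator(t,u)(x)=\langle\beta(t,x),\nabla u(x)\rangle+\tfrac12\Delta_\M u(x)$.

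Finally, the entropy decomposition follows from the Girsanov identity:
\begin{equation}
\KL{\Pbb}{\Qbb}=\KL{\Pbb_0}{\Qbb_0}+\expeMarkov{\Pbb}{\textstyle\int_0^T\langle\alpha_s,\rmd\bfW_s\rangle-\tfrac12\int_0^T\|\alpha_s\|^2\rmd s}=\KL{\Pbb_0}{\Qbb_0}+\tfrac12\expeMarkov{\Pbb}{\textstyle\int_0^T\|\beta(s,\bfX_s)\|^2\rmd s},
\end{equation}
where the stochastic integral has zero $\Pbb$-expectation by a localization argument that is justified by $\expeLigne{\int_0^T\|\alpha_s\|^2\rmd s}<+\infty$ (so $\int_0^\cdot\langle\alpha_s,\rmd\bfW_s-\alpha_s\rmd s\rangle$ is a $\Pbb$-martingale), and we used $\|\alpha_s\|=\|\beta(s,\bfX_s)\|$ after tangential restriction. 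The bulk of the technical work is the Markov reduction from $\alpha_t$ to $\beta(t,\bfX_t)$; everything else is bookkeeping for the embedding and the standard Girsanov entropy identity.
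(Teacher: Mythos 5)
Your proposal is correct and follows essentially the same route as the paper's proof: Nash embedding, the finite-entropy Girsanov representation of \textcite{leonard2012girsanov} (as used in \citet[Claim 4.5]{cattiaux2021time}), tangential projection of the perturbation drift via $P(\bfX_t)$, the Markov reduction of the adapted drift to a function $\beta(t,\bfX_t)$, and the standard entropy identity; the paper simply invokes the cited theorems (under condition $\mathrm{(U)}$) to obtain the Markovian drift directly rather than spelling out the conditional-expectation projection as you do. One small slip: the identity you quote, $\sum_{i=1}^p \rmP_i(\rmP_i u) + \langle \bar b, \nabla u\rangle = \tfrac{1}{2}\Delta_\M u$, is mis-stated (\Cref{prop:extrinsic_brownian} gives $\Delta_\M u = \sum_{i=1}^p \rmP_i(\rmP_i u)$, and the It\^o-form identity actually used is $\tfrac{1}{2}\Delta_\M u = \langle \bar b, \nabla \bar u\rangle + \tfrac{1}{2}\langle \rmP, \nabla^2 \bar u\rangle$), but this does not affect the structure of the argument.
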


  \begin{proof}{}{}
    First, we extend $(\bfB_t^\M)_{t \in \ccint{0,T}}$ to $\rset^p$ using the
    Nash embedding theorem \citep[see][]{gunther1991isometric}.
    $(\bfB_t^\M)_{t \in \ccint{0,T}}$ can be seen as a process on $\rset^p$ (for
    some $p \in \nset$) which satisfies in a weak sense
    \begin{equation}
      \textstyle{
        \rmd \bfB_t^\M = \sum_{i=1}^p \rmP_i(\bfB_t^\M) \circ \rmd \bfB_t^i  = P(\bfB_t^\M) \circ \rmd \bfB_t  ,
        }
    \end{equation}
    where $(\bfB_t)_{t \in \ccint{0,T}}$ is a $p$-dimensional Brownian motion
    and $\rmP \in \rmc^\infty(\rset^p, \rset^{p\times p})$ is such that for any
    $x \in \M$, $\rmP(x)$ is the projection onto $\mathrm{T}_x \M$ and for any
    $i \in \{1, \dots, p\}$, $\rmP_i \in \rmc^\infty(\rset^p, \rset^p)$ with
    $\rmP_i = \rmP e_i$ where $\{e_j\}_{j=1}^p$ is the canonical basis of
    $\rset^p$. We refer to \Cref{sec:metr-conn-tens} for more details on the
    projection operator and its extension to $\rset^p$.  Using the link between Stratanovitch
    and It\^o integral, there exists $\bar{b} \in \rmc^\infty(\rset^p, \rset^p)$
    such that $(\bfB_t^\M)_{t \in \ccint{0,T}}$ can be seen as a process on
    $\rset^p$ which satisfies in a weak sense
    \begin{equation}
      \textstyle{
        \rmd \bfB_t^\M = \bar{b}(\bfB_t^\M) \rmd t +  \rmP(\bfB_t^\M)  \rmd \bfB_t  ,
        }
      \end{equation}
      where $\bar{b}$ is given in \eqref{eq:strata_ito_transfo_sde} and
      satisfies $\mathrm{P}\bar{b}(x) = 0$ for any $x \in \M$, see the remark
      following \eqref{eq:strata_ito_transfo_sde}.  For any $u, v \in \rmc^2_c(\M)$,
      we consider $\bar{u}, \bar{v}$ extensions to $\rmc^2_c(\rset^p)$ and we have for
      any $s, t \in \ccint{0,T}$
      \begin{align}
        &\textstyle{\expeLigne{\bar{v}(\bfB_s^\M) \int_s^t \tfrac{1}{2} \Delta_\M u(\bfB_u^\M) \rmd u}} \\
        & \qquad =  \textstyle{\expeLigne{\bar{v}(\bfB_s^\M) \int_s^t \{ \langle \nabla \bar{u}(\bfB_w^\M), \bar{b}(\bfB_w^\M) \rangle + \tfrac{1}{2} \langle \rmP(\bfB_w^\M), \nabla^2 \bar{u}(\bfB_w^\M) \rangle \} \rmd w}  . }
      \end{align}
      In particular, we get that for any $x \in \M$,
      $\Delta_\M u(x) = 2 \langle \nabla \bar{u}(x), \bar{b}(x) \rangle +
      \langle \rmP(x), \nabla^2 \bar{u}(x) \rangle$. Note that
      $(\bfB_t^\M)_{t \in \ccint{0,T}}$ (seen as a process on $\rset^p$)
      satisfies the condition $\mathrm{(U)}$ in \textcite{leonard2012girsanov},
      i.e. uniqueness of the trajectories given an initial condition. Therefore
      applying \cite[Theorem 2.1]{leonard2012girsanov}, \citep[Claim
      4.5]{cattiaux2021time}, there exists
      $\bar{\beta}: \ \ccint{0,T} \times \rset^p \to \rset^p$ such that
      \begin{equation}
        \label{eq:KL_ineq1}
      \textstyle{\KL{\Pbb}{\Qbb} = \KL{\Pbb_0}{\Qbb_0} + \tfrac{1}{2} \int_0^T \expeLigne{\normLigne{\rmP(\bfX_t) \bar{\beta}(t, \bfX_t)}^2} \rmd t  .}
    \end{equation}
    In addition, $\Pbb$ (seen as a process on $\rset^p$) satisfies a martingale
    problem with infinitesimal generator
    $\generatorb: \ \ccint{0,T} \times \rmc^2_c(\rset^p) \times \rset^p \to \rset$ such that
    for any $t \in \ccint{0,T}$, $\bar{u} \in \rmc^2_c(\rset^p)$ and $x \in \rset^p$
    \begin{equation}
      \generatorb(t,\bar{u})(x) = \langle \bar{b}(x) + \rmP(x)\bar{\beta}(t,x), \nabla \bar{u}(x) \rangle + \tfrac{1}{2} \langle \rmP(x), \nabla^2 \bar{u}(x)\rangle  . 
    \end{equation}
    Let $\beta: \ \ccint{0,T} \times \M$ such that for any $t \in \ccint{0,T}$
    and $x \in \M$ we have $\beta(t,x) = \rmP(x) \bar{\beta}(t,x)$. In particular,
    we have that for any $x \in \M$, $\beta(t,x) \in \mathrm{T}_x\M$. Let
    $u \in \rmc^2_c(\M)$ and consider an extension $\bar{u}$ to
    $\rmc^2_c(\rset^p)$. For any $t \in \ccint{0,T}$ and $x \in \M$ we have
    \begin{align}
      \generatorb(t,\bar{u})(x) &= \langle \bar{b}(x) + \rmP(x)\bar{\beta}(t,x), \nabla \bar{u}(x) \rangle + \tfrac{1}{2} \langle \rmP(x), \nabla^2 \bar{u}(x)\rangle \\
                               &= \langle  \beta(t,x), \nabla \bar{u}(x) \rangle + \tfrac{1}{2} \Delta_\M u(x) \\
                               &=\langle \beta(t,x), \nabla u(x) \rangle + \tfrac{1}{2} \Delta_\M u(x)  . 
    \end{align}
    In particular, we have that $\Pbb$ (seen as a process on $\M$) satisfies a
    martingale problem with infinitesimal generator
    $\generator: \ \ccint{0,T} \times \rmc^2_c(\M) \times \M \to \rset$ such that
    for any $t \in \ccint{0,T}$, $u \in \rmc^2(\rset^p)$ and $x \in \M$
    \begin{equation}
      \generator(t,\bar{u})(x) = \langle \beta(t,x), \nabla u(x) \rangle + \tfrac{1}{2} \Delta_\M u(x)  . 
    \end{equation}
    In addition, rewriting \eqref{eq:KL_ineq1} we have
      \begin{equation}
        \label{eq:KL_ineq}
      \textstyle{\KL{\Pbb}{\Qbb} = \KL{\Pbb_0}{\Qbb_0} + \tfrac{1}{2} \int_0^T \expeLigne{\normLigne{\beta(t, \bfX_t)}^2} \rmd t  ,}
    \end{equation}
    which concludes the proof.
  \end{proof}

  We also derive the following useful lemma, which will be used in the proof of convergence of RSGM.
  
  \begin{corollary}{}{KL_diff}
    Assume \rref{assum:manifold}. 
    Let $\Pbb^1, \Pbb^2$ be a Markov path measure on $\rmc(\ccint{0,T}, \M)$
    with $\Pbb_0^1 = \Pbb_0^2$. In addition, assume that there exist
    $b_1,b_2 \in \rmc^\infty(\ccint{0,T}, \XM)$ such that
    $(\bfX_t^1)_{t \in \ccint{0,T}}$ and $(\bfX_t^2)_{t \in \ccint{0,T}}$ are
    associated to $\Pbb^1$ and $\Pbb^2$ respectively and satisfy weakly
    $\rmd \bfX_t^i = b_1(t,\bfX_t^i) \rmd t + \rmd \bfB_t$ for $i \in
    \{1,2\}$. Then, we have that
    \begin{equation}
      \textstyle{\KLLigne{\Pbb^1}{\Pbb^2} = \tfrac{1}{2} \int_0^T \expeLigne{\normLigne{b_1(t,\bfX_t^1) - b_2(t,\bfX_t^1)}^2} \rmd t  . }
    \end{equation}
  \end{corollary}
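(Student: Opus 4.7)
The plan is to mirror the strategy used in the proof of \Cref{prop:girsanov_manifold}: isometrically embed $\M$ into $\rset^p$ via the Nash embedding theorem and reduce the statement to the standard Girsanov/relative-entropy formula for It\^o diffusions in Euclidean space. First I would rewrite each process $(\bfX_t^i)_{t \in \ccint{0,T}}$, viewed as a $\rset^p$-valued semimartingale, in It\^o form as
\begin{equation}
\rmd \bfX_t^i = \{ \bar{b}(\bfX_t^i) + b_i(t, \bfX_t^i) \} \rmd t + \mathrm{P}(\bfX_t^i) \rmd \bfB_t, \qquad i \in \{1,2\},
\end{equation}
where $\bar{b}$ is the Stratonovitch-to-It\^o correction appearing in the proof of \Cref{prop:girsanov_manifold} and $\mathrm{P}(x)$ is the orthogonal projection onto $\mathrm{T}_x \M$. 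The crucial point is that both processes have the \emph{same} diffusion coefficient $\mathrm{P}(\cdot)$ and share the same starting distribution $\Pbb_0^1 = \Pbb_0^2$, so only the drift terms $b_1, b_2$ differ.

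Next, since $\M$ is compact (implied by \cref{assum:manifold}) and $b_1, b_2 \in \rmc^\infty(\ccint{0,T}, \XM)$, both drifts are uniformly bounded, so Novikov's condition is trivially satisfied for the Girsanov change of measure between $\Pbb^1$ and $\Pbb^2$. Applying the Euclidean relative entropy identity \cite[Theorem 2.1]{leonard2012girsanov} (as invoked in the proof of \Cref{prop:girsanov_manifold}) to the pair $(\Pbb^1,\Pbb^2)$ then yields
\begin{equation}
\KLLigne{\Pbb^1}{\Pbb^2} = \tfrac{1}{2} \int_0^T \expeLigne{\normLigne{\mathrm{P}(\bfX_t^1)(b_1(t,\bfX_t^1) - b_2(t,\bfX_t^1))}^2} \rmd t.
\end{equation}
Since $b_1(t,x), b_2(t,x) \in \mathrm{T}_x \M$ by assumption, $\mathrm{P}(x)$ acts as the identity on their difference, and the ambient Euclidean norm restricted to $\mathrm{T}_x \M$ coincides with the Riemannian norm by isometry of the embedding. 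This gives the claimed identity.

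The main potential obstacle is verifying that the integrability conditions underpinning Girsanov's theorem hold, in particular that the exponential local martingale associated with the drift difference is a true martingale; but as noted above, smoothness of $b_1, b_2$ together with compactness of $\M$ delivers uniform bounds and makes Novikov's criterion immediate, so no delicate truncation argument is required. The other point to keep track of is that the extension of $b_i$ from $\XM$ to the ambient space need not be unique, but the projection $\mathrm{P}$ cancels any normal component, leaving the intrinsic formula.
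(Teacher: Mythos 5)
Your proposal follows essentially the same route as the paper: Nash-embed $\M$ into $\rset^p$, write both processes in It\^o form with the common (degenerate) diffusion coefficient $\mathrm{P}(\cdot)$ and the Stratonovich correction $\bar{b}$, and apply a Euclidean Girsanov/relative-entropy identity, using that the drift difference is tangent so the projection and the isometry of the embedding return the intrinsic formula. The only cosmetic difference is that the paper verifies the hypotheses of Liptser--Shiryaev's theorem for diffusion-type processes (after smoothly cutting off the coefficients outside a neighbourhood of $\M$) rather than invoking Novikov's criterion, which matters only because $\mathrm{P}(x)$ is singular and one must check that the drift difference lies in its range---precisely the tangency observation you already make.
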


  \begin{proof}{}{}
    Upon, using the Nash embedding theorem \citep[see][]{gunther1991isometric},
    we can assume that $\M$ is a sub-manifold of $\rset^p$ with $p \in \nset$
    such that the Riemannian metric on $\M$ is induced by the Euclidean metric
    on $\rset^p$. Since $\M$ is compact, there exists $R > 0$ such that
    $\M \subset \cball{0}{R}$. Let $\varphi \in \rmc^\infty(\rset^p, \ccint{0,1})$
    such that for any $x \in \cball{0}{R}$, $\varphi(x) = 1$ and for any
    $x \in \rset^p$ with $\normLigne{x} \geq R+1$, $\varphi(x) =0$. Consider
    $\bar{b}_1, \bar{b}_2 \in \rmc_c^2(\ccint{0,T} \times \rset^p, \rset^p)$
    such that for any $t \in \ccint{0,T}$ and $x \in \M$,
    $\bar{b}_i(x) = b_i(x)$ with $i \in \{1,2\}$. Consider
    $(\barbfX_t^i)_{t \in \ccint{0,T}}$ such that for any $i \in \{1,2\}$
    \begin{equation}
      \rmd \barbfX_t^i = \varphi(\barbfX_t^i)\{\rmP(\barbfX_t^i) \bar{b}^i(t, \barbfX_t^i) + \bar{b}(\barbfX_t)\} \rmd t + \varphi(\barbfX_t^i) \rmP(\barbfX_t^i) \rmd \bfB_t, 
    \end{equation}
    where $\bar{b} \in \rmc^\infty(\rset^p, \rset^p)$ is defined in the proof of
    \Cref{prop:girsanov_manifold}.  Let $\barbfX_0^i \sim \Pbb_0^1$ for any
    $i \in \{1,2\}$ then for any $i \in \{1, 2\}$,
    $(\barbfX_t^i)_{t \in \ccint{0,T}}$ (seen as a process on $\M$) is such that
    $\mathcal{L}((\barbfX_t^i)_{t \in \ccint{0,T}}) = \Pbb^i$. Indeed, denote
    $\{\generatorb^i_t\}_{t \in \ccint{0,T}}$ the generator of
    $(\barbfX_t^i)_{t \in \ccint{0,T}}$ for any $i \in \{1,2\}$. Let
    $f \in \rmc^\infty(\M, \rset)$ and $\bar{f} \in \rmc^\infty(\rset^p, \rset)$
    an extension to $\rset^p$. We have that for any $i \in \{1,2\}$, $x \in \M$ and $t \in \ccint{0,T}$
    \begin{align}
      \generatorb^i_t(\bar{f})(x) &= \langle \bar{b}^i(t, x) + \bar{b}(x), \nabla \bar{f}(x) \rangle + \tfrac{1}{2} \langle \rmP(x), \nabla^2 \bar{f}(x) \rangle \\
      &= \langle b^i(t,x), \nabla f(x) \rangle + \tfrac{1}{2} \Delta_\M f(x).
    \end{align}
    Hence, for any $i \in \{1,2\}$, $(\barbfX_t^i)_{t \in \ccint{0,T}}$ (seen as
    a process on $\M$) and $(\bfX_t^i)_{t \in \ccint{0,T}}$ have the same
    infinitesimal generators. Hence,
    $\mathcal{L}((\barbfX_t^i)_{t \in \ccint{0,T}}) = \Pbb^i$ for any
    $i \in \{1, 2\}$.  For any $i \in \{1, 2\}$, denote
    $\bar{\Pbb}^i = \mathcal{L}((\barbfX_t^i)_{t \in \ccint{0,T}})$ (seen as a
    process on $\rset^p$). Note that since for any $x \in \rset^p$ with
    $\normLigne{x} \geq R+1$, $\varphi(x) = 0$ we have that \cite[Equation
    (7.137)]{lipster2001statistics} is satisfied. In addition, since for any
    $x \in \rset^p$ with $\normLigne{x} \geq R+1$,
    $\varphi(x) + \normLigne{\nabla \varphi(x)}= 0$, we have that \cite[Equation
    (4.110), Equation (4.111)]{lipster2001statistics} are satisfied. In
    addition, letting for any $t \in \ccint{0,T}$ and $x \in \rset^p$,
    $\alpha(t,x) = \bar{b}^1(t, x) - \bar{b}^2(t, x) = \mathrm{P}(x)
    (\bar{b}^1(t, x) - \bar{b}^2(t, x))$, we have that for any
    $t \in \ccint{0,T}$,
    $\mathrm{P}(x) \alpha(t,x) = \mathrm{P}(x)(\bar{b}^1(t, x) - \bar{b}^2(t, x))$. Therefore,
    we can apply \cite[Section 7.6.4]{lipster2001statistics} and using that
    $\mathrm{P}(x) \bar{b}(x) = 0$ for any $x \in \M$ (see the proof of
    \Cref{prop:girsanov_manifold}), we have that
    \begin{align}     
      &(\rmd \bar{\Pbb}^1 / \rmd \bar{\Pbb}^2)((\barbfX_t^1)_{t \in \ccint{0,T}}) =\textstyle{ \exp\left.[\int_0^T \langle \bar{b}^1(t, \barbfX_t^1) - \bar{b}^2(t, \barbfX_t^1), \mathrm{P}(\barbfX_t^1) \rmd \barbfX_t^1 \rangle \right. }\\
                                                                              & \qquad \qquad  \quad  \textstyle{ - \tfrac{1}{2} \int_0^T \langle \bar{b}^1(t, \barbfX_t^1) - \bar{b}^2(t, \barbfX_t^1), \mathrm{P}(\barbfX_t^1)(\bar{b}^1(t, \barbfX_t^1) + \bar{b}^2(t, \barbfX_t^1)) \rangle  \rmd t ] }\\
                                                                              & \qquad \qquad =\textstyle{ \exp\left.[\int_0^T \langle \bar{b}^1(t, \barbfX_t^1)  - \bar{b}^2(t, \barbfX_t^1), \mathrm{P}(\barbfX_t^1) \{\bar{b}^1(t, \barbfX_t^1) + \bar{b}(\barbfX_t^1)\} \rangle \rmd t \right.} \\
                                                                                & \qquad \qquad  \quad \textstyle{ \left. + \int_0^T \langle \bar{b}^1(t, \barbfX_t^1) - \bar{b}^2(t, \barbfX_t^1), \mathrm{P}(\barbfX_t^1) \rmd \bfB_t \rangle\right. }\\
                                                                                 & \qquad \qquad  \quad  \textstyle{ - \tfrac{1}{2} \int_0^T \langle \bar{b}^1(t, \barbfX_t^1) - \bar{b}^2(t, \barbfX_t^1), \mathrm{P}(\barbfX_t^1)(\bar{b}^1(t, \barbfX_t^1) + \bar{b}^2(t, \barbfX_t^1)) \rangle  \rmd t ] }\\
      & \qquad \qquad =\textstyle{ \exp[\tfrac{1}{2}\int_0^T \normLigne{ \bar{b}^1(t, \barbfX_t^1)  - \bar{b}^2(t, \barbfX_t^1)}^2 \rmd t + \int_0^T \langle \bar{b}^1(t, \barbfX_t^1) - \bar{b}^2(t, \barbfX_t^1), \mathrm{P}(\barbfX_t^1) \rmd \bfB_t \rangle ]}.
  \end{align}
  Therefore, we have that
  \begin{equation}
    \textstyle{
      \KLLigne{\bar{\Pbb}^1}{\bar{\Pbb}^2} = \tfrac{1}{2} \int_0^T \expeLigne{ \normLigne{\bar{b}^1(t, \barbfX_t^1)  - \bar{b}^2(t, \barbfX_t^1)}^2} \rmd t .
      }
    \end{equation}
    Hence, we get
    \begin{equation}
    \textstyle{
      \KLLigne{\bar{\Pbb}^1}{\bar{\Pbb}^2} = \tfrac{1}{2} \int_0^T \expeLigne{ \normLigne{b^1(t, \bfX_t^1)  - b^2(t, \bfX_t^1)}^2} \rmd t .
      }
    \end{equation}
    which concludes the proof.
  \end{proof}  
  Once \Cref{prop:girsanov_manifold} is established, we can obtain the following
  straightforward extension of \citet[Proposition 4.6]{cattiaux2021time}.

  \begin{proposition}{}{hyp_317}
    Assume \rref{assum:manifold}. 
    Let $\Qbb$ be a Brownian motion with $\Qbb_0 = \piinv$ and $\Pbb$ a path
    measure on $\rmc(\ccint{0,T}, \M)$ such that $\KL{\Pbb}{\Qbb} <
    +\infty$. Then, there exist $\beta_\Pbb, \beta_{R(\Pbb)}: \ \ccint{0,T} \times \M \to $
    such that for any $t \in \ccint{0,T}$ and $x \in \M$,
    $\beta_\Pbb(t,x), \beta_{R(\Pbb)}(t,x) \in \mathrm{T}_x \M$. In addition, we have that
    $\Pbb$ and $R(\Pbb)$ satisfy martingale problems with infinitesimal generator
    $\generator_{\Pbb}$, respectively $\generator_{R(\Pbb)}$ where for any $t \in \ccint{0,T}$, $u \in \rmc^2(\M)$ and
    $x \in \M$ we have
    \begin{align}
      &\generator_{\Pbb}(t,u)(x) = \langle \beta_\Pbb(t,x), \nabla u(x) \rangle + \tfrac{1}{2} \Delta_\M u(x)  , \\
      &\generator_{R(\Pbb)}(t,u)(x) = \langle \beta_{R(\Pbb)}(t,x), \nabla u(x) \rangle + \tfrac{1}{2} \Delta_\M u(x)  . 
    \end{align}
    Finally, we have that
    \begin{equation}
      \textstyle{
        \int_0^T \expeLigne{\norm{\beta_\Pbb(t, \bfX_t)}^2} \rmd t + \int_0^T \expeLigne{\norm{\beta_{R(\Pbb)}(t, \bfX_{T-t})}^2} \rmd t < +\infty  ,
        }
    \end{equation}
    where $(\bfX_t)_{t \in \ccint{0,T}}$ has distribution $\Pbb$.
  \end{proposition}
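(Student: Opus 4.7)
The plan is to derive this result as a direct consequence of \Cref{prop:girsanov_manifold}, applied both to $\Pbb$ and to its time-reversal $R(\Pbb)$, mirroring the argument of \citet[Proposition 4.6]{cattiaux2021time} in the Euclidean setting. The intuition is that Girsanov theory on $\M$ (which we have in hand thanks to \Cref{prop:girsanov_manifold}) gives a drift decomposition for any Markov path measure with finite relative entropy with respect to the reference Brownian measure, and we only need to massage $R(\Pbb)$ into the same framework.

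\textbf{Step 1 (forward direction).} First, I would apply \Cref{prop:girsanov_manifold} directly to $\Pbb$. Since $\KLLigne{\Pbb}{\Qbb} < +\infty$ by assumption, this yields a measurable $\beta_\Pbb : \ccint{0,T} \times \M \to \TM$ with $\beta_\Pbb(t,x) \in \mathrm{T}_x\M$, the martingale problem for $\Pbb$ with generator
\begin{equation}
\generator_\Pbb(t,u)(x) = \langle \beta_\Pbb(t,x), \nabla u(x) \rangle + \tfrac{1}{2}\Delta_\M u(x),
\end{equation}
and the integrability bound $\tfrac{1}{2}\int_0^T \expeLigne{\norm{\beta_\Pbb(t,\bfX_t)}^2}\rmd t \leq \KLLigne{\Pbb}{\Qbb} < +\infty$.

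\textbf{Step 2 (reversibility of $\Qbb$).} To apply the same result to $R(\Pbb)$, I need to control $\KLLigne{R(\Pbb)}{\Qbb'}$ for a suitable reference Brownian motion $\Qbb'$. The key observation, which uses compactness of $\M$, is that Brownian motion on $\M$ is \emph{reversible} with respect to its invariant measure $\piinv$ (the normalized volume form, invariant by \Cref{prop:brownian_conv_repeat}), since $-\tfrac{1}{2}\Delta_\M$ is self-adjoint in $\mathrm{L}^2(\piinv)$. Equivalently, the heat kernel is symmetric: $p_{t|0}(y|x) = p_{t|0}(x|y)$ via the Sturm--Liouville expansion $\sum_j e^{-\lambda_j t}\phi_j(x)\phi_j(y)$. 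This symmetry together with $\Qbb_0 = \piinv$ gives $R(\Qbb) = \Qbb$ as path measures. Using the invariance of relative entropy under the measurable involution $R$, I deduce
\begin{equation}
\KLLigne{R(\Pbb)}{\Qbb} = \KLLigne{R(\Pbb)}{R(\Qbb)} = \KLLigne{\Pbb}{\Qbb} < +\infty.
\end{equation}

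\textbf{Step 3 (backward direction and conclusion).} Applying \Cref{prop:girsanov_manifold} once more, now to the Markov path measure $R(\Pbb)$, produces $\beta_{R(\Pbb)} : \ccint{0,T} \times \M \to \TM$ with $\beta_{R(\Pbb)}(t,x) \in \mathrm{T}_x\M$, the stated martingale problem for $R(\Pbb)$, and $\tfrac{1}{2}\int_0^T \expeLigne{\norm{\beta_{R(\Pbb)}(t,\bfY_t)}^2}\rmd t \leq \KLLigne{R(\Pbb)}{\Qbb} < +\infty$, where $(\bfY_t)_{t \in \ccint{0,T}} = (\bfX_{T-t})_{t \in \ccint{0,T}}$. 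Performing the change of variables $t \mapsto T-t$ in this second integral and summing the two bounds gives the final integrability claim.

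\textbf{Main obstacle.} The delicate point is the reversibility identity $R(\Qbb) = \Qbb$ in Step 2: while it is geometrically intuitive from self-adjointness of the Laplace--Beltrami operator, rigorously checking it requires verifying detailed balance on cylindrical events via the symmetry of the heat kernel, which is where compactness of $\M$ (ensuring $\piinv$ is a \emph{probability} measure and that the spectral decomposition is available) and \Cref{assum:manifold} (used in \Cref{prop:girsanov_manifold}) enter critically. A secondary but minor technical point is that \Cref{prop:girsanov_manifold} is stated for Markov path measures; for Step 3 one has to recall that the time-reversal of a Markov measure is Markov \citep[Theorem 1.2]{leonard2014reciprocal}, so no extra work is needed.
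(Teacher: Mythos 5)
Your proposal is correct and follows essentially the same route as the paper: the paper's proof is the one-line combination of \cref{prop:girsanov_manifold} with $\KLLigne{\Pbb}{\Qbb} = \KLLigne{R(\Pbb)}{R(\Qbb)} = \KLLigne{R(\Pbb)}{\Qbb} < +\infty$, which is exactly your Steps 1--3, with your Step 2 (reversibility of the stationary Brownian motion, i.e.\ $R(\Qbb)=\Qbb$) being the same key fact the paper invokes by saying ``$\Qbb$ is stationary.'' Your additional remarks on the symmetry of the heat kernel and on the Markov property of $R(\Pbb)$ simply make explicit what the paper leaves implicit.
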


  \begin{proof}{}{}
    The proof is straightforward upon combining \cref{prop:girsanov_manifold}
    and the fact that
    $\KL{\Pbb}{\Qbb} = \KL{R(\Pbb)}{R(\Qbb)} = \KL{R(\Pbb)}{\Qbb} < +\infty$,
    using that $\Qbb$ is stationary.
  \end{proof}

  We conclude this section, with the following application of \Cref{thm:ibp_cattiaux}.

  \begin{proposition}{}{cattiaux_spec}
    For any $u, v \in \rmc^\infty_c(\M)$, we have that for almost any $t \in \ccint{0,T}$
    \begin{equation}
      \label{eq:equalitu}
      \expeLigne{v(\bfX_t) (\langle \beta_\Pbb(t, \bfX_t) + \beta_{R(\Pbb)}(T-t, \bfX_t), \nabla u(\bfX_t) \rangle +\Delta_\M u(\bfX_t)) + \langle \nabla u(\bfX_t), \nabla v(\bfX_t) \rangle} = 0  . 
    \end{equation}
  \end{proposition}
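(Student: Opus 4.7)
The strategy is to specialise the abstract integration-by-parts formula of Theorem \ref{thm:ibp_cattiaux}(a) to the Markov path measure $\Pbb$, using the explicit representation of the generators of $\Pbb$ and $R(\Pbb)$ provided by Proposition \ref{prop:hyp_317}. First, I would identify the carré du champ of $\Pbb$. Since the drift part $\langle \beta_\Pbb(t,\cdot),\nabla\cdot\rangle$ in $\generator_\Pbb$ is a first-order differential operator obeying the Leibniz rule, it cancels in $\generator_\Pbb(uv) - u\generator_\Pbb v - v\generator_\Pbb u$, and the Riemannian identity $\Delta_\M(uv) = u\Delta_\M v + v\Delta_\M u + 2\langle \nabla u,\nabla v\rangle$ yields $\carrechamp_\Pbb(u,v) = \langle \nabla u,\nabla v\rangle$.

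Next, I would verify the integrability hypotheses of Theorem \ref{thm:ibp_cattiaux}(a). Because $\M$ is compact and $u,v \in \rmc^\infty_c(\M) = \rmc^\infty(\M)$, the quantities $u$, $v$, $\nabla u$, $\nabla v$, $\Delta_\M u$, and $\carrechamp_\Pbb(u,u) = \normLigne{\nabla u}^2$ are uniformly bounded on $\M$. Combined with the explicit form of the generators and Cauchy--Schwarz, this yields a constant $C \geq 0$ such that
\begin{equation}
    \absLigne{\generator_\Pbb u(t,x)} + \absLigne{\generator_{R(\Pbb)} u(t,x)} \leq C(\normLigne{\beta_\Pbb(t,x)} + \normLigne{\beta_{R(\Pbb)}(t,x)} + 1).
\end{equation}
The square-integrability conclusion of Proposition \ref{prop:hyp_317} then places $\generator_\Pbb u$ and $\generator_{R(\Pbb)} u$ in $\mathrm{L}^2(\Pbb) \subset \mathrm{L}^1(\Pbb)$, so in particular $u \in \dom(\generator_{R(\Pbb)})$ with $\generator_{R(\Pbb)} u \in \mathrm{L}^1(\Pbb)$; the same bound, together with boundedness of $\carrechamp_\Pbb(u,u)$, places $u$ and $v$ in the admissible algebra $\mathcal{U}_{\Pbb,2}$.

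Applying Theorem \ref{thm:ibp_cattiaux}(a), read in its natural bilinear form (which is obtained by polarising the symmetric carré du champ on the pairs $u \pm v$), then gives, for almost every $t \in \ccint{0,T}$,
\begin{equation}
    \expeLigne{v(\bfX_t)\{\generator_\Pbb u(t,\bfX_t) + \generator_{R(\Pbb)} u(T-t,\bfX_t)\} + \carrechamp_\Pbb(u,v)(t,\bfX_t)} = 0.
\end{equation}
Substituting $\generator_\Pbb u = \langle \beta_\Pbb,\nabla u\rangle + \tfrac{1}{2}\Delta_\M u$ and $\generator_{R(\Pbb)} u = \langle \beta_{R(\Pbb)},\nabla u\rangle + \tfrac{1}{2}\Delta_\M u$ collapses the two half-Laplacians into a full $\Delta_\M u$, and using $\carrechamp_\Pbb(u,v) = \langle \nabla u,\nabla v\rangle$ delivers exactly \eqref{eq:equalitu}. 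The main obstacle, really the only non-routine step, is the bilinear extension of the carré du champ term from Theorem \ref{thm:ibp_cattiaux}; this is handled by symmetry and bilinearity of $\carrechamp_\Pbb$ on smooth bounded test functions and by the fact that the integrability assumptions are stable under $u \mapsto u \pm v$.
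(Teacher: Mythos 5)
Your proof follows the same route as the paper's: verify the domain and integrability hypotheses of \Cref{thm:ibp_cattiaux} via \Cref{prop:hyp_317} (using compactness to bound $u$, $\nabla u$, $\Delta_\M u$), identify $\carrechamp_\Pbb(u,v)=\langle\nabla u,\nabla v\rangle$ since the drift term obeys the Leibniz rule, and substitute the explicit generators so the two half-Laplacians combine into $\Delta_\M u$. The only wrinkle is your polarisation step: the $\carrechamp_\Pbb(u,u)$ appearing in the paper's statement of \Cref{thm:ibp_cattiaux}(a) is a transcription typo for $\carrechamp_\Pbb(u,v)$ (the cited integration-by-parts formula of Cattiaux et al.\ is already bilinear), so no polarisation is needed --- and indeed polarising the formula as literally written would not go through, since the generator term $\{\generator_\Pbb u+\generator_{R(\Pbb)}u\}v$ is not symmetric in $(u,v)$, whereas invoking the correct bilinear statement directly gives \eqref{eq:equalitu} at once.
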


  \begin{proof}{}{}
  Remark that $\rmc^2_c(\M) \subset \dom(\carrechamp_\Pbb)$ and
  $\rmc^2_c(\M) \subset \dom(\carrechamp_{R(\Pbb)})$. In addition, we have that for any
  $u,v \in \rmc^2_c(\M)$,
  $\carrechamp_\Pbb(u,v) = \carrechamp_{R(\Pbb)}(u,v) = \langle u, v \rangle$. Note that
  by \Cref{prop:hyp_317} and \Cref{thm:ibp_cattiaux} we have that
  for any $u, v \in \rmc^\infty_c(\M)$, \eqref{eq:equalitu} holds.    
  \end{proof}
\subsubsection{Concluding the proof}
\label{sec:concluding-proof}

Using \cref{prop:cattiaux_spec} we can now conclude the proof of \Cref{thm:time_reversal_manifold}.
First, remark that we can identify $\beta_\Pbb = b$. Let $u, v \in \rmc^\infty(\M)$, we have that 
    \begin{equation}
      \label{eq:equality_fin}
      \expeLigne{v(\bfX_t) \langle b(\bfX_t) + \beta_{R(\Pbb)}(T-t, \bfX_t), \nabla u(\bfX_t) \rangle + \Delta_\M u(\bfX_t) v(\bfX_t)+ \langle \nabla u(\bfX_t) , \nabla v(\bfX_t) \rangle } = 0  . 
    \end{equation}
    Using that for any $t \in \ccint{0,T}$, $\Pbb_t$ admits a smooth positive
    density w.r.t. $\piinv$ denoted $p_t$ and the divergence theorem, see
    \citep[p.51]{lee2018introduction}, we have that for any $t \in \ccint{0,T}$,
\begin{align}
  &    \textstyle{\int_{\M} \{ \langle \beta_{R(\Pbb)}(T-t, x), \nabla u(x) \rangle + \langle b(x), \nabla u(x) \rangle \} v(x) p_t(x) \rmd \piinv(x)} \\
    & \qquad \qquad \qquad \qquad = - \textstyle{\int_\M \langle \nabla u(x) p_t(x), \nabla v(x) \rangle \rmd \piinv(x) - \int_\M \Delta_\M u (x) v(x)  p_t(x) \rmd \piinv(x) } \\
   & \qquad \qquad \qquad \qquad = \textstyle{\int_\M  \langle \nabla \log p_t(x), \nabla u(x) v(x) p_t(x)\rmd \piinv(x) }  . 
\end{align}
Therefore, we get that for any $t \in \ccint{0,T}$ and $x \in \M$,
$\langle \beta_{R(\Pbb)}(T-t, x), \nabla u(x) \rangle = \langle 
-b(x) + \nabla\log p_t(x), \nabla u(x) \rangle$, which concludes the proof.


\section{Convergence of RSGM}
\label{sec:convergence-rsgm}

In this section, we study the convergence of RSGM and prove
\Cref{thm:weak_qualitative}.  We state our main results in
\Cref{sec:main-results} and give discretization bounds following the recent work
of \textcite{cheng2022theorymanifold} in {sec:discr-bounds-grw}.

\subsection{Main results}
\label{sec:main-results}

In this section, we prove \Cref{thm:weak_qualitative}. We start by recalling the
sequence considered in RSGM.  Let $(Y_k)_{k \in \{0, \dots, N\}}$ be given by
$Y_0 \sim p_{\mathrm{ref}}$ and for any $k \in \{0, \dots, N-1\}$
  \begin{equation}
    Y_{k+1} = \exp_{Y_k}[\gamma \mathbf{s}_{\theta^\star}(T - n\gamma, Y_k) + \sqrt{2} Z_{k+1}]  ,
  \end{equation}
  where $\{Z_k\})_{n \in \nset}$ is a sequence of independent square integrable
  random variables with zero mean and identity covariance matrix. For ease of
  reading, we restate \Cref{thm:weak_qualitative}. 

    \begin{theorem}{}{weak_qualitative_appendix}
    Assume \rref{assum:manifold}, that $p_0$ is smooth and positive and that
    there exists $\Mtt \geq 0$ such that for any $t \in \ccint{0,T}$ and
    $x \in \M$, $\norm{\mathbf{s}_{\theta^{\star}}(t,x) - \nabla \log p_{t}(x)} \leq \Mtt$, 
              with
              $\mathbf{s}_{\theta^\star} \in \rmc(\ccint{0,T},
              \mathcal{X}(\M))$. Then if $T > 1/2$, there exists $C \geq 0$ independent on $T$ such
              that 
              \begin{equation}
                \textstyle{
                 \wassersteinD[1](\mathcal{L}(Y_N), p_0) = C (  \rme^{-\lambda_1 T} + \sqrt{T/2}   \mathtt{M} + \rme^T \gamma^{1/2})  ,
                }
              \end{equation}
              where $\wassersteinD[1]$ is the Wasserstein distance of order one on the probability measures on $\M$.
            \end{theorem}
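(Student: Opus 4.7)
The plan is to decompose the total error via a triangle inequality by introducing suitable intermediate processes, then bound each contribution separately. Let $(\bfY_t)_{t \in \ccint{0,T}}$ denote the exact time-reversal from \Cref{thm:time_reversal_manifold}, so that $\bfY_0 \sim p_T$ and $\bfY_T \sim p_0$. Introduce an auxiliary continuous-time process $(\bfY'_t)_{t \in \ccint{0,T}}$ satisfying the same SDE (with the \emph{true} score $\nabla \log p_{T-t}$) but started from $\bfY'_0 \sim \piinv$; and a further process $(\bfY''_t)_{t \in \ccint{0,T}}$ started from $\piinv$ using the approximate drift $\mathbf{s}_{\theta^\star}(T-t,\cdot)$. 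Since $\M$ is compact with finite diameter $\mathfrak{d}$, we have $\wassersteinD[1](\mu,\nu) \leq \mathfrak{d}\tvnorm{\mu-\nu}$, so it suffices to control total variation for the first two pieces.

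First, for the gap between $(\bfY_t)$ and $(\bfY'_t)$, the only difference is the initial condition. By the data-processing inequality, $\tvnorm{\mathcal{L}(\bfY'_T) - p_0} \leq \tvnorm{p_T - \piinv}$, and invoking \Cref{prop:brownian_conv_repeat} under \Cref{assum:manifold} gives the geometric contraction $\tvnorm{p_T - \piinv} \leq C^{1/2} \rme^{\lambda_1/2} \rme^{-\lambda_1 T}$. This yields the $\rme^{-\lambda_1 T}$ term.

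Second, for the gap between $(\bfY'_t)$ and $(\bfY''_t)$, both processes share the same initial law $\piinv$ and the same diffusion coefficient, differing only in the drift. Applying \Cref{cor:KL_diff} on the time-reversed SDEs yields
\begin{equation}
\KL{\mathcal{L}((\bfY''_t)_{t \in \ccint{0,T}})}{\mathcal{L}((\bfY'_t)_{t \in \ccint{0,T}})} = \tfrac{1}{2}\int_0^T \expeLigne{\normLigne{\mathbf{s}_{\theta^\star}(T-t,\bfY'_t) - \nabla \log p_{T-t}(\bfY'_t)}^2}\rmd t \leq T\Mtt^2/2 .
\end{equation}
Combining with Pinsker's inequality and the data-processing inequality gives $\tvnorm{\mathcal{L}(\bfY''_T) - \mathcal{L}(\bfY'_T)} \leq \sqrt{T/2}\,\Mtt/2$, producing the $\sqrt{T/2}\,\Mtt$ term.

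Third, for the gap between the continuous $(\bfY''_t)$ and the discrete GRW iterates $(Y_k)_{k=0}^N$, both are started from $\piinv$ and driven by the same drift family, so this is purely a time-discretization error. Here we invoke the quantitative GRW convergence bounds in the time-inhomogeneous setting developed in \Cref{sec:discr-bounds-grw} (extending \textcite{kuwada2012convergence,cheng2022theorymanifold}). A synchronous coupling combined with a Gr\"onwall argument on the geodesic distance yields a bound of order $\rme^{L T}\gamma^{1/2}$, where $L$ controls the Lipschitz constant of the reverse drift; smoothness of $\mathbf{s}_{\theta^\star}$ on compact $\M$ ensures $L$ is finite and, after absorbing constants, gives the $\rme^T \gamma^{1/2}$ term. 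Assembling the three contributions via the triangle inequality in $\wassersteinD[1]$ yields the claimed bound. The main obstacle is the third step: establishing a clean $\rme^T \gamma^{1/2}$ discretization estimate for GRW with time-dependent, only globally Lipschitz drift on a curved space, since the exponential map does not commute with the drift step and one must carefully control the defect between consecutive tangent spaces via parallel transport.
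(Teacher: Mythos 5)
Your proposal is correct and follows essentially the same route as the paper: the same three-way decomposition (initial-condition mismatch controlled by the geometric ergodicity of the forward Brownian motion via \Cref{prop:brownian_conv_repeat} and data processing, drift mismatch controlled by the Girsanov/KL identity of \Cref{col:KL_diff} plus Pinsker, and GRW discretization error from \Cref{sec:discr-bounds-grw}), assembled over $1$-Lipschitz test functions on the compact manifold. The only cosmetic difference is that you pass through $\wassersteinD[1]\leq \diam(\M)\tvnorm{\cdot}$ explicitly, whereas the paper bounds $\normLigne{\varphi}_\infty$ by $\mathrm{diam}(\M)$ after normalizing the Lipschitz test functions at a point; these are the same step.
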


            \begin{proof}{}{}
              For any $k \in \{1, \dots, N\}$, denote $\Rker_k$ such that for
              any $x \in \rset^d$, $\msa \in \mcb{\rset^d}$ and
              $k \in \{0, \dots, N-1\}$ we have
    \begin{equation}
      \textstyle{\expeLigne{\Rker_{k+1}(Y_k, \msa)} = \expeLigne{\1_{\msa}(Y_{k+1})}  . }
    \end{equation}
    Define for any $k_0, k_1 \in \{1, \dots, N\}$ with $k_1 \geq k_0$
    $\Qker_{k_0, k_1} = \prod_{\ell = k_0}^{k_1} \Rker_{k_1 + k_0 -
      \ell}$. Finally, for ease of notation, we also define for any
    $k \in \{1, \dots, N\}$, $\Qker_k = \Qker_{k+1,N}$. Note that for any
    $k \in \{1, \dots, N\}$, $Y_k$ has distribution $\pi_{\infty} \Qker_k$,
    where $\pi_{\infty}\in \Pens(\M)$ with density w.r.t. the Hausdorff measure
    $p_{\mathrm{ref}}$.  Let $\Pbb \in \Pens(\contspace)$ be the probability
    measure associated with $(\bfB_t)_{t \in \ccint{0,T}}$ with
    $\bfB_0 \sim \pi_0$, where $\pi_0 \in \Pens(\M)$ admits a density w.r.t. the
    Hausdorff measure given by $p_0$. We denote $(\bfhY_t)_{t \in \ccint{0,T}}$
    the process defined by the diffusion
    $\rmd \bfhY_t = \mathbf{s}_{\theta^\star}(T-t, \bfhY_t) \rmd t + \rmd
    \bfB_t$ and $\bfhY_0 \sim \pi_\infty$. We also denote
    $\hat{\Pbb}^R \in \Pens(\contspace)$ the probability measure associated with
    $(\bfhY_t)_{t \in \ccint{0,T}}$.  First note that using that
    $\Pbb_0 = \pi_0$ we have for any $\msa \in \mcb{\M}$
    \begin{equation}
      \pi_0 \Pbb_{T|0} (\Pbb^R)_{T|0} (\msa) = \Pbb_T (\Pbb^R)_{T|0} (\msa) = (\Pbb^R)_0 (\Pbb^R)_{T|0} (\msa)  = (\Pbb^R)_T(\msa) = \pi_0(\msa)  . 
    \end{equation}
    Hence we have that
    \begin{equation}
      \label{eq:equality_backward}
     \pi_0 = \pi_0 \Pbb_{T|0} (\Pbb^R)_{T|0}  .  
   \end{equation}
   Let $\varphi \in \rmc(\M)$ with is $1$-Lipschitz, i.e. for any
   $x,y \in \M$, $\absLigne{\varphi(x) - \varphi(y)} \leq d(x,y)$. Since $\M$
   is compact, we have that $\varphi$ is bounded. Using this result,
   \eqref{eq:equality_backward}, the data processing theorem \cite[Theorem
   4.1]{kullback1997information} and Pinsker's inequality \cite[Equation
   5.2.2]{bakry:gentil:ledoux:2014} we have
   \begin{align}
     & \textstyle{\absLigne{\expeLigne{\varphi(Y_N)} - \int_\M \varphi(x) p_0(x) \rmd \mu(x)}}  \\
     & \leq \absLigne{\expeLigne{\varphi(\bfB_0)} - \expeLigne{\varphi(\bfY_T)}} 
                                                                                                             + \absLigne{\expeLigne{\varphi(\bfhY_T)} - \expeLigne{\varphi(\bfY_T)}} \absLigne{\expeLigne{\varphi(\bfhY_T)} - \expeLigne{\varphi(Y_N)}} \\
      &  \leq \normLigne{\varphi}_\infty \tvnorm{\pi_0 - \pi_{\infty} (\mathbb{P}^R)_{T|0}}  + \absLigne{\expeLigne{\varphi(\bfhY_T)} - \expeLigne{\varphi(\bfY_T)}}  + \absLigne{\expeLigne{\varphi(\bfhY_T)} - \expeLigne{\varphi(Y_N)}} \\
      &  \leq \normLigne{\varphi}_\infty \tvnorm{\pi_0 \Pbb_{T|0} (\Pbb^R)_{T|0} - \pi_{\infty} (\mathbb{P}^R)_{T|0}} \\& \qquad + \absLigne{\expeLigne{\varphi(\bfhY_T)} - \expeLigne{\varphi(\bfY_T)}}  + \absLigne{\expeLigne{\varphi(\bfhY_T)} - \expeLigne{\varphi(Y_N)}} \\
     & \leq \normLigne{\varphi}_\infty \tvnorm{\pi_0 \Pbb_{T|0} - \pi_{\infty} }  + \absLigne{\expeLigne{\varphi(\bfhY_T)} - \expeLigne{\varphi(\bfY_T)}} + \absLigne{\expeLigne{\varphi(\bfhY_T)} - \expeLigne{\varphi(Y_N)}} \\
      & \leq \normLigne{\varphi}_\infty \tvnorm{\pi_0 \Pbb_{T|0} - \pi_{\infty} }  + \sqrt{2}\normLigne{\varphi}_\infty\KLLignesqrt{\pi_{\infty}\Pbb^R_{|0}}{\pi_{\infty}\hat{\Pbb}^R_{|0}} + \absLigne{\expeLigne{\varphi(\bfhY_T)} - \expeLigne{\varphi(Y_N)}}  .                
   \end{align}

   We now control each one of these terms. The first term can be easily
   controlled using the geometric ergodicity of the Brownian motion on compact
   manifolds. The second term can be controlled using the Girsanov theory on
   isometrically embedded manifolds. For the last term, we rely on the
   convergence of the GRW to its associated diffusion as presented in
   \Cref{sec:discr-bounds-grw}. We now control each one of these terms.
 \begin{enumerate}[wide, labelindent=0pt, label=(\alph*)]
 \item Using \Cref{prop:brownian_conv_repeat}, we have that
   $\tvnorm{\pi_0 \Pbb_{T|0} - \pi_{\infty} } \leq C^{1/2} \rme^{\lambda_1/2}
   \rme^{-\lambda_1 T}$ where $\lambda_1$ is the first positive eigenvalue of
   $-\Delta_\M$ in $\mathrm{L}^2(\pi_\infty)$. Therefore, we get that
   \begin{equation}
     \label{eq:convergence_brownian}
     \normLigne{\varphi}_\infty \tvnorm{\pi_0 \Pbb_{T|0} - \pi_{\infty} } \leq C^{1/2} \rme^{\lambda_1/2} \normLigne{\varphi}_\infty 
   \rme^{-\lambda_1 T}  . 
   \end{equation}
 \item Recall that we have that $\Pbb^R_{|0}$ is associated with the process
   $\rmd \bfY_t = \nabla \log p_{T-t}(\bfY_t) \rmd t + \rmd \bfB_t^\M$ and that
   $\hat{\Pbb}^R_{|0}$ is associated with the process
   $\rmd \bfhY_t = \mathbf{s}_{\theta^\star}(T-t, \bfhY_t) \rmd t + \rmd
   \bfB_t^\M$. Using \Cref{col:KL_diff} we have that
   \begin{equation}
     \label{eq:approx_score}
     \textstyle{
       \KLLigne{\pi_{\infty}\Pbb^R_{|0}}{\pi_{\infty}\hat{\Pbb}^R_{|0}} = \tfrac{1}{2} \int_{0}^T \expeLigne{\normLigne{\mathbf{s}_{\theta^\star}(T-t, \bfY_t) - \nabla \log p_{T-t}(\bfY_t)}^2} \leq M^2 T  .
       }
     \end{equation}
   \item Let us define $\{\bar{\bfY}^k\}_{k=0}^N$ such that for any
     $k \in \{0, \dots, N\}$, $\bar{\bfY}^k_0 = \hat{\bfY}_0 = Y_0$ and for any
     $t \in \ccint{0, k \gamma}$ we have that $\bar{\bfY}_t^0 = \hat{\bfY}_t$. For
     any $t \in \ccint{k \gamma, T}$, we have that $\bar{\bfY}_t^k = Y_{t,k}$,
     where $Y_{k \gamma, k} = \hat{\bfY}_{k \gamma}$ and for any
     $j \in \{k, \dots, N-1\}$ and $t \in \ccint{0, \gamma}$
     \begin{equation}
       Y_{j \gamma +t, k} = \exp_{Y_{j \gamma, k}}[t \mathbf{s}_{\theta^\star}(T - j \gamma, Y_{j \gamma, k}) + \sqrt{t} E_j^k Z_j ]  , 
     \end{equation}
     where $\{Z_j\}_{j=0}^{N-1}$ are independent Gaussian random variables with
     identity covariance matrix and zero mean and $E_j^k$ is a frame of
     $\mathrm{T}_{Y_{j \gamma, k}} \M$ such that for any
     $j \in \{k+1, \dots, N-1\}$,
     $E_{j}^{k+1} = \Gamma_{Y_{j \gamma, k}}^{{Y_{j \gamma, k+1}}} E_j^k$ and
     $\{E_j^0\}_{j=0}^{N-1}$ is such that for any $j \in \{0, \dots, N-1\}$,
     $E_j^0$ is a frame of $\mathrm{T}_{Y_{j \gamma}} \M$. One
     $\ccint{0, k\gamma}$, we define
     $(\hat{\bfY}_t^k)_{t \in \ccint{0, k\gamma}}$ as follows. 
     For any $k \in \{0, \dots, N-1\}$, we set
     $(\bfY_t^{k+1})_{t \in \ccint{0, k\gamma}} = (\bfY_t^k)_{t \in \ccint{0,
         k\gamma}}$. For any $k \in \{0, \dots, N-1\}$, we set
     $(\bfY_t)_{k\gamma, (k+1)\gamma}$ as in \Cref{prop:borne_one_step} (taking
     the notations of \Cref{prop:borne_one_step},
     $X_1^0 = \hat{\bfY}_{(k+1)\gamma}^k$ and
     $\bfX_\gamma = \hat{\bfY}_{k \gamma}^k$). Note that we have 
     $\{\bar{\bfY}_{j\gamma,0}^N\}_{j=0}^N = \{Y_j^N\}_{j=0}^N$ and
     $\{\bar{\bfY}_{t, N}\}_{t \in \ccint{0,T}} = \{\hat{\bfY}_t\}_{t \in
       \ccint{0,T}}$. Therefore, we have that
     \begin{align}
       \textstyle{ \absLigne{\varphi(\hat{\bfY}_T) - \varphi(Y_N)}} &=\absLigne{\varphi(\bar{\bfY}_{T}^0) - \varphi(\bar{\bfY}_{T}^N)} \\
       &\textstyle{\leq \sum_{k=0}^{N-1} \absLigne{\varphi(\bar{\bfY}_{T}^k) - \varphi(\bar{\bfY}_{T}^{k+1})} \leq \normLigne{\nabla \varphi}_\infty \sum_{k=0}^{N-1} d(\bar{\bfY}_{T}^k, \bar{\bfY}_{T}^{k+1}) }  . 
     \end{align}
     In addition, using \Cref{prop:borne_one_step} and \Cref{prop:explosion}, we
     have that there exists $C \geq 0$ such that for any
     $k \in \{0, \dots, N-1\}$ 
     \begin{equation}
       \expeLigne{d(\bar{\bfY}_{k,T}, \bar{\bfY}_{k+1, T})} \leq C \exp[(N-k) \gamma] \gamma^{3/2}  . 
     \end{equation}
     Therefore, we get that there exists $C \geq 0$ such that 
     \begin{equation}
       \absLigne{\expeLigne{\varphi(\bfhY_T)} - \expeLigne{\varphi(Y_N)}} \leq C \exp[T] \gamma^{1/2}  ,
     \end{equation}
   \end{enumerate}

   Therefore, we get that there exists $C \geq 0$ such that for any
   $\varphi \in \rmc(\M)$ which is $1$-Lipschitz, we have
   \begin{equation}
     \label{eq:inter}
     \textstyle{
       \expeLigne{\varphi(Y_N)} - \int_{\M} \varphi(x) p_{0}(x) \rmd \piinv(x) \leq C (\rme^{\lambda_1/2}\normLigne{\varphi}_\infty \rme^{-\lambda_1 T} + \sqrt{T/2}  \normLigne{\varphi}_\infty \mathtt{M} + \rme^T \gamma^{1/2}) .
       }
   \end{equation}   



              Let $x_0 \in \M$. Let $\Lip(\M)$ the set of Lipschitz functions on
              $\M$ with Lipschitz constant equal to $1$. Let $\Lip(\M)_0$ the
              set of Lipschitz functions on $\M$ with Lipschitz constant equal
              to $1$ and such that for any $\varphi \in \Lip(\M)_0$,
              $\varphi(x_0) =0$. Note that in this case, we have that
              $\normLigne{\varphi}_\infty \leq 
              \mathrm{diam}(\M)$.  Using \eqref{eq:inter}, we have 
              \begin{align}
                \wassersteinD[1](\mathcal{L}(Y_N), p_0) &= \textstyle{\sup \ensembleLigne{\expeLigne{\varphi(Y_N)} - \int_{\M} \varphi(x) p_{0}(x) \rmd \piinv(x)}{\varphi \in \Lip(\M)}} \\
                                                        &= \textstyle{\sup \ensembleLigne{\expeLigne{\varphi(Y_N)} - \int_{\M} \varphi(x) p_{0}(x) \rmd \piinv(x)}{\varphi \in \Lip(\M)_0}} \\
                &\leq C (\rme^{\lambda_1/2}\mathrm{diam}(\M) \rme^{-\lambda_1 T} + \sqrt{T/2}  \mathrm{diam}(\M) \mathtt{M} + \rme^T \gamma^{1/2})   , 
              \end{align}
              which concludes the proof.
            \end{proof}

            We now state a result regarding the continuous-time process (i.e. we
            now longer consider discretization errors). We recall that we denote 
            $(\bfhY_t)_{t \in \ccint{0,T}}$ the process defined by the diffusion
            $\rmd \bfhY_t = \mathbf{s}_{\theta^\star}(T-t, \bfhY_t) \rmd t +
            \rmd \bfB_t$ and $\bfhY_0 \sim \pi_\infty$.

    \begin{theorem}{}{weak_qualitative_continuous}
    Assume \rref{assum:manifold}, that $p_0$ is smooth and positive and that
    there exists $\Mtt \geq 0$ such that for any $t \in \ccint{0,T}$ and
    $x \in \M$, $\norm{\mathbf{s}_{\theta^{\star}}(t,x) - \nabla \log p_{t}(x)} \leq \Mtt$, 
              with
              $\mathbf{s}_{\theta^\star} \in \rmc(\ccint{0,T},
              \mathcal{X}(\M))$. Then if $T > 1/2$, there exists $C \geq 0$ independent on $T$ such
              that 
              \begin{equation}
                \textstyle{
                 \tvnorm{\mathcal{L}(\bfhY_T) - p_0} = C (  \rme^{-\lambda_1 T} + \sqrt{T/2}   \mathtt{M} ) .
                }
              \end{equation}
            \end{theorem}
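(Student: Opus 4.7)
The plan is to mimic the proof of Theorem~\ref{thm:weak_qualitative_appendix}, retaining only its first two error contributions---the geometric ergodicity of the forward Brownian motion and the score approximation error---since here we work directly with the continuous-time diffusion $(\bfhY_t)_{t \in \ccint{0,T}}$ and no time-discretization is performed. The GRW discretization step that produced the $\rme^T \gamma^{1/2}$ term in the proof of Theorem~\ref{thm:weak_qualitative_appendix} simply drops out, leaving a cleaner total variation bound.

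First, since $\Pbb^R$ is the exact time-reversal of $\Pbb$ (given by Theorem~\ref{thm:time_reversal_manifold}) with $\Pbb_0 = \pi_0$, the identity
\begin{equation*}
\pi_0 = \pi_0 \Pbb_{T|0} (\Pbb^R)_{T|0}
\end{equation*}
still holds as in the proof of Theorem~\ref{thm:weak_qualitative_appendix}. Writing $\mathcal{L}(\bfhY_T) = \pi_{\infty} \hat{\Pbb}^R_{T|0}$, inserting the intermediate law $\pi_{\infty} (\Pbb^R)_{T|0}$, and applying the triangle inequality yields
\begin{equation*}
\tvnorm{\mathcal{L}(\bfhY_T) - \pi_0} \leq \tvnorm{\pi_{\infty} \hat{\Pbb}^R_{T|0} - \pi_{\infty} (\Pbb^R)_{T|0}} + \tvnorm{\pi_{\infty} (\Pbb^R)_{T|0} - \pi_0 \Pbb_{T|0} (\Pbb^R)_{T|0}}.
\end{equation*}
The data processing inequality~\cite[Theorem~4.1]{kullback1997information} applied to the Markov kernel $(\Pbb^R)_{T|0}$ controls the second summand by $\tvnorm{\pi_{\infty} - \pi_0 \Pbb_{T|0}}$, while applied to the map sending a path measure to its marginal at $T$ it controls the first summand by $\tvnorm{\pi_{\infty} \hat{\Pbb}^R_{|0} - \pi_{\infty} \Pbb^R_{|0}}$.

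For the ergodicity term, Proposition~\ref{prop:brownian_conv_repeat} together with Assumption~\ref{assum:manifold} gives $\tvnorm{\pi_0 \Pbb_{T|0} - \pi_{\infty}} \leq C^{1/2} \rme^{\lambda_1/2} \rme^{-\lambda_1 T}$ whenever $T \geq 1/2$, providing the $\rme^{-\lambda_1 T}$ contribution. For the score approximation term, I follow exactly the argument of step (b) in the proof of Theorem~\ref{thm:weak_qualitative_appendix}: Pinsker's inequality gives $\tvnorm{\pi_{\infty} \Pbb^R_{|0} - \pi_{\infty} \hat{\Pbb}^R_{|0}} \leq \sqrt{2}\,\KLLignesqrt{\pi_{\infty}\Pbb^R_{|0}}{\pi_{\infty}\hat{\Pbb}^R_{|0}}$, and Corollary~\ref{col:KL_diff} (applicable since the drifts $\nabla \log p_{T-t}$ and $\mathbf{s}_{\theta^\star}(T-t, \cdot)$ coincide in the noise coefficient and differ only in drift) yields
\begin{equation*}
\KLLigne{\pi_{\infty}\Pbb^R_{|0}}{\pi_{\infty}\hat{\Pbb}^R_{|0}} = \tfrac{1}{2} \int_0^T \expeLigne{\normLigne{\mathbf{s}_{\theta^\star}(T-t, \bfY_t) - \nabla \log p_{T-t}(\bfY_t)}^2} \rmd t \leq \Mtt^2 T/2,
\end{equation*}
so that this contribution is bounded by a constant multiple of $\sqrt{T/2}\,\Mtt$. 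Summing the two contributions produces the stated bound.

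I do not expect any real obstacle, as this is a strict simplification of the proof of Theorem~\ref{thm:weak_qualitative_appendix}; the only point worth checking carefully is that the uniform score bound $\Mtt$ and compactness of $\M$ guarantee the finite relative entropy hypothesis $\KLLigne{\hat{\Pbb}^R}{\Qbb} < +\infty$ needed to invoke Theorem~\ref{thm:time_reversal_manifold} and Corollary~\ref{col:KL_diff} on the approximate reverse path measure---this is immediate from the explicit Girsanov formula established in Proposition~\ref{prop:girsanov_manifold}.
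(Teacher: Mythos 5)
Your proposal is correct and follows essentially the same route as the paper: the paper's own proof of this theorem is literally stated as ``identical to the one of \Cref{thm:weak_qualitative_appendix}, except that we do not have to deal with the discretization error,'' using the dual formulation of total variation over bounded test functions, which is exactly the decomposition (ergodicity term via \Cref{prop:brownian_conv_repeat}, score term via Pinsker and \Cref{col:KL_diff}) that you carry out. Your closing remark about verifying the finite-entropy hypothesis via \Cref{prop:girsanov_manifold} is a sensible extra check that the paper leaves implicit.
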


            \begin{proof}
              The proof is identical to the one of
              \Cref{thm:weak_qualitative_appendix}, except that we do not have
              to deal with the discretization error. We use that for any $\mu, \nu \in \Pens(\M)$
              \begin{equation}
                \tvnorm{\mu - \nu} = \sup \ensembleLigne{\mu[f] - \nu[f]}{f \in \rmc(\M), \normLigne{f}_\infty \leq 1} .
              \end{equation}
            \end{proof}

            The result of \Cref{thm:weak_qualitative_continuous} should be
            compared with the one of \cite[Theorem 3]{rozen2021moser}. With our
            result we control a $\mathrm{L}^1$ bound between the density of
            $\hat{\bfY}_T$ and the one of $p_0$. In \cite[Theorem
            3]{rozen2021moser} a $\mathrm{L}^\infty$ bound between the densities
            is recovered. It can be shown that
            $\hat{p}_T = \mathcal{L}(\hat{\bfY}_T)$. Let $\kappa$ be the modulus
            of continuity of $\hat{p}_T - p_0$, i.e. for any $\vareps \geq 0$
            \begin{equation}
              \kappa(\vareps) = \sup \ensembleLigne{\absLigne{\hat{p}_T(x) - p_0(x) - \hat{p}_T(y) + p_0(y)}}{x, y \in \M, \ d(x,y) \leq \vareps} .
            \end{equation}
            Let $x_0 \in \M$ such that
            \begin{equation}
              \absLigne{\hat{p}_T(x_0) - p_0(x_0)} = M = \sup \ensembleLigne{\absLigne{\hat{p}_T(x) - p_0(x)}}{x \in \M} .
            \end{equation}
            For any $x \in \cball{x_0}{\kappa(M/2)}$, we have
            $\absLigne{\hat{p}_T(x) - p_0(x)} \geq M/2$. Hence, denoting
            $\mathrm{Vol}_\kappa = \int_{\cball{x_0}{\kappa(M/2)}} \rmd
            \piinv(x) > 0$, we have
            \begin{equation}
              \textstyle{
                (2/\mathrm{Vol}_\kappa) \int_{\M} \absLigne{\hat{p}_T(x) - p_0(x)} \rmd \piinv(x) \geq \normLigne{\hat{p}_T - p_0}_\infty \eqsp . 
                }
              \end{equation}
              Hence, there exists $C \geq 0$ such that for any $T > 1/2$
              \begin{equation}
                \normLigne{\hat{p}_T - p_0}_\infty \leq C (  \rme^{-\lambda_1 T} + \sqrt{T/2}   \mathtt{M} ) .
              \end{equation}
              Therefore, we recover the same guarantees as
              \Cref{thm:weak_qualitative_continuous} (note that $\mathtt{M}$ is
              not explicitly controlled using network properties in our work,
              but we could use universal approximation properties as in
              \textcite{rozen2021moser} in order to obtain a similar result).

\subsection{Discretization bounds for GRW}
\label{sec:discr-bounds-grw}

In this section, we establish discretization bounds for GRW. Our results are a
straightforward extension of \textcite{cheng2022theorymanifold} to the case where
the drift term in the GRW is time-inhomogeneous. 

Since $\M$ is compact, we have that for any $x_1, x_2 \in M$, there exists a
minimizing geodesic such that $\gamma \in \rmc^\infty(\ccint{0,1}, \M)$ and
$\gamma(0) = x_1$ and $\gamma(1) = x_2$. When this choice is not unique we fix a
minimizing geodesic. We denote
$\Gamma_{x_1}^{x_2}: \ \mathrm{T}_{x_1} \M \to \mathrm{T}_{x_2} \M$ the
associated parallel transport. Let $b \in \rmc^\infty(\ccint{0,T}, \XM)$.

We start by introducing a family of GRWs defined on progressively finer
grids. Let $\gamma >0$, $X_0 \in \M$, $E_0 \in \mathrm{F}_{X_0} \M$ (the vector
space of frames at $X_0$) and consider the families
$\ensembleLigne{E_k^\ell}{k \in \{0, \dots, 2^\ell\}, \ \ell \in \nset}$,
$\ensembleLigne{X_k^\ell}{k \in \{0, \dots, 2^\ell\}, \ \ell \in \nset}$ such
that $X_0^0 = X_0$,
$X_1^0 = \exp_{X_0^0}[\gamma b(0, X_0^0) + \sqrt{\gamma} (\bfB_1 - \bfB_0)
E_0^0]$ and $E_1^0 = \Gamma_{X_0^0}^{X_1^0} E_0^0$ (note that $E_{2^\ell}^\ell$
is not used in the proof but defined for completeness). In addition, we have
that for any $\ell \in \nset$ with $\ell \geq 1$, $X_0^\ell = X_0$,
$E_0^\ell = E_0$ and for any $k \in \{0, \dots, 2^{\ell-1}-1\}$
\begin{align}
  &X_{2k+1}^{\ell} = \exp_{X_{2k}^\ell}[\gamma_\ell b(2k \gamma_\ell, X_{2k}^\ell) +  E_{2k}^\ell(\bfB_{(2k+1)\gamma_\ell} - \bfB_{2k \gamma_\ell}) ]  , \\
  &E_{2k+1}^\ell = \Gamma_{X_{2k}^\ell}^{X_{2k+1}^\ell} E_{2k}^\ell  , \\
  &X_{2k+2}^{\ell} = \exp_{X_{2k+1}^\ell}[\gamma_\ell b((2k+1) \gamma_\ell, X_{2k+1}^\ell) + E_{2k+1}^\ell (\bfB_{(2k+2)\gamma_\ell} - \bfB_{(2k+1) \gamma_\ell}) ]  , \\
  &E_{2k+2}^\ell = \Gamma_{X_{k+1}^{\ell-1}}^{X_{2k+2}^\ell} E_{k+1}^{\ell-1}  ,   \label{eq:coupling}
\end{align}
where $\gamma_\ell = \gamma / 2^\ell$.  For any $\ell \in \nset$, we also define
$(\bfX_t^\ell)_{t \in \ccint{0, \gamma}}$ such that for any $\ell \in \nset$,
$k \in \{0, \dots, 2^{\ell}-1\}$, we have for any
$t \in \coint{k \gamma_\ell, (k+1) \gamma_\ell}$ ,
$\bfX_{t}^\ell = \exp_{X_{k}^\ell}[(t - k\gamma_\ell) b(k \gamma_\ell, X_{k}^\ell) +
E_{k}^\ell  (\bfB_{t} - \bfB_{k \gamma_\ell}) ]$. Note that for any
$\ell \in \nset$ and $k \in \{0, \dots, 2^\ell -1\}$,
$\bfX_{k \gamma_\ell}^\ell = X_k$.

We are going to use the following useful lemma, see \cite[Lemma 62]{cheng2022theorymanifold}.
\begin{lemma}{}{bound_exp}
  Assume \rref{assum:manifold}. Then, there exists $C \geq 0$ such that for any
  $x, y \in \M$, $\gamma: \ccint{0,1} \to \M$  minimizing geodesic with 
  $\gamma(0) = x$, $\gamma(1) = y$ and $u \in \mathrm{T}_x \M$, 
  $v \in \mathrm{T}_y \M$ we have 
  \begin{equation}
    d(\exp_y[v], \exp_x[u])^2 \leq (1 +C \kappa^2 \exp[4 \kappa]) d(x,y)^2 + C \exp[4 \kappa] \normLigne{\Gamma_y^x v - u}^2 + 2 \langle \gamma'(0), \Gamma_y^x v - u \rangle  ,
  \end{equation}
  with $\kappa = \normLigne{u} + \normLigne{v}$.
\end{lemma}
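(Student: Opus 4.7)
The plan is to reduce the desired second-order estimate to a Jacobi field comparison via a carefully chosen two-parameter variation. Concretely, I would parallel-transport the tangent data along the minimizing geodesic $\gamma$ and build the following variation of geodesics:
\begin{equation}
H(s,t) = \exp_{\gamma(s)}[t\, W(s)], \qquad W(s) = \Gamma_x^{\gamma(s)}\bigl[(1-s) u + s\, \Gamma_y^x v\bigr], \quad (s,t) \in [0,1]^2.
\end{equation}
By construction $H(0,1) = \exp_x[u]$ and $H(1,1) = \exp_y[v]$, so the curve $s \mapsto c(s) := H(s,1)$ connects the two points whose distance we wish to bound. Cauchy--Schwarz applied to the length inequality then gives $d(\exp_x[u], \exp_y[v])^2 \leq \int_0^1 \|c'(s)\|^2\,\rmd s$.

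Next I would identify $c'(s) = \partial_s H(s,1)$. Since for each fixed $s$ the curve $t \mapsto H(s,t)$ is a geodesic of speed $\|W(s)\|$, the variation field $J_s(t) = \partial_s H(s,t)$ is a Jacobi field along it. Its initial data at $t=0$ are $J_s(0) = \gamma'(s)$ and, because $W$ is obtained by parallel transporting a linearly interpolated vector, $D_t J_s(0) = D_s W(s) = \Gamma_x^{\gamma(s)}(\Gamma_y^x v - u)$. In particular, both $J_s(0)$ and $D_t J_s(0)$ are parallel along $\gamma$, so their norms are independent of $s$ and equal to $d(x,y)$ and $\|\Gamma_y^x v - u\|$ respectively, and
\begin{equation}
\langle J_s(0), D_t J_s(0)\rangle = \langle \gamma'(0),\, \Gamma_y^x v - u\rangle \quad \text{for all } s \in [0,1].
\end{equation}

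The core analytic step is to invoke the Jacobi field comparison of \textcite[Lemma 62]{cheng2022theorymanifold} (which relies on compactness of $\M$ to bound sectional curvature and uses the Riccati/Gronwall argument giving the exponential factor $\exp[4\kappa]$ with $\kappa = \|u\| + \|v\| \geq \|W(s)\|$). This yields an estimate of the form
\begin{equation}
\|J_s(1)\|^2 \leq \bigl(1 + C \kappa^2 e^{4\kappa}\bigr) \|J_s(0)\|^2 + C e^{4\kappa} \|D_t J_s(0)\|^2 + 2 \langle J_s(0), D_t J_s(0) \rangle,
\end{equation}
with a constant $C$ depending only on the curvature bounds of $\M$ (the factor $2$ in the cross term coming from the exact first-order Taylor expansion of $\|J_s\|^2$ at $t=0$).

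Integrating this pointwise inequality over $s \in [0,1]$ and substituting the explicit values of the three quantities above immediately yields the stated bound. The main obstacle is the Jacobi field comparison itself---producing the clean separation into the three terms with the correct $1 + C\kappa^2 e^{4\kappa}$, $C e^{4\kappa}$, and unit coefficients---which I would simply cite from \textcite[Lemma 62]{cheng2022theorymanifold} rather than redo, as it is a standard Rauch-type estimate adapted to non-unit speed geodesics on a compact Riemannian manifold. A minor subtlety to check is that $H$ remains well-defined (no cut-locus issues) for the relevant $(s,t)$ range, which holds since $\M$ is complete and the exponential map is globally defined.
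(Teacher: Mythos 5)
The paper does not actually prove this lemma: it is imported verbatim, with the sentence ``see \cite[Lemma 62]{cheng2022theorymanifold}'', and no argument is given. Your proposal therefore cannot be compared against a proof in the paper; what it does is reconstruct the variation-through-geodesics argument that underlies the cited result. That reconstruction is sound: $H(0,1)=\exp_x[u]$ and $H(1,1)=\exp_y[v]$; the Cauchy--Schwarz reduction $d(\exp_x[u],\exp_y[v])^2\le\int_0^1\normLigne{\partial_s H(s,1)}^2\,\rmd s$ is valid; $J_s(t)=\partial_s H(s,t)$ is a Jacobi field since $t\mapsto H(s,t)$ is a geodesic for each $s$; and the identifications $J_s(0)=\gamma'(s)$, $D_tJ_s(0)=D_sW(s)=\Gamma_x^{\gamma(s)}(\Gamma_y^x v-u)$ (by symmetry of the Levi--Civita connection) are correct, as is the observation that all three quantities $\normLigne{J_s(0)}=d(x,y)$, $\normLigne{D_tJ_s(0)}=\normLigne{\Gamma_y^x v-u}$ and $\langle J_s(0),D_tJ_s(0)\rangle=\langle\gamma'(0),\Gamma_y^x v-u\rangle$ are independent of $s$, so integrating over $s$ gives the claim.

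The one genuine weakness is that your ``core analytic step'' defers to the very same Lemma 62 that, in the paper's usage, \emph{is} the statement being proved; as written the argument is circular unless you supply the quantitative Jacobi-field estimate yourself. That estimate is the real content: one must set $f(t)=\normLigne{J_s(t)}^2$, use $f'(0)=2\langle J_s(0),D_tJ_s(0)\rangle$ exactly (this is where the signed cross term with unit coefficient comes from --- a naive Rauch comparison giving $2\normLigne{J_s(0)}\,\normLigne{D_tJ_s(0)}$ would be too weak for the downstream Gr\"onwall arguments in \Cref{prop:bound_square}), write $f(1)=f(0)+f'(0)+\int_0^1(1-t)f''(t)\,\rmd t$ with $f''=2\normLigne{D_tJ_s}^2-2\langle R(J_s,\sigma')\sigma',J_s\rangle$, and control the remainder via the compactness-induced curvature bound and a Gr\"onwall estimate on $\normLigne{J_s}$ and $\normLigne{D_tJ_s}$, which produces the $C\kappa^2\exp[4\kappa]$ and $C\exp[4\kappa]$ coefficients. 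You correctly describe where each term originates, so filling this in is routine, but it should be carried out (or the citation should be to a genuinely prior Jacobi-field comparison) for the proof to be non-circular.
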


We are now ready to state the main result of this section.

\begin{proposition}{}{bound_square}
  Assume \rref{assum:manifold}. Then, there exists $C \geq 0$ such that for any $\ell \in \nset$
  \begin{equation}
    \textstyle{ \expeLigne{\sup_{t \in \ccint{0, \gamma}} d(\bfX_t^\ell, \bfX_t^{\ell+1})^2} \leq C \gamma^3 2^{-2\ell}  . }
  \end{equation}
\end{proposition}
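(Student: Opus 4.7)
The plan is to establish the bound by comparing the two processes interval-by-interval on the coarser grid and iterating. Fix $\ell$ and write $\gamma_\ell$ for the coarse step-size. For each coarse index $k\in\{0,\dots,2^\ell-1\}$, on the interval $\ccint{k\gamma_\ell,(k+1)\gamma_\ell}$ the coarse process $\bfX^\ell$ takes a single geodesic step of length $\gamma_\ell$ from $X_k^\ell$, while the fine process $\bfX^{\ell+1}$ takes two sub-steps of length $\gamma_\ell/2$ through $X_{2k+1}^{\ell+1}$. The coupling \eqref{eq:coupling} is crucial: the frame $E_{2k}^{\ell+1}$ is obtained by parallel transporting $E_k^\ell$ from $X_k^\ell$ to $X_{2k}^{\ell+1}$, so the two processes use the \emph{same} driving Brownian increment modulo parallel transport.

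First, I would apply \Cref{lem:bound_exp} with $x=X_k^\ell$, $y=X_{2k}^{\ell+1}$, $u$ the coarse tangent vector and $v$ the analogous fine tangent vector. Expanding $\Gamma_y^x v - u$ one decomposes the mismatch into (i) a drift piece $\tfrac{\gamma_\ell}{2}\bigl[b(k\gamma_\ell+\gamma_\ell/2,X_{2k+1}^{\ell+1})-b(k\gamma_\ell,X_k^\ell)\bigr]$, and (ii) a noise piece $\bigl(\Gamma_{X_{2k+1}^{\ell+1}}^{X_{2k}^{\ell+1}}E_{2k+1}^{\ell+1}-E_{2k}^{\ell+1}\bigr)(\bfB_{(k+1)\gamma_\ell}-\bfB_{k\gamma_\ell+\gamma_\ell/2})$ coming from the frame update along the first sub-step. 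Since $b$ is smooth on the compact $\M$ and parallel transport along a geodesic of length $O(\sqrt{\gamma_\ell})$ deviates from the identity by $O(\sqrt{\gamma_\ell})$, piece (i) is $O(\gamma_\ell^{3/2})$ in $L^2$ and piece (ii) is a martingale increment of second moment $O(\gamma_\ell^2)\cdot O(\gamma_\ell) = O(\gamma_\ell^3)$ after independence of the second Brownian sub-increment from the first.

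Second, squaring the conclusion of \Cref{lem:bound_exp} and taking expectations, the cross term $2\langle\gamma'(0),\Gamma_y^x v-u\rangle$ vanishes in expectation along the noise piece (martingale property conditional on $\mathcal{F}_{k\gamma_\ell}$) while it contributes an $O(\gamma_\ell)\cdot D_k$ term along the drift piece, bounded by $\gamma_\ell D_k^2 + \gamma_\ell^3$. Setting $D_k^2 := \expeLigne{d(X_k^\ell,X_{2k}^{\ell+1})^2}$ and controlling $\kappa=\|u\|+\|v\|$ via Gaussian moments, one obtains the recursion
\begin{equation}
D_{k+1}^2 \leq (1+C\gamma_\ell)\, D_k^2 + C\gamma_\ell^3 ,
\end{equation}
with $D_0 = 0$. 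Iterating $2^\ell$ times and using $2^\ell\gamma_\ell = \gamma$ bounded, a discrete Gronwall gives $D_{2^\ell}^2 \leq C e^{C\gamma}\cdot 2^\ell \gamma_\ell^3 = C'\gamma^3 2^{-2\ell}$, which is the desired bound at grid times. To upgrade to the supremum, I would apply Doob's $L^2$-inequality to the martingale part of the local comparison on each coarse interval (the drift part being controlled pathwise by $C\gamma_\ell$), then combine with the recursion above; the extra Doob constant is absorbed.

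The main obstacle will be step two: extracting a clean $\gamma_\ell^3$ one-step error rather than the naive $\gamma_\ell^2$. The gain requires simultaneously (a) the cancellation of leading Brownian terms by the parallel-transport coupling, so that only the second-order frame-variation contributes to the noise mismatch, (b) taking expectation to annihilate the linear cross term in \Cref{lem:bound_exp}, and (c) estimating $\expeLigne{\|\Gamma_{X_{2k+1}^{\ell+1}}^{X_{2k}^{\ell+1}}E_{2k+1}^{\ell+1}-E_{2k}^{\ell+1}\|^2 \cdot \|\bfB_{(k+1)\gamma_\ell}-\bfB_{k\gamma_\ell+\gamma_\ell/2}\|^2}$ by conditioning on $\mathcal{F}_{k\gamma_\ell+\gamma_\ell/2}$ to exploit the independence of the second sub-increment from the updated frame. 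Controlling curvature-related Jacobi-field corrections in \Cref{lem:bound_exp} uniformly on $\M$ (which is possible by compactness) is what makes these expansions yield the cubic power.
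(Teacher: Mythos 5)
Your overall architecture---a one-step comparison on each coarse interval via \Cref{lemma:bound_exp}, a recursion $D_{k+1}^2 \leq (1+C\gamma_\ell)D_k^2 + C\gamma_\ell^3$, and a discrete Gr\"onwall over $2^\ell$ steps with $2^\ell\gamma_\ell^3 = \gamma^3 2^{-2\ell}$---matches the paper's. But the decomposition of the one-step mismatch $\Gamma_y^x v - u$ has a genuine gap, and it sits exactly where you flag the ``main obstacle''. First, your noise piece (ii) is identically zero: by the coupling \eqref{eq:coupling}, $E_{2k+1}^{\ell+1} = \Gamma_{X_{2k}^{\ell+1}}^{X_{2k+1}^{\ell+1}}E_{2k}^{\ell+1}$, so transporting it back along the same geodesic gives $\Gamma_{X_{2k+1}^{\ell+1}}^{X_{2k}^{\ell+1}}E_{2k+1}^{\ell+1} - E_{2k}^{\ell+1} = 0$ exactly; there is no ``second-order frame variation'' to estimate. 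Second, and more seriously: to apply \Cref{lemma:bound_exp} you must write the fine endpoint as a single exponential $\exp_{X_{2k}^{\ell+1}}[v]$, whereas it is a composition of two exponentials, $\exp_{X_{2k+1}^{\ell+1}}[v_2]$ with $X_{2k+1}^{\ell+1}=\exp_{X_{2k}^{\ell+1}}[v_1]$. Your expansion of $\Gamma_y^x v-u$ into pieces (i) and (ii) implicitly sets $v = v_1 + \Gamma_{X_{2k+1}^{\ell+1}}^{X_{2k}^{\ell+1}}v_2$, i.e.\ assumes $\exp_{\exp_x[a]}[\Gamma_x^{\exp_x[a]}b]=\exp_x[a+b]$. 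On a curved manifold this fails, with a curvature correction of order $(\normLigne{a}+\normLigne{b})^3$; since $\normLigne{a},\normLigne{b}$ are dominated by Brownian sub-increments of size $\sqrt{\gamma_\ell}$, this contributes $O(\gamma_\ell^{3})$ to the squared one-step error---the same order as your target---so it cannot be ignored. The paper devotes the entire second half of its proof to this term: it introduces the frozen-drift interpolant $\bar{\bfX}^{\ell+1}$ and the auxiliary $\hat{\bfX}^{\ell+1}$, and invokes $d(\exp_{\exp_x[a]}[\Gamma b],\exp_x[a+b])\leq C(\normLigne{a}+\normLigne{b})^3$ from \cite[Lemma 3]{sun2019escaping} together with sixth moments of Brownian increments to bound it. Your drift piece (i) is correct and corresponds to the paper's terms $\Ltt_1\gamma_\ell d(X_k^\ell,X_{2k}^{\ell+1})+\Ltt_2\gamma_\ell^2$.

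Two smaller points. The martingale cancellation you invoke for the cross term $2\langle w'(0),\Gamma_y^x v-u\rangle$ is unnecessary (and moot, since the noise contribution to $\Gamma_y^x v-u$ vanishes under the coupling); the paper simply bounds it by Cauchy--Schwarz and absorbs it into $(1+C\gamma_\ell)d^2 + C\gamma_\ell^3$. And the supremum over $t$ is handled in the paper not via Doob but by running the recursion directly on the interval-sups $U_k$ and dominating them by a monotone sequence $R_k$ whose increments have constant conditional expectations; your Doob route would require exhibiting an actual submartingale, which the squared distance between the continuous interpolants is not without additional argument.
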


\begin{proof}{}{}
  Let $\ell \in \nset$, $k \in \{0, \dots, 2^{\ell} - 1\}$ and
  $t \in \ccint{k \gamma_\ell, (k+1) \gamma_\ell}$. We define
  $U_k^t = d(\bfX_t^\ell, \bfX_{t}^{\ell+1})^2$,
  $U_k = \sup \ensembleLigne{U_k^t}{t \in \ccint{k \gamma_\ell, (k+1)
      \gamma_\ell}}$ and $U_{-1}=0$. We also introduce for any
  $j \in \{0, \dots, 2^{\ell} - 1\}$ and for
  $t \in \coint{k \gamma_\ell, (2k+1) \gamma_{\ell+1}}$,
  $\bar{\bfX}_t^{\ell+1} = \bfX_t^{\ell+1}$ and for
  $t \in \coint{(2k+1) \gamma_{\ell+1}, (k+1) \gamma_{\ell}}$
  \begin{align}
    \bar{\bfX}_{t}^{\ell+1} &= \exp_{X_{2j}^{\ell+1}}[\gamma_{\ell+1} b(2 j \gamma_{\ell + 1}, X_{2j}^{\ell+1}) \\ & \quad + (t - (2k+1)\gamma_{\ell+1}) b((2j+1) \gamma_{\ell + 1}, X_{2j}^{\ell+1})  + (\bfB_{t} - \bfB_{j \gamma_\ell}) E_{2j}^{\ell+1} ]  .
  \end{align}
  Using this result and that for any $a,b \geq0$,
  $(a+b)^2 \leq (1+2^{-\ell}) a^2 + (1 +2^\ell)b^2$, we have that for any
  $t \in \ccint{k \gamma_\ell, (k+1) \gamma_\ell}$
  \begin{equation}
    \label{eq:inter_boundi_bound}
    \textstyle{
      U_{k+1}^t \leq (1 + 2^{-\ell}) d(\bfX_{t}^\ell, \bar{\bfX}_{t}^{\ell+1})^2 + (1 + 2^{\ell}) d(\bar{\bfX}_{t}^{\ell+1}, \bfX_{t}^{\ell+1})^2  .
      }
    \end{equation}
    Note that for $t \in \ccint{k\gamma_\ell, (2k+1) \gamma_{\ell+1}}$, the
    second term in \eqref{eq:inter_boundi_bound} is zero.  We now bound each one
    of these terms:
 \begin{enumerate}[wide, labelindent=0pt, label=(\alph*)]    
 \item First, we assume that
   $t \in \ccint{(k+1) \gamma_{\ell}, (2k+1) \gamma_{\ell+1}}$. Recall that 
\begin{align}
  \bar{\bfX}_{t}^{\ell+1} &= \exp_{X_{2k}^{\ell+1}}[\gamma_{\ell+1}b(k \gamma_{\ell }, X_{2k}^{\ell+1}) \\ & \quad (t - (2k+1)\gamma_{\ell+1}) b((2 k+1) \gamma_{\ell + 1}, X_{2k}^{\ell+1})  + (\bfB_{t} - \bfB_{k \gamma_\ell}) E_{2k}^{\ell+1} ]  , \\
  \bfX_{t}^\ell &= \exp_{X_k^\ell}[( t - k\gamma_\ell) b(k \gamma_\ell, X_k^\ell) + (\bfB_{t} - \bfB_{k \gamma_\ell}) E_{k}^{\ell} ]   . 
\end{align}
Hence, using \Cref{lemma:bound_exp}, we have that
\begin{align}
  \label{eq:upper_bound_distance_cheng}
  d(\bar{\bfX}_{t}^{\ell+1}, \bfX_{t}^\ell)^2 &\leq (1 +C \kappa_k^2 \exp[4 \kappa_k]) d(X_k^\ell,X_{2k}^{\ell+1})^2 \\
  & \qquad + C \exp[4 \kappa_k] \normLigne{\Gamma_{X_{2k}^{\ell+1}}^{X_k^\ell} v_k - u_k}^2 + 2 \langle w'(0), \Gamma_{X_{2k}^{\ell+1}}^{X_k^\ell} v_k - u_k \rangle  , 
\end{align}
with $w: \ \ccint{0,1} \to \M$ a minimizing geodesic between $X_k^\ell$ and
$X_{2k}^{\ell+1}$
\begin{align}
  &\kappa_k = \normLigne{u_k} + \normLigne{v_k}  , \\
  &u_k^1 = (t- k\gamma_\ell) b(k \gamma_\ell, X_k^\ell)   , \\
  &v_k^1 = \gamma_{\ell+1} b(2k \gamma_{\ell + 1}, X_{2k}^{\ell+1}) + (t - (2k+1) \gamma_{\ell+1}) b((2k+1) \gamma_{\ell + 1}, X_{2k}^{\ell+1})  , \\
  &u_k^2 = (\bfB_{t} - \bfB_{k \gamma_\ell}) E_{k}^{\ell}  , \qquad \quad \quad \  v_k^2 = (\bfB_{t} - \bfB_{k \gamma_\ell}) E_{2k}^{\ell+1}  , \\
    &u_k = u_k^1 + u_k^2 , \qquad \qquad \qquad \quad \ v_k = v_k^1 + v_k^2  .
\end{align}
In particular, since
$E_k^\ell = \Gamma_{X_{2k}^{\ell+1}}^{X_{k}^\ell} E_{2k}^{\ell+1}$ using
\eqref{eq:coupling}, we have that
$u_k^2 = \Gamma_{X_{2k}^{\ell+1}}^{X_{k}^\ell} v_k^2$. Therefore, combining this
result and that
$t- (2k+1)\gamma_{\ell+1} + \gamma_{\ell+1} = t - k \gamma_\ell$, we get that
\begin{align}
  \normLigne{\Gamma_{X_{2k}^{\ell+1}}^{X_{k}^\ell} v_k^1 - u_k^1} &\leq \gamma_{\ell+1} \normLigne{b(k \gamma_\ell, X_k^\ell) - \Gamma_{X_{2k}^{\ell+1}}^{X_{k}^\ell} b(k \gamma_\ell, X_{2k}^{\ell+1})} \\
  & \qquad + \gamma_{\ell+1} \normLigne{b(k \gamma_\ell, X_k^\ell) - \Gamma_{X_{2k}^{\ell+1}}^{X_{k}^\ell} b((2 k +1) \gamma_{\ell+1}, X_{2k}^{\ell+1})} \\
  &\leq \gamma_{\ell} \normLigne{b(k \gamma_\ell, X_k^\ell) - \Gamma_{X_{2k}^{\ell+1}}^{X_{k}^\ell} b(k \gamma_\ell, X_{2k}^{\ell+1})} + \Ltt_2 \gamma_{\ell}^2 \\
  &\leq \Ltt_1 \gamma_\ell  d(X_k^\ell, X_{2k}^{\ell+1}) + \Ltt_2 \gamma_\ell^2  .
\end{align}
Therefore, we get that
$\normLigne{u_k - v_k} \leq \Ltt_1 \gamma_\ell d(X_k^\ell, X_{2k}^{\ell+1}) +
\Ltt_2 \gamma_\ell^2$. In addition, we have that
$\normLigne{w'(0)} \leq d(X_k^\ell, X_{2k}^{\ell+1})$ since $w$ is a minimizing
geodesic. Combining these results and \eqref{eq:upper_bound_distance_cheng} we
get that
\begin{align}
  &d(\bar{\bfX}_{t}^{\ell+1}, \bfX_{t}^\ell)^2 \leq (1 +C \kappa_k^2 \exp[4 \kappa_k]) d(X_k^\ell,X_{2k}^{\ell+1})^2 \\
                                             & \qquad \qquad  \qquad + C \exp[4 \kappa_k] (\Ltt_1 \gamma_\ell  d(X_k^\ell, X_{2k}^{\ell+1}) + \Ltt_2 \gamma_\ell^2)^2 \\
                                             & \qquad \qquad  \qquad + 2 (\Ltt_1 \gamma_\ell  d(X_k^\ell, X_{2k}^{\ell+1}) + \Ltt_2 \gamma_\ell^2)d(X_k^\ell, X_{2k}^{\ell+1}) \\
                                             & \qquad \qquad \leq (1 +C \kappa_k^2 \exp[4 \kappa_k] + 2C \exp[4 \kappa_k] \Ltt_1^2 \gamma_\ell^2) d(X_k^\ell,X_{2k}^{\ell+1})^2 \\
                                             & \qquad \qquad  \qquad + 2 (\Ltt_1 \gamma_\ell  d(X_k^\ell, X_{2k}^{\ell+1}) + \Ltt_2 \gamma_\ell^2)d(X_k^\ell, X_{2k}^{\ell+1}) + 2 \Ltt_2^2 \gamma_\ell^4 \\
                                             & \qquad \qquad \leq (1 +C \kappa_k^2 \exp[4 \kappa_k] + 2C \exp[4 \kappa_k] \Ltt_1^2 \gamma_\ell^2 + 2 \Ltt_1 \gamma_\ell + 4 \Ltt_2 \gamma_\ell) d(X_k^\ell,X_{2k}^{\ell+1})^2  + 8 \Ltt_2 \gamma_\ell^3  ,
\end{align}
Hence, there exists $C_1 \geq 0$ (not dependent on $k$ or $\ell$) such that
\begin{equation}
  \label{eq:case_numeroooo_unoooo}
    (1 + 2^{-\ell}) d(\bar{\bfX}_{t}^{\ell+1}, \bfX_{t}^\ell)^2 \leq (1 +C_1\{ \kappa_k^2 \exp[4 \kappa_k] + \gamma_\ell^2 \exp[4\kappa_k] + 2^{-\ell}\}) d(X_k^\ell,X_{2k}^{\ell+1})^2 +  C_1 \gamma_\ell^3  . 
  \end{equation}
  Next, we assume that $t \in \ccint{k \gamma_\ell, (2k+1)
    \gamma_{\ell+1}}$. Recall that
\begin{align}
  \bar{\bfX}_{t}^{\ell+1} &= \exp_{X_{2k}^{\ell+1}}[(t - k\gamma_{\ell}) b( k \gamma_{\ell}, X_{2k}^{\ell+1})  + (\bfB_{t} - \bfB_{k \gamma_\ell}) E_{2k}^{\ell+1} ]  , \\
  \bfX_{t}^\ell &= \exp_{X_k^\ell}[( t - k\gamma_\ell) b(k \gamma_\ell, X_k^\ell) + (\bfB_{t} - \bfB_{k \gamma_\ell}) E_{k}^{\ell} ]   . 
\end{align}
Hence, using \Cref{lemma:bound_exp}, we have that
\begin{align}
  \label{eq:upper_bound_distance_cheng_duo}
  d(\bar{\bfX}_{t}^{\ell+1}, \bfX_{t}^\ell)^2 &\leq (1 +C \kappa_k^2 \exp[4 \kappa_k]) d(X_k^\ell,X_{2k}^{\ell+1})^2 \\
  & \qquad + C \exp[4 \kappa_k] \normLigne{\Gamma_{X_{2k}^{\ell+1}}^{X_k^\ell} v_k - u_k}^2 + 2 \langle w'(0), \Gamma_{X_{2k}^{\ell+1}}^{X_k^\ell} v_k - u_k \rangle  , 
\end{align}
with $w: \ \ccint{0,1} \to \M$ a minimizing geodesic between $X_k^\ell$ and
$X_{2k}^{\ell+1}$
\begin{align}
  &\kappa_k = \normLigne{u_k} + \normLigne{v_k}  , \\
  &u_k^1 = (t- k\gamma_\ell) b(k \gamma_\ell, X_k^\ell)   , \\
  &v_k^1 = (t - k\gamma_{\ell}) b(k \gamma_{\ell }, X_{2k}^{\ell+1})  , \\
  &u_k^2 = (\bfB_{t} - \bfB_{k \gamma_\ell}) E_{k}^{\ell}  , \qquad \quad \quad \  v_k^2 = (\bfB_{t} - \bfB_{k \gamma_\ell}) E_{2k}^{\ell+1}  , \\
    &u_k = u_k^1 + u_k^2 , \qquad \qquad \qquad \quad \ v_k = v_k^1 + v_k^2  .
\end{align}
In particular, since
$E_k^\ell = \Gamma_{X_{2k}^{\ell+1}}^{X_{k}^\ell} E_{2k}^{\ell+1}$ using
\eqref{eq:coupling} and
$t- (2k+1)\gamma_{\ell+1} + \gamma_{\ell+1} = t - k \gamma_\ell$, we have that
$u_k^2 = \Gamma_{X_{2k}^{\ell+1}}^{X_{k}^\ell} v_k^2$. Therefore, we get that 
\begin{align}
  \normLigne{\Gamma_{X_{2k}^{\ell+1}}^{X_{k}^\ell} v_k^1 - u_k^1} &\leq \gamma_{\ell+1} \normLigne{b(k \gamma_\ell, X_k^\ell) - \Gamma_{X_{2k}^{\ell+1}}^{X_{k}^\ell} b(k \gamma_\ell, X_{2k}^{\ell+1})}  \\
  &\leq \gamma_{\ell} \normLigne{b(k \gamma_\ell, X_k^\ell) - \Gamma_{X_{2k}^{\ell+1}}^{X_{k}^\ell} b(k \gamma_\ell, X_{2k}^{\ell+1})} + \Ltt_2 \gamma_{\ell}^2 \\
  &\leq \Ltt_1 \gamma_\ell  d(X_k^\ell, X_{2k}^{\ell+1})  .
\end{align}
Therefore, we get that
$\normLigne{u_k - v_k} \leq \Ltt_1 \gamma_\ell d(X_k^\ell, X_{2k}^{\ell+1})$. In addition, we have that
$\normLigne{w'(0)} \leq d(X_k^\ell, X_{2k}^{\ell+1})$ since $w$ is a minimizing
geodesic. Combining these results and \eqref{eq:upper_bound_distance_cheng_duo} we
get that
\begin{align}
  &d(\bar{\bfX}_{t}^{\ell+1}, \bfX_{t}^\ell)^2 \leq (1 +C \kappa_k^2 \exp[4 \kappa_k]) d(X_k^\ell,X_{2k}^{\ell+1})^2 \\
                                             & \qquad \qquad  \qquad + C \exp[4 \kappa_k] \Ltt_1^2 \gamma_\ell^2  d(X_k^\ell, X_{2k}^{\ell+1})^2 \\
                                             & \qquad \qquad  \qquad + 2 \Ltt_1 \gamma_\ell  d(X_k^\ell, X_{2k}^{\ell+1})d(X_k^\ell, X_{2k}^{\ell+1}) \\
                                             & \qquad \qquad \leq (1 +C \kappa_k^2 \exp[4 \kappa_k] + 2C \exp[4 \kappa_k] \Ltt_1^2 \gamma_\ell^2) d(X_k^\ell,X_{2k}^{\ell+1})^2 \\
                                             & \qquad \qquad  \qquad + 2 \Ltt_1 \gamma_\ell  d(X_k^\ell, X_{2k}^{\ell+1})^2 + 2 \Ltt_2^2 \gamma_\ell^4 \\
                                             & \qquad \qquad \leq (1 +C \kappa_k^2 \exp[4 \kappa_k] + 2C \exp[4 \kappa_k] \Ltt_1^2 \gamma_\ell^2 + 2 \Ltt_1 \gamma_\ell) d(X_k^\ell,X_{2k}^{\ell+1})^2  . 
\end{align}
Hence, there exists $C_1 \geq 0$ (not dependent on $k$ or $\ell$) such that for
any $t \in \ccint{k \gamma_\ell, (k+1) \gamma_\ell}$
\begin{equation}\label{eq:case_complete}
    (1 + 2^{-\ell}) d(\bar{\bfX}_{t}^{\ell+1}, \bfX_{t}^\ell)^2 \leq (1 +C_1\{ \kappa_k^2 \exp[4 \kappa_k] + \gamma_\ell^2 \exp[4\kappa_k] + 2^{-\ell}\}) d(X_k^\ell,X_{2k}^{\ell+1})^2 +  C_1 \gamma_\ell^3  . 
  \end{equation} 
\item We recall that if $t \in \ccint{k \gamma_{\ell}, (2k+1) \gamma_{\ell+1}}$
  the second term in \eqref{eq:inter_boundi_bound} is zero. Therefore in what
  follows, we assume $t \in \ccint{(2k+1) \gamma_{\ell+1}, (k+1) \gamma_\ell}$.
  We introduce
  \begin{align}
  \hat{\bfX}_{t}^{\ell+1} &= \exp_{X_{2k+1}^{\ell+1}}[(t - (2k+1)\gamma_{\ell+1}) \Gamma_{X_{2k}^{\ell+1}}^{X_{2k+1}^{\ell+1}} b((2k+1)\gamma_{\ell+1}, X_{2k}^{\ell+1}) \\ & \quad (\bfB_{t} - \bfB_{(2k+1) \gamma_{\ell+1}}) E_{2k+1}^{\ell+1}]  .     \label{eq:hat_X_redef}
\end{align}
In what follows, we provide an upper-bound for
$d(\bar{\bfX}_{t}^{\ell+1}, \bfX_{t}^{\ell+1})$. First, we have that
\begin{equation}
  d(\bar{\bfX}_{t}^{\ell+1}, \bfX_{t}^{\ell+1}) \leq d(\bar{\bfX}_{t}^{\ell+1}, \hat{\bfX}_{t}^{\ell+1}) + d(\hat{\bfX}_{t}^{\ell+1}, \bfX_{t}^{\ell+1})  . 
\end{equation}
We recall that
\begin{align}
    \bar{\bfX}_{t}^{\ell+1} &= \exp_{X_{2k}^{\ell+1}}[\gamma_{\ell+1} b(2 k \gamma_{\ell + 1}, X_{2k}^{\ell+1}) \\ & \quad + (t - (2k+1) \gamma_{\ell+1})  b((2 k+1) \gamma_{\ell + 1}, X_{2k}^{\ell+1})  + (\bfB_{t} - \bfB_{k \gamma_\ell}) E_{2k}^{\ell+1} ]  .  \label{eq:bar_X_redef}
\end{align}
  Denote $a_k, b_k$ such that
  \begin{align}
    &a_k =  b(2 k \gamma_{\ell + 1}, X_{2k}^{\ell+1}) + (\bfB_{(2k+1)\gamma_{\ell+1}} - \bfB_{k \gamma_\ell}) E_{2k}^{\ell+1}  , \\
    &b_k = (t - (2k+1)\gamma_{\ell+1}) b((2k+1) \gamma_{\ell + 1}, X_{2k}^{\ell+1}) + (\bfB_{t} - \bfB_{(2k+1) \gamma_{\ell+1}}) E_{2k}^{\ell+1}  .
  \end{align}
Using \eqref{eq:coupling}, \eqref{eq:hat_X_redef} and \eqref{eq:bar_X_redef} we have that
\begin{align}
  &X_{2k+1}^{\ell+1} = \exp_{X_{2k}^{\ell+1}}[a_k]  , \qquad \textstyle{\hat{\bfX}_{t}^{\ell+1} = \exp_{X_{2k+1}^{\ell+1}}[\Gamma_{X_{2k}^{\ell+1}}^{X_{2k+1}^{\ell+1}}b_k]  , \qquad \bar{\bfX}_{t}^{\ell+1} = \exp_{X_{2k}^{\ell+1}}[a_k + b_k]  . }
\end{align}
Using this result and \cite[Lemma 3]{sun2019escaping}, there exists $C_2 \geq 0$
(not dependent on $k$ or $\ell$) such that
\begin{equation}  
  d(\hat{\bfX}_{t}^{\ell+1}, \bar{\bfX}_{t}^{\ell+1}) \leq C_2 (\normLigne{a_k} + \normLigne{b_k})^3  . 
\end{equation}
Using this result and that for any $t \in \ccint{0,\gamma}$ and $x \in \M$,
$\normLigne{b(t,x)} \leq \Ktt$ we get that there exists $C_3 \geq 0$ (not
dependent on $k$ or $\ell$) such that
\begin{equation}
  \label{eq:bound_distance_uno}
  d(\hat{\bfX}_{t}^{\ell+1}, \bar{\bfX}_{t}^{\ell+1})^2 \leq C_3 (\gamma_{\ell+1}^6 + \normLigne{\bfB_{t} - \bfB_{(2k+1) \gamma_{\ell+1}}}^6 + \normLigne{\bfB_{(2k+1) \gamma_{\ell}} - \bfB_{(k+1) \gamma_{\ell}}}^6)  . 
\end{equation}
Finally, we recall that
  \begin{align}
    \hat{\bfX}_{t}^{\ell+1} &= \exp_{X_{2k+1}^{\ell+1}}[(t - (2k+1)\gamma_{\ell+1}) \Gamma_{X_{2k}^{\ell+1}}^{X_{2k+1}^{\ell+1}} b((2k+1)\gamma_{\ell+1}, X_{2k}^{\ell+1}) \\ \quad &+ (\bfB_{t} - \bfB_{(2k+1) \gamma_{\ell+1}}) E_{2k+1}^{\ell+1}]  , \\
    \bfX_{t}^{\ell+1} &= \exp_{X_{2k+1}^{\ell+1}}[(t - (2k+1)\gamma_{\ell+1}) b((2k+1)\gamma_{\ell+1}, X_{2k+1}^{\ell+1}) + (\bfB_{t} - \bfB_{(2k+1) \gamma_{\ell+1}}) E_{2k+1}^{\ell+1}]  . 
\end{align}
Let us define
\begin{align}
  &\tau_k = \normLigne{c_k} + \normLigne{d_k}  , \\
  &c_k = c_k^1 + c_k^2 , \qquad \qquad \qquad \qquad   \ d_k = d_k^1 + d_k^2  ,\\
  &c_k^1 = (t - (2k+1)\gamma_{\ell+1}) b((2k+1) \gamma_{\ell+1}, X_{2k+1}^{\ell+1})   , \\
  &d_k^1 = (t - (2k+1)\gamma_{\ell+1}) \Gamma_{X_{2k}^{\ell+1}}^{X_{2k+1}^{\ell+1}}b((2k+1) \gamma_{\ell+1}, X_{2k}^{\ell+1})  , \\
  &c_k^2 = d_k^2 = (\bfB_{t} - \bfB_{(2k+1) \gamma_{\ell+1}}) E_{2k+1}^{\ell+1}  . \label{eq:def_tau}
\end{align}
Using \Cref{lemma:bound_exp}, we get that
\begin{align}
  \label{eq:upper_bound_distance_cheng_inter}
  d(\bfX_{t}^{\ell+1}, \hat{\bfX}_{t}^{\ell+1})^2 \leq C \exp[4 \tau_k] \normLigne{c_k - d_k}^2 \leq C \Ltt_2^2 \gamma_{\ell+1}^2  \exp[4 \tau_k] d(X_{2k+1}^{\ell+1}, X_{2k}^{\ell+1})^2  .
\end{align}
In addition, using \Cref{lemma:bound_exp}, we get that
\begin{equation}
  d(X_{2k+1}^{\ell+1}, X_{2k}^{\ell+1})^2 \leq \exp[4 \normLigne{e_k}] \normLigne{e_k}  ,
\end{equation}
with
$e_k = \gamma_{\ell+1}b(k\gamma_\ell, X_{2k}^{\ell+1}) +
(\bfB_{(2k+1)\gamma_{\ell+1}} - \bfB_{k\gamma_{\ell}})
E_{2k}^{\ell+1}$. Combining this result and
\eqref{eq:upper_bound_distance_cheng_inter}, we get that
\begin{equation}
  \label{eq:bound_distance_duo}
  d(\bfX_{t}^{\ell+1}, \hat{\bfX}_{t}^{\ell+1})^2  \leq C_3 \gamma_{\ell+1}^2  (\gamma_{\ell+1}^2 + \normLigne{\bfB_{(2k+1)\gamma_{\ell+1}} - \bfB_{k\gamma_{\ell}}}^2)\exp[4 \tau_k + \normLigne{e_k}]   . 
\end{equation}
Combining \eqref{eq:bound_distance_uno} and \eqref{eq:bound_distance_duo}, there
exists $C_5$ such that
\begin{align}
  d(\bar{\bfX}_{t}^{\ell+1}, \bfX_{t}^{\ell+1})^2 &\leq C_5 \gamma_{\ell+1}^2  (\gamma_{\ell+1}^2 + \normLigne{\bfB_{(2k+1)\gamma_{\ell+1}} - \bfB_{k\gamma_{\ell}}}^2)\exp[4 \tau_k + \normLigne{e_k}] \\
  & \qquad +  C_5 (\gamma_{\ell+1}^6 + \normLigne{\bfB_{t} - \bfB_{(2k+1) \gamma_{\ell+1}}}^6 + \normLigne{\bfB_{(2k+1) \gamma_{\ell}} - \bfB_{(k+1) \gamma_{\ell}}}^6)  .  \label{eq:case_complete_2}
\end{align}
\end{enumerate}
In what follows, we denote
\begin{align}
  \alpha_k &= C_1\{ (\kappa_k^+)^2 \exp[4 \kappa_k] + \gamma_\ell^2 \exp[4\kappa_k^+] + 2^{-\ell} \}  . \\
  \beta_k &=  C_1  \gamma_\ell^3 + C_5 (1 + 2^\ell) \gamma_{\ell+1}^2  (\gamma_{\ell+1}^2 + \normLigne{\bfB_{(2k+1)\gamma_{\ell+1}} - \bfB_{k\gamma_{\ell}}}^2)\exp[4 \tau_k^+ + \normLigne{e_k}] \\
  & \qquad \textstyle{ +  C_5 (1 + 2^\ell) \ (\gamma_{\ell+1}^6 + \sup_{t \in \ccint{k\gamma_\ell, (k+1) \gamma_\ell}} \{\normLigne{\bfB_{t} - \bfB_{(2k+1) \gamma_{\ell+1}}}^6\}} \\ & \qquad \textstyle{ + \normLigne{\bfB_{(2k+1) \gamma_{\ell}} - \bfB_{(k+1) \gamma_{\ell}}}^6)  , }
\end{align}
with
$\tau_k^+ = \sup \ensembleLigne{\normLigne{c_k} + \normLigne{d_k}}{t \in
  \ccint{k \gamma_\ell, (k+1) \gamma_\ell}}$, see \eqref{eq:def_tau}. Therefore,
using \eqref{eq:inter_boundi_bound}, \eqref{eq:case_complete} and
\eqref{eq:case_complete_2}, we get that for any $k \in \{0, \dots, 2^\ell -1\}$
\begin{equation}
  U_{k+1} \leq (1 + \alpha_k) U_k + \beta_k  . 
\end{equation}
Let $\{R_k\}_{k=-1}^{2^\ell}$ such that $R_{-1}=0$ and for any $k \in \{0, \dots, 2^{\ell}-1\}$
\begin{equation}
  R_{k+1} = (1 + \alpha_k) R_k + \beta_k  . 
\end{equation}
Then, for any $k \in \{0, \dots, 2^{\ell}-1\}$, we have that $R_{2^\ell -1} \geq R_k \ge U_k$.
Therefore
\begin{equation}
  \label{eq:bound_sup}
  \expeLigne{R_{2^\ell}} \geq \expeLigne{\sup \ensembleLigne{U_k}{k \in \{0, \dots, 2^\ell\}}} \geq \expeLigne{\sup \ensembleLigne{d(\bfX_t^\ell, \bfX_t^{\ell+1})^2}{t \in \ccint{0,\gamma}}}  .
\end{equation}
In addition, using that for any $k \in \{0, \dots, 2^\ell - 1\}$,
$\CPELigne{\alpha_k}{\mcf_k} = \bar{\alpha}_k$ and
$\CPELigne{\beta_k}{\mcf_k} = \bar{\beta}_k$ are constant, where
$\mcf_k = \sigma(\ensembleLigne{\bfB_t}{t \in \ccint{0, k
    \gamma_\ell}})$. Therefore, we get that for any  $k \in \{0, \dots, 2^\ell - 1\}$
\begin{equation}
  \expeLigne{R_{k+1}} = (1 + \bar{\alpha}_k) \expeLigne{R_k} + \bar{\beta}_k  . 
\end{equation}
Therefore, using the discrete Gr\"onwall lemma we get that for any $k \in \{0, \dots, 2^{\ell}-1\}$
\begin{equation}
  \textstyle{
  \expeLigne{R_{2^\ell}} \leq \bar{\beta}_{2^{\ell}-1} + \exp[\sum_{n=0}^{2^\ell - 1} \bar{\alpha}_n] \sum_{j=0}^{2^\ell - 1} \bar{\beta}_j \bar{\alpha}_j   . }
\end{equation}
In addition, there exists $C_8 \geq 0$ such that for any
$k \in \{0, \dots, 2^\ell\}$, $\bar{\alpha}_k \leq C_8 2^{-\ell}$ and
$\bar{\beta}_k \leq C_8 \gamma^3 2^{-2\ell}$. Hence, there exists $C_9 \geq 0$
such that
\begin{equation}
  \expeLigne{R_{2^\ell}} \leq C_9 \gamma^3 2^{-2\ell}  ,
\end{equation}
which concludes the proof upon using \eqref{eq:bound_sup}.
  


\end{proof}
\begin{proposition}{}{limit_seq}
  Assume \rref{assum:manifold}. Then, there exists
  $(\bfX_t)_{t \in \ccint{0,\gamma}}$ such that
  $\lim_{\ell \to +\infty} \sup \ensembleLigne{d(\bfX_t^\ell, \bfX_t)}{t \in
    \ccint{0, \gamma}} =0$ and $(\bfX_t)_{t \in \ccint{0,\gamma}}$ is a weak
  solution to $\rmd \bfX_t = b(t, \bfX_t) \rmd t + \rmd \bfB_t^\M$.
\end{proposition}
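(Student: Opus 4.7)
The plan is to first extract a uniform almost-sure limit from the second-moment bound of \Cref{prop:bound_square}, and then identify this limit as a weak solution to the target SDE via the martingale problem.

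\textbf{Step 1: Cauchy extraction.} From \Cref{prop:bound_square}, Markov's inequality gives
\begin{equation}
\Pbb\bigl(\sup_{t \in \ccint{0,\gamma}} d(\bfX_t^\ell, \bfX_t^{\ell+1}) \geq 2^{-\ell/3}\bigr) \leq C \gamma^3 2^{-4\ell/3},
\end{equation}
which is summable in $\ell$. By Borel--Cantelli, almost surely there exists $\ell_0$ such that $\sup_t d(\bfX_t^\ell, \bfX_t^{\ell+1}) \leq 2^{-\ell/3}$ for all $\ell \geq \ell_0$, so $\sum_\ell \sup_t d(\bfX_t^\ell, \bfX_t^{\ell+1}) < +\infty$ almost surely. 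Since $\M$ is compact and hence complete, the sequence $(\bfX_\cdot^\ell)_\ell$ is uniformly Cauchy in $\rmc(\ccint{0,\gamma}, \M)$ almost surely, and converges uniformly to a continuous limit $(\bfX_t)_{t \in \ccint{0,\gamma}}$. In particular, $\bfX$ is adapted to the Brownian filtration and satisfies $\bfX_0 = X_0$.

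\textbf{Step 2: Identification of the limit via the martingale problem.} Since $\M$ is compact and $b \in \rmc^\infty(\ccint{0,T}, \XM)$, for any $f \in \rmc^\infty(\M)$ the map $(s,x) \mapsto \generator_s f(x) = \langle b(s,x), \nabla f(x) \rangle + \tfrac{1}{2} \Delta_\M f(x)$ is bounded and continuous, so $f(\bfX_t^\ell) \to f(\bfX_t)$ uniformly in $t$ and $\int_0^t \generator_s f(\bfX_s^\ell) \rmd s \to \int_0^t \generator_s f(\bfX_s) \rmd s$ uniformly, almost surely. It then suffices to show that the candidate process
\begin{equation}
M_t^f = f(\bfX_t) - f(\bfX_0) - \int_0^t \generator_s f(\bfX_s) \rmd s
\end{equation}
is a martingale (w.r.t.\ the Brownian filtration). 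We do this by a near-martingale argument at the prelimit level: using a second-order Taylor expansion of $f \circ \exp_{X_k^\ell}$ along the geodesic step, together with the fact that $E_k^\ell$ is an orthonormal frame (so that $E_k^\ell \bfB$ produces the Laplace--Beltrami contribution), the standard computation following \textcite{jorgensen1975central,cheng2022theorymanifold} yields
\begin{equation}
M_t^{\ell,f} = f(\bfX_t^\ell) - f(\bfX_0^\ell) - \int_0^t \generator_{\eta_\ell(s)} f(\bfX_{\eta_\ell(s)}^\ell) \rmd s + R_t^\ell,
\end{equation}
where $\eta_\ell(s) = \gamma_\ell \lfloor s/\gamma_\ell \rfloor$ and $R_t^\ell$ is an adapted error with $\sup_t |R_t^\ell| \to 0$ in $\rmL^2$ as $\ell \to \infty$ (the time-inhomogeneity of $b$ adds only terms of order $\gamma_\ell$ by smoothness). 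Conditioning on the past and passing to the limit using uniform continuity of $\generator_s f$, dominated convergence (everything is uniformly bounded on $\M$), and the $\rmL^2$-boundedness $\normLigne{M_t^{\ell,f}} \leq C_f$, we conclude that $M_t^f$ is a martingale, so $(\bfX_t)_{t \in \ccint{0,\gamma}}$ solves the desired SDE in the weak sense.

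\textbf{Main obstacle.} The Cauchy extraction in Step 1 is routine given the second-moment bound already in hand. The real work lies in Step 2: carefully carrying out the second-order Taylor expansion on the manifold along each geodesic step, tracking the cross-terms $\langle \nabla f, b(k\gamma_\ell, \cdot)\rangle \gamma_\ell$ and $\tfrac{1}{2} \mathrm{Hess}\, f[E_k^\ell \Delta\bfB, E_k^\ell \Delta\bfB]$, and controlling the remainders to identify the generator as $\langle b(s,\cdot), \nabla f \rangle + \tfrac{1}{2}\Delta_\M f$. This is the time-inhomogeneous extension of the argument of \textcite{jorgensen1975central} alluded to in the main text, and the additional $b$-dependence on $s$ introduces only terms of order $\gamma_\ell$ by smoothness of $b$, which vanish in the limit.
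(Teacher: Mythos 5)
Your proof follows essentially the same route as the paper, whose own proof simply invokes \Cref{prop:bound_square} together with the corresponding steps of \textcite{cheng2022theorymanifold}: a Borel--Cantelli/Cauchy extraction of the uniform almost-sure limit from the second-moment bound, followed by identification of the limit through the (time-inhomogeneous) martingale problem. Your Step 2 remains a sketch of the Taylor-expansion/near-martingale computation, but this is exactly the content the paper outsources to the cited reference, so the argument is sound and matches the intended one.
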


\begin{proof}{}{}
  The proof is a straightforward application of \Cref{prop:bound_square} and
  \cite[A.1 (Step 2 and Step 3), A.2]{cheng2022theorymanifold}.
\end{proof}

\begin{proposition}{}{borne_one_step}
  Assume \rref{assum:manifold}. Then, there exists $C \geq 0$ such that $\expe{d(X_1^0, \bfX_\gamma)^2 \leq C \gamma^{3/2}}$.
\end{proposition}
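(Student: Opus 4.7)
The plan is to exploit the dyadic family $\{(\bfX_t^{\ell})_{t\in\ccint{0,\gamma}}\}_{\ell\in\nset}$ introduced just before \Cref{prop:bound_square}, together with its uniform limit $(\bfX_t)_{t\in\ccint{0,\gamma}}$ obtained in \Cref{prop:limit_seq}, and to telescope in $\ell$. By construction $X_1^0=\bfX_\gamma^0$ (the one-step GRW evaluated at time $\gamma$) and $\lim_{\ell\to+\infty}\bfX_\gamma^\ell=\bfX_\gamma$ in probability (in fact uniformly on $\ccint{0,\gamma}$ almost surely, by \Cref{prop:limit_seq}).

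Concretely, first I would write, for any $L\in\nset$, the triangle inequality on $\M$
\begin{equation}
d(\bfX_\gamma^0,\bfX_\gamma^{L+1})\leq \sum_{\ell=0}^{L} d(\bfX_\gamma^\ell,\bfX_\gamma^{\ell+1}).
\end{equation}
Taking $L^2(\Pbb)$ norms and applying Minkowski's inequality gives
\begin{equation}
\expesq{d(\bfX_\gamma^0,\bfX_\gamma^{L+1})^2} \leq \sum_{\ell=0}^{L} \expesq{d(\bfX_\gamma^\ell,\bfX_\gamma^{\ell+1})^2}.
\end{equation}
Next I would invoke \Cref{prop:bound_square} which gives a constant $C_1\geq 0$, independent of $\ell$, such that $\expeLigne{d(\bfX_\gamma^\ell,\bfX_\gamma^{\ell+1})^2}\leq \expeLigne{\sup_{t\in\ccint{0,\gamma}} d(\bfX_t^\ell,\bfX_t^{\ell+1})^2}\leq C_1 \gamma^3 2^{-2\ell}$. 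Substituting and summing the geometric series,
\begin{equation}
\expesq{d(\bfX_\gamma^0,\bfX_\gamma^{L+1})^2} \leq C_1^{1/2}\gamma^{3/2}\sum_{\ell=0}^{L}2^{-\ell} \leq 2C_1^{1/2}\gamma^{3/2}.
\end{equation}

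Finally, I would pass to the limit $L\to+\infty$ using \Cref{prop:limit_seq}: since $\bfX_\gamma^{L+1}\to\bfX_\gamma$ almost surely and the manifold $\M$ is compact (so $d$ is bounded), Fatou's lemma (or dominated convergence) yields
\begin{equation}
\expesq{d(X_1^0,\bfX_\gamma)^2}=\expesq{d(\bfX_\gamma^0,\bfX_\gamma)^2}\leq \liminf_{L\to+\infty}\expesq{d(\bfX_\gamma^0,\bfX_\gamma^{L+1})^2}\leq 2C_1^{1/2}\gamma^{3/2}.
\end{equation}
Squaring gives the announced bound $\expeLigne{d(X_1^0,\bfX_\gamma)^2}\leq 4 C_1 \gamma^{3}$, which in particular implies the stated $\mathcal{O}(\gamma^{3/2})$ estimate. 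The only step requiring any care is the interchange of limit and expectation, but this is immediate thanks to the compactness of $\M$ which uniformly bounds the integrand; the rest is bookkeeping. There is no genuine obstacle beyond reusing \Cref{prop:bound_square} and \Cref{prop:limit_seq}.
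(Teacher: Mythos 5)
Your proposal is correct and follows essentially the same route as the paper: telescope over the dyadic levels using \Cref{prop:bound_square} and pass to the limit via \Cref{prop:limit_seq}. The only cosmetic difference is that you sum in $L^2$ via Minkowski and convert at the end (yielding the marginally sharper $\expeLigne{d(X_1^0,\bfX_\gamma)^2}\leq 4C_1\gamma^3$), whereas the paper applies Cauchy--Schwarz at each level first and telescopes the first moments; both give the claimed $\mathcal{O}(\gamma^{3/2})$ bound.
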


\begin{proof}{}{}
  Using \Cref{prop:bound_square}, there exists $C \geq 0$ such that for any $\ell \in \nset$
    \begin{equation}
    \textstyle{ \expeLigne{\sup_{t \in \ccint{0, \gamma}} d(\bfX_t^\ell, \bfX_t^{\ell+1})} \leq C \gamma^{3/2} 2^{-\ell}  . }
  \end{equation}
Therefore, combining this result and \Cref{prop:limit_seq} we get that for any $\ell \in \nset$
\begin{equation}
  \textstyle{ \expeLigne{\sup_{t \in \ccint{0, \gamma}} d(\bfX_t^\ell, \bfX_t)} \leq 2 C \gamma^{3/2}  , }
\end{equation}
which concludes the proof.
\end{proof}

Finally, we consider the two following processes $(X_k^1, X_k^2)_{k \in \nset}$
such that for any $k \in \nset$ and $i \in \{1, 2\}$
\begin{equation}
  X_{k+1}^i = \exp_{X_k^i}[\gamma b(k \gamma, X_k^i) + \sqrt{\gamma} E_k^i Z_k ]  ,
\end{equation}
where $\{Z_k\}_{k \in \nset}$ is a family of independent Gaussian random
variables with zero mean and identity covariance matrix, and for any
$k \in \nset$, $E_k^1$ is a frame for $\mathrm{T}_{X_k^1} \M$ and
$E_k^2 = \Gamma_{X_k^1}^{X_k^2} E_k^1$.

\begin{proposition}{}{explosion}
  Assume \rref{assum:manifold}. Then, there exists $C \geq 0$ such that for any $k \in \nset$
  \begin{equation}
    \expe{d(X_k^1, X_k^2)} \leq \exp[C k \gamma] \expe{d(X_0^1, X_0^2)}  . 
  \end{equation}
\end{proposition}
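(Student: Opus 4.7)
The plan is to obtain a one-step inequality of the form $\mathbb{E}[d(X_{k+1}^1,X_{k+1}^2)^2 \mid \mcf_k] \leq (1+C\gamma)\, d(X_k^1,X_k^2)^2$ by applying \Cref{lemma:bound_exp} with the parallel-transport coupling, and then iterate. Set $x = X_k^1$, $y = X_k^2$, and
\begin{equation}
u = \gamma\, b(k\gamma, X_k^1) + \sqrt{\gamma}\, E_k^1 Z_k, \qquad v = \gamma\, b(k\gamma, X_k^2) + \sqrt{\gamma}\, E_k^2 Z_k,
\end{equation}
so that $X_{k+1}^1 = \exp_x[u]$ and $X_{k+1}^2 = \exp_y[v]$. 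The crucial point is that the frame coupling $E_k^2 = \Gamma_x^y E_k^1$ gives $\Gamma_y^x E_k^2 = E_k^1$, so the noise components cancel under $\Gamma_y^x$: explicitly, $\Gamma_y^x v - u = \gamma(\Gamma_y^x b(k\gamma, X_k^2) - b(k\gamma, X_k^1))$.

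Next I would exploit the smoothness of $b \in \rmc^\infty(\ccint{0,T},\XM)$ together with the compactness of $\M$ to obtain constants $K, L \geq 0$ such that $\|b(t,x)\| \leq K$ and, for any $x,y \in \M$, $\|\Gamma_y^x b(t,y) - b(t,x)\| \leq L\, d(x,y)$; the latter is a covariant Lipschitz estimate that follows by integrating $\nabla b$ along the minimizing geodesic from $x$ to $y$. Hence $\|\Gamma_y^x v - u\| \leq L\gamma\, d(X_k^1, X_k^2)$, and with $\kappa = \|u\| + \|v\| \leq 2\gamma K + 2\sqrt{\gamma}\|Z_k\|$, \Cref{lemma:bound_exp} gives
\begin{equation}
d(X_{k+1}^1, X_{k+1}^2)^2 \leq (1 + C\kappa^2 e^{4\kappa})\, d(X_k^1,X_k^2)^2 + C L^2 \gamma^2 e^{4\kappa}\, d(X_k^1,X_k^2)^2 + 2\langle w'(0), \Gamma_y^x v - u\rangle,
\end{equation}
where $w$ is the minimizing geodesic from $X_k^1$ to $X_k^2$, so $\|w'(0)\| = d(X_k^1,X_k^2)$, and Cauchy--Schwarz bounds the last term by $2L\gamma\, d(X_k^1,X_k^2)^2$.

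Since $Z_k$ is standard Gaussian and independent of $\mcf_k$, the random variables $\mathbb{E}[\kappa^2 e^{4\kappa} \mid \mcf_k]$ and $\mathbb{E}[e^{4\kappa} \mid \mcf_k]$ are deterministic and, using $\mathbb{E}[\rme^{\alpha \|Z_k\|}] < +\infty$ for all $\alpha \geq 0$, one checks that the former is $O(\gamma)$ and the latter $O(1)$ as $\gamma \to 0$. Taking conditional expectation therefore yields $\mathbb{E}[d(X_{k+1}^1, X_{k+1}^2)^2 \mid \mcf_k] \leq (1 + C\gamma)\, d(X_k^1,X_k^2)^2$ for some $C \geq 0$ independent of $k$ and $\gamma$ (on bounded $\gamma$). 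Applying the conditional Jensen inequality gives $\mathbb{E}[d(X_{k+1}^1,X_{k+1}^2) \mid \mcf_k] \leq (1+C\gamma)^{1/2} d(X_k^1,X_k^2) \leq \rme^{C\gamma/2}\, d(X_k^1,X_k^2)$, and iterating then taking total expectation yields the claim with $C$ replaced by $C/2$.

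The main obstacle is the covariant Lipschitz bound $\|\Gamma_y^x b(t,y) - b(t,x)\| \leq L\, d(x,y)$: it is standard on compact Riemannian manifolds but requires care because $\Gamma_y^x$ is itself geometry-dependent; one fixes a minimizing geodesic and differentiates $t \mapsto \Gamma_{\gamma(t)}^x b(s, \gamma(t))$, using that $\nabla b$ is bounded by compactness. Everything else is a bookkeeping of Gaussian moments.
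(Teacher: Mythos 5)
Your proposal is correct and follows essentially the same route as the paper: both apply \Cref{lemma:bound_exp} with the parallel-transport frame coupling so the noise terms cancel, bound $\|\Gamma_y^x v - u\|$ by a covariant Lipschitz estimate on $b$ of order $\gamma\, d(X_k^1,X_k^2)$, control the $\kappa$-dependent factors via Gaussian exponential moments, and iterate a one-step contraction $(1+C\gamma)$. The only cosmetic difference is how you pass from $d^2$ to $d$ (conditional Jensen versus the paper's pointwise $\sqrt{1+t}\leq 1+t/2$ followed by expectation), which changes nothing.
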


\begin{proof}{}{}
  Let $k \in \nset$. Using \Cref{lemma:bound_exp}, there exists $D \geq 0$ such that 
\begin{align}
  d(X_{k+1}^1, X_{k+1}^2)^2 &\leq (1 +D \kappa_k^2 \exp[4 \kappa_k]) d(X_k^1,X_k^2)^2 \\
  & \qquad + D \exp[4 \kappa_k] \normLigne{\Gamma_{X_{k}^{2}}^{X_k^1} v_k - u_k}^2 + 2 \langle w'(0), \Gamma_{X_{k}^{2}}^{X_k^1} v_k - u_k \rangle  ,
\end{align}
with $w: \ \ccint{0,1} \to \M$ a minimizing geodesic between $X_k^1$ and
$X_{k}^2$
\begin{align}
  &\kappa_k = \normLigne{u_k} + \normLigne{v_k}  , \\
  &u_k^1 = \gamma b(k \gamma, X_k^1)   , \\
  &v_k^1 = \gamma b(k \gamma, X_{k}^{2})  , \\
  &u_k^2 = \sqrt{\gamma} Z_k E_{k}^{1}  , \qquad \quad \quad \  v_k^2 = \sqrt{\gamma} Z_k E_k^2  , \\
    &u_k = u_k^1 + u_k^2 , \qquad \qquad \quad \ v_k = v_k^1 + v_k^2  .
\end{align}
We have that $\Gamma_{X_k^2}^{X_k^1} v_k^2 = v_k$ and  
\begin{equation}
  \normLigne{\Gamma_{X_k^2}^{X_k^1} v_k^1 - u_k^1} \leq \Ltt_1 \gamma d(X_k^1, X_k^2)  . 
\end{equation}
In addition, $\normLigne{w'(0)} \leq d(X_k^1, X_k^2)$. Therefore, we get that
\begin{equation}
  d(X_{k+1}^1, X_{k+1}^2)^2 \leq (1 +D \kappa_k^2 \exp[4 \kappa_k] + D \gamma^2 \exp[4 \kappa_k] +2 \gamma) d(X_k^1,X_k^2)^2   . 
\end{equation}
Hence, using that for any $t \geq 0$, $\sqrt{1+t} \leq 1 + t/2$, we have
\begin{equation}
  d(X_{k+1}^1, X_{k+1}^2) \leq (1 +D \kappa_k^2 \exp[4 \kappa_k] + D \gamma^2 \exp[4 \kappa_k] +2 \gamma) d(X_k^1,X_k^2)   . 
\end{equation}
Therefore, we get that there exists $C \geq 0$ such that 
\begin{equation}
  \expeLigne{d(X_{k+1}^1, X_{k+1}^2)} \leq (1 + C \gamma) \expeLigne{d(X_k^1,X_k^2)}  ,
\end{equation}
which concludes the proof.
\end{proof}



\section{Proof of proposition \ref{prop:implicit_der}}
\label{sec:implicit-losses}

The regularity conditions on $p_{t|s}(x_t|x_s)s_t(x_t)$ are
\begin{itemize}
    \item $p_{t|s}(x_t|x_s)s_t(x_t)$ is a vector field in $C_1$ $\forall x_s$.
    \item $|p_{t|s}(x_t|x_s)s_t(x_t)| \in L_1$ $\forall x_s$.
    \item $\dive(p_{t|s}(x_t|x_s)s_t(x_t)) \in L_1$ $\forall x_s$.
\end{itemize}
These conditions are not difficult to show. We can manually control $s_t$ by our choice of score network, and $p_{t|s}(x_t|x_s)$ is controlled by choice of noising process. Under these conditions we can prove the statement.

\begin{proof}{}{}
  Let $t \in \ocint{0,T}$ and $s_t \in \rmc^\infty(\M)$. Using a
  divergence theorem for non-compact manifolds \cite[see][p.2]{gaffney1954}, we have
  \begin{align}
    \ell_{t|s}(s_t) &{= \int_{\M \times \M} \normLigne{\nabla \log p_{t|s}(x_t|x_s)}^2 \rmd \Pbb_{s,t}(x_s,x_t) + \int_\M \normLigne{s_t(x_t)}^2 \rmd \Pbb_{t}(x_t)} \\
    & \qquad \qquad {- 2 \int_{\M \times \M} \langle \nabla \log p_{t|s}(x_t|x_s), s_t(x_t) \rangle_\M \rmd \Pbb_{s,t}(x_s,x_t)} 
\end{align}
Looking at the last term
\begin{align}
    &\int_{\M \times \M} \langle \nabla \log p_{t|s}(x_t|x_s), s_t(x_t) \rangle_\M \rmd \Pbb_{s,t}(x_s,x_t)\\ &= \int_{\M \times \M}  \langle \nabla \log p_{t|s}(x_t|x_s), s_t(x_t) \rangle_\M p_{t|s}(x_t|x_s)p_s(x_s) \rmd (\piinv \otimes \piinv) (x_s, x_t) \\
    &=\int_{\M } \left\{\int_\M  \langle \nabla p_{t|s}(x_t|x_s), s_t(x_t) \rangle_\M \rmd \piinv(x_t)\right\} p_s(x_s) \rmd \piinv(x_s) \\
    &= -\int_{\M } \left\{\int_\M   \dive(s_t)(x_t)p_{t|s}(x_t|x_s)  \rmd \piinv(x_t)\right\} p_s(x_s) \rmd \piinv(x_s) \quad \text{by the divergence theorem} \\
    &=- \int_{\M \times \M} \dive(s_t)(x_t)\rmd \Pbb_{s,t}(x_s,x_t) =- \int_{\M \times \M} \dive(s_t)(x_t)\rmd \Pbb_{t}(x_t)
\end{align}
Therefore
\begin{align}
    \ell_{t|s}(s_t) &={\int_{\M \times \M} \normLigne{\nabla \log p_{t|s}(x_t|x_s)}^2 \rmd \Pbb_{s,t}(x_s,x_t) + \int_\M \normLigne{s_t(x_t)}^2 \rmd \Pbb_{t}(x_t)} \\
                & \qquad \qquad { +2 \int_{\M} \dive(s_t)(x_t)\rmd \Pbb_{t}(x_t)}  ,
  \end{align}
  which concludes the proof.
\end{proof}


\section{Comparison with Moser flows}
\label{sec:comp-with-moser}
In this section, we compare ourselves with \textcite{rozen2021moser} in greater
details. \textcite{rozen2021moser} also aims at interpolating between a reference
distribution $\piinv$ and a target distribution $p_0$. We assume that we have
access to the density $\piinv$ and that we know how to sample form $\piinv$
(which is often the case if $\piinv$ is the uniform distribution on
$\M$).

We then consider the following interpolation
$\hat{p}_t = (1-t) \hat{p}_0 + t \hat{p}_1$, with $\hat{p}_0 = \piinv$ and
$\hat{p}_1 = p_0$. Let $(\bfX_t)_{t \in \ccint{0,1}}$ be given by
$\bfX_0 \sim \hat{p}_0$ and $\rmd \bfX_t = \mathbf{v}_t(\bfX_t) \rmd t$ where
for any $t \in \ccint{0,1}$,
$\mathbf{v}_t = \mathbf{u} / ((1-t) \hat{p}_0 + \hat{p}_1)$, with
$\dive(\mathbf{u}) = \hat{p}_0 - \hat{p}_1$. Using the Fokker-Planck equation,
we have that for any $t \in \ccint{0,1}$, $\bfX_t \sim \hat{p}_t$. In
\textcite{rozen2021moser}, $\mathbf{u}$ is replaced by a parametric version
$\mathbf{u}_\theta$ and the authors optimize the loss
\begin{equation}
  \textstyle{
    \ell(\theta) = \expeLigne{(\hat{p}_0 - \mathrm{div}(\mathbf{u}_\theta))^{+, \vareps}(\bfX_1)} + \lambda \int_\M (\hat{p}_0 - \mathrm{div}(\mathbf{u}_\theta))^{-, \vareps}(x) \rmd x ,
    }
  \end{equation}
  with $\lambda, \vareps >0$ and for any $f: \ \M \to \rset$,
  $f^{+, \vareps} = \max(f, \vareps)$ and
  $f^{-, \vareps} = \vareps - \min(f, \vareps)$.  Given $\mathbf{u}_\theta$, we
  then consider $(\bfX_t^\theta)_{t \in \ccint{0,1}}$ such that
  $\rmd \bfX_t^\theta = \mathbf{v}_{t}^\theta(\bfX_t^\theta) \rmd t$, where for
  any $t \in \ccint{0,1}$,
  $\mathbf{v}_{t}^\theta = \mathbf{u}_\theta / (\hat{p}_0 + t
  \dive(\mathbf{u}_\theta))$. Note that $\mathbf{u}^\theta$ also enables density
  estimation using that $\hat{p}_1 = \hat{p}_0 -
  \dive(\mathbf{u}^\theta)$. Density estimation is not directly accessible using
  RSGM, however in \Cref{sec:dens-estim-with} we propose a way to perform such
  an estimation using Fisher score in a manner akin to \textcite{choi2021density}.

  Let $\hat{p}_0 = \piinv$ to be the uniform distribution on $\M$. As RSGM,
  Moser flow defines a continuous time interpolation between $p_0$ and
  $\piinv$. One major difference between the two approaches is that Moser flows
  perform the interpolation in \emph{density space}, i.e.
  $\hat{p}_t = (1-t) \hat{p}_0 + t \hat{p}_1$ for any $t \in \ccint{0,1}$,
  whereas RSGM performs the interpolation in \emph{sample space}, i.e.
  $p_t = \int_{\M} p_0(y) p_{t|0}(y,x) \rmd \piinv(y)$. Interpolation in the
  \emph{density space} results in spontaneous creation of density, whereas
  interpolation in \emph{sample space} corresponds to a displacement of the
  density, see \Cref{fig:interpolation_pdf,fig:interpolation_hist}. In that respect, Moser flows can
  be seen as \emph{vertical displacement} whereas RSGM corresponds to
  \emph{horizontal displacement}, see \textcite{santambrogio2017euclidean}.
  The drawback with the `spontaneous creation of density' of Moser flows, is that when solving trajectories in \emph{sample space}---for sampling or likelihood evaluation purposes---the Stein score's amplitude can get extremely high in settings where the reference and target distributions have little overlap as shown on \cref{fig:interpolation_score_norm}.

\begin{figure}[t]
  \centering
    \begin{subfigure}{\textwidth}
        \includegraphics[width=\textwidth]{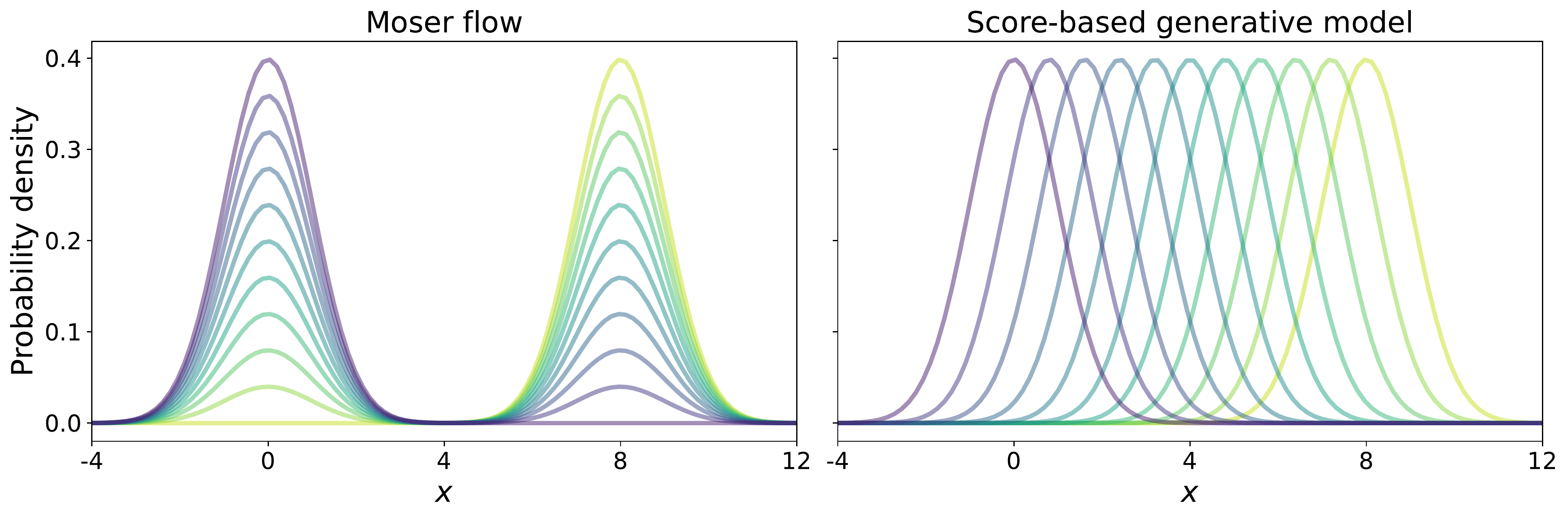}
        \caption{Interpolated density between the reference $\piinv = \mathrm{N}(0, 1)$ and target $p_0 = \mathrm{N}(8, 1)$ distributions.} \label{fig:interpolation_pdf}
    \end{subfigure}
    \vfill
    \begin{subfigure}{\textwidth}
        \includegraphics[width=\textwidth]{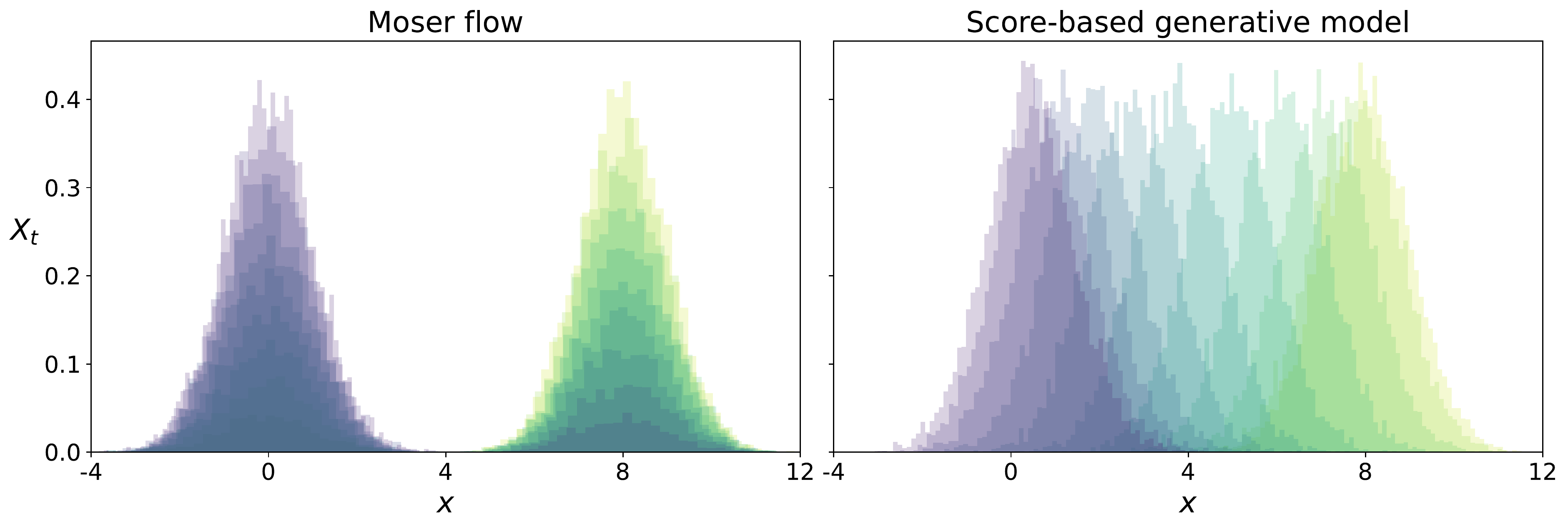}
        \caption{Interpolated histograms between the reference $\piinv = \mathrm{N}(0, 1)$ and target $p_0 = \mathrm{N}(8, 1)$ distributions.}
        \label{fig:interpolation_hist}
    \end{subfigure}
    \vfill
    \begin{subfigure}{\textwidth}
        \includegraphics[width=\textwidth]{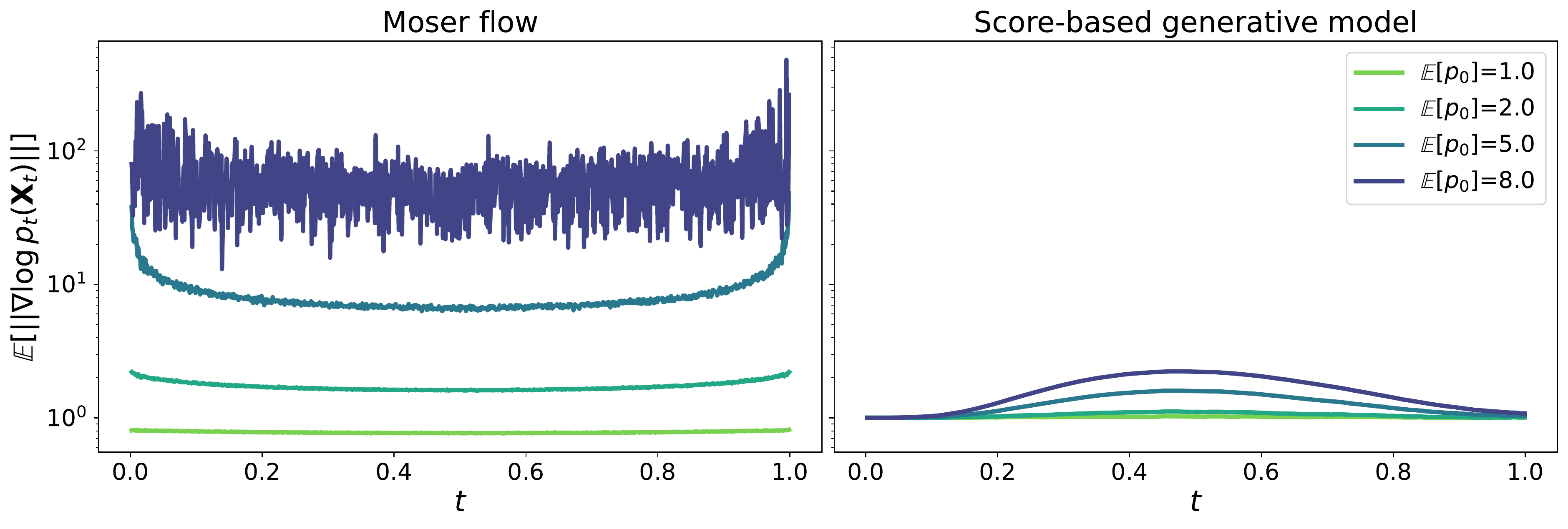}
        \caption{Expected norm of the Stein score along trajectories interpolating between reference and target $p_0 = \mathrm{N}(a, 1)$ distributions for different target mean.}
        \label{fig:interpolation_score_norm}
    \end{subfigure}
    \caption{
    The reference distribution is $\piinv = \mathrm{N}(0, 1)$. 
    }
    \label{fig:comp_moser_diffusion}
\end{figure}


\section{Density estimation with Fisher score}
\label{sec:dens-estim-with}

In this section, we show how we can adapt ideas from \textcite{choi2021density} for
density estimation on $\M$ using the Fisher score. The main idea of using Fisher
score is to leverage the following decomposition for any $x \in \M$
\begin{equation}
  \textstyle{
    \log p_0(x) = \log p_T(x) - \int_0^T \partial_t \log p_t(x) \rmd t .
    }
\end{equation}
Assume that an approximation $\hat{\mathbf{s}}_\theta$ of $\partial_t \log p_t$
(the Fisher score) is available then we have that for any $x \in \M$
\begin{equation}
  \textstyle{
    \log p_0(x) \approx \log \piinv (x) - \int_0^T \hat{\mathbf{s}}_\theta(x) \rmd t .
    }
  \end{equation}
  Before turning to our main result, we state the following lemma.

   \begin{lemma}{}{bound_unif_lemma}
   Assume \rref{assum:manifold}. Then, there exists $C, T_0 \geq 0$ such that for any
   $x \in \M$ and $T \geq T_0$,
   $\absLigne{p_T(x) - 1} \leq C \exp[-\lambda_1 T / 2]$, where $\lambda_1$ is the
   first non-negative eigenvalue of $-\Delta_\M$ in
   $\mathrm{L}^2(p_{\textup{ref}})$.
 \end{lemma}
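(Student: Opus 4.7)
The plan is to combine the Sturm--Liouville expansion with a Cauchy--Schwarz splitting that upgrades $L^2$ ergodicity into the desired $L^\infty$ control. Since $\M$ is compact and $p_0$ is smooth, $p_0 \in L^\infty(p_{\textup{ref}}) \subset L^2(p_{\textup{ref}})$, so I would start by expanding $p_0 - 1 = \sum_{k \geq 1} c_k \phi_k$ in the orthonormal eigenbasis of $-\Delta_\M$, with vanishing zeroth coefficient because $\int p_0 \rmd p_{\textup{ref}} = 1$. The spectral expansion recalled in \cref{eq:infinite_sum2} immediately gives $p_t - 1 = \sum_{k \geq 1} \rme^{-\lambda_k t} c_k \phi_k$ in $L^2$, from which the spectral gap yields
\begin{equation}
\|p_t - 1\|_{L^2(p_{\textup{ref}})}^2 \leq \rme^{-2 \lambda_1 t} \|p_0 - 1\|_{L^2(p_{\textup{ref}})}^2.
\end{equation}

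The key step is to convert this $L^2$ estimate into a pointwise bound by splitting the time interval in two halves. Using the reversibility of Brownian motion w.r.t.\ $p_{\textup{ref}}$, which gives both $p_{s|0}(x|y) = p_{s|0}(y|x)$ and $\int p_{s|0}(x|y) \rmd p_{\textup{ref}}(y) = 1$, I can write
\begin{equation}
p_T(x) - 1 = \int_\M (p_{T/2}(y) - 1) p_{T/2|0}(x|y) \rmd p_{\textup{ref}}(y),
\end{equation}
and then Cauchy--Schwarz yields $|p_T(x) - 1| \leq \|p_{T/2} - 1\|_{L^2} \|p_{T/2|0}(x|\cdot)\|_{L^2}$.

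To control the second factor, I would combine reversibility with the Chapman--Kolmogorov identity to get $\|p_{T/2|0}(x|\cdot)\|_{L^2}^2 = p_{T|0}(x|x)$. Since the spectral expansion $p_{t|0}(x|x) = \sum_k \rme^{-\lambda_k t}\phi_k(x)^2$ is non-increasing in $t$, for $T \geq 1$ we have $p_{T|0}(x|x) \leq p_{1|0}(x|x) \leq C$ uniformly in $x$ by \cref{assum:manifold} evaluated at $t = 1$. Feeding this into the Cauchy--Schwarz estimate along with the $L^2$ decay gives
\begin{equation}
|p_T(x) - 1| \leq C^{1/2} \|p_0 - 1\|_{L^2(p_{\textup{ref}})} \rme^{-\lambda_1 T/2},
\end{equation}
uniformly in $x \in \M$, for every $T \geq T_0 = 1$, which is the desired conclusion. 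The argument is mostly routine once the splitting is in place; the only point requiring care is the identity $\|p_{s|0}(x|\cdot)\|_{L^2}^2 = p_{2s|0}(x|x)$, which I would justify explicitly via reversibility and semigroup composition.
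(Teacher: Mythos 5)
Your proof is correct, and it takes a genuinely different --- and in fact tighter --- route than the paper's. The paper starts from the total-variation ($\mathrm{L}^1$) decay of \cref{prop:brownian_conv_repeat} and upgrades it to an $\mathrm{L}^\infty$ bound by combining a gradient estimate on the heat kernel, $\normLigne{\nabla p_{T|0}(\cdot|x_0)} \leq C_1(1+T^\beta)$ (Grigor'yan, Hsu), with Croke's lower bound on the volume of small balls: if $\absLigne{p_T-1}$ exceeded $\kappa\vareps$ at some point while its $\mathrm{L}^1$ norm were $\vareps$, the Lipschitz control would force the $\mathrm{L}^1$ norm to be too large on a small ball. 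That interpolation costs a power of the rate (it yields $\rme^{-(1-1/(d+1))\lambda_1 T}$ up to a polynomial factor, subsequently weakened to $\rme^{-\lambda_1 T/2}$) and leans on two external heat-kernel/geometry results. You instead upgrade $\mathrm{L}^2$ spectral decay to $\mathrm{L}^\infty$ via the standard ultracontractivity splitting: Cauchy--Schwarz against $p_{T/2|0}(x|\cdot)$, whose squared $\mathrm{L}^2$ norm equals $p_{T|0}(x|x)$ by symmetry of the heat kernel and Chapman--Kolmogorov, and which is uniformly bounded for $T \geq 1$ by monotonicity of the on-diagonal kernel together with \cref{assum:manifold} evaluated at $t=1$. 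All steps check out, the constant $C^{1/2}\normLigne{p_0-1}_{\mathrm{L}^2(\piinv)}$ is explicit and dimension-free, and an asymmetric split $T=1+(T-1)$ would even yield the stronger rate $\rme^{-\lambda_1 T}$. The one caveat is that you invoke $p_0 \in \mathrm{L}^2(\piinv)$ via smoothness of $p_0$, which the lemma's bare hypotheses do not state; this is harmless in context (the proposition that uses the lemma assumes $p_0$ smooth and positive), and could be removed altogether by first running the semigroup for a unit time, since $p_{1|0}(x|y) \leq p_{1/2|0}(x|x)^{1/2} p_{1/2|0}(y|y)^{1/2}$ bounds $p_1$ in $\mathrm{L}^\infty$ for an arbitrary initial law.
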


 \begin{proof}{}{}
   First, using \Cref{prop:brownian_conv_repeat}, there exists $C_0 \geq 0$ such
   that for any $T \geq 1/2$ we have
   \begin{equation}
     \textstyle{\int_\M \absLigne{p_T(x) - 1} \rmd \piinv(x)}\leq C_0 \rme^{-\lambda_1 T} .
   \end{equation}
   Using \cite[Corollary 5.5]{grigor1999estimates}, \cite[Theorem
   1.2]{hsu1999estimates}and the fact that $\M$ is compact, there exists
   $C_1, \beta \geq 0$ such that for any $T \geq 1/2$ and $x_0, x_T \in \M$
   \begin{equation}
\label{eq:bound_grad}
     \normLigne{\nabla p_{T|0}(x_T|x_0) } \leq C_1(1 + T^\beta) .
   \end{equation}
   In addition, using \cite[Proposition 14]{croke1980some} we have that there
   exists $C_2, r_0 > 0$ such that for any $x_0 \in \M$ and
   $r \in \ooint{0, r_0}$
\begin{equation}
\label{eq:lower_bound_volume}
  \textstyle{\int_{\cball{x_0}{r}} \rmd \piinv(x) \geq C_2 r^d . }
\end{equation}
Assume that that
$\textstyle{\int_\M \absLigne{p_T(x) - 1} \rmd \piinv(x)} \leq \vareps$ and that
there exists $x_0 \in \M$ such that $\abs{p_T(x) - 1} > \kappa \vareps$ with
$\kappa > 0$ and let $T \geq T_0$ with
$T_0 = (\kappa \vareps/(2C_1))^{1/\beta}$. Then, using \eqref{eq:bound_grad} and
\eqref{eq:lower_bound_volume}, we have for any $r \in \ooint{0,r_0}$
\begin{equation}
  \textstyle{\vareps \geq \int_{\cball{0}{r}} \abs{p_T(x) - 1} \geq C_2r^{d}(\kappa \vareps - C_1(1 + T^\beta) r).}
\end{equation}
Since $\kappa \vareps / (2C_1(1+T^\beta)) \in \ooint{0,r_0}$ we have
\begin{equation}
  \textstyle{\vareps  \geq C_2(\kappa \vareps)^{d+1}/(4C_1(1+T^\beta)).}
\end{equation}
Therefore, we get that
\begin{equation}
  \label{eq:8}
  \textstyle{\vareps  \geq C_2(\kappa \vareps)^{d+1}/(4C_1(1+T^\beta)).}
\end{equation}
Therefore, we get that
$\kappa \leq (4 C_1 (1 + T^\beta)/ C_2)^{1/(d+1)}
\vareps^{-1/(d+1)}$. Therefore, we have that for any $x \in \M$
\begin{equation}
  \label{eq:bound_unfi1}
  \absLigne{p_T(x) - 1} \leq (8 C_1 (1 + T^\beta)/ C_2)^{1/(d+1)} \vareps^{1 - 1/(d+1)} .
\end{equation}
Let $T_0 \geq 0$ such that for any $T \geq T_0$ we have
\begin{equation}
(8 C_1 (1 + T^\beta)/ C_2)^{1/(d+1)} C_0^{1 - 1/(d+1)} \rme^{-(1 - 1/(d+1)) \lambda_1 T} \leq 2^{1-\beta} C_1 .
\end{equation}
Combining this result and \eqref{eq:bound_unfi1}, we get that for any $x \in \M$ and $T \geq 0$
\begin{equation}
  \label{eq:bound_unfi2}
  \absLigne{p_T(x) - 1} \leq (8 C_1 (1 + T^\beta)/ C_2)^{1/(d+1)} C_0^{1 - 1/(d+1)} \rme^{-(1 - 1/(d+1)) \lambda_1 T} ,
\end{equation}
which concludes the proof.
\end{proof}

  The following proposition quantifies this approximation.

  \begin{proposition}{}{}
    Assume \rref{assum:manifold} and that
    $p_0 \in \rmc^\infty(\M, \ooint{0,+\infty})$. Let $x_0 \in \M$ and assume
    that for any $t \in \ccint{0,T}$,
    $\absLigne{\hat{\mathbf{s}}_\theta(t, x_0) - \partial_t \log p_t(x_0)} \leq \Mtt$
    with $\Mtt \geq 0$. Then, there exists $C, T_0 \geq 0$ such that for any $T \geq 0$
    \begin{equation}
      \textstyle{
        \absLigne{\log p_0(x_0)  - \int_0^T \hat{\mathbf{s}}_\theta(t, x_0) \rmd t } \leq C \exp[-\lambda_1 T/2 ] + \Mtt T .,
        }
      \end{equation}
      where $\lambda_1$ is the first non-negative eigenvalue of $-\Delta_\M$ in
      $\mathrm{L}^2(p_{\textup{ref}})$.
  \end{proposition}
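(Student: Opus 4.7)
The plan is to reduce the claimed inequality to two error contributions: a terminal bias coming from the fact that $p_T$ is only approximately uniform, and an accumulated approximation error coming from $\hat{\mathbf{s}}_\theta \approx \partial_t \log p_t$. The starting point is the exact identity
\begin{equation}
    \textstyle{\log p_0(x_0) = \log p_T(x_0) - \int_0^T \partial_t \log p_t(x_0) \rmd t,}
\end{equation}
which requires $t \mapsto p_t(x_0)$ to be strictly positive and $\rmc^1$ on $\ccint{0,T}$. Positivity follows from the strong maximum principle applied to the heat equation on the compact manifold $\M$ together with the assumption that $p_0 > 0$; smoothness in $t$ comes from hypoellipticity of the heat operator (or directly from the Sturm--Liouville expansion). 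Combining this identity with the pointwise assumption on $\hat{\mathbf{s}}_\theta$, one rewrites the quantity of interest (up to sign, modulo the apparent typo in the statement) as
\begin{equation}
    \textstyle{\log p_0(x_0) + \int_0^T \hat{\mathbf{s}}_\theta(t, x_0) \rmd t - \log p_T(x_0) = \int_0^T \{\hat{\mathbf{s}}_\theta(t, x_0) - \partial_t \log p_t(x_0)\} \rmd t,}
\end{equation}
so that the triangle inequality reduces the job to bounding $\absLigne{\log p_T(x_0)}$ and $\absLigne{\int_0^T \{\hat{\mathbf{s}}_\theta - \partial_t \log p_t\}(t,x_0) \rmd t}$ separately.

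The second term is the simplest: from the uniform-in-$t$ bound by $\Mtt$ and a direct integration, one immediately gets the contribution $\Mtt T$. For the first term, the key ingredient is \Cref{lemma:bound_unif_lemma}, which provides $\absLigne{p_T(x_0) - 1} \leq C' \exp[-\lambda_1 T / 2]$ for $T$ larger than some explicit $T_0$ (obtained in that lemma from combining geometric ergodicity in total variation with Hsu-type gradient estimates on the heat kernel on compact manifolds). Choosing $T_0$ large enough that $C' \exp[-\lambda_1 T_0/2] \leq 1/2$, the elementary inequality $\absLigne{\log(1+u)} \leq 2 \absLigne{u}$ valid for $\absLigne{u} \leq 1/2$ then upgrades this to $\absLigne{\log p_T(x_0)} \leq 2 C' \exp[-\lambda_1 T / 2]$ uniformly in $x_0$. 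For $T \in \ccint{0,T_0}$ the inequality is trivially true after enlarging $C$: by compactness of $\M$ and smoothness/positivity of $p_0$, the functions $x_0 \mapsto \log p_t(x_0)$ are uniformly bounded on $\ccint{0,T_0} \times \M$, so both sides of the inequality are bounded by a constant that can be absorbed into $C$.

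The main obstacle is the passage from the $\mathrm{L}^1$-type ergodicity estimate in \Cref{prop:brownian_conv_repeat} to the pointwise/$\mathrm{L}^\infty$ estimate $\absLigne{p_T(x) - 1} \leq C \exp[-\lambda_1 T/2]$ used here. This is precisely the content of \Cref{lemma:bound_unif_lemma}, which I would invoke as a black box; its proof requires combining the $\mathrm{L}^1$ bound with a gradient bound on the heat kernel and a lower bound on the volume of geodesic balls, all of which are available on compact Riemannian manifolds. Once this ingredient is in hand, everything else is just an application of the fundamental theorem of calculus plus the triangle inequality, and the constant $C$ in the final statement is obtained by taking the maximum of the constant from \Cref{lemma:bound_unif_lemma} (multiplied by $2$ from the $\log(1+u)$ bound) and a constant dominating $\absLigne{\log p_T(x_0)}$ uniformly for $T \in \ccint{0,T_0}$.
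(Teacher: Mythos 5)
Your proposal is correct and follows essentially the same route as the paper: invoke \Cref{lemma:bound_unif_lemma} for the pointwise bound $\absLigne{p_T(x_0)-1} \leq C_0\exp[-\lambda_1 T/2]$, pick $T_0$ so that the $\absLigne{\log(1+u)} \lesssim \absLigne{u}$ bound applies, and combine with the fundamental-theorem-of-calculus identity and the $\Mtt$ bound via the triangle inequality. In fact your write-up is more complete than the paper's (which only spells out the $\log p_T$ estimate and leaves the decomposition, the $\Mtt T$ term, and the small-$T$ regime implicit), and you correctly flag the sign typo in the statement.
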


  \begin{proof}{}{}
    First using, \Cref{lemma:bound_unif_lemma}, there exists $C_0, T_0^{(a)} \geq 0$ such that for any $T \geq T_0^{(a)}$
    \begin{equation}
      \absLigne{p_T(x_0) - 1} \leq C_0 \exp[-\lambda_1 T / 2].
    \end{equation}
    Let $T_0^{(b)} = \abs{\log(C_0)} / \lambda_1$. Using that for any
    $s \in \coint{1/2, +\infty}$ we have that
    $\absLigne{\log(1 +s)} \leq 2 \log(2) \absLigne{s}$ we get that for any
    $T \geq \max(T_0^{(a)}, T_0^{(b)})$
    \begin{equation}
      \absLigne{\log p_T(x_0)} \leq 2 \log(2) C_0 \exp[-\lambda_1 T / 2],
    \end{equation}
    which concludes the proof.
  \end{proof}

  In practice, we do not have access to $\partial_t \log p_t$. However, following
  \cite[Proposition 2]{choi2021density}, we have the following property.
  
  \begin{proposition}{}{loss-function_fisher}
    Let $\hat{\mathbf{s}}$ such that for any $t \in \ccint{0,T}$ and $x \in \M$,
    $\hat{\mathbf{s}}(t,x) = \partial_t \log p_t(x)$. Then, we have that
    $\hat{\mathbf{s}} = \argmin \ensembleLigne{L(\mathbf{s})}{\mathbf{s} \in
      \rmc^\infty(\ccint{0,T} \times \M, \rset)}$, where for any $\mathbf{s} \in
      \rmc^\infty(\ccint{0,T} \times \M, \rset)$ we have 
    \begin{align}
      L(\mathbf{s}) &= \textstyle{(1/2) \expeLigne{\int_0^T \lambda(t) \mathbf{s}(t, \bfX_t) \rmd t } + \expeLigne{\int_0^T \lambda(t)  \partial_t \mathbf{s}(t, \bfX_t) \rmd t}} \\
      & \qquad \textstyle{ + \expeLigne{\int_0^T \partial_t \lambda(t)  \partial_t \mathbf{s}(t, \bfX_t) \rmd t} + \expeLigne{\lambda(0)\mathbf{s}(0, \bfX_0)} - \expeLigne{\lambda(T)\mathbf{s}(T, \bfX_T)} ,}
    \end{align}
    where $\lambda \in \rmc^\infty(\ccint{0,T}, \rset)$ is a weighting function.
  \end{proposition}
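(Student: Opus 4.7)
}
My plan is to mirror the classical derivation of implicit score matching from denoising score matching (as in \citet{hyvarinen2005estimation}), but with \emph{time} playing the role that the spatial variable plays there; the key identity $\partial_t \log p_t(x)\, p_t(x) = \partial_t p_t(x)$ will let me integrate by parts in $t$ rather than in $x$.

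First, I would introduce the direct (oracle) loss
\begin{equation}
L_{\mathrm{dir}}(\mathbf{s}) \;=\; \tfrac{1}{2}\,\expeLigne{\textstyle\int_0^T \lambda(t)\,\bigl(\mathbf{s}(t,\bfX_t)-\partial_t\log p_t(\bfX_t)\bigr)^2\,\rmd t},
\end{equation}
which is manifestly minimised pointwise on $\ccint{0,T}\times\M$ (wherever $p_t>0$) by $\hat{\mathbf{s}}(t,x)=\partial_t\log p_t(x)$. Expanding the square gives
\begin{equation}
L_{\mathrm{dir}}(\mathbf{s}) = \tfrac{1}{2}\expeLigne{\textstyle\int_0^T \lambda(t)\,\mathbf{s}(t,\bfX_t)^2\,\rmd t} - \expeLigne{\textstyle\int_0^T \lambda(t)\,\mathbf{s}(t,\bfX_t)\,\partial_t\log p_t(\bfX_t)\,\rmd t} + C(\mathbf{s}\text{-independent}),
\end{equation}
so it suffices to rewrite the cross term in a form that does not involve $\partial_t\log p_t$.

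The key step is the integration by parts in $t$: using $p_t\,\partial_t\log p_t = \partial_t p_t$ and writing the expectation as an integral against the density $p_t$ w.r.t.\ $\piinv$, I get
\begin{equation}
\expeLigne{\textstyle\int_0^T \lambda(t)\,\mathbf{s}(t,\bfX_t)\,\partial_t\log p_t(\bfX_t)\,\rmd t}
=\int_\M\!\int_0^T \lambda(t)\,\mathbf{s}(t,x)\,\partial_t p_t(x)\,\rmd t\,\rmd\piinv(x).
\end{equation}
Integrating by parts in $t$, noting that $\partial_t\{\lambda(t)\mathbf{s}(t,x)\} = \partial_t\lambda(t)\,\mathbf{s}(t,x) + \lambda(t)\,\partial_t\mathbf{s}(t,x)$, and recognising back the expectations w.r.t.\ $p_t$, this equals
\begin{equation}
\expeLigne{\lambda(T)\mathbf{s}(T,\bfX_T)} - \expeLigne{\lambda(0)\mathbf{s}(0,\bfX_0)}
-\expeLigne{\textstyle\int_0^T \partial_t\lambda(t)\,\mathbf{s}(t,\bfX_t)\,\rmd t}
-\expeLigne{\textstyle\int_0^T \lambda(t)\,\partial_t\mathbf{s}(t,\bfX_t)\,\rmd t}.
\end{equation}
Substituting this back and collecting terms recovers exactly the claimed loss $L(\mathbf{s})$ (modulo an $\mathbf{s}$-independent constant and up to the evident typo that the first term of $L$ should read $\mathbf{s}(t,\bfX_t)^2$ and the third should pair $\partial_t\lambda(t)$ with $\mathbf{s}$, matching the analogue of implicit score matching). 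Since $L$ and $L_{\mathrm{dir}}$ differ by a constant, they have the same minimisers over $\rmc^\infty(\ccint{0,T}\times\M,\rset)$, and $\hat{\mathbf{s}} = \partial_t\log p_t$ is one of them.

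The only delicate points will be the regularity/integrability needed to justify Fubini and the $t$-integration by parts: I would assume $\lambda\in\rmc^1$, $\mathbf{s}$ smooth with bounded time derivative, and use that $p_0$ is smooth positive together with \rref{assum:manifold} (via the heat-kernel upper bound) so that $(t,x)\mapsto p_t(x)$ is smooth on $\ooint{0,T}\times\M$ with integrable time derivative, allowing the boundary terms at $0$ and $T$ to be taken as genuine expectations under $p_0$ and $p_T$ respectively. This is the only real obstacle; the algebraic manipulation above is then straightforward.
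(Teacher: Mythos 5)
Your derivation is correct, and it reaches the same implicit loss by the same essential mechanism — integration by parts in $t$ using $p_t\,\partial_t\log p_t=\partial_t p_t$ — but it starts one step closer to the goal than the paper does. The paper begins from the \emph{denoising}-style loss $L_0(\mathbf{s})=\int_0^T\lambda(t)\int_{\M\times\M}(\mathbf{s}(t,x_t)-\partial_t\log p_{t|0}(x_t|x_0))^2\,\rmd p_{0,t}(x_0,x_t)\,\rmd t$, which requires first establishing that $\partial_t\log p_t(x_t)=\int_\M \partial_t\log p_{t|0}(x_t|x_0)\,p_{0|t}(x_0|x_t)\,\rmd x_0$ (so that $\hat{\mathbf{s}}$ is the $\rmL^2$ minimiser of $L_0$), and then integrates by parts on the conditional density $\partial_t p_{t|0}$. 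You instead work directly with the marginal loss $L_{\mathrm{dir}}$, whose minimiser is $\hat{\mathbf{s}}$ pointwise by inspection, and integrate by parts on $\partial_t p_t$; this skips the conditional-expectation step entirely and is slightly more economical, at the cost of not exhibiting the tractable denoising form of the cross term (which is irrelevant here since the final loss $L$ is implicit anyway). Your identification of the two typographical errors in the stated $L$ — the first term should be $\mathbf{s}(t,\bfX_t)^2$ and the third should pair $\partial_t\lambda(t)$ with $\mathbf{s}$ rather than $\partial_t\mathbf{s}$ — agrees with what the paper's own computation actually produces, and your closing remarks on the regularity needed for Fubini and the boundary terms at $t=0,T$ cover the only genuinely delicate points. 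One small thing worth stating explicitly: positivity of $p_t$ on all of $\M$ (from positivity of $p_0$ and of the heat kernel) is what makes the pointwise minimiser of $L_{\mathrm{dir}}$ unique, hence the $\argmin$ claim.
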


  \begin{proof}{}{}
    For any $t \in \ccint{0,T}$ and $x_t \in \M$ we have
    \begin{equation}
      \textstyle{
        \hat{\mathbf{s}}(x_t) = \int_\M \partial_t \log p_{t|0}(x_t|x_0) p_{0|t}(x_0|x_t) \rmd x_0 .
        }
    \end{equation}
    Hence, since $\M$ is compact and
    $\hat{\mathbf{s}} \in \rmc^\infty(\ccint{0,T} \times \M, \rset)$, we have that
    $\hat{\mathbf{s}} = \argmin \ensembleLigne{L_0(\mathbf{s})}{\mathbf{s} \in
      \rmc^\infty(\ccint{0,T} \times \M, \rset)}$ where for any
    $\mathbf{s} \in \rmc^\infty(\ccint{0,T} \times \M, \rset)$ we have
    \begin{align}
      L_0(\mathbf{s}) &= \textstyle{\int_0^T \lambda(t) \int_{\M \times \M} (\mathbf{s}(t, x_t) - \partial_t \log p_{t|0}(x_t|x_0))^2 \rmd p_{0,t}(x_0, x_t) \rmd t} \\
                      &= \textstyle{\int_0^T \lambda(t) \int_{\M} \mathbf{s}(t, x_t)^2 \rmd p_t(x_t) \rmd t } \\ & \qquad \qquad\textstyle{ - 2 \int_0^T \lambda(t) \int_{\M \times \M} \mathbf{s}(t, x_t) \partial_t \log p_{t|0}(x_0, x_t) \rmd p_{0,t}(x_0,x_t) \rmd t } \\
      & \qquad \qquad \textstyle{+ \int_0^T \lambda(t) \int_{\M}  \rmd p_t(x_t) \rmd t} \label{eq:loss_L0}
    \end{align}
In addition, we have that
\begin{align}
  &\textstyle{\int_0^T \lambda(t) \int_{\M\times\M} \mathbf{s}(t, x_t) \partial_t \log p_{t|0}(x_t|x_0) \rmd p_{0,t}(x_0, x_t) \rmd t} \\
  &\qquad \qquad = \textstyle{\int_0^T \int_{\M\times\M}  \lambda(t)  \mathbf{s}(t, x_t) \partial_t p_{t|0}(x_t)\rmd p_0(x_0) \rmd \piinv(x_t) \rmd t} .
\end{align}
By integration by parts we get 
\begin{align}
  &\textstyle{\int_0^T \int_{\M\times\M}  \lambda(t)  \mathbf{s}(t, x_t) \partial_t p_{t|0}(x_t)\rmd p_0(x_0) \rmd \piinv(x_t) \rmd t} \\
  & \qquad = - \textstyle{\int_0^T \int_{\M\times\M}  \partial_t (\lambda(t)  \mathbf{s}(\cdot, x_t))(t)  \rmd p_{0,t}(x_0, x_t)  \rmd t}\\
  & \qquad \qquad \qquad \textstyle{+ \lambda(T) \int_\M \mathbf{s}(T, x_T) \rmd p_T(x_T)  - \int_\M \mathbf{s}(0, x_0) \rmd p_0(x_0)} \\
  & \qquad = - \textstyle{\int_0^T \int_{\M\times\M}  \partial_t \lambda(t)  \mathbf{s}(t, x_t) \rmd p_{t}(x_t)  \rmd t} - \textstyle{\int_0^T \int_{\M\times\M}   \lambda(t)  \partial_t \mathbf{s}(t, x_t) \rmd p_{t}(x_t)  \rmd t}\\
  & \qquad \qquad \qquad \textstyle{+ \lambda(T) \int_\M \mathbf{s}(T, x_T) \rmd p_T(x_T)  - \lambda(0) \int_\M \mathbf{s}(0, x_0) \rmd p_0(x_0)}     
\end{align}
Combining this result and \eqref{eq:loss_L0} we get that
\begin{align}
        L_0(\mathbf{s}) &= \textstyle{\int_0^T \lambda(t) \int_{\M \times \M} (\mathbf{s}(t, x_t) - \partial_t \log p_{t|0}(x_t|x_0))^2 \rmd p_{0,t}(x_0, x_t) \rmd t} \\
                        &= \textstyle{\int_0^T \lambda(t) \int_{\M} \mathbf{s}(t, x_t)^2 \rmd p_t(x_t) \rmd t} + 2 \textstyle{\int_0^T \int_{\M\times\M}  \partial_t \lambda(t)  \mathbf{s}(t, x_t) \rmd p_{t}(x_t)  \rmd t} \\
  & \qquad +2 \textstyle{\int_0^T \int_{\M\times\M}   \lambda(t)  \partial_t \mathbf{s}(t, x_t) \rmd p_{t}(x_t)  \rmd t}  \textstyle{- \lambda(T) \int_\M \mathbf{s}(T, x_T) \rmd p_T(x_T)}    \\
      & \qquad + \textstyle{\lambda(0) \int_\M \mathbf{s}(0, x_0) \rmd p_0(x_0)} \textstyle{+ \int_0^T \lambda(t) \int_{\M} ^2 \rmd p_t(x_t) \rmd t},
\end{align}
which concludes the proof.
  \end{proof}

  Hence, using \Cref{prop:loss-function_fisher}, we could estimate jointly the
  spatial (or Stein) score used in RSGM and the Fisher score considered in this
  section, see \textcite{choi2021density}.


\section{Extensions} \label{sec:extensions}

\subsection{Schr\"odinger bridge.} For Euclidean SGM, the generative model is
given by an approximation of the time-reversal of the noising dynamics
$(\bfX_t)_{t \in \ccint{0,T}}$ while the backward dynamics
$(\bfY_t)_{t \in \ccint{0,T}}$ is initialized with the invariant distribution of
the noising dynamics (the uniform distribution $\piinv$ in case of
RSGM). However, in order for the method to yield good results we need
$\mathcal{L}(\bfY_0) \approx \mathcal{L}(\bfX_T)$ \cite[see][Theorem
1]{debortoli2021neurips}. Usually, this requires the number of steps in the
backward process to be large in order to keep $T$ large and $\gamma$ small
(where $\gamma > 0$ is the stepsize in the GRW). Another
limitation of SGM is that existing methods target an easy-to-sample reference
distribution. Hence, classical SGM cannot interpolate between two distributions
defined by datasets. To circumvent this problem, one can consider a process
whose initial and terminal distribution are pinned down using Schr\"odinger
bridges
\citep{schrodinger1932theorie,leonard2012schrodinger,chen2016entropic,debortoli2021neurips,vargas2021solving}. 
\subsection{Conditional RSGM.} Another extension of interest is conditional
sampling. By amortizing SGM with respect to an observation $y$ it is possible to
approximately sample from a given posterior distribution. In the Euclidean
setting this idea has been successfully applied for several image processing
problems such as deblurring, denoising or inpainting \citep[see for
instance][]{kawar2021snips,kawar2021stochastic,lee2021priorgrad,sinha2021d2c,batzolis2021conditional,chung2021come}. Similarly,
RSGM can be amortized to handle such situations in the case where the
underlying posterior distribution is supported on a manifold.
Practically, this requires for the score
network takes an additional input, i.e\ $\mathbf{s}_\theta\left(t,x;y\right)$.

\subsection{Invariant distributions}
%
In what follows, we propose an extension for modelling probability distributions which known invariance.
That is, we assume that $p_0\left(\rho(g) x\right) = p_0(x)$ for all $g \in G$, with $G$ a group and $\rho: G \rightarrow \text{GL}_n(\rset)$ a representation.
Following
\textcite{kohler2020Equivariant}, we have that 
if $\piinv$ is invariant w.r.t.\ $G$ and $\phi: \M \rightarrow \M$ is equivariant
w.r.t.\ to $G$, then the pushforward probability density $p=\piinv\circ\phi^{-1}$ is invariant w.r.t.\ $G$.

Let's consider the probability flow $\phi$ associated with the reverse diffusion \eqref{eq:time_reversal_manifold}---given by $\rmd \bfY_t = \{-b(\bfY_t) + \nabla \log p_{T-t}(\bfY_t)\} \rmd t + \rmd \bfB_t^\M$--- i.e.\ the solution of the following ODE (see \cref{sec:likel-comp})
\begin{equation}
\rmd \bfY_t = \{-b(\bfY_t) +\tfrac{1}{2}~ \nabla \log p_{T-t}(\bfY_t)\} \rmd t.
\end{equation}


%
In practice, the Stein score $\nabla \log p_{t}$ is approximated with the score network $\bm{s}_\theta(t, \cdot)$.
It is sufficient to parametrize the score network so that it is equivariant w.r.t. its second argument
---assuming that $\rho(g)$ and the drift $b$ commute (e.g.\ which is true for a linear drift)---since we then have
\begin{align}
    \left[- b + \tfrac{1}{2}~  \bm{s}_\theta\left(T - t, \cdot \right) \right] \left(\rho(g) \bfY_t \right) = 
    \rho(g) \left[- b + \tfrac{1}{2}~ \bm{s}_\theta\left(T - t, \cdot\right)\right] (\bfY_t).
\end{align}

\red{
\section{Stereographic baseline details}
\label{sec:stereo_exp}

In the experiments on the sphere we compare to a Stereographic Score-Based baseline model. This model is an alternative to the RSGM the we propose in order to construct score-based models on manifolds without having to construct the intrinsic approach presented in the paper as Riemannian Score-Based models. They can be applied to more cases than just the sphere.

In general these models work as follows:
\begin{enumerate}
    \item Project the datapoints from the manifold to Euclidean space through a invertible\footnote{Note that this may not be a bijection. For example for the sphere we use the stereographic projection of the earth onto the plane, which misses out a single point, opposite the projection point.} function $f: \mathcal{M} \to \R^d$.
    \item Train a Euclidean score-based generative model on the datapoints projected to Euclidean space, giving a density $p_{\theta}$ on $\R^d$ (where $\theta$ are the parameters of the density).
    \item Define the density on the manifold as the pushforward of the density in Euclidean space under the inverse of the bijection, $P_{\theta, \mathcal{M}} = f^{-1}_* p_{\theta}$.
\end{enumerate}

One could also apply these models to the torus. By using the bijection $f: \theta \mapsto \tan(\theta)$ we can project each coordinate onto the real line.

In general we found that these models perform less well than their intrinsic counterparts. In order to map density near the seams of the bijection, it requires the model to send data points off to infinity in the Euclidean space. This is numerically challenging and leaves artefacts in the pushforward density on the manifold. In addition, these methods depend on the bijection used to project the data into a Euclidean space and therefore are not intrinsic.}
\section{Experimental details}
\label{sec:exp_detail}

In what follows we describe the experimental settings used to generate results introduced in \cref{sec:experiments}.
The models and experiments have been implemented in Jax~\citep{jax2018github}, using a modified version of the Riemannian geometry library Geomstats~\citep{geomstats2020}.

Anonymized code can be found at \href{https://anonymous.4open.science/r/rimannian-score-sde}{here}\footnote{https://anonymous.4open.science/r/rimannian-score-sde}. Due to difficulties referencing anonymized repositories, the modified version of geomstats is included as a zip file in the supplementary material. Additionally modified versions of the \verb'submitit' and \verb'hydra-submitit-launcher' packages are not supplied for the same reasons, but the default versions of these will suffice for most users. Full code and all repos will be publicly available after publication.

\paragraph{Models}
Following \textcite{song2020score}, the score-based generative models (SGMs)
diffusion coefficient is parametrized as $g(t) = \sqrt{\beta(t)}$ with
$\beta: \ t \mapsto \beta_{\min} + (\beta_{\max} - \beta_{\min}) \cdot t$.  

\paragraph{Architecture}
The architecture of the score network $\bm{s}_\theta$ is given by a multilayer
perceptron with $5$ hidden layers for the Earth and $SO(3)$ experiments, and $3$ for the high-dimension experiments with $512$ units each.  We use sinusoidal
activation functions.  We decompose the output of the score network on the set
of divergence free vector fields as per \cref{sec:riem-score-appr}.

\paragraph{Loss}
Where not specified, SGMs are trained with the sliced score matching (SSM) loss $\ellim_t$,
relying on the Hutchinson estimator for computing the divergence with Rademacher
noise described in \cref{sec:riem-score-appr}.
We found that training with the denoising score matching (DSM) loss $\ell_{t|0}$ gave similar results.
Regarding the weighting function, for DSM loss $\ell_{t|0}$ we use $\lambda_t = \text{Var}[X_t|X_0]$ (where we rely on the closed-form standard deviation available in the Euclidean setting as a proxy for the compact manifold setting), while for the ISM/SSM losses $\ellim_t$ we use $\lambda_t = g(t)^2 = \beta(t)$.

\paragraph{Optimization}
All models are trained by the stochastic optimizer Adam \citep{kingma2015Adam}
with parameters $\beta_1=0.9$, $\beta_2=0.999$, batch-size of $512$ data-points. 
The learning rate is annealed with a linear
ramp from $0$ to $1000$ and from then with a cosine schedule. 

\paragraph{Likelihood evaluation and sample drawing}
We rely on the Dormand-Prince solver \citep{dormand1980family}, an adaptive
Runge-Kutta 4(5) solver, with absolute and relative tolerance of $1e-5$ to
compute approximate numerical solutions of any ODEs. For the rollouts of the SGM SDEs we use a Euler Maruyama predictor and no corrector. Unless stated we use 100 step rollouts. 

\paragraph{Hardware} Models are trained on a cluster with a mixture of GeForce RTX 1080, 1080 Ti and 2080 Ti GPU cards.

\subsection{Sphere}
\paragraph{Data}

We randomly split the datasets intro training, validation and test datasets with $(0.8, 0.1, 0.1)$ proportions. In each case the earth is
approximated as a perfect sphere.

\paragraph{Models}
The mixture of Kent distributions \citep{peel2001fitting} were optimised using
the EM algorithm and the number of components were selected from a grid search
over the range $5, 10, 15, 20, 25, 30, 40, 50, 75, 100$, based on validation set
likelihood and $250$ EM iterations. The number of components selected were:
Volcano $25$, Earthquake $50$, Flood $100$ and Fire $100$.

For the stereographic SGM--which is a
standard SGM with an Ornstein–Uhlenbeck process followed with the inverse
stereographic projection--we found $\beta_{\min}=0.001$ and $\beta_{\max}=2$ to
work best.

\paragraph{Optimization}
The score-based models are trained for $600k$ iterations for all datasets but `Flood' where
$300k$ performed best.

\paragraph{Additional experimental results}

\paragraph{Approximate forward sampling}
Standard Euclidean SGMs rely on a Ornstein--Ulhenbeck (OU) forward process \eqref{eq:forward_SDE} which can easily be simulated since  $\bfX_t | \bfX_0$ is Gaussian.
In contrast, for most manifolds one has to rely on an approximate sampling scheme---see \cref{sec:geodesic_random_walk}.
First, we directly assess the quality of the approximate samples $\hat{\bfX}_t | \bfX_0$ obtained via geodesic random walk (GRW), against `exact' samples ${\bfX}_t | \bfX_0$ which are obtained by using a high number of discretization steps ($N=1000$).
We report on \cref{fig:approx_forward_mmd} the discrepancy between these distributions for different values of discretization steps $N$, as measured by maximum mean discrepancy (MMD)~\citep{gretton2012kernel}.
We see that from $N=5$ the approximate samples are very closely distributed to the true samples. 
Then, in order to assess the impact of this approximation on the RSGMs' performance, we report on \cref{fig:approx_forward_logp} the log-likelihood when varying the number of discretization steps $N$. 
We similarly observe that apart from very small values of $N$, the models' performance is very robust to the approximation quality of the forward sampling samples.
\begin{figure}[h]
    \centering
    \begin{subfigure}{0.49\textwidth}
        \includegraphics[width=\textwidth]{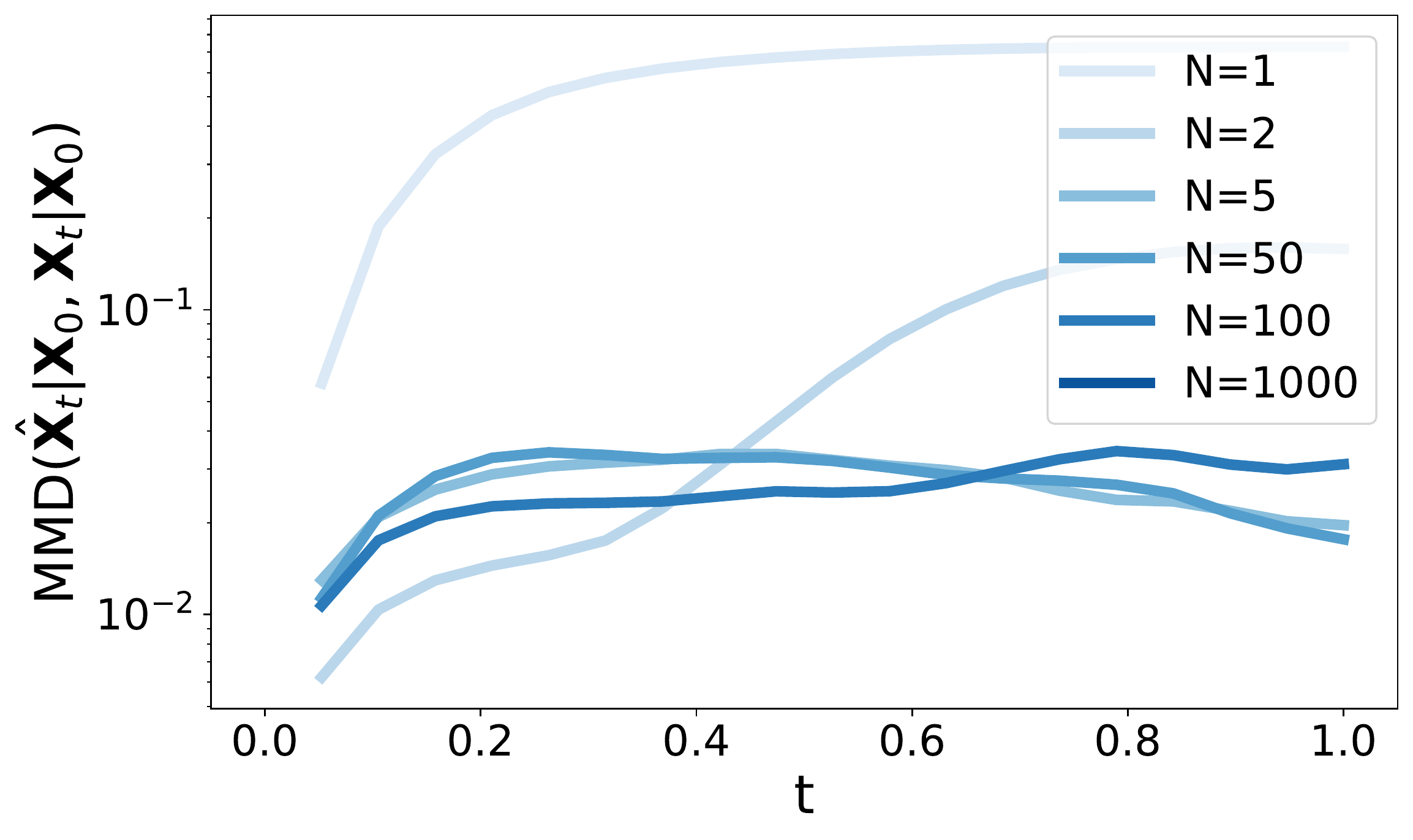}
        \caption{
        Maximum mean discrepancy (MMD) distance between \emph{`exact'} (i.e.\ approximated with $N=1000$ steps) $\bfX_t | \bfX_0$ and \emph{approximate} $\hat{\bfX}_t | \bfX_0$ at for every $t \ \in [0, 1]$.
        }
        \label{fig:approx_forward_mmd}
    \end{subfigure}
    \hfill
    \begin{subfigure}{0.49\textwidth}
        \includegraphics[width=\textwidth]{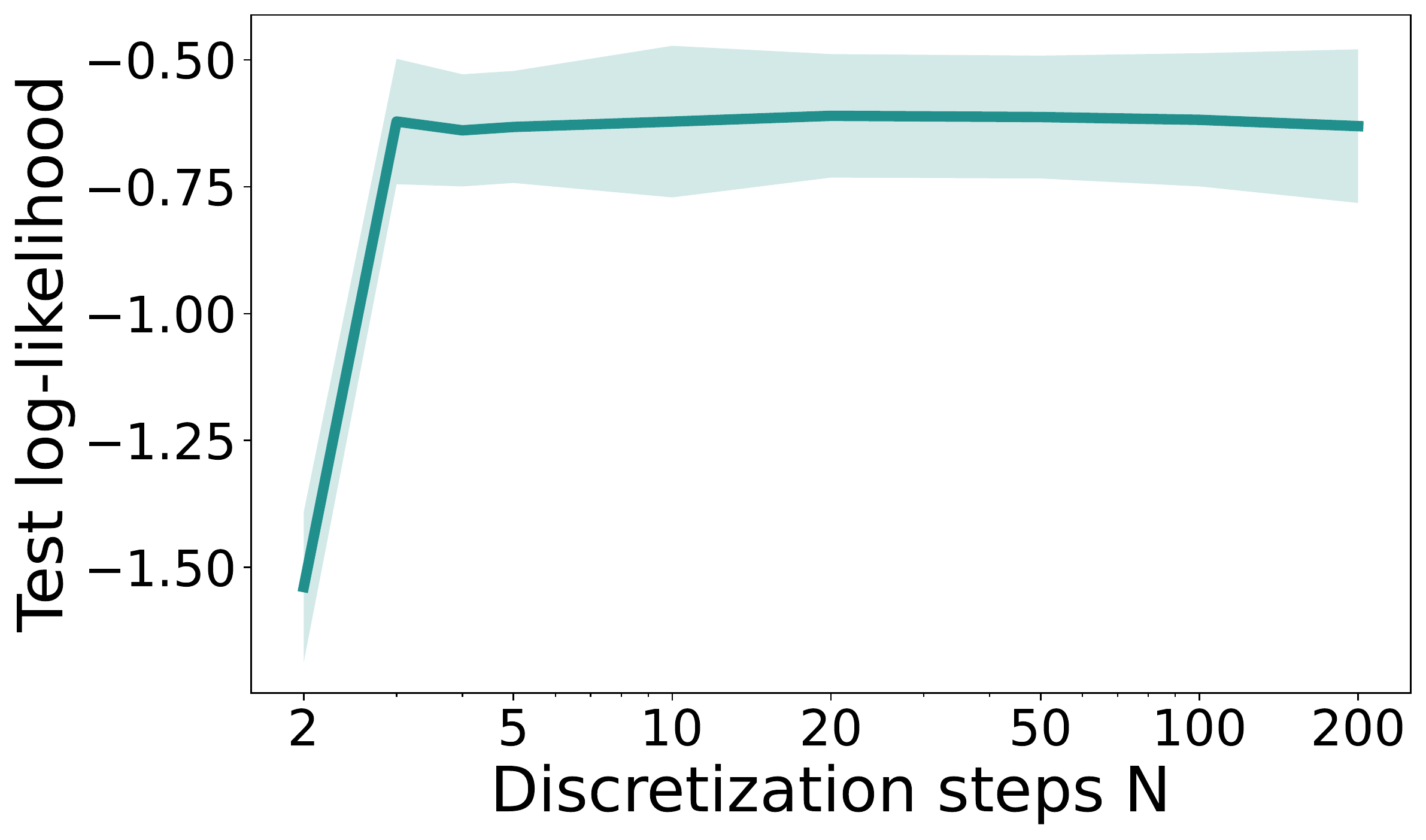}
        \caption{
        Test log-likelihood of trained RSGMs on the Flood dataset while varying the number of discretization steps $N$ when simulating  forward sampling $\bfX_t | \bfX_0$. 
        }
        \label{fig:approx_forward_logp}
    \end{subfigure}
    \vspace{-0.5em}
    \caption{
        Ablation study on the impact of the forward sampling approximation quality on $\mathbb{S}^2$. 
    }
    \label{fig:approximate_forward}
\end{figure}

\paragraph{DSM loss $\ell_{t|0}$ }
On \cref{fig:s2_heatmap}, we show how the test log-likelihood varies with respect to the two hyparameters of the DSM loss, by training RSGMs over a grid of values for $\tau$ and $J$ on the Flood dataset.
We can see that the Varadhan approximation by itself ($\tau = 1$) yields descent performance, although a wise combination of Varadhan approximation with a truncation of the heat kernel can give even better results.
The performance is relatively robust to the choice of such hyperparameters as long as $\tau$ and $J$ are high enough.
\begin{figure}[h]
    \centering
        \includegraphics[width=0.7\textwidth]{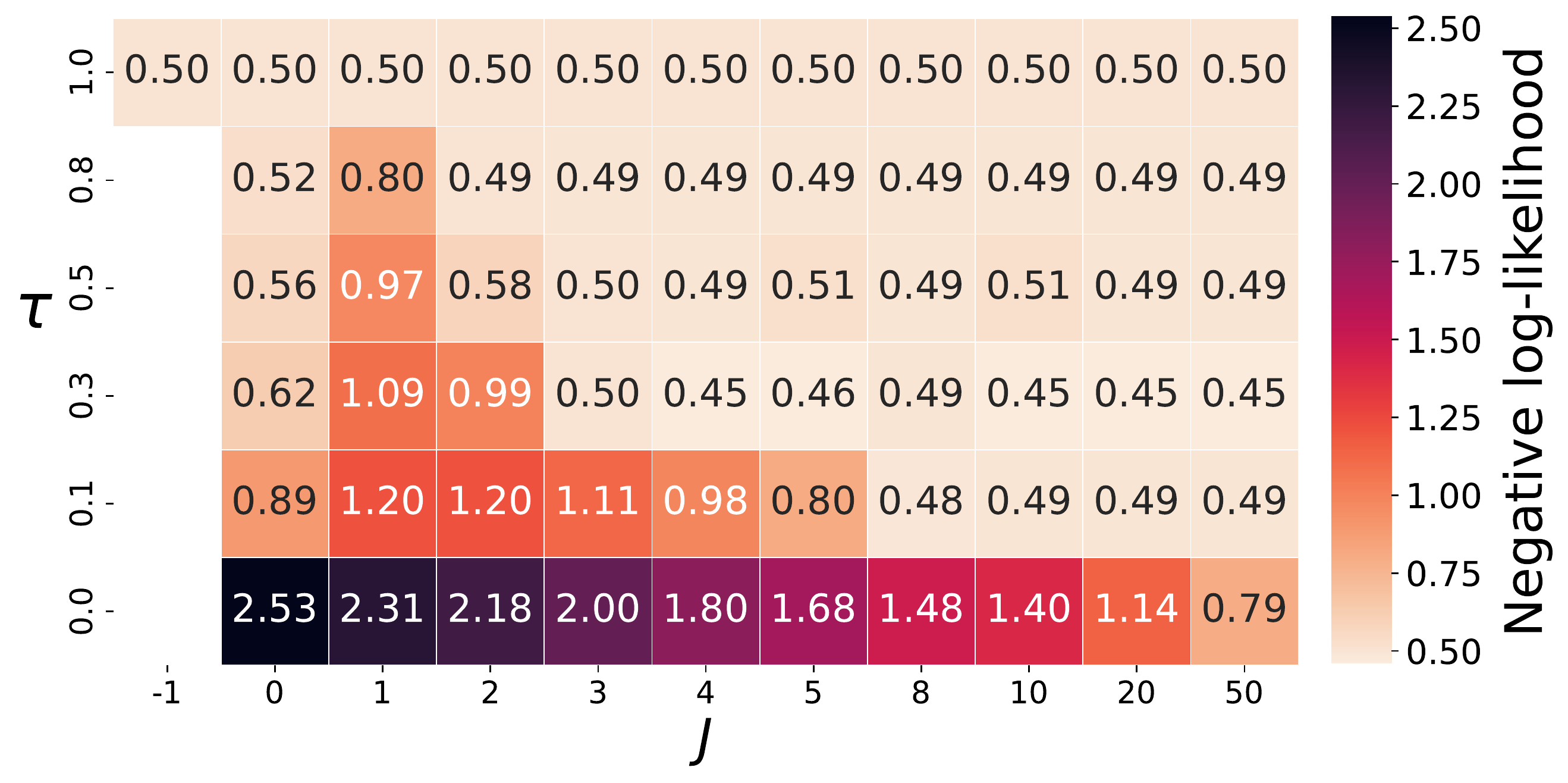}
       
    \vspace{-0.5em}
    \caption{
        Ablation study on the denoising score matching (DSM) loss $\ell_{t|0}$ when combining the heat kernel truncation and the Varadhan approximation: $\nabla_{x_t} \log p_{t|0}(x_t|x_0) \approx \mathbbm{1}(t \le \tau) \exp^{-1}_{x_t}(x_0) + \mathbbm{1}(t > \tau) S_{J,t}(x_0,x_t)$.
    }
    \label{fig:s2_heatmap}
\end{figure}

\subsection{Torus}

\paragraph{Data}

The synthetic data trained on consists of a wrapped Gaussian distribution on $\mathbb{T}^n$ with uniformly chosen random mean and standard deviation of $0.2$. Such a distribution is defined by taking the density of a Normal distribution in the tangent space of the manifold at the mean and passing it through the exponential map at the mean.

\paragraph{Architecture}

To parametrize the vector field on $\mathbb{T}^n$ we use a single filed per dimension pointing in a consistent direction around the i$^{th}$ component in the product, with unit norm.

\paragraph{Models}

All models were trained with the same 3 layer, 512 units per layer MLP across different dimension sizes.

\paragraph{Optimization}

The models are optimized for 50$k$ iterations. The RSGM models are trained with both the implicit score-matching loss and the sliced score-matching loss.

\subsection{Special Orthogonal group}
%

Applications of orthogonal constraints span
various fields, such as protein docking with ligands binding pose prediction
\citep{ganea2022independent}, robotics and Computer vision with rigid body
transformation estimation \citep{barfoot2011Pose,deepdirectstat2018}, and
medical imaging for data alignment \citep{hou2018Computinga}.

\paragraph{Data}
We consider the synthetic dataset consisting of samples in
$\mathrm{SO}_3(\rset^d)$\footnote{This manifold is $3$-dimensional.} from the
  mixture distribution with density
  $p(\mathrm{Q})=\frac{1}{K} \sum_{k=1}^K
  \mathrm{N}^W(\mathrm{Q}|\mathrm{Q}_k,\sigma_k^2)$ with $K \in \nset$,
  where for any $k \in \{1, \dots, K\}$, we have that
  $\mathrm{Q} = \mathrm{Q}_k\exp_{\Id}[\sigma_k \hat{z}]$ with
  $z \sim \mathrm{N}(0, \Id_{\R^3})$ satisfies
  $Q \sim \mathrm{N}^W(\mathrm{Q}_k, \sigma_k)$ and $(\cdot)^{\wedge} : \R^3 \rightarrow \mathfrak{so}(3)$.
  For any $k \in \{1, \dots, K\}$, we set
  $\mathrm{Q}_k \sim \mu$ where $\mu$ is the uniform distribution on
  $\mathrm{SO}_3(\rset)$ and $\sigma_k^2 \sim \mathrm{IG}(\alpha=100, \beta=1)$, where $\mathrm{IG}$ is the inverse Gaussian distribution.
We choose $K=32$ mixture components.
We showcase a conditional sampling extension of our model---see \cref{sec:extensions} for more
details---
by targeting individual mixture components $p(Q|k)$.
Our model is trained using the $\ell_{t|0}$ (DSM)
loss along with the Varadhan asymptotic approximation, see \eqref{eq:varadhan}.

\paragraph{Architecture}
To parametrize the vector field, we rely on the basis of the Lie group,
$\mathfrak{so}(n)= \ensembleLigne{\mathrm{A} \in \mathrm{M}_d(\R)}{\mathrm{A}^\top =
  -\mathrm{A} }$ given by $\mathrm{E}_{ij} = \mathrm{U}_{ij} - \mathrm{U}_{ji}$
for $i, j \in \{1, \dots, d\}$ with $i < j$ and
$\mathrm{U}_{ij} = (\updelta_{ij}(k,\ell))_{1\leq k, \ell \leq d}$, which
induces a basis on the tangent spaces $\mathrm{T}_{\mathrm{Q}} \mathrm{SO}_d$
for any $\mathrm{Q} \in \mathrm{SO}_d(\rset)$ given by
$\{\mathrm{Q} \mathrm{E}_{ij}\}_{1 \leq i<j \leq d}$.
This is the divergence-free vector field approach described in \cref{sec:riem-score-appr}.

\paragraph{Models}
We compare our proposed approach against Moser flows~\citep{rozen2021moser} and a wrapped-exponential baseline
\citep{falorsi2019reparameterizing} defined as the pushforward along the
transformation
$\R^3 \xrightarrow[]{F^{-1}_\theta} \R^3 \xrightarrow[]{g} \R^3
\xrightarrow[]{\wedge} \mathfrak{so}(3) \xrightarrow[]{\exp} \mathrm{SO}_3(\rset)$
with $F^{-1}_\theta$ denoting the approximate time-reversed diffusion, $g$
denoting the radial operator defined by
$g: \ x \mapsto 2\uppi \tanh(\|x\|) x / \|x\|$, $(\cdot)^{\wedge} : \R^3 \rightarrow \mathfrak{so}(n)$ the
isomorphism given by the basis on $\mathfrak{so}(3)$ and $\exp$ the matrix
exponential.
The radial $g$
  operator's constant $2\uppi$ is chosen as the injectivity radius of the group
  so that the transformation $\tanh \circ \wedge \circ \exp$ is injective (the
  set of elements with no preimage is then only the cut locus which is known to
  have measure zero). 
Henceforth, this wrapped-exponential transformation cannot be bijective, it is either injective \emph{or} surjective depending on the choice of radius in the radial operator $g$.

\paragraph{Optimization}
Models are trained for $100k$ iterations.
The Riemannian SGM is trained with the Varhadan approximation of the denoising score-matching loss (DSM)~\cref{sec:riem-score-appr}, and the wrapped-exponential model relies on the exact DSM loss.
After a first hyperparameter exploration, a grid search is performed over $\texttt{learning\_rate} \in [2e-5, 4e-5]$, for SGMs over $\beta_f \in [0.5, 1, 2, 4, 6, 8, 10]$ and for Moser flows over $K \in [1000, 10000]$ and $\lambda_{\min} \in [1, 10, 100]$.




\end{document}